%% file: neurips_2026.tex
\documentclass{article}

\PassOptionsToPackage{numbers}{natbib}
 \usepackage[preprint]{neurips_2026}

\usepackage[utf8]{inputenc} 
\usepackage[T1]{fontenc}    
\usepackage{url}            
\usepackage{booktabs}       
\usepackage{amsfonts}       
\usepackage{nicefrac}       
\usepackage{microtype}      
\usepackage{xcolor}         

\usepackage{algorithm}
\usepackage{algorithmic}
\input{macros}

\newcommand{\authorcell}[1]{%
  \parbox[t]{0.45\textwidth}{%
    \centering\normalfont
    #1%
  }%
}

\allowdisplaybreaks
\title{What's in a Smoothness Constant? Tighter Rates for Local SGD with Bounded Second-order Heterogeneity}

\author{%
	\authorcell{%
    \textbf{Kumar Kshitij Patel}\thanks{Equal contribution.} $^{\hspace{0.1em},}$\thanks{Part of this work was done while the authors participated in the Federated and Collaborative Learning Program at the Simons Institute for the Theory of Computing, Berkeley.}\\
	Institute for Foundations of Data Science\\
	Yale University\\
	New Haven, CT, USA 06520\\
	\texttt{kumarkshitij.patel@yale.edu}\\
    }
	\And
    \authorcell{%
    \textbf{Rustem Islamov}\footnotemark[1]\\
	Department of Mathematics and Computer Science\\
	University of Basel\\
	Basel, Switzerland 4001\\
	\texttt{rustem.islamov@unibas.ch}\\
    }
    \And
    \authorcell{%
    \textbf{Sebastian U Stich}\footnotemark[2]\\
	CISPA Helmholtz Center for\\
    Information Security\\
	Saarbrücken, Germany 66123\\
	\texttt{stich@cispa.de}\\
    }
    \And
    \authorcell{%
    \textbf{Aurelien Lucchi}\\
    Department of Mathematics and Computer Science\\
    University of Basel\\
	Basel, Switzerland 4001\\
    \texttt{aurelien.lucchi@unibas.ch}\\
    }
    \And
    \authorcell{%
    \textbf{Eduard Gorbunov}\\
    Department of Statistics and Data Science\\
    MBZUAI\\
	Abu Dhabi, UAE\\
    \texttt{eduard.gorbunov@mbzuai.ac.ae}\\
    }
    \And
    \authorcell{%
    \textbf{Lingxiao Wang}\\
    Department of Data Science\\
    New Jersey Institute of Technology\\
	Newark, NJ, USA 07102\\
    \texttt{lw324@njit.edu}\\
    }
}

\begin{document}

	\maketitle

	\begin{abstract}
		Local SGD, also known as Federated Averaging, is a widely used distributed optimization algorithm. Although Local SGD often outperforms alternatives such as Mini-batch SGD in practice, theory still only partially explains when and why local updates help under realistic data heterogeneity. Recent work by \citet{patel2025revisiting} shows that a bounded second-order heterogeneity assumption captures the efficiency of Local SGD for strongly convex objectives, and conjectures that the same principle extends to the general convex setting. In this paper, we prove this conjecture by establishing an improved convergence guarantee for Local SGD on general convex objectives under bounded second-order heterogeneity. We also improve the best-known lower bounds for Local SGD in this setting, showing that our upper bounds are nearly tight. Together, these results provide a sharper, more fine-grained convergence theory for Local SGD. As a further application of our techniques, we provide a lower bound for serial SGD with replacement, showing how second-order heterogeneity captures the impact of rare high-curvature clients.
	\end{abstract}
    
    \newpage
    
    \etocdepthtag.toc{main}
    
    \begingroup
    \setcounter{tocdepth}{2} 
    \etocsettagdepth{main}{subsection}
    \etocsettagdepth{appendix}{none}
    \tableofcontents
    \endgroup

	\section{Introduction}\label{sec:introduction}

	In large-scale distributed training of machine learning algorithms, communication is often the primary bottleneck, outweighing computation in both runtime and energy costs~\citep{mcmahan_ramage_2017,mcmahan2016communication,douillard2023diloco}. This raises a key question: how much local computation can each client perform between communication rounds while preserving convergence guarantees? To formalize this trade-off, we study the following distributed optimization problem:
	\begin{equation}\label{prob:scalar}
		\min_{x\in \rr^d}\cb{F(x)\coloneqq \frac{1}{M}\sum_{m=1}^{M}F_m(x)} ,
	\end{equation}
	where \(F_m(x)\coloneqq \ee_{z_m \sim \ddd_m}[f(x;z_m)]\) is the population loss on machine \(m\),
	defined using a loss function \(f(\cdot;~z\in~\zzz)\) and a local data distribution \(\ddd_m\in \Delta(\zzz)\). Optimization problems of the above form arise throughout machine learning, from multi-GPU training in data centers \citep{douillard2023diloco,krizhevsky2012imagenet} to decentralized training across millions of edge devices \citep{mcmahan2016communication,mcmahan2016federated}. A simple and widely studied model of communication for such problems is that of \emph{intermittent communication} (IC)~\citep{woodworth2018graph}. In the IC setting, machines communicate for \(R\) rounds, and within each round, each machine performs \(K\) units of computation. The pair \((K, R)\) captures the trade-off between local computation and communication.
	
	While numerous algorithms \citep{kairouz2019advances, wang2021field, karimireddy2020scaffold, bullins2021stochastic, patel2022towards, mishchenko2022proxskip, pathak2020fedsplit} have been proposed to solve Problem~\eqref{prob:scalar} in the IC setting, in practice, the dominant approaches remain first-order methods that compute $K$ stochastic gradients per client between communication rounds. This naturally raises a central question: how should these $K$ stochastic gradients be used? In particular, should each machine use them to perform \emph{sequential local updates}, or instead \emph{average} them to obtain a lower-variance estimate of the true local gradient? A clean way to study this question is to contrast \textbf{Local SGD} (Federated Averaging) \citep{mcdonald2009efficient, mcmahan2016federated, mcmahan2016communication}, which emphasizes local updates, with \textbf{Mini-batch SGD} \citep{dekel2012optimal}, which emphasizes averaging. These canonical baselines isolate the two key design primitives underlying local computation.

	Early analyses of Local SGD identified regimes in which it can outperform Mini-batch SGD, but these results rely on restrictive assumptions about data heterogeneity. In particular, many works either focus on the homogeneous setting, where all clients share the same data distribution \citep{stich2018local, woodworth2020local}, or assume bounded \emph{first-order heterogeneity}, which controls how much local gradients differ across clients. Concretely, these assumptions require that the gradients $\nabla F_m(x)$ remain uniformly close to the global gradient $\nabla F(x)$ for all $x$, effectively limiting the variability between client objectives. While analytically convenient, such conditions are quite stylized, as they rule out realistic settings in which data distributions can differ substantially across clients (see further discussion after Assumption~\ref{ass:zeta_everywhere}).

	In parallel, a separate line of work has shown that \emph{second-order heterogeneity}, which instead controls the similarity of local curvature (e.g., Hessians) across clients, can provably reduce the communication complexity of related distributed methods \citep{shamir2014dane, sun2022distributed, kovalev2022optimal, karimireddy2020mime, murata2021bias, patel2022towards, jiang2024fedred, jiang2024dane}, without requiring restrictive bounds on gradient dissimilarity.\footnote{Note that $\tau$-second-order heterogeneity (\Cref{ass:tau}) always holds with $\tau\le 2H$ if local objectives are $H$-smooth.} Building on this evidence, \citet{patel2024limits} conjectured that Local SGD should also benefit from bounded second-order heterogeneity. If true, this would identify a significantly broader and more realistic regime in which Local SGD can provably outperform Mini-batch SGD. While \citet{patel2025revisiting} recently confirmed this conjecture in the strongly convex setting, the general convex case has remained largely open; we close this gap in this paper.

	\subsection{Our Contributions}	
	We establish the first convergence guarantees for Local SGD in the general convex setting that \emph{explicitly capture the benefits of bounded second-order heterogeneity} (see Table~\ref{tab:rates}, \Cref{thm:lsgd_consensus_informal,thm:lsgd_singlemachine_informal}). We also prove a complementary upper bound under uniformly bounded first-order heterogeneity (\Cref{thm:lsgd_zeta_informal}; see \Cref{tab:rates}). Our second-order heterogeneity results show that Local SGD can provably improve upon Mini-batch SGD without relying on potentially restrictive uniform first-order heterogeneity assumptions, thereby identifying a broader and more realistic regime in which local updates are fundamentally advantageous.

	The key technical novelty in our analysis is a trajectory-dependent control of heterogeneity. Prior analyses of Local SGD typically bound client drift using a uniform heterogeneity parameter that controls gradient disagreement globally across the parameter space. Instead, we show that the consensus error depends only on the gradient disagreement encountered along the trajectory of the average iterate. We then couple this trajectory-wise heterogeneity with the iterate error itself via a self-bounding recursion, thereby allowing the dynamics of Local SGD to control its own heterogeneity.

	We further provide a new lower bound (\Cref{thm:lsgd_lb_informal,cor:zeta_lower_bound}) by developing sharper constructions that disentangle the distinct roles of first- and second-order heterogeneity~\citep{patel2025revisiting}. These results reveal that our upper bound is nearly tight and isolate the precise regimes in which second-order effects yield gains (see \Cref{fig:lower_vs_upper}). Together, our results resolve the conjecture of \citet{patel2024limits} in the general convex setting, providing definitive evidence that second-order heterogeneity is not merely a technical artifact but a key structural property governing the efficiency of local update methods.

	A by-product of our lower-bound idea is a refined lower-bound construction for multi-epoch SGD with replacement (\Cref{thm:fresh_sgd_heterogeneous_lb_informal}). It shows that the standard dependence on worst-case client smoothness remains tight even in heterogeneous settings, where the worst-case smoothness is much larger than that of the average objective.

    \subsection{Related work}
	The contrast between mini-batch and Local SGD has driven over a \emph{decade of work} on establishing tight non-asymptotic convergence guarantees for Local SGD \citep{mcdonald2009efficient, zinkevich2010parallelized, zhang2016parallel, stich2018local, dieuleveut2019communication, khaled2020tighter, koloskova2020unified, karimireddy2020scaffold, woodworth2020local, woodworth2020minibatch, yuan2020federated, woodworth2021min, woodworth2021minimax, glasgow2022sharp, wang2022unreasonable, patel2023federated, patel2023still, patel2024limits, patel2025revisiting, mangold2025refined, patel2025makes, luo2025revisiting, khaled2025understanding}. This effort has proved remarkably fruitful: it has motivated new primitives such as control variates \citep{karimireddy2020scaffold}, revealed separations between the min-max complexities of serial and distributed optimization \citep{woodworth2018graph, woodworth2021min}, helped discover a novel form of acceleration \citep{mishchenko2022proxskip}, motivated algorithms for training large language models~\citep{douillard2023diloco,charles2025communication,therien2025muloco} and more broadly supplied numerous analytical tools that extend to settings such as decentralized optimization \citep{koloskova2020unified}, communication compression~\citep{stich2018sparsified,karimireddy2019error,gao2024econtrol}, personalization \citep{pillutla2022federated,mishchenko2025partially}, quantization~\citep{alistarh2018convergence,magnusson2019quantization}, asynchronous updates~\citep{ye2018coding,stich2020error,islamov2024asgrad}, differential privacy~\citep{wei2020federated,girgis2021shuffled, lowy2023private,murata2023diff2, wang2024efficient,islamov2025double,shulgin2025smoothed}, and Byzantine robustness~\citep{alistarh2018byzantine,karimireddy2021learning,islamov2026byzantine}. Our paper further contributes to this broader discourse, providing both novel guarantees and analytical tools.

	\section{Setting and Preliminaries}\label{sec:setting}

	\noindent\textbf{Notation.} For $i \leq j \in \zz_{\geq 0}$, we use $[i,j]$ to denote $\cb{i, i+1, \dots, j-1, j}$, and when $i=1$, we denote it by $[j]$. We use $\lesssim$, $\gtrsim$, and $\cong$ for comparisons up to universal numerical constants. We also use $\delta(t)\coloneqq t - t\bmod K$ to denote the most recent communication round at or before time $t$.

	\subsection{Regularity Assumptions and the Algorithms}

    \begin{figure}[H]
		\centering
		\resizebox{0.8\textwidth}{!}{\input{IC}}
		\caption{Illustration of the IC setting: $M$ clients communicate $R$ times with $K$ units of computation in between. The baselines we consider in the paper all require $K$ stochastic gradient computations between communication rounds on each client.}
		\label{fig:IC}
	\end{figure}
    
	An instance of Problem \eqref{prob:scalar} can be characterized by the client distributions $\{\ddd_m\in \Delta(\zzz)\}_{m\in[M]}$ and a differentiable\footnote{Our analysis can be extended to sub-differentiable loss functions using standard changes to our assumptions and analysis.} loss function $f(\cdot;z\in\zzz):\rr^d\to \rr$. We denote the set of all problem instances by $\ppp$. To further restrict the problems we study, we assume throughout that for all $m\in[M]$, the objective function $F_m$ is differentiable, convex, and $H$-smooth, i.e., for all $x,y\in\rr^d$, 
	\begin{align}\label{ass:smth_cvx}
		0 \leq F_m(y) - F_m(x) - \inner{\nabla F_m(x)}{y-x} \leq \frac{H}{2}\norm{x-y}^2  .
	\end{align}
	Each machine \( m \in [M] \) computes stochastic gradients of \( F_m \) by sampling data from its distribution \( z \sim \ddd_m \) independent of other machines and time steps. These gradients satisfy for all \( x \in \mathbb{R}^d \),
	\begin{equation}
		\label{ass:stoch_first_order}
		\ee_{z\sim \ddd_m}
		\sb{\nabla f(x;z)
		}=\nabla F_m(x) \quad\text{and}\quad \ee_{z\sim \ddd_m}\sb{\norm{\nabla f(x; z) - \nabla F_m(x)}^2
		} \leq \sigma^2 .
	\end{equation}
	Finally, we assume that the average objective $F$ has at least one bounded minimizer, i.e., 
	\begin{equation}\label{ass:bounded_optima}
		\exists\ x^\star\in \arg\min_{x\in \rr^d} F(x)\quad \text{s.t.} \quad\norm{x^\star}\leq B .
	\end{equation}
	We will denote the class of all problems satisfying the above assumptions \eqref{ass:smth_cvx}, \eqref{ass:stoch_first_order}, and \eqref{ass:bounded_optima} by $\ppp^{H, B, \sigma}$. 
	
	We next describe our algorithms, i.e., Local SGD and Mini-batch SGD, with intermittent communication (see \Cref{fig:IC} for an illustration). The key difference between these algorithms is in how each client utilizes its local stochastic gradients between communication rounds.

	\paragraph{Local SGD.} Local SGD runs $K$ SGD steps independently on each machine, and then averages and synchronizes the resulting models across machines at each communication round. Specifically, at each time step $t \in [0,T-1]$, machine $m\in[M]$ samples $z_t^m \sim 
	\ddd_m$ and updates as follows:
	\begin{equation}\label{eq:local_updates}
			x_{t+1}^m \coloneqq 
			\begin{cases}
				x_t^m - \eta\,\nabla f(x_t^m; z_t^m)
				& \qquad (t+1)\bmod K \neq 0 ,\\[2pt]
				 \frac{1}{M}\sum_{n=1}^{M}\sb{x_t^n - \eta\,\nabla f(x_t^n; z_t^n)}
				& \qquad (t+1)\bmod K = 0 .
			\end{cases}
	\end{equation}
	We assume all the machines are initialized at $x_0=0$. The above algorithm is often called ``vanilla local SGD'' to distinguish it from another popular variant of Local SGD with inner and outer step sizes~\cite{karimireddy2020scaffold}. \Cref{alg:local-sgd} presents this two-step-size version of Local SGD, which often performs better in practice~\cite{charles2020outsized}, and has motivated an entire class of algorithms with an inner and outer optimization loop\footnote{For instance, see DILOCO for LLM optimization~\citep{douillard2023diloco,charles2025communication} and the variance-reduced local optimization algorithms for distributed non-convex optimization~\citep{murata2021bias,patel2022towards,wang2024efficient}.}. In \Cref{alg:local-sgd}, $x_{r,k}^m$ is the $k^{th}$ local model on machine $m$, leading up to the $r^{th}$ round of communication, while $x_r$ is the consensus (global) model at the end of the $r^{th}$ communication. For Local SGD, $\eta$ is referred to as the inner (local) step-size, while $\beta$ is the outer (server) step-size. Note that setting $\beta=1$ recovers \textit{``vanilla Local SGD''} with a single step-size.
	
	\begin{figure}
		\centering
		
		\begin{minipage}[t]{0.49\textwidth}
			\begin{algorithm}[H]
				\caption{Local SGD / Federated Averaging with server step-size $\beta$}
				\label{alg:local-sgd}
				\begin{algorithmic}[1]
					\STATE {\bfseries Input:} communication rounds $R$, local steps $K$, step-size $\eta$, server step-size $\beta$, client distributions $\cb{\ddd_1,\dots,\ddd_M}$
					\STATE {\bfseries Initialize:} global model $x_0 = 0$
					\FOR{$r=1$ {\bfseries to} $R$}
					\FOR{$m=1$ {\bfseries to} $M$}
					\STATE $x_{r,0}^m \gets x_{r-1}$
					\FOR{$k=0$ {\bfseries to} $K-1$}
					\STATE Sample $z_{r,k}^m \overset{\mathrm{i.i.d.}}{\sim} \mathcal D_m$
					\STATE $x_{r,k+1}^m \gets x_{r,k}^m - \eta \nabla f(x_{r,k}^m; z_{r,k}^m)$
					\ENDFOR
					\ENDFOR
					\STATE $x_r \gets x_{r-1} + \frac{\beta}{M}\sum_{m=1}^M (x_{r,K}^m - x_{r-1})$
					\ENDFOR
				\end{algorithmic}
			\end{algorithm}
		\end{minipage}
		\hfill
		\begin{minipage}[t]{0.49\textwidth}
			\begin{algorithm}[H]
				\caption{Mini-batch SGD in the IC setting}
				\label{alg:mb-sgd}
				\begin{algorithmic}[1]
					\STATE {\bfseries Input:} communication rounds $R$, local batch size $K$, step-size $\eta$, client distributions $\cb{\ddd_1,\dots,\ddd_M}$
					\STATE {\bfseries Initialize:} global model $x_0 = 0$
					\FOR{$r=1$ {\bfseries to} $R$}
					\STATE $G_r \gets 0$
					\FOR{$m=1$ {\bfseries to} $M$}
					\FOR{$k=0$ {\bfseries to} $K-1$}
					\STATE Sample $z_{r,k}^m \overset{\mathrm{i.i.d.}}{\sim}  \mathcal D_m$
					\STATE $g_{r,k}^m \gets \nabla f(x_{r-1}; z_{r,k}^m)$
					\STATE $G_r \gets G_r + g_{r,k}^m$
					\ENDFOR
					\ENDFOR
					\STATE $x_r \gets x_{r-1} - \frac{\eta}{M} G_r$
					\ENDFOR
				\end{algorithmic}
			\end{algorithm}
		\end{minipage}
		\caption{Two canonical distributed optimization baselines implemented in the intermittent communication setting: Local SGD with an inner and outer step-size and ``large'' mini-batch SGD}
	\end{figure}

	\paragraph{Mini-batch SGD.} For Mini-batch SGD (see \Cref{alg:mb-sgd}), within each communication round, all machines compute stochastic gradients at the same shared iterate (the current global model), rather than evolving local models independently. The algorithm then aggregates these gradients into a single update to the global model. Because this algorithm uses an effective batch size of $MK$, it is often called ``Large'' Mini-batch SGD~\citep{lin2018don}. This synchronized evaluation removes client drift and ensures that each round directly optimizes the global objective \(F\) (see \citet{woodworth2020minibatch} for further discussion). Specifically, for optimization problems in the class $\ppp^{H,B,\sigma}$, mini-batch SGD achieves the following convergence rate, which is tight and cannot be improved under any data-heterogeneity assumption\footnote{A folklore argument showing that mini-batch SGD is agnostic to data heterogeneity, is to put the same data distribution on each machine and allude to lower bounds in serial stochastic optimization (see \citet[Remark 10]{patel2024limits}). 
    Having said that, it is important to note that when we drop the assumption that the smoothness parameter is identical across machines, the second-order heterogeneity assumption essentially captures the gap between the worst smoothness constant across clients and the smoothness of the average objective. We discuss this nuance further in \Cref{sec:fresh_sgd_soh} while analyzing SGD with replacement.
    }.
	\begin{align}\label{eq:mbsgd_rate}
		\ee\sb{F(\hat x^{\mathrm{MB\text{-}SGD}})}-F(x^\star) \cong \frac{HB^2}{R} + \frac{\sigma B}{\sqrt{MKR}} .
	\end{align}
	The above rate also explains why Local SGD can outperform Mini-batch SGD in regimes with ample local computation and when $\sigma$ is small: Local SGD uses the $K$ gradient evaluations to move the model $K$ times, rather than to form a single averaged update direction. In particular, even with $K\to\infty$, the optimization error of Mini-batch SGD does not improve. In practice, we observe that sequential local updates often use available computation more effectively than pure mini-batching \citep{mcmahan2016communication,goyal2017accurate,lin2018don,douillard2023diloco}. This gap
	motivates the central question in the theory of Local SGD: under what notions of data heterogeneity can Local SGD provably dominate Mini-batch SGD in a non-trivial regime?

    \begin{table}[t]
		\centering
		\setlength{\tabcolsep}{7pt}
		\renewcommand{\arraystretch}{1.2}
		\begin{tabular}{p{0.23\textwidth} p{0.7\textwidth}}
			\toprule
			\rowcolor{headergray}
			\textbf{Paper (Result type)} &  \textbf{Function sub-optimality:} $\boldsymbol{\ee\sb{F(\hat x^{\textrm{Local\text{-}SGD}})} - F(x^\star) \lesssim \text{or} \gtrsim}$\arraybackslash\\
			\midrule
			
			\rowcolor{sectiongray}
			\multicolumn{2}{l}{\textbf{Problem class $\boldsymbol{\ppp^{H, B, \sigma}_{\zeta_\star}}$:} \textit{\Cref{ass:zeta_optimal}}}\\
			\midrule
			
			\multirow{2}{*}{\begin{minipage}[t]{\linewidth}
					\citet{koloskova2020unified}\\
					(Upper bound)
			\end{minipage}}
			& \multirow{2}{*}{$\frac{HB^2}{R} + \frac{(H\sigma^2B^4)^{1/3}}{K^{1/3}R^{2/3}} 
				+ \frac{\sigma B}{\sqrt{MKR}} +  \frac{(H\zeta_\star^2B^4)^{1/3}}{R^{2/3}}$ }\\
			&\\
			\midrule
			
			\multirow{2}{*}{\begin{minipage}[t]{\linewidth}
					\citet{patel2024limits}\\
					(Lower Bound)
			\end{minipage}}
			& \multirow{2}{*}{$\frac{HB^2}{R} + \frac{(H\sigma^2B^4)^{1/3}}{K^{1/3}R^{2/3}} 
				+ \frac{\sigma B}{\sqrt{MKR}} +  \frac{(H\zeta_\star^2B^4)^{1/3}}{R^{2/3}}$ }\\
			&\\
			\midrule
			
			\rowcolor{sectiongray}
			\multicolumn{2}{l}{\textbf{Problem class $\boldsymbol{\ppp^{H, B, \sigma}_{\zeta_\star, \tau}}$:} \textit{Assumptions \ref{ass:zeta_optimal} and \ref{ass:tau} }}\\
			\midrule

			\multirow{2}{*}{\begin{minipage}[t]{\linewidth}
					\textbf{\Cref{thm:lsgd_consensus_informal} (Our)}\\
					(Upper Bound)
			\end{minipage}}
			& \multirow{2}{*}{$\frac{H B^2}{K R}+  \frac{(H\tau^2)^{1/3}B^2}{R^{2/3}} +  \frac{(H\sigma^2B^4)^{1/3}}{K^{1/3} R^{2/3}} + \frac{\sigma B}{\sqrt{M K R}}+  \frac{(H\zeta_\star^2B^4)^{1/3}}{R^{2/3}}$}\\
			&\\
			\midrule
			
			\multirow{2}{*}{\begin{minipage}[t]{\linewidth}
					\textbf{\Cref{thm:lsgd_singlemachine_informal} (Our)}\\
					(Upper Bound)
			\end{minipage}}
			& \multirow{2}{*}{$\frac{H B^2}{K R}+ \frac{\tau B^2}{\sqrt{R}} +  \frac{\sigma B}{\sqrt{KR}} +  \frac{\zeta_\star B}{\sqrt{R}}$}\\
			&\\
			\midrule

			\multirow{2}{*}{\begin{minipage}[t]{\linewidth}
					\textbf{\Cref{thm:lsgd_lb_informal} (Our)$^*$}\\
					(Lower Bound)
			\end{minipage}}
			& \multirow{2}{*}{$\frac{HB^2}{KR}
				+\min \left\{
				\frac{HB^2}{R},
				\;
				\frac{\tau B^2}{\sqrt R}
				\right\}
				+
				\min\left\{
				\frac{\sigma B}{\sqrt{KR}},
				\frac{(H\sigma^2B^4)^{1/3}}{K^{1/3}R^{2/3}}
				\right\}
				+ 
				\frac{\sigma B}{\sqrt{MKR}}
				$}\\
			&\\
			& \multirow{2}{*}{$
				\hspace{50mm} +\ 
				\min\left\{HB^2,
				\frac{\zeta_\star^2}{H},
				\frac{(H\zeta_\star^2B^4)^{1/3}}{R^{2/3}}
				\right\}
				$}\\
			&\\
			\midrule

			\multirow{2}{*}{\begin{minipage}[t]{\linewidth}
					\citet{patel2025revisiting}\\
					(Lower Bound)
			\end{minipage}}
			& \multirow{2}{*}{$\frac{H B^2}{K R} + \frac{\tau B^2}{R} + \min\left\{ \frac{\sigma B}{\sqrt{K R}}, \frac{(H\sigma^2B^4)^{1/3}}{K^{1/3} R^{2/3}} \right\} + \frac{\sigma B}{\sqrt{M K R}}+  \frac{(\tau\zeta_\star^2B^4)^{1/3}}{R^{2/3}} $}\\
			&\\
			\midrule

			\rowcolor{sectiongray}
			\multicolumn{2}{l}{\textbf{Problem class $\boldsymbol{\ppp^{H, B, \sigma}_{\zeta}}$:} \textit{\Cref{ass:zeta_everywhere}}}\\
			\midrule
			
			\multirow{3}{*}{\begin{minipage}[t]{\linewidth}
					\citet{woodworth2020minibatch,luo2025revisiting}\\
					(Upper bound)
			\end{minipage}}
			& \multirow{3}{*}{$\frac{HB^2}{KR} + \frac{(H\sigma^2B^4)^{1/3}}{K^{1/3}R^{2/3}} + \frac{\sigma B}{\sqrt{MKR}} + \textcolor{Black}{\frac{(H\zeta^2B^4)^{1/3}}{R^{2/3}}} $}\\
			&\\
			&\\
			\midrule
			
			\multirow{2}{*}{\begin{minipage}[t]{\linewidth}
					\textbf{\Cref{thm:lsgd_zeta_informal} (Our)}\\
					(Upper Bound)
			\end{minipage}}
			& \multirow{2}{*}{$\frac{H B^2}{K R}+ \frac{\sigma B}{\sqrt{KR}} +  \frac{\zeta B}{\sqrt{R}}$}\\
			&\\
			\midrule
			\multirow{2}{*}{\begin{minipage}[t]{\linewidth}
					\textbf{\Cref{cor:zeta_lower_bound} (Our)}\\
					(Lower Bound)
			\end{minipage}}
			& \multirow{2}{*}{$\frac{HB^2}{KR}
				+
				\min\left\{
				\frac{\sigma B}{\sqrt{KR}},
				\frac{(H\sigma^2B^4)^{1/3}}{K^{1/3}R^{2/3}}
				\right\}
				+ 
				\frac{\sigma B}{\sqrt{MKR}}
                $}\\
			&\\
            & \multirow{2}{*}{$
				\hspace{50mm} +\ 
				\min\left\{HB^2,
				\frac{\zeta^2}{H},
				\frac{(H\zeta^2B^4)^{1/3}}{R^{2/3}}
				\right\}
				$}\\
			&\\
			\midrule

			\rowcolor{sectiongray}
			\multicolumn{2}{l}{\textbf{Problem class $\boldsymbol{\ppp^{H,B,\sigma}_{hom}}$:} \emph{Homogeneous setting}}\\
			\midrule
			
			\multirow{2}{*}{\begin{minipage}[t]{\linewidth}
					\citet{woodworth2020local}$^\ddagger$\\
					(Upper bound)
			\end{minipage}}
			& \multirow{2}{*}{$\frac{HB^2}{KR} + \min\cb{\frac{\sigma B}{\sqrt{KR}}, \frac{(H\sigma^2B^4)^{1/3}}{K^{1/3}R^{2/3}}} 
				+ \frac{\sigma B}{\sqrt{MKR}}$}\\
			&\\
			\midrule
			
			\multirow{2}{*}{\begin{minipage}[t]{\linewidth}
					\citet{glasgow2022sharp}\\
					(Lower Bound)
			\end{minipage}}
			& \multirow{2}{*}{$\frac{HB^2}{KR} + \min\cb{\frac{\sigma B}{\sqrt{KR}}, \frac{(H\sigma^2B^4)^{1/3}}{K^{1/3}R^{2/3}}} 
				+ \frac{\sigma B}{\sqrt{MKR}}$}\\
			&\\
			
			\bottomrule
		\end{tabular}
		\vspace{0.5em}
		\caption{Convergence guarantees of Local SGD for general convex functions across different data heterogeneity assumptions. \textbf{From top to bottom, we go from more heterogeneous problem classes to less heterogeneous ones}. 
			$^\star$The lower bound assumes $K,M\geq 2$.  $^\ddagger$This rate represents the minimum of two convergence rates demonstrated by \citet{woodworth2020local}.
		}
		\label{tab:rates}
        \vspace{-2em}
	\end{table}

	\subsection{A Hierarchy of Data Heterogeneity Assumptions}
	
	\paragraph{Homogeneous setting.} When data heterogeneity is low, the Local SGD iterates across machines remain close between communication rounds, so Local SGD incurs only a small \emph{consensus error} \citep{stich2018local,patel2025revisiting}. In the extreme homogeneous case
	$\mathcal{D}_1=\cdots=\mathcal{D}_M$, the client objectives coincide, and the local iterates on different machines evolve identically in expectation. This case, therefore, serves as a natural starting point, and much of the early work studied Local SGD under homogeneity.
	
	We denote the homogeneous problem class by
	$\mathcal{P}^{H,B,\sigma}_{\mathrm{hom}}\subseteq \mathcal{P}^{H,B,\sigma}$.
	As indicated by the bottom two rows of Table~\ref{tab:rates}, the minimax-optimal rate of Local SGD over $\mathcal{P}^{H,B,\sigma}_{\mathrm{hom}}$ is already known
	\citep{woodworth2020local,glasgow2022sharp}. However, homogeneity also hides the central difficulty in federated and distributed learning, as it ignores differences among clients. While Local SGD can outperform mini-batch SGD in this setting, this advantage arises only in regimes where single-machine SGD also performs better than mini-batch SGD~\citep{woodworth2021min}.

	\paragraph{Approximate simultaneous realizability.}
	A peculiarity of the homogeneous setting is that, since all clients share the same minimizer set, each client can, in principle, identify this set on its own in low-noise regimes. Thus, collaboration primarily helps by reducing variance. This perspective makes collaboration look rather narrow: communication primarily reduces noise, rather than enabling qualitatively better optimization under heterogeneity. One could instead ask what happens in problems where clients are allowed to have different data distributions but have at least some shared minimizers, i.e., they are simultaneously realizable~\cite{blum2017collaborative,han2023effect}? The following first-order assumption formalizes and generalizes this idea. 
	\begin{assumption}\label{ass:zeta_optimal}
		Objectives $\{F_m\}_{m\in[M]}$ satisfy $\zeta_\star$-first-order heterogeneity at the optima if there exists a minimizer of the average objective $x^\star\in \arg\min_{x\in \rr^d} F(x)$ such that $\norm{x^\star}\leq B$ and,
		\begin{align*}\frac{1}{M}\sum_{m\in[M]}\norm{\nabla F_m(x^\star)}^2 \leq \zeta_\star^2 .
		\end{align*}
	\end{assumption}
	We denote the class of problems that satisfy the above assumptions by $\ppp^{H, B, \sigma}_{\zeta_\star}$. Tight convergence rates for Local SGD over this problem class are known from the results of \citet{koloskova2020unified} and \citet{patel2024limits}, as summarized in the first two rows of Table~\ref{tab:rates}. Unfortunately, Local SGD cannot strictly dominate Mini-batch SGD for this problem class for any value of $\zeta_\star$. This indicates that stronger assumptions are needed to establish a provable advantage for Local SGD.   
	
	\paragraph{Uniformly bounded first-order heterogeneity.}
	Much of the early analysis of Local SGD considers a stronger variant of
	$\ppp^{H,B,\sigma}_{\zeta_\star}$ by requiring the local gradients to track the global gradient \emph{uniformly over the domain}~\citep{khaled2020tighter,woodworth2020minibatch,karimireddy2020scaffold}. This yields the following more stringent notion of first-order heterogeneity.
	\begin{assumption}\label{ass:zeta_everywhere}
		Objectives $\{F_m\}_{m\in[M]}$ satisfy $\zeta$-first-order heterogeneity if for all $x\in\mathbb{R}^d$,
		\begin{align*}
			\frac{1}{M}\sum_{m\in[M]}\bigl\|\nabla F_m(x)-\nabla F(x)\bigr\|_2^2 \le \zeta^2 .
		\end{align*}
	\end{assumption}
	We denote\footnote{With a slight abuse of notation, we use $\zeta_\star$ and $\zeta$ as both scalar values and to demarcate the problem classes implied by Assumptions \ref{ass:zeta_optimal} and \ref{ass:zeta_everywhere}, respectively. The usage should be clear from the context.} the corresponding problem class by $\ppp_{\zeta}^{H,B,\sigma}$.
	Under Assumption~\ref{ass:zeta_everywhere}, \citet{woodworth2020minibatch,luo2025revisiting} established regimes where vanilla Local SGD improves over mini-batch SGD (see Table~\ref{tab:rates}). Furthermore, the limit $\zeta\to 0$ smoothly recovers the homogeneous setting, making this interpolation between heterogeneity and homogeneity particularly appealing. At the same time, $\ppp_{\zeta}^{H,B,\sigma}$ is a very restricted problem class. For instance, in the case of quadratic objectives, a uniform bound as in Assumption~\ref{ass:zeta_everywhere} forces all clients to share the same Hessian (see, e.g., \citet[Remarks 2, 3]{patel2024limits}), which makes the problem essentially homogeneous.

	This motivates a natural question: is there an intermediate heterogeneity model that is weaker than Assumption~\ref{ass:zeta_everywhere} but stronger than Assumption~\ref{ass:zeta_optimal}, under which Local SGD provably outperforms mini-batch SGD while still allowing meaningful heterogeneity? Since \Cref{ass:zeta_everywhere} controls global geometric heterogeneity across clients, the intermediate heterogeneity class must also impose such global control without requiring gradients to remain uniformly close across clients.

	\paragraph{Controlling global geometry using bounded second-order heterogeneity.}
	\citet{patel2024limits} conjectured that the appropriate assumption to capture a nontrivial regime in which Local SGD can outperform mini-batch SGD is \emph{bounded second-order heterogeneity}. 
	\begin{assumption}\label{ass:tau}
		Objectives $\{F_m\}_{m\in[M]}$ satisfy $\tau$-second-order heterogeneity if, for all $x, y\in\rr^d$,
		\begin{equation}
			\frac{1}{M}\sum_{m\in[M]}\bigl\|\nabla F_m(x)-\nabla F_m(y) - \rb{\nabla F(x) - \nabla F(y)} \bigr\|_2^2 \le \tau^2\norm{x-y}^2 .\nonumber
		\end{equation}
	\end{assumption}
    We denote by $\ppp^{H,B,\sigma}_{\zeta_\star,\tau}$ the class of problems satisfying
	Assumptions~\ref{ass:zeta_optimal} and~\ref{ass:tau}.
    \begin{remark}[Bounded Hessian heterogeneity implies \Cref{ass:tau}]
\label{rem:hessian_implies_tau}
Suppose that each \(F_m\) is twice differentiable and that, for every
\(z\in\rr^d\),
\[
    \frac{1}{M}\sum_{m\in[M]}
    \bigl\|\nabla^2 F_m(z)-\nabla^2 F(z)\bigr\|_{2}^2
    \le \tau^2\enspace.
\]
Then the objectives satisfy \(\tau\)-second-order heterogeneity in the sense of
\Cref{ass:tau}. Indeed, for any \(x,y\in\rr^d\), letting \(v=x-y\), the
fundamental theorem of calculus gives
\[
    \nabla F_m(x)-\nabla F_m(y)
    -
    \bigl(\nabla F(x)-\nabla F(y)\bigr)
    =
    \int_0^1
    \bigl(\nabla^2 F_m(y+tv)-\nabla^2 F(y+tv)\bigr)v\,dt\enspace.
\]
Therefore, by Jensen's inequality and the assumed Hessian heterogeneity bound,
\[
\begin{aligned}
    &\frac{1}{M}\sum_{m\in[M]}
    \bigl\|
    \nabla F_m(x)-\nabla F_m(y)
    -
    \bigl(\nabla F(x)-\nabla F(y)\bigr)
    \bigr\|_2^2 \\
    &\qquad\le
    \int_0^1
    \frac{1}{M}\sum_{m\in[M]}
    \bigl\|\nabla^2 F_m(y+tv)-\nabla^2 F(y+tv)\bigr\|_{2}^2
    \norm{x-y}^2\,dt
    \le
    \tau^2\norm{x-y}^2\enspace.
\end{aligned}
\]
This is why we refer to \Cref{ass:tau} as a second-order heterogeneity
assumption.
\end{remark}
	 \begin{remark}[$\ppp^{H,B,\sigma}_{\zeta_\star,\tau}$ interpolates between problem classes]\label{rem:interpolation}
\label{rem:tau_interpolation}
\Cref{ass:tau} controls how different the ``second-order'' geometry can be
across clients, as opposed to the ``first-order'' geometry controlled by
\Cref{ass:zeta_everywhere}. It also interpolates between the problem classes
discussed above. At one extreme, if each \(F_m\) is \(H\)-smooth, then \(F\) is
also \(H\)-smooth, and
\[
    \bigl\|
    \nabla F_m(x)-\nabla F_m(y)
    -
    \bigl(\nabla F(x)-\nabla F(y)\bigr)
    \bigr\|_2
    \le 2H\norm{x-y}\enspace.
\]
Thus, \Cref{ass:tau} is vacuous when \(\tau=2H\), and
\(
    \boldsymbol{\ppp^{H,B,\sigma}_{\zeta_\star,\tau=2H}
    \subseteq
    \ppp^{H,B,\sigma}_{\zeta_\star}}
\). 

At the other extreme, when \(\tau=0\), the gradient gaps
\(\nabla F_m(x)-\nabla F(x)\) are constant in \(x\) implying,
\[
    \nabla F_m(x)-\nabla F(x)
    =
    \nabla F_m(x^\star)-\nabla F(x^\star)
    =
    \nabla F_m(x^\star)\enspace,
\]
and therefore
\[
    \frac{1}{M}\sum_{m\in[M]}
    \bigl\|\nabla F_m(x)-\nabla F(x)\bigr\|_2^2
    =
    \frac{1}{M}\sum_{m\in[M]}
    \bigl\|\nabla F_m(x^\star)\bigr\|_2^2
    \le \zeta_\star^2\enspace.
\]
Thus when \(\tau=0\), we have
\[
    \boldsymbol{\ppp^{H,B,\sigma}_{\zeta_\star,\tau=0}
    \subseteq
    \ppp^{H,B,\sigma}_{\zeta=\zeta_\star}}.
\]
This is precisely why $\ppp^{H,B,\sigma}_{\zeta_\star,\tau}$ sits between the two problem classes $\ppp^{H,B,\sigma}_{\zeta_\star}$ and $\ppp^{H,B,\sigma}_{\zeta}$ in \Cref{tab:rates}. 
\end{remark}

    The conjecture of \citet{patel2024limits} states that, over the problem class $\ppp^{H,B,\sigma}_{\zeta_\star,\tau}$,
	Local SGD should provably outperform mini-batch SGD while still allowing meaningful heterogeneity. Their conjecture is supported by three sources of evidence: (i) their own analysis of Local SGD under second-order heterogeneity together with the restrictive Assumption~\ref{ass:zeta_everywhere}; (ii) near-optimal analyses of related local-update methods in nonconvex settings \citep{patel2022towards}; and (iii) an independent line of work on distributed proximal methods (for example, DANE and its variants \citep{shamir2014dane, sun2022distributed, kovalev2022optimal, jiang2024fedred, jiang2024dane}) whose communication complexity improves under the same Hessian-similarity condition.
	
	Recently, \citet{patel2025revisiting} verified this conjecture in the strongly convex subclass of $\ppp^{H,B,\sigma}_{\zeta_\star,\tau}$, but their result does not extend to general convex objectives via a black-box convex-to-strongly-convex reduction as their guarantees impose restrictions on the strong convexity parameter.
	At the same time, they established a lower bound for the general convex class $\ppp^{H,B,\sigma}_{\zeta_\star,\tau}$ that improves with $\tau$ and $\zeta_\star$ and interpolates to the homogeneous setting as these parameters vanish.
	In the next section, we show that our new upper bound for Local SGD on $\ppp^{H,B,\sigma}_{\zeta_\star,\tau}$ matches this lower bound (up to constants) in the regime of small $\tau$, while retaining the desirable interpolation properties that previous upper bounds achieved only on the much smaller problem class $\ppp^{H,B,\sigma}_{\zeta}$ (see Table~\ref{tab:rates}).

    \begin{figure}
		\centering
		\includegraphics[width=0.6\linewidth]{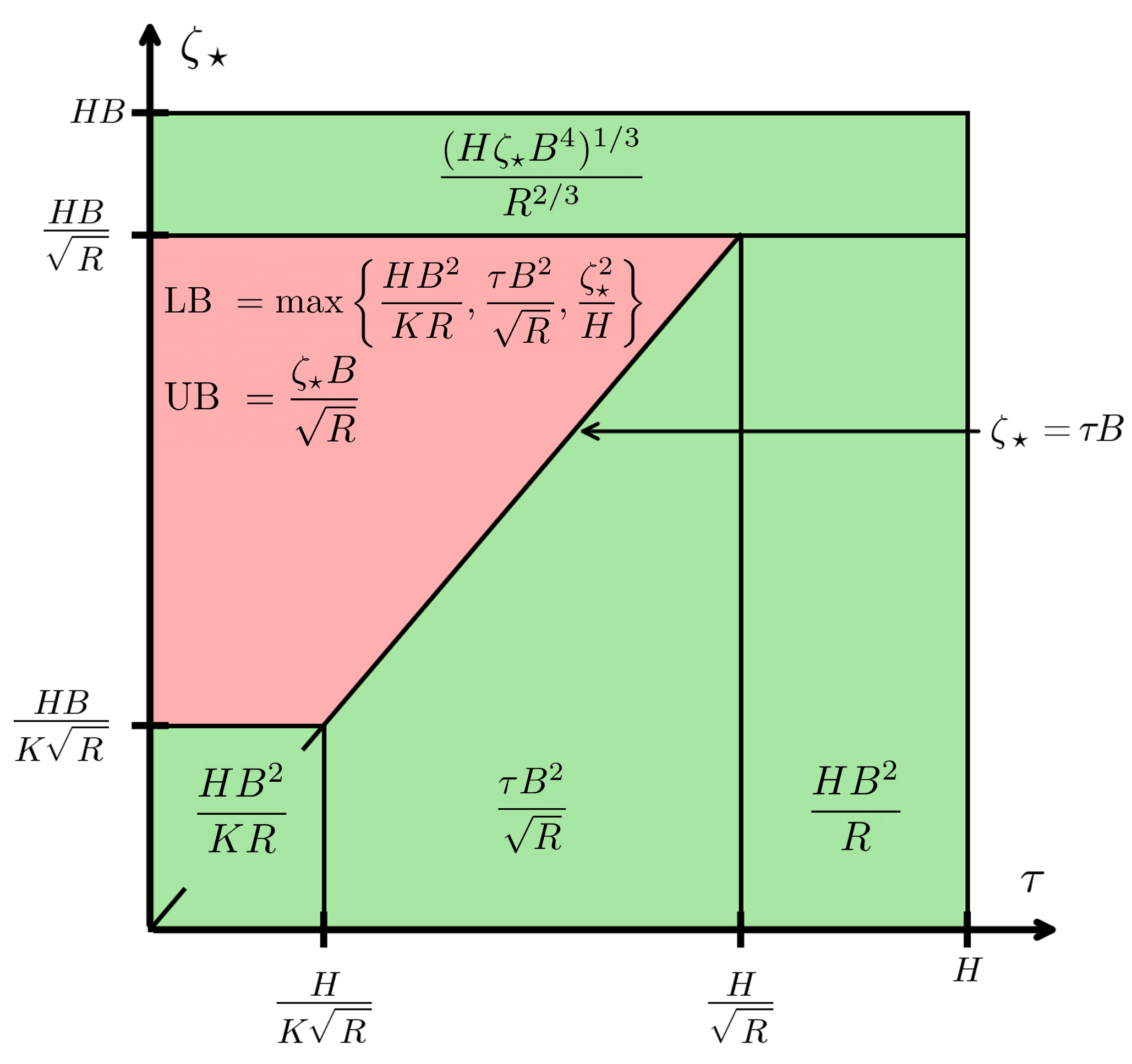}
 		\caption{The tightness of our upper and lower bounds, highlighted in the setting with $\sigma=0$ and $\zeta_\star~\leq~HB$ on the $\tau$--$\zeta_\star$ plane. The green areas indicate regimes where the upper and lower bounds match, along with the dominant term in each, while the red area indicates the regime where the bounds differ. All markers on the axes are up to numerical constants. See \Cref{app:phase} for details. 
        }
		\label{fig:lower_vs_upper}
	\end{figure}

	\section{Improved Analyses of Local SGD for Classes $\ppp^{H,B,\sigma}_{\zeta_\star,\tau}$ 
    and $\ppp^{H,B,\sigma}_{\zeta}$}\label{sec:upper}

	In this section, we will prove three upper bounds for Local SGD. The first analyzes
	perturbations against dense mini-batch SGD~\citep{woodworth2020local}, i.e.,
	$KR$ steps of mini-batch SGD (as opposed to $R$ steps). The second analyzes perturbations to $KR$ steps of SGD on a single machine under the additional assumption of uniform second-order heterogeneity. The third gives a complementary single-machine-style analysis under uniformly
	bounded first-order heterogeneity. We will denote by $\bar x_t \coloneqq \tfrac1M\sum_{m=1}^M x_t^m$ the average iterate for all time
	steps $t\in[0,T]$. This ``ghost sequence'' is useful for analysis even at the
	time steps where the average is not actually computed.

	\subsection{Perturbing Against Dense Mini-batch SGD}
	
	\paragraph{Step 1: A canonical recursion that isolates consensus error.}
	We begin with the standard smoothness-based one-step recursion for the average iterate, which appears in several analyses of Local SGD~\citep{stich2018local,woodworth2020local,woodworth2020minibatch}. This inequality mirrors the descent lemma for dense mini-batch SGD, with an additional term capturing the deviation of local iterates from their average. As a result, it decomposes the error into three components: optimization error, stochastic noise, and \emph{consensus error}, which measures the client drift due to the lack of complete synchrony.
	
	\begin{lemma}[Average iterate recursion with consensus error]
		\label{lem:avg-recursion}
		Consider the problems in $\ppp^{H,B,\sigma}$. If we run vanilla Local SGD with constant step-size $\eta\le 1/(10H)$, then for every $t\in[0,T-1]$,
		\begin{align}\label{lem:woodworth_descent_lemma} 
			\ee[F(\bar{x}_t)-F^\star] 
			\le \frac{\ee[\|\bar{x}_t-x^\star\|^2_2]-\ee[\|\bar{x}_{t+1}-x^\star\|^2_2]}{\eta} 
			+ \frac{3\eta\sigma^2}{M} 
			+ \frac{2H}{M}\sum_{m=1}^M\ee[\|\bar{x}_t-x_t^m\|^2_2] .
		\end{align}
	\end{lemma}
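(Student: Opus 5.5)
The plan is to follow the standard ``ghost sequence'' analysis of \citet{stich2018local,woodworth2020local}. Averaging the local updates \eqref{eq:local_updates} over machines shows that the average iterate always satisfies
\[
\bar x_{t+1} = \bar x_t - \eta g_t, \qquad g_t \coloneqq \frac1M\sum_{m=1}^M \nabla f(x_t^m; z_t^m).
\]
This holds whether or not $t+1$ is a communication step, since averaging the local updates gives exactly $\bar x_{t+1}$. I would expand
\[
\|\bar x_{t+1}-x^\star\|^2 = \|\bar x_t - x^\star\|^2 - 2\eta\langle g_t, \bar x_t - x^\star\rangle + \eta^2\|g_t\|^2
\]
and take conditional expectation given the history $\mathcal F_t$. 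The conditional mean of $g_t$ is $\bar g_t \coloneqq \frac1M\sum_m \nabla F_m(x_t^m)$. Because the machines sample independently, $\ee\|g_t-\bar g_t\|^2\le \sigma^2/M$ by \eqref{ass:stoch_first_order}. Hence $\ee_t\|g_t\|^2 \le \|\bar g_t\|^2 + \sigma^2/M$.

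For the inner product, I would use, for each $m$, convexity and smoothness of $F_m$ in the form of the standard three-point inequality
\[
\langle \nabla F_m(x_t^m), \bar x_t - x^\star\rangle \ge F_m(\bar x_t) - F_m(x^\star) - H\|\bar x_t - x_t^m\|^2 + \tfrac{1}{4H}\|\nabla F_m(x_t^m)-\nabla F_m(x^\star)\|^2 .
\]
This follows by writing $\bar x_t - x^\star = (\bar x_t - x_t^m) + (x_t^m - x^\star)$. On the first piece, I would combine the smoothness upper bound between $x_t^m$ and $\bar x_t$ with the lower bound, which gives $\langle\nabla F_m(x_t^m),\bar x_t-x_t^m\rangle \ge F_m(\bar x_t)-F_m(x_t^m)-\tfrac H2\|\bar x_t-x_t^m\|^2$. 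On the second piece, I would use the co-coercivity-type lower bound $F_m(x^\star)\ge F_m(x_t^m)+\langle\nabla F_m(x_t^m),x^\star-x_t^m\rangle+\tfrac1{2H}\|\nabla F_m(x_t^m)-\nabla F_m(x^\star)\|^2$. Averaging over $m$ yields $F(\bar x_t)-F^\star$ minus $\frac{H}{M}\sum_m\|\bar x_t-x_t^m\|^2$, plus a positive gradient-difference term.

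The main obstacle is absorbing $\eta^2\|\bar g_t\|^2$. For this I would use $\nabla F(x^\star)=0$, which gives $\bar g_t = \frac1M\sum_m(\nabla F_m(x_t^m)-\nabla F_m(x^\star))$. Jensen's inequality then bounds $\|\bar g_t\|^2$ by $\frac1M\sum_m\|\nabla F_m(x_t^m)-\nabla F_m(x^\star)\|^2$. This is exactly the positive term retained above, weighted by $2\eta\cdot\frac{1}{4H}$, and it dominates $\eta^2$ times the same quantity whenever $\eta\le 1/(2H)$, which holds under $\eta\le 1/(10H)$. If the constants are too tight, I would instead split off only part of it, or bound $\|\bar g_t\|^2\le 2\|\nabla F(\bar x_t)\|^2+\frac{2H^2}{M}\sum_m\|\bar x_t-x_t^m\|^2$. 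I would then use $\|\nabla F(\bar x_t)\|^2\le 2H(F(\bar x_t)-F^\star)$, absorbing a small fraction of the function gap (this costs a factor like $1-4\eta H\ge 1/2$) and a small extra consensus term $\eta H\cdot H\lesssim H/10$.

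Collecting terms gives
\[
2\eta\,\ee[F(\bar x_t)-F^\star]\cdot c \le \ee\|\bar x_t-x^\star\|^2-\ee\|\bar x_{t+1}-x^\star\|^2 + \frac{\eta^2\sigma^2}{M} + \frac{2\eta H c'}{M}\sum_m \ee\|\bar x_t - x_t^m\|^2 ,
\]
with $c\ge 1/2$. Dividing by $\eta$ and tracking constants using $\eta\le 1/(10H)$ should produce the stated coefficients $1/\eta$, $3\eta\sigma^2/M$ and $2H$; the slack in $3$ and $2H$ accommodates whichever absorption route is used.
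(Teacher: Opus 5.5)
Your proof is correct, and its main line differs from what the paper relies on. The paper gives no argument of its own here: it imports the bound from Lemma~7 of \citet{woodworth2020minibatch}, restated as \Cref{lem:lemma7_woodworth}. The textbook argument for bounds of this type is essentially your fallback. Writing $\Xi_t=\frac1M\sum_m\|x_t^m-\bar x_t\|^2$, one uses $\|\bar g_t\|^2\le 2\|\nabla F(\bar x_t)\|^2+2H^2\Xi_t\le 4H\,(F(\bar x_t)-F^\star)+2H^2\Xi_t$ and gives up part of the function gap.

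Your primary route instead keeps the co-coercivity remainder from \eqref{eq:onesided-cocoercive}. This is the same one-sided inequality the paper uses in \Cref{lem:descent_Bt_Dt}. You then cancel $\eta^2\|\bar g_t\|^2$ against that remainder using $\nabla F(x^\star)=0$ and Jensen. This is genuinely sharper. Your decomposition actually yields $\frac H2\Xi_t$ and $\frac{1}{2H}$, not the weaker $H$ and $\frac1{4H}$ you wrote. With those constants, and writing $D(t)=\mathbb{E}[F(\bar x_t)-F^\star]$, $A(t)=\mathbb{E}\|\bar x_t-x^\star\|^2$, $C(t)=\mathbb{E}\,\Xi_t$, you get $2\eta\,D(t)\le A(t)-A(t+1)+\eta H\,C(t)+\eta^2\sigma^2/M$ for every $\eta\le 1/H$. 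There is no loss on the gap and no extra $\eta^2H^2$ consensus term.

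The fallback, by contrast, leaves the gap with coefficient $2\eta(1-2\eta H)$. So it only delivers the stated form for $\eta\le 1/(4H)$. That covers the $1/(10H)$ in this statement, but not the $\eta\le 1/(2H)$ version the paper later invokes in \Cref{lem:amax_ub} and \Cref{thm:lsgd_consensus}. The fallback also stays within the coefficient $2H$ only if you use the sharper $\frac H2$.

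One step should be made explicit. Going from $2c\,\eta\,D(t)$ with $2c\ge 1$ to $\eta\,D(t)$ must use $D(t)\ge 0$ on the left-hand side. You cannot instead divide by $2c\eta$ and then enlarge the factor in front of $A(t)-A(t+1)$, because that difference can be negative.
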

	
	The consensus error $C(t)\coloneqq \frac{1}{M}\sum_{m=1}^M\ee[\|\bar x_t-x_t^m\|_2^2]$ is the main difficulty in heterogeneous settings. In prior works \citep{woodworth2020minibatch,luo2025revisiting}, the consensus error is directly bound using the uniform heterogeneity Assumption~\ref{ass:zeta_everywhere}. Our goal is to replace this $\zeta$-based control with a bound relying on $(\zeta_\star,\tau)$. A key observation is that the same recursion also controls the expected iterate error of the average trajectory. Specifically, denote $A_{\max}\coloneqq \max_{t\in[0,T]} \ee[\|\bar{x}_t-x^\star\|^2_2]$. By unrolling recursion \eqref{lem:woodworth_descent_lemma}, one can show
	\begin{equation}\label{eq:A_max_bound} 
		A_{\max}
		\le
		B^2+\frac{3\eta^2\sigma^2 T}{M}+2H\eta TC_{\max} ,\qquad\text{where} \qquad C_{\max} \coloneqq  \max_{s\in[0,T]}C(s) .
	\end{equation} 
	Thus, controlling the consensus error directly yields control over the expected iterate error of the average trajectory. Further details of this derivation are provided in \Cref{lem:amax_ub} in \Cref{app:new_technical_lemmas}.

	\paragraph{Step 2: Controlling consensus error via trajectory-wise gradient dissimilarity.}
	Our next step is to relate the consensus error to the gradient heterogeneity \emph{along the trajectory} of the average iterates, rather than uniformly over the entire domain. In particular, denote $V_{\max} \coloneqq \max_{s\in[0,T]} \frac{1}{M}\sum_{m\in[M]}\ee[\|\nabla F_m(\bar x_s)-\nabla F(\bar x_s)\|^2_2]$.
	Then in \Cref{lem:consensus_bound_V_max} in \Cref{app:new_technical_lemmas} we show that 
	\begin{equation}\label{eq:Ct_bound} 
		C_{\max} 
		\;\lesssim\; \eta^2 K^2 V_{\max} + \eta^2 K \sigma^2 .
	\end{equation}
	
	This bound shows that the consensus error is governed by the \emph{largest gradient dissimilarity encountered along the average trajectory}. In contrast to prior analyses (e.g., \citet{woodworth2020minibatch}), which directly upper-bound $V_{\max}$ using the uniform Assumption~\ref{ass:zeta_everywhere}, we defer control of $V_{\max}$ to the next step, which uses a trajectory-dependent control, thus avoiding the dependence on $\zeta$.

	\paragraph{Step 3: Controlling $V_{\max}$ using the iterate error.} We bound $V_{\max}$ using Assumptions \ref{ass:zeta_optimal} and \ref{ass:tau} (see \Cref{lem:vmax_ub} in \Cref{app:new_technical_lemmas}):
	\begin{equation}\label{eq:V_max_bound} 
		V_{\max} \le 2\tau^2 A_{\max} + 2\zeta_\star^2 .
	\end{equation}

	\paragraph{Step 4: Self-bounding loop and the main theorem.}
	We now close the argument by combining the bounds from the previous steps. 
	Equation~\eqref{eq:A_max_bound} shows that $A_{\max}$ is controlled by the largest consensus error $C_{\max}$. On the other hand, combining equations \eqref{eq:Ct_bound} with \eqref{eq:V_max_bound} shows that the largest consensus error itself can be bounded in terms of $A_{\max}$. Together, these relations form a self-bounding loop:
	\[ 
	A_{\max} \;\longrightarrow\; C_ {\max} \;\longrightarrow\; V_{\max} \;\longrightarrow\; A_{\max} .
	\]
	Under a sufficiently small step-size $\eta$, this recursion can be closed, allowing us to simultaneously bound $A_{\max}$ and $C_ {\max}$ purely in terms of $(\zeta_\star,\tau)$ (see Lemmas~\ref{lem:amax_ub} and \ref{lem:cmax_ub} in \Cref{app:new_technical_lemmas} for details).
	
	This self-bounding argument is the key technical ingredient of the proof: it replaces the need for \Cref{ass:zeta_everywhere} by controlling the entire trajectory of Local SGD through its own dynamics. Substituting the resulting consensus bound into Lemma~\ref{lem:avg-recursion} yields our new upper bound (see \Cref{app:upper_bounds}).
	
	\begin{theorem}[Informal version of \Cref{thm:lsgd_consensus}]\label{thm:lsgd_consensus_informal}
		For problems in the class $\ppp_{\zeta_\star, \tau}^{H, B, \sigma}$, using an appropriate step-size, the Local SGD output $\hat x = \frac{1}{T}\sum_{t=0}^{T-1}\bar{x}_t$ satisfies,
		\begin{align*}
			\ee\sb{F(\hat x) - F(x^\star)}\lesssim \frac{HB^2}{KR} + \frac{(H\tau^2)^{1/3}B^2}{R^{2/3}} + \frac{\sigma B}{\sqrt{MKR}} + \frac{(H\zeta_\star^2 B^4)^{1/3}}{R^{2/3}} + \frac{(H\sigma^2 B^4)^{1/3}}{K^{1/3}R^{2/3}} .
		\end{align*}
	\end{theorem}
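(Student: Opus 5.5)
Our plan is to follow the four-step outline above and close it by a step-size choice. Summing Lemma~\ref{lem:avg-recursion} over $t\in[0,T-1]$ with $T=KR$, telescoping, and using convexity of $F$ for $\hat x$ gives
\[
\ee[F(\hat x)]-F^\star \le \frac{B^2}{\eta T} + \frac{3\eta\sigma^2}{M} + 2H\, C_{\max}.
\]
So everything reduces to bounding $C_{\max}$ purely in terms of $(B,\sigma,\zeta_\star,\tau)$.

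Next we chain \eqref{eq:Ct_bound} and \eqref{eq:V_max_bound}. This gives $C_{\max}\le c\,\eta^2K^2(\tau^2 A_{\max}+\zeta_\star^2)+c\,\eta^2K\sigma^2$ for a numerical constant $c$. Substituting \eqref{eq:A_max_bound} for $A_{\max}$ yields
\[
C_{\max}\le c\,\eta^2K^2\tau^2\Bigl(B^2+\tfrac{3\eta^2\sigma^2T}{M}+2H\eta T C_{\max}\Bigr)+c\,\eta^2K^2\zeta_\star^2+c\,\eta^2K\sigma^2.
\]
The self-bounding step is to impose $2c\,\eta^3K^2\tau^2HT\le 1/2$, i.e.\ $\eta\lesssim (H\tau^2K^3R)^{-1/3}$. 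This lets us absorb the $C_{\max}$ term on the right into the left and get
\[
C_{\max}\lesssim \eta^2K^2\tau^2B^2+\eta^4K^2\tau^2\sigma^2T/M+\eta^2K^2\zeta_\star^2+\eta^2K\sigma^2.
\]
The term $\eta^4K^2\tau^2\sigma^2T/M$ is dominated by $\eta^2K\sigma^2$ under the same step-size constraint, since $\eta^2K\tau^2T\lesssim \eta^2K^2\tau^2 HT\eta/(\eta H K)$. If this absorption is cumbersome, we will additionally enforce $\eta^2K\tau^2 T\le 1$, which is implied when $\tau\le 2H$ and $\eta\lesssim(H\tau^2K^3R)^{-1/3}$ up to rearrangement.

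Plugging back in gives
\[
\ee[F(\hat x)]-F^\star\lesssim \frac{B^2}{\eta KR}+\frac{\eta\sigma^2}{M}+H\eta^2K^2\tau^2B^2+H\eta^2K^2\zeta_\star^2+H\eta^2K\sigma^2,
\]
valid for $\eta\le\min\{1/(10H),\,c'(H\tau^2K^3R)^{-1/3}\}$. We then take $\eta$ as the minimum of $1/(10H)$, $c'(H\tau^2K^3R)^{-1/3}$, and the balancing choices $\sqrt{M}B/(\sigma\sqrt{KR})$, $(B^2/(H\zeta_\star^2K^3R))^{1/3}$, and $(B^2/(H\sigma^2K^2R))^{1/3}$. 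The standard ``min of step sizes'' argument handles this: each $1/\eta$ term splits into a sum of the reciprocals, and each positive-power term is bounded using the corresponding constraint. This produces $HB^2/(KR)$, $\sigma B/\sqrt{MKR}$, $(H\zeta_\star^2B^4)^{1/3}/R^{2/3}$, and $(H\sigma^2B^4)^{1/3}/(K^{1/3}R^{2/3})$. The $\tau$-constraint gives $B^2(H\tau^2K^3R)^{1/3}/(KR)=(H\tau^2)^{1/3}B^2/R^{2/3}$. The term $H\eta^2K^2\tau^2B^2$ evaluated at that $\eta$ gives the same quantity, and at smaller $\eta$ it is only smaller.

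We expect the main obstacle to be closing the self-bounding loop rigorously. Three points need care: that $A_{\max}$ and $C_{\max}$ are finite a priori, so that absorption is legitimate (true because we have a finite horizon with Lipschitz updates and finite second moments); that the constants in \eqref{eq:Ct_bound} hold uniformly at times $s$ within a round; and that the $\sigma^2\tau^2$ cross term is dominated as described above. We will verify these via Lemmas~\ref{lem:amax_ub} and~\ref{lem:cmax_ub}.
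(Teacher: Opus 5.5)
Your architecture matches the paper's. You use the same descent recursion, the same chain $C_{\max}\to V_{\max}\to A_{\max}$, the same absorption condition $12\eta^3H\tau^2K^3R\le\frac12$ (take $c=6$), and the same step-size tuning. Closing the loop on $C_{\max}$ instead of first solving for $A_{\max}$ (as in \Cref{lem:amax_ub} and \Cref{lem:cmax_ub}) makes no difference.

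The step that fails as written is your treatment of the cross term $6c\,\eta^4K^2\tau^2\sigma^2T/M$ that survives the absorption. It is not dominated by $\eta^2K\sigma^2$. The ratio of the two is $\eta^2K^2R\tau^2/M$, and your identity only gives $\eta^2K\tau^2T\lesssim 1/(\eta HK)$, which is not $O(1)$. Concretely, take $\tau=H$, $M=2$, $\zeta_\star=0$, $\sigma>0$ small, so that $\eta=(24H\tau^2K^3R)^{-1/3}$ is the active constraint; then the ratio is of order $R^{1/3}$. Your fallback fails for the same reason: $\tau\le 2H$ and $\eta\lesssim(H\tau^2K^3R)^{-1/3}$ only give $\eta^2K\tau^2T\lesssim(\tau/H)^{2/3}R^{1/3}$, not $\le 1$. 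Imposing $\eta^2K\tau^2T\le 1$ as an extra constraint, i.e.\ $\eta\le 1/(\tau K\sqrt R)$, would also break the result. It adds $\tau B^2/\sqrt R$ to $B^2/(\eta KR)$, a term absent from the theorem that exceeds $(H\tau^2)^{1/3}B^2/R^{2/3}$ by a factor $(\tau\sqrt R/H)^{1/3}$.

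The fix is to compare this term, after multiplying by $2H$, with the $\eta\sigma^2/M$ term already in your bound, using the very same absorption condition:
\[
2H\cdot 6c\,\frac{\eta^4K^2\tau^2\sigma^2T}{M}
=6\,\bigl(2cH\eta^3K^2\tau^2T\bigr)\,\frac{\eta\sigma^2}{M}
\le\frac{3\eta\sigma^2}{M}.
\]
This is what the paper does implicitly in \Cref{lem:cmax_ub}. There it uses $6\eta^2K^2\tau^2\le 1/(2\eta HKR)$, which turns the $\eta^2\sigma^2T/M$ part of $A_{\max}$ into $\eta\sigma^2/(HM)$. The same step turns the $B^2$ part into $B^2/(\eta HKR)$, which is the counterpart of your direct bound on $H\eta^2K^2\tau^2B^2$. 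With this correction no extra step-size constraint is needed, and the rest of your tuning gives the stated rate.
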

	Comparing the above theorem to that of \citet{woodworth2020minibatch} for the class $\ppp_{\zeta_\star}^{H, B, \sigma}$, we note that replacing $\zeta$ by $\zeta_\star + \tau B$ recovers our theorem (up to numerical constants). This is not a mere coincidence, and in fact, for the step-size used in \Cref{thm:lsgd_consensus_informal} we show that $A_ {\max}\lesssim B^2$ (see Lemmas \ref{lem:amax_ub} and \ref{lem:cmax_ub}), which allows us to bound the gradient heterogeneity on the trajectory of Local SGD, i.e., $V_ {\max} \lesssim \tau^2B^2 + \zeta^2_\star$. Notably, this connects our proof to the insight of \citet[Proposition 13]{patel2024limits} and the guarantees of \citet{patel2025revisiting} which rely on a similar idea, albeit with strong convexity.    
	
	\begin{remark}[Beating Mini-batch SGD]
	    Comparing our bound in \Cref{thm:lsgd_consensus_informal} with the mini-batch SGD rate in \eqref{eq:mbsgd_rate}, we see that Local SGD improves over mini-batch SGD whenever
	\[
	K\gg 1,
	\qquad
	\tau \lesssim \frac{H}{\sqrt R},
	\qquad
	\zeta_\star \lesssim \frac{HB}{\sqrt R},
	\qquad
	\sigma \lesssim HB\sqrt{\frac KR} .
	\]
	Compared with the analysis under \Cref{ass:zeta_everywhere} by \citet{woodworth2020minibatch}, this is a significantly larger regime, highlighting the role of second-order heterogeneity (cf. \citet[Theorem 15]{patel2024limits}).
	\end{remark}
	
	\subsection{Perturbing Against Single-Machine SGD}
	
	We now proceed to prove our second new upper bound under the second-order heterogeneity \Cref{ass:tau}. Our starting point is the observation that, even in the homogeneous setting, the optimal upper bound for Local SGD (see \Cref{tab:rates}) due to \citet{woodworth2020local} requires two analyses, and specifically one of them shows that Local SGD is never worse than single-machine SGD. In the following theorem, we extend this analysis from the class $\ppp_{hom}^{H, B, \sigma}$ to $\ppp_{\zeta_\star, \tau}^{H, B, \sigma}$. 
	
	\begin{theorem}[Informal version of \Cref{thm:lsgd_singlemachine_new}]\label{thm:lsgd_singlemachine_informal}
		For problems in the class $\ppp_{\zeta_\star, \tau}^{H, B, \sigma}$, using an appropriate step-size, the Local SGD output $\hat x = \frac{1}{T}\sum_{t=0}^{T-1}\bar{x}_t$ satisfies,
		\begin{align*}
			\ee\sb{F(\hat x) - F(x^\star)}\lesssim \frac{HB^2}{KR} + \frac{\sigma B}{\sqrt{KR}} + \frac{\tau B^2}{\sqrt{R}}  + \frac{\zeta_\star B}{\sqrt{R}}.
		\end{align*}
	\end{theorem}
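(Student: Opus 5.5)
Following \citet{woodworth2020local}, we compare Local SGD to a single machine running SGD for $KR$ steps. The per-client suboptimality is then controlled through the distance of $\bar x_t$ to $x^\star$, instead of through the consensus error multiplied by $H$.

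\textbf{Step 1 (one-step recursion).} For a non-communication step, the average iterate moves by $-\frac{\eta}{M}\sum_m \nabla f(x_t^m;z_t^m)$. Expand $\|\bar x_{t+1}-x^\star\|^2$ and add and subtract $\nabla F_m(\bar x_t)$. Then use convexity and $H$-smoothness of each $F_m$ in the form $\langle \nabla F_m(x_t^m), x_t^m - x^\star\rangle \ge F_m(x_t^m)-F_m(x^\star)$, together with co-coercivity to absorb the squared-gradient term when $\eta\le 1/(cH)$. The noise term is bounded crudely by $\eta^2\sigma^2$; we do not use the $1/M$ reduction, since the target rate is $\sigma B/\sqrt{KR}$.

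This gives
\[
\ee\|\bar x_{t+1}-x^\star\|^2 \le \ee\|\bar x_t - x^\star\|^2 - \eta\,\ee[F(\bar x_t)-F^\star] + \eta^2\sigma^2 + \eta\,\mathcal E_t ,
\]
where $\mathcal E_t$ collects the cross terms $\frac1M\sum_m\langle \nabla F_m(x_t^m)-\nabla F_m(\bar x_t), \bar x_t - x_t^m\rangle$-type quantities. Each local $F_m(x_t^m)$ is lower bounded by $F_m(\bar x_t)+\langle\nabla F_m(\bar x_t),x_t^m-\bar x_t\rangle$. The linear terms $\frac1M\sum_m\langle \nabla F_m(\bar x_t), x_t^m-\bar x_t\rangle$ do not vanish, because each client's deviation correlates with its own gradient. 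Writing $\nabla F_m(\bar x_t)=\nabla F(\bar x_t)+(\nabla F_m(\bar x_t)-\nabla F(\bar x_t))$, the $\nabla F$ part cancels on averaging. The remainder is bounded by $\sqrt{V_t}\sqrt{C(t)}$, where $V_t$ is the trajectory dissimilarity of Step 2 of the previous subsection.

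\textbf{Step 2 (bounding the heterogeneous remainder without $H$).} The key is to avoid paying $H\,C(t)$, since that would reintroduce $HB^2/R$-type terms. Instead, bound the remainder via Cauchy--Schwarz as $\sqrt{V_{\max}C_{\max}}$. By \eqref{eq:Ct_bound}, $\sqrt{C_{\max}}\lesssim \eta K\sqrt{V_{\max}}+\eta\sqrt K\sigma$, so the remainder is $\lesssim \eta K V_{\max}+\eta\sqrt{K}\sigma\sqrt{V_{\max}}$. By \eqref{eq:V_max_bound}, $V_{\max}\le 2\tau^2A_{\max}+2\zeta_\star^2$.

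\textbf{Step 3 (closing the loop).} Unroll over $T=KR$ steps to get
\[
\ee[F(\hat x)]-F^\star \lesssim \frac{B^2}{\eta KR} + \eta\sigma^2 + \eta K(\tau^2 A_{\max}+\zeta_\star^2) .
\]
The same unrolled recursion bounds $A_{\max}\le B^2 + \eta^2\sigma^2 KR + \eta^2K^2R(\tau^2A_{\max}+\zeta_\star^2)$, i.e.\ it is self-bounding as in Lemma~\ref{lem:amax_ub}. Choose $\eta \le 1/(\tau K\sqrt{R})$ so that the $A_{\max}$ coefficient is at most $1/2$; combined with the other step-size choices this ensures $A_{\max}\lesssim B^2$. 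Then take
\[
\eta=\min\left\{\frac1{cH},\ \frac{B}{\sigma\sqrt{KR}},\ \frac{1}{\tau K\sqrt R},\ \frac{B}{\zeta_\star K\sqrt R}\right\} .
\]
Substituting balances the terms into $\frac{HB^2}{KR}+\frac{\sigma B}{\sqrt{KR}}+\frac{\tau B^2}{\sqrt R}+\frac{\zeta_\star B}{\sqrt R}$.

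\textbf{Main obstacle.} The hard step is Step 1/2: extracting a \emph{linear} (not $H$-weighted) dependence on the drift. Client drift is correlated with the client's own gradient, and we want this correlation to enter only through $V_t$. The plan is to handle it by the $\nabla F_m-\nabla F$ decomposition above. If the resulting $\sqrt{V C}$ bound is too lossy in $\sigma$, the fallback is to track the noise part of the drift separately. Its mean contribution is zero conditionally, leaving only $\eta^2\sigma^2$-type terms.
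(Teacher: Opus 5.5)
\textbf{Gap in Step 1: the ghost-iterate potential cannot avoid the $HC(t)$ term.} When you expand $\|\bar x_{t+1}-x^\star\|^2$, the inner product you have to lower bound is $\langle \bar x_t - x^\star, \nabla F_m(x_t^m)\rangle$, not $\langle x_t^m - x^\star, \nabla F_m(x_t^m)\rangle$. Split $\bar x_t - x^\star=(x_t^m-x^\star)+(\bar x_t-x_t^m)$ and carry out the expansions you describe: convexity at $x_t^m$, the exact Bregman identity around $\bar x_t$, and adding and subtracting $\nabla F_m(\bar x_t)$ in the drift cross term. The linear terms $\langle \nabla F_m(\bar x_t), x_t^m-\bar x_t\rangle$ then cancel identically, per client, and what survives is
\[
\frac1M\sum_{m} \langle \bar x_t - x^\star, \nabla F_m(x_t^m)\rangle \;\ge\; F(\bar x_t)-F(x^\star) - \frac1M\sum_{m}\mathcal{D}_{F_m}(\bar x_t, x_t^m).
\]
So the obstruction is not $Q(t)$. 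It is the reverse Bregman divergence $\frac1M\sum_m \mathcal{D}_{F_m}(\bar x_t,x_t^m)\in\bigl[0,\tfrac{H}{2}\cdot\tfrac1M\sum_m\|x_t^m-\bar x_t\|^2\bigr]$, which measures curvature, not heterogeneity. For identical clients ($V\equiv 0$) with noise it can be of order $H\eta^2K\sigma^2$, so it cannot be bounded by $\sqrt{V(t)C(t)}$. Your noise fallback does not rescue it either: the term is quadratic in the drift, so the zero conditional mean of the noise does not remove it. The squared-gradient term has the same issue. In this recursion there is no favorable $\mathcal{D}_{F_m}(x_t^m,\bar x_t)$ term to absorb the co-coercivity bound on $\|\nabla F_m(x_t^m)-\nabla F_m(\bar x_t)\|^2$. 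Paying $\eta H C(t)$ for these terms reproduces \Cref{lem:lemma7_woodworth}, and you land on the rate of \Cref{thm:lsgd_consensus_informal} instead. That rate has $(H\sigma^2B^4)^{1/3}/(K^{1/3}R^{2/3})$ and $(H\tau^2)^{1/3}B^2/R^{2/3}$ where you want $\sigma B/\sqrt{KR}$ and $\tau B^2/\sqrt R$.

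\textbf{Missing idea: use the per-client potential.} The paper, following the homogeneous argument of \citet{woodworth2020local}, tracks $B(t)=\frac1M\sum_m\ee\|x_t^m-x^\star\|^2$, which equals $A(t)+C(t)$. Each client's step is expanded around its own iterate, so convexity yields $S(t)=\frac1M\sum_m\ee[F_m(x_t^m)-F_m(x^\star)]$ with no drift cross term. Synchronization can only decrease $B$ (Jensen), and the noise costs $\eta^2\sigma^2$, consistent with your target rate. One then writes $S(t)=D(t)+R(t)+Q(t)$, where $R(t)=\frac1M\sum_m\ee\,\mathcal{D}_{F_m}(x_t^m,\bar x_t)\ge 0$ now sits on the favorable side. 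Co-coercivity gives $\frac1M\sum_m\ee\|\nabla F_m(x_t^m)\|^2\le 4HR(t)+4HD(t)+2V(t)$, and the $R$ and $D$ pieces are absorbed when $\eta\le 1/(4H)$. Only $Q(t)$ survives, bounded by $\sqrt{V(t)C(t)}$, which gives
\[
D(t)\le \frac{B(t)-B(t+1)}{\eta}+2\eta V(t)+\eta\sigma^2+2\sqrt{V(t)C(t)},
\]
as in \Cref{lem:descent_Bt_Dt}. From there your Steps 2--3 and your step-size go through essentially verbatim, with $B_{\max}$ in place of $A_{\max}$. You have $V_{\max}\le 2\tau^2B_{\max}+2\zeta_\star^2$ because $A\le B$. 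The companion inequality $B(t+1)\le B(t)+2\eta^2V(t)+\eta^2\sigma^2+2\eta\sqrt{V(t)C(t)}$ then closes the self-bounding loop to $B_{\max}\le 2B^2$.
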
 
	
	Note that when $\tau$ and $\zeta_\star$ are zero, the above theorem recovers the convergence rate of single-machine SGD, highlighting that the last two terms indeed capture the cost of having different data distributions across machines. The proof of the above theorem relies on a fundamentally different one-step recursion (see \Cref{lem:descent_Bt_Dt} in \Cref{app:new_technical_lemmas}), but also uses a self-bounding argument that couples consensus error with the following alternative iterate error (Lemmas~\ref{lem:bound_C_max_through_B_max} and \ref{lem:bound_B_max_V_max_C_max_through_B2} in \Cref{app:new_technical_lemmas}),
	$$ 
	B_{\max} \coloneqq \max_{0\le s\le T}\frac{1}{M}\sum_{m\in[M]}\ee\sb{\norm{x_s^m - x^\star}^2} \ge A_{\max} .
	$$
	We defer the proof to the appendix, as it is qualitatively similar to the proof of \Cref{thm:lsgd_consensus_informal}. While \Cref{thm:lsgd_singlemachine_informal} is crucial to characterize the tight rates for local SGD (see \Cref{sec:lower}), it does not characterize a wider regime than \Cref{thm:lsgd_consensus_informal} where local SGD beats mini-batch SGD. 

    \subsection{An Improved Analysis under \Cref{ass:zeta_everywhere}}
    
	Finally, we also prove an improved upper bound (in \Cref{app:upper_bound_zeta}), for the class $\ppp_{\zeta}^{H, B, \sigma}$ which relies on perturbing against the single machine SGD analysis, as in previous subsection, and thus compliments existing analyses under \Cref{ass:zeta_everywhere} (see \Cref{tab:rates}). 
	
	\begin{theorem}[Informal version of \Cref{thm:lsgd_zeta}]\label{thm:lsgd_zeta_informal}
		For problems in the class $\ppp_{\zeta}^{H, B, \sigma}$, using an appropriate step-size, the Local SGD output $\hat x = \frac{1}{T}\sum_{t=0}^{T-1}\bar{x}_t$ satisfies,
		\begin{align*}
			\ee\sb{F(\hat x) - F(x^\star)}\lesssim \frac{HB^2}{KR} + \frac{\sigma B}{\sqrt{KR}} + \frac{\zeta B}{\sqrt{R}}  .
		\end{align*}
	\end{theorem}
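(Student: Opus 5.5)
We perturb against single-machine SGD rather than dense mini-batch SGD, following the second analysis of \citet{woodworth2020local} for the homogeneous case. The key idea is to track the average distance of the \emph{local} iterates to the optimum,
\[
D(t)\coloneqq \frac1M\sum_{m}\ee\|x_t^m-x^\star\|^2,
\]
instead of the distance of the average iterate. This avoids the $\sigma^2/M$-type bookkeeping and never needs the local iterates to be close to one another in a quantitative way tied to $H$.

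\textbf{One-step recursion.} Within a round, each machine takes an SGD step on $F_m$. Write $\nabla F_m(x_t^m)=\nabla F(x_t^m)+\bigl(\nabla F_m(x_t^m)-\nabla F(x_t^m)\bigr)$. Using convexity and $H$-smoothness of $F$ with $\eta\le 1/(10H)$, I expect
\[
\ee\|x_{t+1}^m-x^\star\|^2\le \ee\|x_t^m-x^\star\|^2-\eta\,\ee[F(x_t^m)-F^\star]+\eta^2\sigma^2-2\eta\,\ee\langle \nabla F_m(x_t^m)-\nabla F(x_t^m),\,x_t^m-x^\star\rangle+O(\eta^2)\,\ee\|\nabla F_m(x_t^m)-\nabla F(x_t^m)\|^2 .
\]
Averaging over $m$, the dissimilarity vectors average to zero. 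The cross term therefore equals
\[
-\frac{2\eta}{M}\sum_m\bigl\langle \nabla F_m(x_t^m)-\nabla F(x_t^m),\,x_t^m-\bar x_t\bigr\rangle .
\]
By Young's inequality this is at most
\[
\eta\,\gamma\, C(t)+\frac{\eta}{\gamma}\cdot\frac1M\sum_m\|\nabla F_m(x_t^m)-\nabla F(x_t^m)\|^2 .
\]
Averaging commutes with the communication step, and Jensen gives $\|\bar x-x^\star\|^2\le \frac1M\sum_m\|x^m-x^\star\|^2$. Hence $D$ can only decrease at a synchronization, and the recursion telescopes across rounds.

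\textbf{Self-bounding loop.} By \Cref{ass:zeta_everywhere}, the gradient dissimilarity at any point is bounded by $\zeta^2$ uniformly, so no trajectory coupling is needed for that quantity; this is what makes the present case simpler than \Cref{thm:lsgd_singlemachine_informal}. For the consensus error, the argument behind \eqref{eq:Ct_bound} applied with $V_{\max}\le\zeta^2$ gives
\[
C_{\max}\lesssim \eta^2K^2\zeta^2+\eta^2K\sigma^2 .
\]
I would also use $F(x_t^m)\ge F(\bar x_t)$ after averaging, by convexity. Choosing $\gamma$ appropriately, the combined cost per step is of order $\eta\sqrt{C_{\max}}\,\zeta$-type terms; optimizing $\gamma$ gives a per-step term of order $\eta^2K\zeta^2$. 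Telescoping over $T=KR$ steps and dividing by $\eta T$ then yields
\[
\ee[F(\hat x)-F^\star]\lesssim \frac{B^2}{\eta KR}+\eta\sigma^2+\eta K\zeta^2 .
\]

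\textbf{Step-size.} Take
\[
\eta=\min\Bigl\{\tfrac1{10H},\ \tfrac{B}{\sigma\sqrt{KR}},\ \tfrac{B}{\zeta K\sqrt R}\Bigr\},
\]
which gives $\frac{HB^2}{KR}+\frac{\sigma B}{\sqrt{KR}}+\frac{\zeta B}{\sqrt R}$.

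\textbf{Main obstacle.} The delicate step is the cross-term bound: a naive Cauchy--Schwarz gives $\eta\zeta\sqrt{C}$ per step. That yields $\eta^2\sqrt K\,\zeta\sigma$ and $\eta^2K\zeta^2$ contributions, and I must check that the mixed term is dominated, via AM-GM, by $\eta\sigma^2+\eta K\zeta^2$. I expect it is, since $\eta^2\sqrt K\zeta\sigma\le \eta^2(\sigma^2+K\zeta^2)$ and $\eta\le 1/H$ only helps. If the Young's split instead produces an $H$-dependent factor, I would bound the variance part of $C$ separately using the martingale structure of the noise.
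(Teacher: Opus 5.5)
Your overall architecture matches the paper's proof of \Cref{thm:lsgd_zeta}. This includes the potential $\Phi(t)\coloneqq\frac1M\sum_m\mathbb{E}\|x_t^m-x^\star\|^2$ with Jensen at synchronization, the consensus bound $C_{\max}\lesssim\eta^2K^2\zeta^2+\eta^2K\sigma^2$, AM--GM on the mixed $\eta^2\sqrt K\zeta\sigma$ term (which is indeed harmless), and the step-size. The gap is in the one-step recursion. After splitting $\nabla F_m(x_t^m)=\nabla F(x_t^m)+\Delta_m(x_t^m)$ with $\Delta_m\coloneqq\nabla F_m-\nabla F$, you assert that the dissimilarity vectors average to zero, so $x^\star$ can be replaced by $\bar x_t$ in the cross term. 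But $\sum_m\Delta_m(y)=0$ holds only at a common point $y$, whereas here each $\Delta_m$ is evaluated at its own $x_t^m$. The leftover term is $-2\eta\bigl\langle\frac1M\sum_m[\Delta_m(x_t^m)-\Delta_m(\bar x_t)],\,\bar x_t-x^\star\bigr\rangle$. The only control that makes it small is smoothness, $\|\Delta_m(x)-\Delta_m(y)\|\le 2H\|x-y\|$. That costs order $\eta H\sqrt{C(t)}\,\sqrt{\mathbb{E}\|\bar x_t-x^\star\|^2}$ per step. Even granting $\|\bar x_t-x^\star\|\lesssim B$, this adds $\eta HK\zeta B$ to the averaged bound, i.e.\ $HB^2/\sqrt R$ at $\eta=B/(\zeta K\sqrt R)$, and without strong convexity nothing absorbs it. Your Young split has a second problem. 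It needs $\frac1M\sum_m\|\Delta_m(x_t^m)\|^2\le\zeta^2$, but Assumption~\ref{ass:zeta_everywhere} averages over $m$ at a common point, so at distinct points this quantity is in general only bounded by $M\zeta^2$.

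The missing idea is to measure heterogeneity only at the common point $\bar x_t$ and to pay for local drift with Bregman divergences of the individual $F_m$. First, use convexity of each $F_m$ (not of $F$): $\langle\nabla F_m(x_t^m),x_t^m-x^\star\rangle\ge F_m(x_t^m)-F_m(x^\star)$. Then expand $F_m(x_t^m)$ around $\bar x_t$. This gives $\frac1M\sum_m\mathbb{E}[F_m(x_t^m)-F_m(x^\star)]=D(t)+R(t)+Q(t)$, where:
\begin{itemize}
\item $D(t)=\mathbb{E}[F(\bar x_t)-F^\star]$;
\item $R(t)\ge0$ is the average of $F_m(x_t^m)-F_m(\bar x_t)-\langle\nabla F_m(\bar x_t),x_t^m-\bar x_t\rangle$;
\item $Q(t)=\frac1M\sum_m\mathbb{E}\langle\Delta_m(\bar x_t),x_t^m-\bar x_t\rangle$.
\end{itemize}
Now the $\nabla F(\bar x_t)$ contribution genuinely cancels. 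Moreover $|Q(t)|\le\sqrt{V(t)C(t)}$ with $V(t)=\frac1M\sum_m\mathbb{E}\|\Delta_m(\bar x_t)\|^2\le\zeta^2$, which the assumption legitimately gives. For the second moment, one-sided co-coercivity of $F_m$ yields $\frac1M\sum_m\mathbb{E}\|\nabla F_m(x_t^m)\|^2\le 4HR(t)+4HD(t)+2V(t)$. So for $\eta\le 1/(4H)$ the $H$-dependent drift is absorbed by $R(t)$ on the left-hand side. Your Jensen step $\frac1M\sum_mF(x_t^m)\ge F(\bar x_t)$ discards exactly this Bregman term. The corrected recursion is $D(t)\le\frac{\Phi(t)-\Phi(t+1)}{\eta}+\eta\sigma^2+2\eta V(t)+2\sqrt{V(t)C(t)}$, after which the rest of your plan goes through unchanged.
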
 

	Comparing the above rate with the prior guarantee of
	\citet{woodworth2020minibatch}, together with the refinement of
	\citet{luo2025revisiting} under \Cref{ass:tau}
	(see \Cref{tab:rates}), observe that
	\[
	\frac{\zeta B}{\sqrt{R}}
	\lesssim
	\frac{(H\zeta^2B^4)^{1/3}}{R^{2/3}}
	\qquad\text{whenever}\qquad
	\zeta \lesssim \frac{HB}{\sqrt{R}} ,
	\]
	and
	\[
	\frac{\sigma B}{\sqrt{KR}}
	\lesssim
	\frac{(H\sigma^2B^4)^{1/3}}{K^{1/3}R^{2/3}}
	\qquad\text{whenever}\qquad
	\sigma \lesssim HB\sqrt{\frac{K}{R}} .
	\]
	Consequently, in this regime, our upper bound is no larger, up to universal numerical constants, than the prior upper bound over $\ppp_{\zeta}^{H,B,\sigma}$, and gives a sharper dependence on the heterogeneity and stochastic-noise parameters. Comparing \Cref{thm:lsgd_zeta_informal} with \Cref{thm:lsgd_singlemachine_informal}, we note that the latter guarantee can essentially be obtained by replacing $\zeta$ in the former guarantee by $\zeta_\star + \tau B$, which mirrors the relationship between \Cref{thm:lsgd_consensus_informal} and the exiting convergence analysis under \Cref{ass:zeta_everywhere} (see the rate due to \citet{woodworth2020minibatch,luo2025revisiting} in \Cref{tab:rates}).

	\section{A New Lower Bound and the Tightness of Our Results}\label{sec:lower}
	In this section, we prove a new lower bound for Local SGD, which builds on existing lower bounds by \citet{glasgow2022sharp} and \citet{patel2025revisiting}. The lower bound is crucial for decoupling the effects of first- and second-order heterogeneity and relies on two orthogonal mechanisms of client drift.

	\begin{theorem}[Informal version of Theorems \ref{thm:tau_sqrtR_lower_bound} and \ref{thm:tau-zero-lower}]\label{thm:lsgd_lb_informal}
		If $K, M\geq 2$, then there exists a problem in the class $\ppp_{\zeta_\star, \tau}^{H, B, \sigma}$, such that for any step-size $\eta$, the final Local SGD iterate\footnote{We state the lower bound for the final Local SGD iterate, while the upper bound is for the averaged iterate. This gap between the upper and lower bounds is common in this literature~\citep{woodworth2020local,woodworth2020minibatch,glasgow2022sharp,patel2022towards,patel2024limits,patel2025revisiting}. The lower bound can be repeatedly applied across all time steps, with additional logarithmic factors, but we omit these technical details for brevity.} $\bar x_{KR}$ must have,
		\begin{equation}
			\ee\sb{F(\bar x_{KR})}- F(x^\star) \gtrsim \min \left\{
			\frac{HB^2}{R},
			\;
			\frac{\tau B^2}{\sqrt R}
			\right\}
			+ 
			\min\left\{HB^2,
			\frac{\zeta_\star^2}{H},
			\frac{(H\zeta_\star^2B^4)^{1/3}}{R^{2/3}}
			\right\} .\nonumber
		\end{equation}
	\end{theorem}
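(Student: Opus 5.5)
The lower bound is a sum of two terms, so I will build two hard instances and combine them. The first instance isolates second-order heterogeneity and has $\zeta_\star = 0$ and $\sigma=0$. The second isolates first-order heterogeneity at the optimum and has $\tau = 0$. Each instance lives in its own coordinate block: the first in coordinate(s) $x_{(1)}$ and the second in orthogonal coordinates $x_{(2)}$. The sum $f = f^{(1)}(x_{(1)}) + f^{(2)}(x_{(2)})$ is then in $\ppp_{\zeta_\star,\tau}^{H,B,\sigma}$, after rescaling $B$ by $\sqrt2$ and $H$ by a constant. This works because the smoothness, $\tau$, and $\zeta_\star$ quantities add across orthogonal blocks, and Local SGD decouples coordinatewise. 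For a fixed step-size $\eta$, the suboptimality of $f$ is at least the maximum of the two block suboptimalities. So it suffices to show, for every $\eta$, that each instance forces its own term. The final bound is then $\gtrsim \max \gtrsim$ sum.

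\textbf{Block 2 ($\zeta_\star$ term).} Here I would reuse the construction of \citet{patel2024limits}/\citet{glasgow2022sharp} with identical Hessians ($\tau=0$), adapting it so the bound reads $\min\{HB^2,\zeta_\star^2/H,(H\zeta_\star^2B^4)^{1/3}/R^{2/3}\}$. I would use a piecewise quadratic / Huber-type 1D function per client with gradient offsets $\pm\zeta_\star$ across two client groups (this needs $M\ge2$). The construction has a large-step-size regime and a small-step-size regime.

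In the large-step-size regime, $\eta \gtrsim 1/(HK)$-type, drift induced by the offsets $\pm\zeta_\star$ over $K$ local steps moves the average iterate off $x^\star$ by an amount of order $\min\{\eta K\zeta_\star, \zeta_\star/H\}$. The asymmetric curvature makes this a biased displacement, costing $\gtrsim \min\{\zeta_\star^2/H, H\eta^2K^2\zeta_\star^2\}$.

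In the small-step-size regime, a separate quadratic coordinate initialized at distance $B$ contracts only as $(1-\eta \mu)^{KR}$, which leaves error $\gtrsim B^2/(\eta KR)$. Balancing the two regimes over $\eta$ gives $(H\zeta_\star^2B^4)^{1/3}/R^{2/3}$. The caps $HB^2$ and $\zeta_\star^2/H$ arise when the optimal balance point exits the admissible range.

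\textbf{Block 1 ($\tau$ term).} This block is the main new ingredient and the expected obstacle. I would take quadratics $F_m(x) = \frac12 h_m x^2 - b_m x$ with client curvatures $h_m \in \{H', H''\}$ split across two groups. The spread $|h_m - \bar h|$ is tuned so that Assumption~\ref{ass:tau} holds with parameter $\tau$, the average curvature is $\bar h$, and $\nabla F_m(x^\star)=0$ for every client so that $\zeta_\star=0$. One subtlety is that quadratics with different curvatures sharing a minimizer also have $\zeta_\star=0$ but no drift. So I plan to place the drift elsewhere: one coordinate carries no heterogeneity and forces slow progress $B^2/(\eta KR)$ when $\eta$ is small.

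To capture the penalty of $\tau$ at large $\eta$, I would exploit that the round map of Local SGD on a client quadratic is $x \mapsto \frac1M\sum_m (1-\eta h_m)^K x$. Choose the curvatures so that $\bar h$ is small, about $\tau/\sqrt{R}$ or $H/R$, but a client has $h_m \approx \tau$. Then with $\eta K\tau \gtrsim 1$, the averaged contraction factor per round is only about $1-\Theta(\text{fraction})$ instead of $(1-\eta\bar h)^K$. More to the point, the effective curvature seen by Local SGD differs from $\bar h$, so the fixed point or progress is wrong.

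The cleanest version I would try is a 2D construction. There, heterogeneous Hessians with rotated eigenbases make the round map's fixed point $x_\infty \neq x^\star$ with bias about $\eta K\tau B$. This bias costs $\bar h\,(\eta K \tau B)^2$, against the small-$\eta$ cost $B^2/(\eta KR)$. Taking $\bar h\approx \tau$ and balancing over $\eta$ gives $\tau B^2/\sqrt{R}$ up to constants, capped by $HB^2/R$ when $\eta\le 1/H$ binds.

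Verifying carefully that this bias persists for all $\eta$, including the oscillatory regime $\eta \bar h K \gtrsim 1$ and step-sizes up to instability, is where I expect most of the work to go. Stated in the two-case form of the theorem, this is what produces the $\min\{HB^2/R,\tau B^2/\sqrt R\}$ term.
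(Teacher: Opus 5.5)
Your two-block decomposition matches the paper's, and your Block~2 plan is essentially its first-order construction: an asymmetric piecewise quadratic with offsets $\mp\zeta_\star s$ (so $\tau=0$), a slow auxiliary quadratic, and a balance over $\eta$. Two additions are needed there. First, add an $H$-curvature guard coordinate to dispose of $\eta>2/H$, because the monotone comparison on the active coordinate needs $\eta\le 1/L$. Second, fix the auxiliary curvature in advance as $\mu=\mathcal{T}/(4B^2)$, with $\mathcal{T}$ the target rate.

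The genuine gap is \textbf{Block~1}. The mechanism you settle on is a round-map fixed point $x_\infty\neq x^\star$ with bias of order $\eta K\tau B$, and this cannot occur when $\zeta_\star=0$. That condition forces $\nabla F_m(x^\star)=0$ for every $m$, so $x^\star$ minimizes every client and is fixed by every local step. For quadratics the round map acts on the error as $e\mapsto Pe$ with $P=\frac1M\sum_m(I-\eta A_m)^K$, with no additive term. Hessian heterogeneity can therefore only change how fast the error contracts, never where it goes. Your arithmetic also does not close. Balancing $\bar h(\eta K\tau B)^2$ against $B^2/(\eta KR)$ with $\bar h\approx\tau$ gives $\tau B^2/R^{2/3}$, not $\tau B^2/\sqrt R$. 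The cap $HB^2/R$ is also not produced by $\eta\le 1/H$, which would give $HB^2/(KR)$.

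The idea you brushed past is the right one: per-round progress saturates regardless of $K$. The paper makes this exact with two rank-one Hessians sharing a minimizer, $A_1=\lambda e_1e_1^\top$ and $A_2=\lambda vv^\top$, where $v=(\alpha,\sqrt{1-\alpha^2})$ and $\frac{1+\alpha}{1-\alpha}=12R$.

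Since $A_m^2=\lambda A_m$, you get $(I-\eta A_m)^K=I-\tilde\eta A_m$ with $\tilde\eta=\frac{1-(1-\eta\lambda)^K}{\lambda}$. So for every $K$ and $\eta$, Local SGD is exactly $R$ steps of gradient descent on $\bar A=\frac12(A_1+A_2)$ with step $\tilde\eta$. The matrix $\bar A$ has eigenvalues $\Lambda_{\max}=\frac{12R}{12R+1}\lambda$ and $\Lambda_{\min}=\frac{\lambda}{12R+1}$, and there are two cases:
if $\tilde\eta\ge 3/\Lambda_{\max}$, the top eigendirection blows up;
otherwise, the bottom direction shrinks by at most a factor $1-\frac{1}{4R}$ per round.
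Placing $x^\star$ on both eigenvectors then gives $F(\bar x_{KR})-F(x^\star)\gtrsim\lambda B^2/R$.

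Meanwhile $\|A_1-A_2\|=\lambda\sqrt{1-\alpha^2}\le\lambda/\sqrt{3R}$, so $\lambda=\min\{H,\sqrt{3R}\,\tau\}$ is admissible. This yields $\min\{HB^2/R,\tau B^2/\sqrt R\}$.

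What your plan is missing is this decoupling of scales:
the client curvature is $\lambda\approx\sqrt R\,\tau$, not $\tau$;
the heterogeneity is only a $1/\sqrt R$ fraction of that curvature;
the average objective has condition number $\Theta(R)$.
The small-step regime is then handled by $\Lambda_{\min}$ itself, with no auxiliary coordinate.
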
 
	To obtain the final lower bound in \Cref{tab:rates}, we combine our construction with the homogeneous lower bound of \citet{glasgow2022sharp} by placing their hard instance on disjoint coordinates. Since this instance has $\zeta_\star=\tau=0$, it satisfies all heterogeneity assumptions and does not restrict our parameters. We will also use two distinct hard instances to obtain the two terms in the lower bound. Note that we lose only numerical constants by using extra coordinates for different hard instances. We next discuss the key ideas in proving both components of the lower bound.

	\paragraph{Second-order lower bound.} Our construction (see \Cref{thm:tau_sqrtR_lower_bound}) builds on \citet[Theorem 1]{patel2025revisiting}: two clients share an optimizer but recover it only on their respective subspaces in isolation, while together they recover it globally. By tuning the angle between these subspaces, we induce drift across rounds so that the joint dynamics reduce to $R$ steps of gradient descent on the average objective with an effective step size depending on $K$ and $\eta$, yielding a lower bound that does not improve with $K$. The key improvement in our analysis is that we decouple the ``scale'' of the client objectives from the size of their second-order heterogeneity, which allows us to obtain a sharper tuning of our hard instance, leading to the improved lower bound versus the $\frac{\tau B^2}{R}$ lower bound of \citet{patel2025revisiting}.

	\paragraph{First-order lower bound.} This construction (see \Cref{thm:tau-zero-lower}) uses three coordinates to separate optimization and heterogeneity effects. On the active coordinate, we define a convex $H$-smooth function with a curvature change at the origin,
	\[
	g(s)=\mathbf{1}_{s<0}\cdot \tfrac{H}{4}s^2 + \mathbf{1}_{s\ge 0}\cdot \tfrac{H}{2}s^2 ,
	\]
	and add opposite linear shifts $\pm \zeta_\star s$ across clients. This yields $\zeta_\star$-heterogeneity at the optimum while keeping $\tau=0$. Local updates then place clients in different curvature regimes, inducing a persistent bias of order $\zeta_\star/H$ per round, which leads to the $R^{-2/3}$ rate after balancing step-size effects. Crucially, this mechanism relies on the
	fact that $g$ is \emph{not twice differentiable} at the origin: if the function
	were $C^2$ with identical Hessians across clients, then the local dynamics would linearize globally, and this asymmetric drift would disappear, collapsing to a homogeneous-like behavior. The other two coordinates control extreme step sizes and prevent trivial convergence.

	\paragraph{Min-max optimal analysis of Local SGD.} 
	We can now compare our lower bound in \Cref{thm:lsgd_lb_informal} along with the homogeneous lower bound of \citet{glasgow2022sharp} to the best of three upper bounds: (i) from \citet{koloskova2020unified} (see \Cref{tab:rates}); (ii) our consensus-error based upper bound in \Cref{thm:lsgd_consensus_informal}; and (iii) our single-machine mimicking upper bound in \Cref{thm:lsgd_singlemachine_informal}. For simplicity, we look at the problem class $\ppp_{\zeta_\star, \tau}^{H,B,0}$ in the noiseless setting. We summarize the conclusion of this exercise (see calculations in \Cref{app:phase}) in \Cref{fig:lower_vs_upper} and note that, in most regimes, we characterize the min-max optimal convergence guarantee for Local SGD.\footnote{We compare previous upper bound of \citet{koloskova2020unified} and the lower bound of \citet{patel2025revisiting} in \Cref{fig:lower_patel_vs_upper_koloskova} and show that no regime exists in the $\zeta_\star$–$\tau$ plane where they match, highlighting the necessity of a more refined analysis.} We believe new ideas are needed to close the gap in the red regime of \Cref{fig:lower_vs_upper}.

    \paragraph{Comparison to Existing Upper and Lower Bounds.} In \Cref{fig:lower_patel_vs_upper_koloskova}, we provide illustrative phase diagrams of both the lower bound of \citet{patel2025revisiting} and the upper bound of \citet{koloskova2020unified}. We observe that there is no regime where the rates match, demonstrating the looseness of prior analyses for Local SGD over the class $\ppp_{\zeta_\star, \tau}^{H, B, \sigma}$. We derive both of these phase diagrams in \Cref{app:phase_diagrams_prior_works}.
	
	\begin{figure}[!ht]
		\centering
		\begin{tabular}{cc}
			\includegraphics[width=0.45\linewidth]{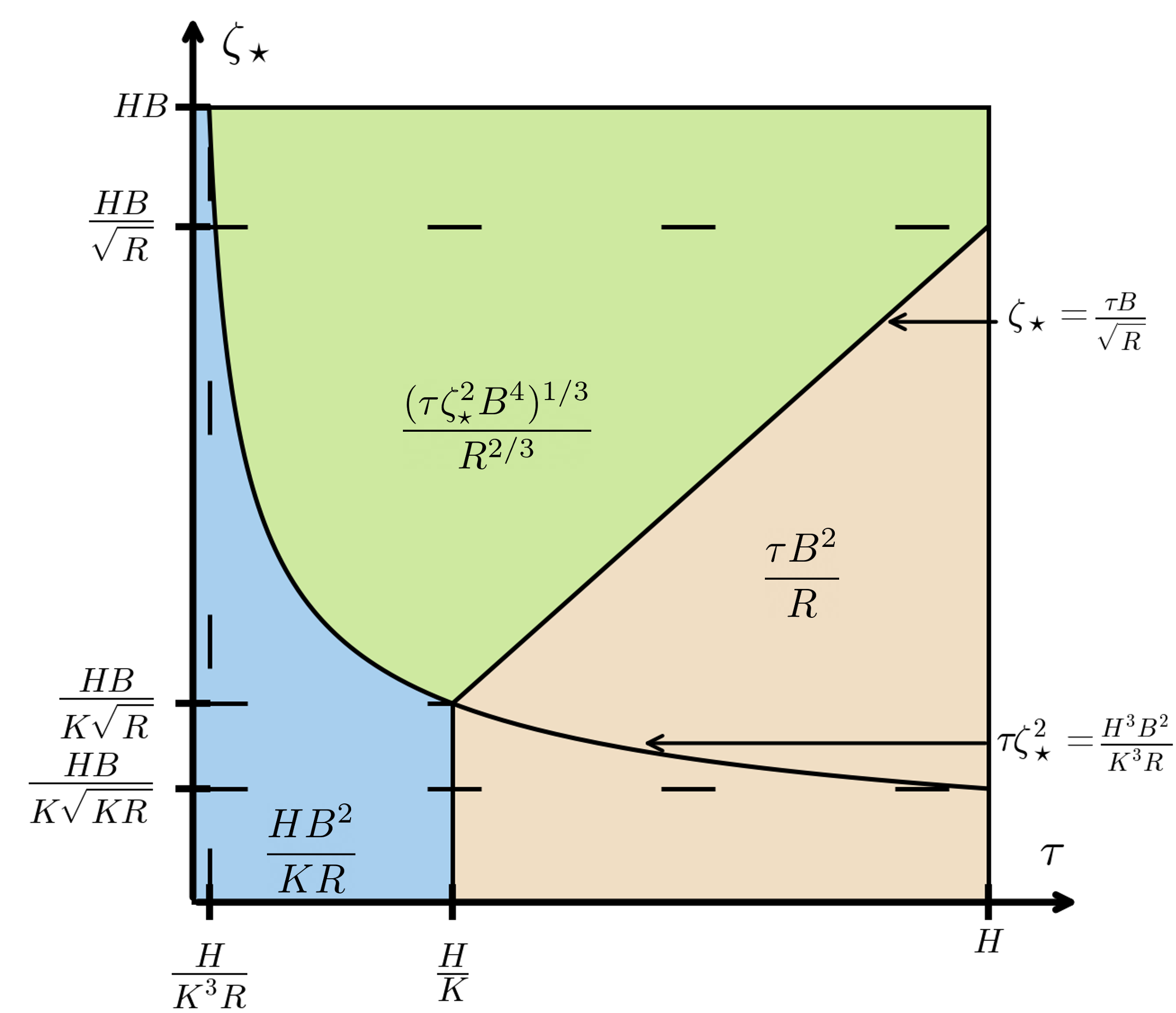}
			
			&
			
			\includegraphics[width=0.40\linewidth]{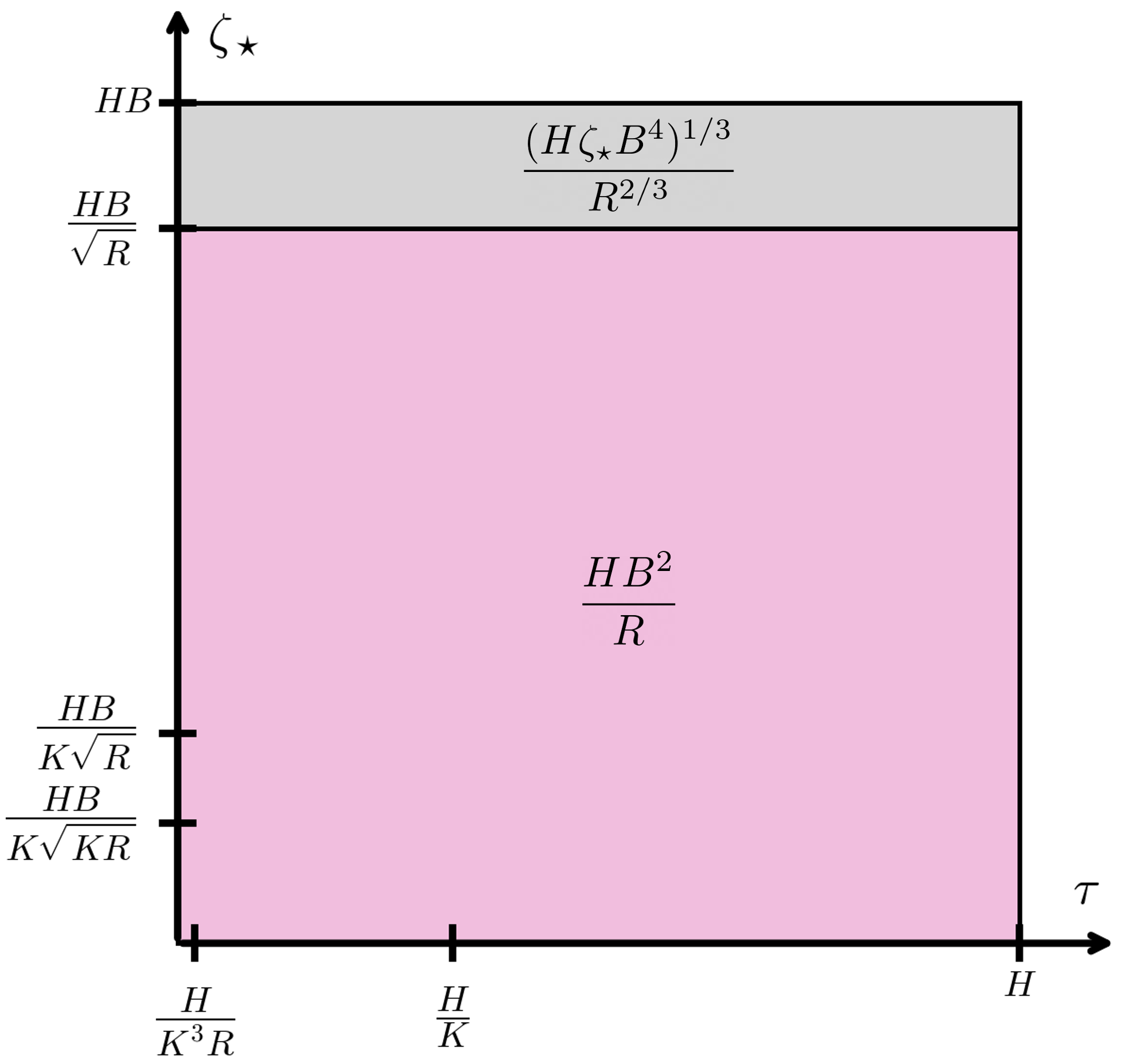}
			
		\end{tabular}
		\caption{Phase diagrams of the lower bound of \citet{patel2025revisiting} (left) and the upper bound of \citet{koloskova2020unified} (right): see \Cref{tab:rates} for the exact rates and \Cref{sec:upper_bound_koloskova_diagram} and \ref{sec:lower_bound_patel_diagram} for derivations of the diagrams.
		}
		\label{fig:lower_patel_vs_upper_koloskova}
	\end{figure}
	

\subsection{A New Lower Bound for $\ppp_{\zeta}^{H,B,\sigma}$}

Although \Cref{thm:lsgd_lb_informal} is stated for the class
$\ppp_{\zeta_\star,\tau}^{H,B,\sigma}$, it also implies a lower bound for the
more restrictive uniformly bounded first-order heterogeneity class
$\ppp_{\zeta}^{H,B,\sigma}$. The key observation is the inclusion
\[
    \ppp^{H,B,\sigma}_{\zeta_\star=\zeta,\tau=0}
    \subseteq
    \ppp^{H,B,\sigma}_{\zeta}\enspace,
\]
which follows from the calculation in \Cref{rem:interpolation}.

Therefore,
specializing the hard instance in \Cref{thm:lsgd_lb_informal} to $\tau=0$ and
$\zeta_\star=\zeta$ gives the following corollary.

\begin{corollary}
\label{cor:zeta_lower_bound}
If $K,M\geq 2$, then there exists a problem in the class
$\ppp_{\zeta}^{H,B,\sigma}$ such that, for any constant step-size $\eta>0$, the
final Local SGD iterate $\bar x_{KR}$ satisfies
\[
    \ee\!\left[F(\bar x_{KR})-F(x^\star)\right]
    \gtrsim
    \min\left\{
        HB^2,\,
        \frac{\zeta^2}{H},\,
        \frac{(H\zeta^2B^4)^{1/3}}{R^{2/3}}
    \right\}.
\]
\end{corollary}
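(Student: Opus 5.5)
The plan is to obtain the corollary as a direct specialization of the first-order part of \Cref{thm:lsgd_lb_informal}, namely the construction behind \Cref{thm:tau-zero-lower}. That construction is the three-coordinate instance whose active coordinate carries the piecewise quadratic $g(s)=\mathbf{1}_{s<0}\tfrac{H}{4}s^2+\mathbf{1}_{s\ge 0}\tfrac{H}{2}s^2$ with opposite linear shifts $\pm\zeta_\star s$. I instantiate it with $\zeta_\star=\zeta$. Since it has $\tau=0$, it lies in $\ppp^{H,B,\sigma}_{\zeta_\star=\zeta,\tau=0}$, and the lower bound gives, for every step-size $\eta>0$,
\[
\ee\!\left[F(\bar x_{KR})-F(x^\star)\right]\gtrsim \min\left\{HB^2,\frac{\zeta^2}{H},\frac{(H\zeta^2B^4)^{1/3}}{R^{2/3}}\right\}.
\]
The $\tau$-dependent term $\min\{HB^2/R,\tau B^2/\sqrt R\}$ vanishes at $\tau=0$, so only this $\zeta_\star$ term survives, which is exactly the claimed rate.

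The second step is to check membership in $\ppp^{H,B,\sigma}_{\zeta}$ via the inclusion $\ppp^{H,B,\sigma}_{\zeta_\star=\zeta,\tau=0}\subseteq\ppp^{H,B,\sigma}_{\zeta}$ from \Cref{rem:interpolation}. The reasoning is as follows. With $\tau=0$, \Cref{ass:tau} forces $\nabla F_m(x)-\nabla F(x)$ to be independent of $x$. Evaluating at $x^\star$, where $\nabla F(x^\star)=0$, gives
\[
\tfrac1M\sum_m\|\nabla F_m(x)-\nabla F(x)\|^2=\tfrac1M\sum_m\|\nabla F_m(x^\star)\|^2\le\zeta^2
\]
for all $x$, which is \Cref{ass:zeta_everywhere}. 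The regularity conditions \eqref{ass:smth_cvx}, \eqref{ass:stoch_first_order} and \eqref{ass:bounded_optima} are shared by both classes, so nothing else needs checking.

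The point that needs care is confirming that the hard instance really has $\tau=0$ exactly, rather than merely small $\tau$. This must hold even though $g$ is not $C^2$, because the inclusion relies on the gradient gaps being exactly constant. I would verify this directly. Each client is $F_m(x)=G(x)\pm\zeta s$, plus the auxiliary coordinates, which I would also take to be identical across clients. The difference $\nabla F_m-\nabla F$ is then the constant vector $\pm\zeta e_s$. So the left side of \Cref{ass:tau} is identically zero, and the sum of squared norms at $x^\star$ equals $\zeta^2$. If the auxiliary coordinates instead carried client-dependent linear terms, they would still be linear, hence still constant-gap, and I would just rescale so the total squared norm at the optimum is at most $\zeta^2$, losing only constants. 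The requirement $K,M\ge 2$ carries over unchanged from \Cref{thm:lsgd_lb_informal}.
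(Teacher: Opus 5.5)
Your proposal is correct and follows the paper's argument. You instantiate the $\tau=0$ construction of \Cref{thm:tau-zero-lower} with $\zeta_\star=\zeta$, then invoke the inclusion $\ppp^{H,B,\sigma}_{\zeta_\star=\zeta,\tau=0}\subseteq\ppp^{H,B,\sigma}_{\zeta}$ from \Cref{rem:interpolation}, which holds because the clients differ only by the linear terms $\mp\zeta s$ and hence have exactly constant gradient gaps. One minor wording point: it is the \emph{average} over clients of $\|\nabla F_m(x^\star)\|^2$ that equals $\zeta^2$, not the sum.
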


To our knowledge, \Cref{cor:zeta_lower_bound} is the first lower bound for
Local SGD that makes the dependence on the averaged uniform heterogeneity
parameter $\zeta$ explicit.\footnote{The homogeneous lower bound of
\citet{glasgow2022sharp} also applies to the class
$\ppp^{H,B,\sigma}_{\zeta}$, but it corresponds to the case $\zeta=0$.}
A related algorithm-independent lower bound was proved by
\citet{hou2021fedchain}, but under a localized clientwise heterogeneity condition
rather than the global averaged condition in
\Cref{ass:zeta_everywhere}.\footnote{\citet{hou2021fedchain} bound
$\sup_{x\in A,m\in[M]}\norm{\nabla F_m(x)-\nabla F(x)}$ over an
algorithm-dependent region $A$. If one instead compares the global clientwise
condition
$\sup_{x\in\rr^d,m\in[M]}\norm{\nabla F_m(x)-\nabla F(x)}\leq \bar\zeta$
with \Cref{ass:zeta_everywhere}, then the former implies
\Cref{ass:zeta_everywhere} with $\zeta\leq \bar\zeta$, while the reverse
implication can lose a factor of order $\sqrt M$ in the heterogeneity parameter.}
The main difficulty in proving lower bounds under \Cref{ass:zeta_everywhere} is
that this assumption rules out many standard quadratic hard instances with
different client Hessians: for such instances, the average gradient disagreement
typically grows unboundedly with $\norm{x}$. Our first-order block avoids this
obstruction. It has $\tau=0$, and the clients differ only through opposite
linear perturbations. Hence the average gradient disagreement is constant over
the entire domain, so the bound at $x^\star$ becomes a uniform bound over all
$x$ (see \Cref{rem:interpolation}). Applying \Cref{thm:lsgd_lb_informal} with
$\tau=0$ and $\zeta_\star=\zeta$ proves the corollary.

\section{Multi-pass SGD with Replacement under Second-Order Heterogeneity}
\label{sec:fresh_sgd_soh}

The preceding sections focused on Local SGD, where second-order heterogeneity controls the drift created by multiple local updates between communication rounds. We now ask whether the same second-order viewpoint is useful even in an algorithmic setting with no local-updates. To this end, we study serial fresh-sample SGD with replacement. This algorithm does not create local client trajectories, but it still samples gradients from heterogeneous clients, and so second-order heterogeneity can affect how the client smoothness parameters enter the convergence rate. To capture this, unlike the rest of the paper, we do not assume a uniform smoothness parameter across clients. Instead, each client objective \(F_m\) is \(H_m\)-smooth, and we define
\[
    \hat H := \max_{m\in[M]} H_m\enspace.
\]
We also assume that the average objective \(F\) is \(H\)-smooth. Note that \(H\le \hat H\), but the ratio \(\hat H/H\) can be as large as \(M\). All other assumptions remain unchanged, including convexity, bounded optimum, and Assumptions~\ref{ass:zeta_optimal} and \ref{ass:tau}. We denote this problem class by
\(
    \ppp_{\zeta_\star,\tau}^{H,\hat H,B,\sigma}\enspace.
\)

Specifically, we consider the multi-pass SGD with replacement algorithm for \(N:=TM\) iterations, where
\(T\) denotes the number of ``effective epochs''. At each iteration
\(t\in[0,N-1]\), the algorithm samples \(m_t\sim\mathrm{Unif}([M])\), draws
\(z_t\sim\mathcal D_{m_t}\), and performs
\[
    x_{t+1} = x_t-\eta\nabla f(x_t;z_t)\enspace,
\]
with initialization \(x_0=0\). The output is the average iterate
\(
    \bar x_N\coloneqq\frac1N\sum_{t=0}^{N-1}x_t\enspace.
\)
The sampled gradient is unbiased for \(\nabla F(x_t)\). Moreover, by Assumption~\ref{ass:zeta_optimal} and the stochastic-gradient assumption in \eqref{ass:stoch_first_order}, we have
\[
    \ee\left[\norm{\nabla f(x^\star;z_t)}^2\right]
    \le
    \sigma^2+\zeta_\star^2\enspace.
\]
Thus, standard SGD analyses for convex smooth functions give,
\begin{equation}
\label{eq:old_sgd_rate}
    \ee\bigl[F(\bar x_N)-F(x^\star)\bigr]
    \lesssim
    \frac{\hat H B^2}{TM} + \frac{(\sigma+\zeta_\star)B}{\sqrt{TM}}\enspace.
\end{equation}

A natural question is whether Assumption~\ref{ass:tau} allows one to improve
the optimization term in \eqref{eq:old_sgd_rate} by replacing the worst-client smoothness \(\hat H\) with the smoothness \(H\) of the average objective, paying only an additional term depending on \(\tau\). In the worst case, the answer is no. Indeed, the class contains homogeneous instances with \(\tau=0\), where all clients coincide and hence \(H=\hat H\). On such
instances, fresh-sample SGD reduces to \(N\) iterations of SGD on an \(\hat H\)-smooth convex objective, and the standard lower bound \(\Omega(\hat H B^2/(TM))\) applies.

This homogeneous construction, however, does not explain what happens on genuinely heterogeneous instances for which \(H\llll\hat H\). In such problems, the average objective may be well conditioned, while some clients have much
larger curvature. The homogeneous construction only certifies an \(HB^2/(TM)\) lower bound in this regime, which can be much smaller than the \(\hat H\)-dependent upper bound in \eqref{eq:old_sgd_rate}.

The following lower bound shows that the \(\hat H B^2/(TM)\) optimization term remains tight even in this heterogeneous regime. The key point is that a large \(\hat H\) can arise from rare high-curvature directions, with
\[
    \hat H
    \cong
    \sqrt M\,\tau\enspace,
\]
rather than from uniformly large curvature across all clients.

\begin{theorem}[Informal version of \Cref{thm:fresh_sgd_heterogeneous_lb_formal}]
\label{thm:fresh_sgd_heterogeneous_lb_informal}
Fix \(M,T\ge1\), \(B>0\), and parameters satisfying
\[
    \tau\cong H
    \cong \frac{\hat H}{\sqrt M} \enspace.
\]
Then there exists an instance in
\(\ppp_{\zeta_\star,\tau}^{H,\hat H,B,\sigma}\)  such that for every constant stepsize \(\eta>0\), fresh-sample SGD with replacement satisfies
\[
    \ee\bigl[F(\bar x_{TM})-F(x^\star)\bigr]
    \gtrsim
    \frac{\hat H B^2}{TM}
    +
    \frac{(\sigma+\zeta_\star)B}{\sqrt{TM}}\enspace.
\]
\end{theorem}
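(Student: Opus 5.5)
The plan is to build the instance as a direct sum of independent blocks on disjoint coordinates, one per term in the lower bound. Since SGD with replacement decouples across coordinates for separable objectives, and all parameters $(H,\hat H,\tau,\zeta_\star,\sigma,B)$ are preserved up to constants under such sums, it suffices to prove each term separately on its own block, with $B$ split among the blocks. The noise/heterogeneity term $(\sigma+\zeta_\star)B/\sqrt{TM}$ I will get from a standard one-dimensional block. A linear or quadratic function with Gaussian-type gradient noise of variance $\sigma^2$ gives the classical $\sigma B/\sqrt{N}$ lower bound for SGD. For $\zeta_\star$, I use opposite linear shifts $F_m(s)=\frac{\mu}{2}s^2\pm\zeta_\star s$ across clients. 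These have identical Hessians, so $\tau=0$, and sampling $m_t$ uniformly makes the shift act as noise of variance $\zeta_\star^2$. Tuning $\mu\cong(\sigma+\zeta_\star)/(B\sqrt{N})$ then gives the usual statistical lower bound, uniformly over constant step sizes.

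The main block targets $\hat H B^2/(TM)$. On coordinates $e_1,\dots,e_M$, let client $m$ have $F_m(x)=\frac{\hat H}{2}\langle e_m,x-x^\star\rangle^2$, so only client $m$ has curvature along $e_m$. Then $F(x)=\frac{\hat H}{2M}\|x-x^\star\|^2$, so $H=\hat H/M$. I will verify Assumption~\ref{ass:tau}: the difference $\nabla F_m-\nabla F$ is linear with matrix $\hat H(e_me_m^\top - I/M)$, and the average of its squared operator norm is $\lesssim\hat H^2$, which is too large. To get $\tau\cong H\cong\hat H/\sqrt M$, I will put the rare curvature on only about $\sqrt M$ of the clients. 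I will take $k\cong\sqrt M$ high-curvature clients with $F_m=\frac{\hat H}{2}s_m^2$ on a shared direction, or on their own directions, and give the remaining clients $F_m=0$ in that direction. The average curvature is then $k\hat H/M\cong\hat H/\sqrt M=H$, and $\frac1M\sum_m\|A_m-A\|^2\approx\frac{k}{M}\hat H^2\cong\hat H^2/\sqrt M$. That gives $\tau\cong\hat H/M^{1/4}$. This mismatch suggests using a single shared direction with $k=1$ client of curvature $\hat H$: then $\tau^2\cong\hat H^2/M$ and $H=\hat H/M$. Adding a homogeneous $H$-smooth quadratic block with $H\cong \hat H/\sqrt M$ on separate coordinates then matches all the required parameter scalings.

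The lower-bound mechanism on the rare-curvature direction $s$ works as follows. Each step multiplies $s$ by $(1-\eta\hat H)$ with probability $1/M$ and leaves it unchanged otherwise. If $\eta\ge c/\hat H$ for a large constant $c$, the factor $|1-\eta\hat H|>1$ makes $\mathbb E[s_t^2]$ blow up, since $\mathbb E(1-\eta\hat H)^2$ per hit is large. Started at $s_0=-B$ and run for $N=TM$ steps, this forces a constant or larger suboptimality on the averaged iterate. If instead $\eta\lesssim 1/\hat H$, I use the companion homogeneous block with curvature $\mu\cong 1/(\eta N)$, clipped appropriately, or the standard quadratic of curvature $\cong 1/(\hat H\cdot N)\cdot\hat H$. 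There the averaged iterate after $N$ steps of step size $\eta$ has error $\gtrsim B^2/(\eta N)\gtrsim\hat H B^2/(TM)$. The standard choice is a quadratic $\frac{\lambda}{2}s^2$ with $\lambda\le H$ and $\lambda\cong 1/(\eta N)$, which gives a loss of $\lambda B^2\cdot\Omega(1)$ when that $\lambda$ is admissible. Otherwise the small-$\eta$ regime is handled on the high-curvature direction itself.

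The step I expect to be the main obstacle is the blow-up argument on the rare direction for an averaged iterate. The multiplicative factors are random, hits are rare, and I must show the averaged $\bar x_N$ still has large $F$. I would handle it by computing the second moment exactly: $\mathbb E s_t^2=B^2\bigl(1-\tfrac1M+\tfrac{(1-\eta\hat H)^2}{M}\bigr)^t$, and the analogous formula for $\mathbb E s_ts_{t'}$. I would then lower bound $\mathbb E\bar s_N^2$ using positivity of cross terms when $1-\eta\hat H<-1$ fails. If this is messy, a fallback is to show that in the intermediate range $1/\hat H\lesssim\eta\lesssim M/\hat H$, the non-hit steps contribute enough, relative to $N/M=T$ hits, to bound the averaged error by $B^2\hat H/(TM)$ on this direction directly.
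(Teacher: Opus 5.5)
Your construction is essentially the paper's. It has one client carrying curvature $\hat H$ on a direction (average curvature $\hat H/M$, $\tau\cong\hat H/\sqrt M$), an identical $H$-quadratic to calibrate the average smoothness, and an orthogonal block for the $(\sigma+\zeta_\star)B/\sqrt{TM}$ term. The genuine gap is the large-step regime on the rare direction, which you flag but do not close.

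With $q=1-\eta\hat H$, $\alpha=1+(q^2-1)/M$ and $\beta=1-\eta\hat H/M$, one has $\mathbb{E}[s_ts_{t'}]=s_0^2\alpha^{t}\beta^{t'-t}$ for $t\le t'$. Growth of $\mathbb{E}[s_t^2]$ says nothing by itself about $\mathbb{E}[\bar s_N^2]$, because the time average can cancel. The sign of the cross terms is governed by $\beta$, not by whether $q<-1$: they are nonnegative only when $\eta\le M/\hat H$. For $\eta>M/\hat H$ they alternate in sign, with $|\beta|>1$ once $\eta\hat H>2M$. So neither your positivity argument nor your fallback (restricted to $\eta\lesssim M/\hat H$) covers this range, although the theorem quantifies over every $\eta>0$. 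Even for $\eta\le M/\hat H$, keeping the nonnegative cross terms but using only $\alpha^t\ge1$ yields a lower bound of order only $B^2/(T\eta\hat H)$; you would still have to exploit the growth of $\alpha^t$ carefully.

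The missing idea is the one in \Cref{lem:fresh_replacement_multiplier_lb}. Reveal the hit indicators one at a time and use the Doob-martingale decomposition, which writes the variance of $Z_N=\bar s_N/s_0$ as a sum of nonnegative terms:
\[
\mathrm{Var}(Z_N)=\frac{v}{N^2}\sum_{s=0}^{N-2}\alpha^s\Bigl(\sum_{r=0}^{N-s-2}\beta^r\Bigr)^2,\qquad v=(M-1)\delta^2,\quad \delta=1-\beta .
\]
No cancellation can occur. After dropping $\alpha^s\ge1$, a constant fraction of the inner sums have magnitude at least of order $1/\delta$: all sufficiently long ones if $\beta\ge0$, and those with $N-s-2$ even if $\beta<0$. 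The factor $\delta^2$ in $v$ then cancels, giving $\mathbb{E}[Z_N^2]\ge c/T$ for every $q\le-2$. This is all you need: on the rare direction the target is $\mathbb{E}[\bar s_N^2]\gtrsim B^2/T$, not the constant-order bound you aim for.

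Separately, make the small-step regime quantifier-correct. The instance is fixed before $\eta$ is chosen, so a companion curvature $\lambda\cong 1/(\eta N)$ is not allowed. Fix it once as $\lambda_0\cong\hat H/(TM)$, which is admissible because $\hat H/(TM)\le\hat H/\sqrt M\cong H$. Then for every $\eta\le 3/\hat H$ one has $\eta\lambda_0N=O(1)$, and this block alone gives error $\gtrsim\lambda_0B^2\cong\hat HB^2/(TM)$, with no further case split. This is exactly the role of the paper's second eigendirection $v$ with eigenvalue $\lambda/(12T)$. Whether it sits on the rare client, as in the paper, or on all clients, as in your plan, is immaterial.

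Some such fixed low-curvature block is genuinely needed. At $\eta=1/\hat H$ the rare high-curvature coordinate is zeroed at the first hit, so it only gives $\mathbb{E}[\bar s_N^2]\approx B^2/T^2$.

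A minor correction: your first construction, with one direction per client, already has $\tau\cong\hat H/\sqrt M$. Assumption~\ref{ass:tau} bounds the top eigenvalue of $\frac1M\sum_m(A_m-A)^2$, not the average of $\|A_m-A\|^2$.
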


The construction, given formally in
\Cref{app:fresh_sample_sgd_shell_lower_bound}, follows the same guiding principle as our second-order-heterogeneity lower bound for Local SGD: it uses a rare-curvature direction carried by heterogeneous clients. In particular, the construction has clients whose curvature is much larger than that of the average objective, yielding the scaling
\(\tau \cong H \cong \hat H/\sqrt M\).

The key difference from Local SGD is that, for serial fresh-sample SGD, this rare curvature does not create a new local-drift effect. Instead, it realizes the usual worst-client smoothness barrier. An orthogonal stochastic and first-order-heterogeneity block gives the usual noise lower bound.

The lower bound should therefore be interpreted as follows. The standard \(\hat H B^2/(TM)\) barrier is not merely an artifact of the homogeneous \(\hat H\)-smooth hard instance. Even in a genuinely heterogeneous regime where \(H\llll \hat H\), serial fresh-sample SGD can still suffer the usual \(\hat H\)-dependent rate. In this sense, for serial SGD with replacement, second-order heterogeneity explains the source of worst-client smoothness:
\(
    \hat H
    \cong
    \sqrt M\,\tau
\).
This contrasts with the Local SGD results in the main part of the paper, where bounded second-order heterogeneity leads to genuinely new local-drift and communication effects that cannot be reduced to the standard serial \(\hat H\)-smooth analysis.

    \section{Discussion}
	\label{sec:discussion}
	
	Our results sharpen the theoretical picture of Local SGD under realistic data heterogeneity. In particular, they show that the benefits of local computation do not require the more restrictive uniform first-order heterogeneity assumptions used in much of the prior literature. Instead, bounded second-order heterogeneity is enough to obtain provable improvements over Mini-batch SGD in the general convex setting. This confirms the conjectural picture of \citet{patel2024limits} beyond strongly-convex objectives: what matters for controlling client drift is not that local gradients remain uniformly close everywhere, but that local objectives have sufficiently aligned curvature along the trajectory.

	There remain several important open questions. First, there is still a gap in the red regime of \Cref{fig:lower_vs_upper}. Closing this gap likely requires either a sharper analysis that goes beyond the current consensus-error framework, or a new lower-bound construction that improves both the first- and second-order effects in our analysis. We conjecture that the upper bounds presented in our paper are actually tight. Second, it remains unclear whether the lack of second-order differentiability is indeed necessary for our lower bound. Third, it might be possible to get the best of the three upper bounds for Local SGD in a single bound using inner and outer step-sizes. 
	
	Our analysis of fresh-sample SGD with replacement suggests that second-order heterogeneity can play a role in understanding multi-pass finite-sum methods. An intriguing next step is to identify other variants of second-order heterogeneity that would allow fresh-sample SGD to avoid relying on the worst-case smoothness conditions. Additionally, it would be even more fascinating to develop a matching theory for random reshuffling in the context of second-order heterogeneity.

    \section*{Acknowledgments}

    Rustem Islamov and Aurelien Lucchi acknowledge the financial support of the Swiss National Science Foundation, SNSF grant No 207392.

	\newpage
	\appendix

	\counterwithin{figure}{section}
	
	\vbox{
		{\hrule height 2pt \vskip 0.15in \vskip -\parskip}
		\centering
		{\LARGE\bf Appendix\par}
		{\vskip 0.2in \vskip -\parskip \hrule height 0.5pt \vskip 0.09in}
	}

    \etocdepthtag.toc{appendix}

    \begingroup
    \renewcommand{\contentsname}{Appendix Contents}
    \setcounter{tocdepth}{2}
    \etocsettagdepth{main}{none}
    \etocsettagdepth{appendix}{subsection}
    \tableofcontents
    \endgroup
	
	\crefalias{section}{appendix}

	\section{Additional Notation for the Appendix}
	
	Throughout the appendix, we analyze the vanilla variant of Local SGD, which uses the outer step-size $\beta=1$ (see \Cref{alg:local-sgd}). In particular, for brevity, we will use the notation in \Cref{eq:local_updates}, along with the following \textit{``ghost iterate''} which might not be computed at each time step $t\in[0,KR]$,
	\begin{align*}
		\bar x_t = \frac{1}{M}\sum_{m\in[M]}x_t^m .
	\end{align*}
	It would also be useful to define the following sequences, which appear many times throughout our analyses, for all $t\in[0,KR]$
	\begin{align*}
		A(t) &\coloneqq  \ee\sb{\norm{\bar x_t - x^\star}^2} ,\qquad\text{ (Expected ghost iterate error)}\\
		B(t) &\coloneqq  \frac{1}{M}\sum_{m\in[M]}\ee\sb{\norm{x_t^m - x^\star}^2} \geq A(t) ,\qquad\text{ (Expected average iterate error)}\\
		\Xi_t &\coloneqq  \frac{1}{M}\sum_{m\in[M]}\norm{x_t^m - \bar x_t}^2 ,\qquad\text{ (Consensus error)}\\
		C(t) &\coloneqq  \ee\sb{\Xi_t}= \frac{1}{M}\sum_{m\in[M]}\ee\sb{\norm{x_t^m - \bar x_t}^2}  ,\qquad \text{ (Expected consensus error)}\\
		D(t) &\coloneqq  \ee\sb{F(\bar x_t) - F(x^\star)}\geq 0 ,\qquad  \text{ (Expected function sub-optimality)}\\
		E(t) &\coloneqq  \frac{1}{M}\sum_{m\in[M]}\ee\sb{F(x_t^m) - F(x^\star)} ,\qquad \text{ (Expected average function sub-optimality)}\\
		V(t) &\coloneqq  \frac{1}{M}\sum_{m\in[M]}\ee\sb{\norm{\nabla F_m(\bar x_t) - \nabla F(\bar x_t)}^2} ,\qquad  \text{ (Expected gradient heterogeneity)}\\
		R(t) & \coloneqq \frac{1}{M}\sum_{m\in[M]}\ee [\mathcal{D}(x_t^m, \bar{x}_t)],\qquad \text{ (Expected average Bregman divergence)}\\
        &\text{where }\mathcal{D}_{F_m}(x_t^m, \bar{x}_t) \coloneqq F_m(x_t^m) - F_m(\bar{x}_t) - \langle\nabla F_m(\bar{x}_t), x_t^m - \bar{x}_t\rangle\\
		Q(t) &\coloneqq \frac{1}{M}\sum_{m\in[M]}\ee [\langle\nabla F_m(\bar{x}_t) - \nabla F(\bar{x}_t),x_t^m - \bar{x}_t\rangle],\\
		S(t) &\coloneqq \frac{1}{M} \sum_{m\in[M]} \ee[F_m(x_t^m) - F_m(x^\star)],\qquad \text{ (Expected local function sub-optimality)}.
	\end{align*}
	where throughout we fix $x^\star\in \arg\min_{x\in\rr^d} F(x)$ to be the optimizer of the average objective which satisfies \Cref{ass:zeta_optimal}. We will also denote the maximum of the first two sequences by,
	\begin{align*}
		A_{\max} &\coloneqq  \max_{t\in[0, KR]} A(t) ,\\
		B_{\max} &\coloneqq  \max_{t\in[0, KR]} B(t) ,\\
		C_{\max} &\coloneqq  \max_{t\in[0, KR]} C(t) ,\\
		V_{\max} &\coloneqq  \max_{t\in[0, KR]} V(t).
	\end{align*}
	
	It would also be useful to define the following filtration for all $t\in[KR]$, which allows easily dealing with the stochastic noise on each client,
	\begin{align*}
		\hhh_t \coloneqq  \sigma\rb{z_0^1,\dots,z_0^M, \dots, z_{t-1}^1, \dots, z_{t-1}^M} ,
	\end{align*}
	where $\sigma\rb{S}$ represents the sigma algebra\footnote{We apologize for the abuse of notation w.r.t. \Cref{ass:stoch_first_order}.} generated by a set $S\in \zzz^\star$. We write $x \in \hhh$ to mean that $x$ is measurable under the sigma algebra $\hhh$. It would be useful to note that,
	\begin{align*}
		x_t^m, \bar x_t, \Xi_t \in \hhh_t ,
	\end{align*}
	for all $m\in[M]$ and $t\in[KR]$.

	\section{Useful Prior Results and Technical Context}
	\subsection{A Canonical Descent Lemma and the Upper Bound of \citet{woodworth2020minibatch}}
	Many of our proofs rely on the following standard lemma, which, for Local SGD, imitates the one-step recursion usual in SGD analysis by introducing an expected consensus error term. 
	
	\begin{lemma}[Lemma 7, \citet{woodworth2020minibatch}]\label{lem:lemma7_woodworth} Consider problems in $\ppp^{H, B, \sigma}$ then for $\eta \le \frac{1}{2H}$, the Local SGD ghost iterate satisfies for all $t\in[0, T-1]$
		\begin{equation}
			\ee[F(\bar{x}_t) - F^\star] \le 
			\frac{1}{\eta}\ee[\|\bar{x}_t-x^*\|^2_2]
			- \frac{1}{\eta}\ee[\|\bar{x}_{t+1}-x^\star\|^2_2] 
			+ \frac{3\eta\sigma^2}{M} 
			+ \frac{2H}{M}\sum_{m=1}^M\ee\left[\|\bar{x}_t - x_t^m\|^2_2\right] .
		\end{equation}
	\end{lemma}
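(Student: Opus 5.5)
The plan is the standard one-step expansion of the ghost iterate. The key extra ingredient is to keep the co-coercivity slack from smoothness plus convexity. That slack absorbs the squared-gradient term completely, with no loss in the coefficient of the suboptimality. We in fact expect a slightly stronger inequality, which implies the stated one.

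\textbf{Step 1 (expansion).} Fix $t$ and write $g_t=\frac1M\sum_m \nabla f(x_t^m;z_t^m)$ and $\bar g_t=\frac1M\sum_m\nabla F_m(x_t^m)$. Whether or not $t+1$ is a communication step, \eqref{eq:local_updates} gives $\bar x_{t+1}=\bar x_t-\eta g_t$. Hence
\[
\norm{\bar x_{t+1}-x^\star}^2=\norm{\bar x_t-x^\star}^2-2\eta\inner{g_t}{\bar x_t-x^\star}+\eta^2\norm{g_t}^2 .
\]
Conditioning on $\hhh_t$, we use unbiasedness and the independence of the noise across machines from \eqref{ass:stoch_first_order}. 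This gives $\ee[g_t\mid\hhh_t]=\bar g_t$ and $\ee[\norm{g_t}^2\mid\hhh_t]\le\norm{\bar g_t}^2+\sigma^2/M$.

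\textbf{Step 2 (lower bound on the inner product).} For each $m$, split
\[
\inner{\nabla F_m(x_t^m)}{\bar x_t-x^\star}=\inner{\nabla F_m(x_t^m)}{x_t^m-x^\star}+\inner{\nabla F_m(x_t^m)}{\bar x_t-x_t^m}.
\]
For the second piece, use the smoothness upper bound in \eqref{ass:smth_cvx} at $x_t^m$:
\[
\inner{\nabla F_m(x_t^m)}{\bar x_t-x_t^m}\ge F_m(\bar x_t)-F_m(x_t^m)-\tfrac H2\norm{\bar x_t-x_t^m}^2 .
\]
For the first piece, we do not use plain convexity. Instead we use the convex-and-$H$-smooth (co-coercive) inequality
\[
F_m(x^\star)\ge F_m(x_t^m)+\inner{\nabla F_m(x_t^m)}{x^\star-x_t^m}+\tfrac1{2H}\norm{\nabla F_m(x_t^m)-\nabla F_m(x^\star)}^2 .
\]
Summing the two pieces and averaging over $m$ gives
\[
\inner{\bar g_t}{\bar x_t-x^\star}\ge F(\bar x_t)-F^\star-\tfrac H2\Xi_t+\tfrac1{2HM}\sum_m\norm{\nabla F_m(x_t^m)-\nabla F_m(x^\star)}^2 .
\]
Since $\nabla F(x^\star)=\frac1M\sum_m\nabla F_m(x^\star)=0$, Jensen bounds the last sum from below by $\frac1{2H}\norm{\bar g_t}^2$.

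\textbf{Step 3 (combine).} Plugging in, the squared-gradient terms combine into $(\eta^2-\eta/H)\norm{\bar g_t}^2\le0$, because $\eta\le\frac1{2H}\le\frac1H$. Taking full expectations,
\[
\ee\norm{\bar x_{t+1}-x^\star}^2\le\ee\norm{\bar x_t-x^\star}^2-2\eta\,\ee[F(\bar x_t)-F^\star]+\eta H\,C(t)+\frac{\eta^2\sigma^2}{M}.
\]
Dividing by $2\eta$ gives
\[
D(t)\le\frac{A(t)-A(t+1)}{2\eta}+\frac{\eta\sigma^2}{2M}+\frac H2 C(t).
\]
This is dominated term-by-term by the claimed bound. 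The only delicate point is that $A(t)-A(t+1)$ may be negative, so we cannot simply replace $\frac1{2\eta}$ by $\frac1\eta$. We handle this by using the stronger intermediate inequality directly where we need it. For the statement as worded, if the difference is negative we instead add the nonnegative slack $D(t)\ge0$ once more before dividing, redoing Step 3 with the weaker coefficient $-\eta$ on $D(t)$. Concretely, we use only half of the $-2\eta D(t)$ term, i.e.\ drop a nonnegative $\eta D(t)$ from the right-hand side. This yields exactly $D(t)\le\frac{A(t)-A(t+1)}{\eta}+\frac{\eta\sigma^2}{M}+HC(t)$, which implies the stated constants $3\eta\sigma^2/M$ and $2H\,C(t)$.

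\textbf{Main obstacle.} The main obstacle is Step 2: plain convexity only yields the bound $\norm{\bar g_t}^2\lesssim H D(t)+H^2\Xi_t$. With $\eta=\frac1{2H}$, that bound cancels the descent term entirely. The co-coercivity slack, routed through $\nabla F(x^\star)=0$, is what makes the full range $\eta\le\frac1{2H}$ work.
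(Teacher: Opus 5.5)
Your proof is correct. The paper gives no proof of this lemma (it is imported from \citet{woodworth2020minibatch}), so the useful comparison is with the standard route to this recursion.

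That route lower-bounds $\langle \bar g_t,\bar x_t-x^\star\rangle$ by plain convexity plus smoothness, i.e.\ your Step 2 without the $\frac{1}{2H}$ term. It then controls the second moment separately via $\|\bar g_t\|^2\le 2H^2\Xi_t+4H\,(F(\bar x_t)-F^\star)$. This leaves a coefficient $-2\eta(1-2\eta H)$ on $D(t)$ and an extra $2\eta^2H^2C(t)$. To keep the $\frac{1}{\eta}$ coefficient, one then needs $\eta$ bounded away from $\frac{1}{2H}$, e.g.\ $\eta\le\frac{1}{4H}$, which also gives $(H+2\eta H^2)C(t)\le 2HC(t)$.

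Your route instead applies the one-sided co-coercivity inequality \eqref{eq:onesided-cocoercive} between $x_t^m$ and $x^\star$ on every machine, then uses $\nabla F(x^\star)=0$ with Jensen. The $\eta^2\|\bar g_t\|^2$ term is therefore absorbed exactly rather than traded against the descent term. Under the same assumptions (convexity and $H$-smoothness of each $F_m$), this buys two things:
\begin{itemize}
\item the full range $\eta\le\frac{1}{H}$, which genuinely covers the stated endpoint $\eta=\frac{1}{2H}$, where the standard route's descent coefficient vanishes;
\item sharper constants, namely $HC(t)$ and $\frac{\eta\sigma^2}{M}$ in place of $2HC(t)$ and $\frac{3\eta\sigma^2}{M}$.
\end{itemize}
It is the same device the paper itself uses later in \Cref{lem:descent_Bt_Dt}.

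One presentation fix in Step 3. The assertion that your $\frac{1}{2\eta}$ inequality is ``dominated term-by-term'' by the claim is false when $A(t)<A(t+1)$, and no case split is needed. Simply use $D(t)\ge 0$ to weaken $-2\eta D(t)$ to $-\eta D(t)$ before dividing by $\eta$. This gives $D(t)\le \frac{A(t)-A(t+1)}{\eta}+HC(t)+\frac{\eta\sigma^2}{M}$ unconditionally, which implies the stated bound. You can then delete the vague sentence about using the stronger intermediate inequality ``where we need it.''
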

	In the standard analysis due to \citet{woodworth2020minibatch}, \Cref{ass:zeta_everywhere} allows bounding the consensus error as follows,
	\begin{align}\label{eq:consensus_woodworth}
		C(t) \leq 3\eta^2 K\sigma^2 + 6 \eta^2 K^2\zeta^2 .
	\end{align}
	Plugging this upper bound, unrolling the recursion, telescoping appropriately, and tuning the step-size results in the upper bound stated in \Cref{tab:rates} for problem class $\ppp_{\zeta}^{H, B, \sigma}$. Our analysis in this paper effectively replaces the $\zeta^2$ term in the above upper bound by $\zeta_\star^2 + \tau^2B^2$ up to numerical constants, thus also allowing us to replace the dependence on \Cref{ass:zeta_everywhere} by Assumptions \ref{ass:zeta_optimal} and \ref{ass:tau}, i.e., extending their paper to $\ppp_{\zeta_\star, \tau}^{H, B, \sigma}$. Our strategy builds on the following observation due to \citet{patel2024limits}, which bounds the uniform gradient heterogeneity in terms of the existing iterate error.
	\begin{lemma}
		Consider problems in $\ppp^{H, B, \sigma}_{\zeta_\star, \tau}$ which also satisfy twice differentiability on each client, then for all $m\in[M]$,
		\begin{align*}
			\sup_{x\in \bb_2(D), m\in[M]}\norm{\nabla F_m(x) - \nabla F(x)}^2 \leq (\zeta_\star + \tau D)^2 . 
		\end{align*}
	\end{lemma}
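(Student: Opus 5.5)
The plan is a first-order Taylor expansion of the client gradient gap $\nabla F_m-\nabla F$ around the anchor $x^\star$. The constant term is controlled by first-order heterogeneity at the optimum, the integral remainder by second-order heterogeneity, and the triangle inequality combines them before squaring.

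\textbf{Reading of the statement.} The bound is a supremum over $m\in[M]$ with radius exactly $D$. I read it, as in \citet{patel2024limits}, under two conventions.
\begin{itemize}
\item The ball $\bb_2(D)$ is centred at the anchor $x^\star$ of \Cref{ass:zeta_optimal}, so $\norm{x-x^\star}\le D$ for every $x$ in it.
\item The heterogeneity parameters hold clientwise: $\norm{\nabla F_m(x^\star)}\le\zeta_\star$ and $\norm{\nabla^2F_m(z)-\nabla^2F(z)}_2\le\tau$ for every $m$ and $z$.
\end{itemize}
The clientwise form is the Hessian version of \Cref{ass:tau} from \Cref{rem:hessian_implies_tau}, with the average over $m$ replaced by a maximum. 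This reading is forced by the statement. The averaged assumptions alone only control $\frac1M\sum_m$ of the squared gap, and passing to a single client can cost a factor $\sqrt M$. This is also why the lemma assumes twice differentiability.

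\textbf{Step 1 (expansion).} Fix $m$ and $x\in\bb_2(D)$, and set $v=x-x^\star$. Since $F$ is convex and differentiable and $x^\star$ minimizes it, $\nabla F(x^\star)=0$. The fundamental theorem of calculus, applied exactly as in \Cref{rem:hessian_implies_tau}, gives
\[
\nabla F_m(x)-\nabla F(x)=\nabla F_m(x^\star)+\int_0^1\bigl(\nabla^2F_m(x^\star+tv)-\nabla^2F(x^\star+tv)\bigr)v\,dt .
\]

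\textbf{Step 2 (bounding each term).} The first term has norm at most $\zeta_\star$ by the clientwise form of \Cref{ass:zeta_optimal}. For the integral, pull the norm inside and use the operator-norm bound $\norm{(\nabla^2F_m(z)-\nabla^2F(z))v}\le\tau\norm{v}$ pointwise along the segment. This bounds the integral by $\tau\norm{v}\le\tau D$.

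\textbf{Step 3 (conclusion).} The triangle inequality gives $\norm{\nabla F_m(x)-\nabla F(x)}\le\zeta_\star+\tau D$. Squaring, and taking the supremum over $x\in\bb_2(D)$ and $m\in[M]$, yields the claim.

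\textbf{Main obstacle.} The calculus is routine. The only delicate point is the passage from averaged to per-client control, which is why the clientwise reading is needed.
\begin{itemize}
\item If only the averaged assumptions were available, I would instead apply Jensen and Minkowski over $m$ to the same expansion. This gives the averaged analogue with the same right-hand side, which is what the later self-bounding argument actually uses through \eqref{eq:V_max_bound}.
\item If the ball were centred at the origin instead of at $x^\star$, one would have $\norm{x-x^\star}\le D+B$ by \eqref{ass:bounded_optima}. Stating the bound with radius $D$ therefore requires centring the ball at $x^\star$, as assumed above.
\end{itemize}
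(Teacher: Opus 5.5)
\textbf{Verdict.} Your proof is correct under the two conventions you adopt, and both are indeed necessary. Note that the paper gives no proof of this lemma: it is quoted from \citet{patel2024limits} as background, and the paper's own contribution is the averaged refinement \Cref{lem:vmax_ub}.

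\textbf{The clientwise reading is forced.} On $\rr$, take $F_m(x)=\frac{H}{2}x^2+g_m x$ with $g_1=g$, $g_m=-g/(M-1)$ for $m\ge 2$, and $g^2=(M-1)\zeta_\star^2$. Then $x^\star=0$, $\tau=0$, and the averaged \Cref{ass:zeta_optimal} holds with equality. Yet $\norm{\nabla F_1(x^\star)-\nabla F(x^\star)}^2=(M-1)\zeta_\star^2$. So the supremum over $m$ fails for $M\ge 3$ unless the hypotheses are clientwise. Likewise, as you observe, a radius-$D$ bound requires the ball to be centred at $x^\star$.

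\textbf{Comparison with \Cref{lem:vmax_ub}.} Your argument uses the same anchoring at $x^\star$. You split off $\nabla F_m(x^\star)-\nabla F(x^\star)=\nabla F_m(x^\star)$ and control the remainder by second-order heterogeneity.
\begin{itemize}
\item The paper works with squared norms averaged over $m$. It uses Young's inequality with $\gamma=1$ and the gradient-difference form of \Cref{ass:tau}. This gives $2\tau^2\norm{x-x^\star}^2+2\zeta_\star^2$, with neither clientwise hypotheses nor twice differentiability.
\item Your norm-level triangle inequality yields the sharp $(\zeta_\star+\tau D)^2$, but per client. Your Minkowski remark correctly recovers the averaged statement with this sharper constant.
\end{itemize}

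\textbf{On Step~1.} The Hessian integral is dispensable. Applying a clientwise gradient-difference version of \Cref{ass:tau} to the pair $(x,x^\star)$ bounds the remainder by $\tau\norm{x-x^\star}$ in one line. This is how the paper's version avoids twice differentiability. If you keep the integral, justify the fundamental theorem of calculus explicitly. The map $t\mapsto\nabla F_m(x^\star+tv)$ is Lipschitz by $H$-smoothness, hence absolutely continuous, and it is differentiable everywhere with derivative $\nabla^2F_m(x^\star+tv)v$.
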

	In \Cref{lem:vmax_ub}, we refine the above lemma in the following ways: 
	\begin{enumerate}
		\item we require it to only hold in expectation over the trajectory of Local SGD, i.e., we only want to upper bound $V_ {\max}$;
		\item we remove the twice-differentiability requirement;
		\item up to numerical constants, we replace the $D$ in the upper bound by $B$ using a  careful analysis of the Local SGD iterates, and bounding $A_ {\max}$; and
		\item we replace the supremum by an average over $m\in[M]$.
	\end{enumerate}
	
	\subsection{The Empirical Variance Trick of \citet{luo2025revisiting}}
	The consensus error upper bound of \citet{woodworth2020minibatch} mentioned in \Cref{eq:consensus_woodworth} proceeds by applying Jensen's inequality as follows,
	\begin{align*}
		C(t) &= \frac{1}{M}\sum_{m\in[M]}\ee\sb{\norm{x_t^m - \bar x_t}^2} ,\\
		&= \frac{1}{M}\sum_{m\in[M]}\ee\sb{\norm{x_t^m - \frac{1}{M}\sum_{n\in[M]} x_t^n}^2} ,\\
		&\leq \frac{1}{M^2}\sum_{m,n\in[M]}\ee\sb{\norm{x_t^m - x_t^n}^2} ,
	\end{align*}
	and then upper bound $\ee\sb{\norm{x_t^m - x_t^n}^2}$ for every pair of machines $m,n\in[M]$. This is also why they need the stronger version of \Cref{ass:zeta_everywhere} which takes a supremum over $m\in[M]$, instead of an average. \citet{luo2025revisiting} instead use the following series of calculations, 
	\begin{align}\label{eq:luo_trick}
		\Xi_t &= \frac{1}{M}\sum_{m\in[M]}\norm{x_t^m - \bar x_t}^2 ,\nonumber\\
		&= \min_{y\in \rr^d}\frac{1}{M}\sum_{m\in[M]}\norm{x_t^m - y}^2 ,\nonumber\\
		&\leq \frac{1}{M}\sum_{m\in[M]}\norm{x_t^m - (\bar x_{t-1} - \eta \nabla F(\bar x_{t-1}))}^2 ,
	\end{align}
	which is simply a restatement of the fact that the empirical mean of $M$ vectors minimizes their empirical variance. While quite simple, this trick is quite useful for our analyses, as the reader will note in the next section.
	
	\subsection{A Co-coercivity Property on Each Machine}
	
	We note the following property on each machine, which is often used in Local SGD analysis to bound the consensus error~\cite{woodworth2020local,woodworth2020minibatch,luo2025revisiting}, and more generally relies on the folklore co-coercivity result of Baillon-Haddad~\cite{baillon1977quelques,bauschke2009baillon}.
	
	\begin{lemma}\label{lem:cocoercive_machine}
		For problems in the class $\ppp^{H, B, \sigma}$ and with $\eta \leq \frac{2}{H}$ each machine $m\in[M]$ satisfies for all $x,y\in\rr^d$,
		\begin{align*}
			\norm{x-\eta\nabla F_m(x) - \rb{y - \eta \nabla F_m(y)}}^2 \leq \norm{x-y}^2 .
		\end{align*}
	\end{lemma}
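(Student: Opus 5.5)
The plan is to reduce the statement to the Baillon--Haddad co-coercivity inequality for the convex $H$-smooth function $F_m$. Specifically, I would use that for all $x,y$,
\[
\inner{\nabla F_m(x)-\nabla F_m(y)}{x-y}\;\geq\;\frac{1}{H}\norm{\nabla F_m(x)-\nabla F_m(y)}^2 .
\]
This follows from \eqref{ass:smth_cvx}. One way to see it: apply the standard lower bound $F_m(y)\ge F_m(x)+\inner{\nabla F_m(x)}{y-x}+\frac{1}{2H}\norm{\nabla F_m(y)-\nabla F_m(x)}^2$, write the symmetric version with $x$ and $y$ swapped, and add the two. That lower bound itself is obtained by minimizing the upper quadratic bound of the convex $H$-smooth function $\phi(z)=F_m(z)-\inner{\nabla F_m(x)}{z}$, which is minimized at $x$.

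Next, set $u=x-y$ and $g=\nabla F_m(x)-\nabla F_m(y)$, and expand:
\[
\norm{u-\eta g}^2=\norm{u}^2-2\eta\inner{g}{u}+\eta^2\norm{g}^2 .
\]
Co-coercivity gives $-2\eta\inner{g}{u}\le -\frac{2\eta}{H}\norm{g}^2$. Hence
\[
\norm{u-\eta g}^2\le\norm{u}^2-\eta\Bigl(\frac{2}{H}-\eta\Bigr)\norm{g}^2\le\norm{u}^2
\]
whenever $\eta\le 2/H$, which is exactly the claim.

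The only substantive step is the co-coercivity inequality. Since it is the folklore Baillon--Haddad result already cited in the paper, I would state it with a short derivation as above rather than treat it as an obstacle. The rest is one line of algebra.
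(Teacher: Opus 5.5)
Your proposal is correct and matches the paper's proof essentially line for line. The paper also obtains the one-sided bound $\frac{1}{2H}\norm{\nabla F_m(x)-\nabla F_m(y)}^2 \le F_m(x)-F_m(y)-\inner{\nabla F_m(y)}{x-y}$ by applying the smoothness upper bound to the tilted convex function $F_m(z)-\inner{\nabla F_m(y)}{z}$ and using that $y$ is its global minimizer. It then adds the swapped copy to get co-coercivity and expands $\norm{x-y-\eta(\nabla F_m(x)-\nabla F_m(y))}^2$ exactly as you do; the only cosmetic difference is that you tilt at $x$ instead of $y$.
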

	\begin{proof}
		We first prove the following auxiliary inequality: for any $x,y\in\rr^d$,
		\begin{align}
			\frac{1}{2H}\norm{\nabla F_m(x)-\nabla F_m(y)}^2
			\leq
			F_m(x)-F_m(y)-\inner{\nabla F_m(y)}{x-y}.
			\label{eq:onesided-cocoercive}
		\end{align}
		
		To show this, define
		\begin{align*}
			g(z) \coloneqq  F_m(z) - \inner{\nabla F_m(y)}{z} .
		\end{align*}
		Then $g$ is convex and $H$-smooth, and
		\begin{align*}
			\nabla g(z)=\nabla F_m(z)-\nabla F_m(y) ,\qquad \nabla g(y)=0 .
		\end{align*}
		Since $g$ is $H$-smooth, for any $u,v\in\rr^d$,
		\begin{align*}
			g(u)\leq g(v)+\inner{\nabla g(v)}{u-v}+\frac{H}{2}\norm{u-v}^2 .
		\end{align*}
		Apply this with $u = x - \frac{1}{H}\nabla g(x)$ and $v=x$ to get
		\begin{align*}
			g\rb{x-\frac{1}{H}\nabla g(x)}
			&\leq
			g(x)
			+ \inner{\nabla g(x)}{-\frac{1}{H}\nabla g(x)}
			+ \frac{H}{2}\norm{\frac{1}{H}\nabla g(x)}^2\\
			&=
			g(x)-\frac{1}{2H}\norm{\nabla g(x)}^2 .
		\end{align*}
		Since $g$ is convex and $\nabla g(y)=0$, the point $y$ is a global minimizer of $g$, hence
		\begin{align*}
			g(y)\leq g\rb{x-\frac{1}{H}\nabla g(x)} .
		\end{align*}
		Combining the last two displays yields
		\begin{align*}
			g(x)-g(y)\geq \frac{1}{2H}\norm{\nabla g(x)}^2 .
		\end{align*}
		Substituting back the definition of $g$ gives \eqref{eq:onesided-cocoercive}.
		
		Now apply \eqref{eq:onesided-cocoercive} twice: once as written, and once with $x$ and $y$ swapped. This gives
		\begin{align*}
			\frac{1}{2H}\norm{\nabla F_m(x)-\nabla F_m(y)}^2
			&\leq
			F_m(x)-F_m(y)-\inner{\nabla F_m(y)}{x-y},\\
			\frac{1}{2H}\norm{\nabla F_m(x)-\nabla F_m(y)}^2
			&\leq
			F_m(y)-F_m(x)-\inner{\nabla F_m(x)}{y-x}.
		\end{align*}
		Adding these inequalities, the function-value terms cancel, and we obtain
		\begin{align}\label{eq:cocoercivity_condition}
			\frac{1}{H}\norm{\nabla F_m(x)-\nabla F_m(y)}^2
			\leq
			\inner{\nabla F_m(x)-\nabla F_m(y)}{x-y} .
		\end{align}
		
		Now note that
		\begin{align*}
			&\norm{x-\eta\nabla F_m(x) - \rb{y - \eta \nabla F_m(y)}}^2\\
			&\qquad= \norm{x- y -\eta\rb{\nabla F_m(x) - \nabla F_m(y)}}^2 ,\\
			&\qquad= \norm{x-y}^2 + \eta^2\norm{\nabla F_m(x) - \nabla F_m(y)}^2 - 2\eta \inner{x-y}{\nabla F_m(x) - \nabla F_m(y)} .
		\end{align*}
		Using the bound just proved,
		\begin{align*}
			-2\eta \inner{x-y}{\nabla F_m(x) - \nabla F_m(y)}
			\leq
			-\frac{2\eta}{H}\norm{\nabla F_m(x)-\nabla F_m(y)}^2 .
		\end{align*}
		Therefore,
		\begin{align*}
			&\norm{x-\eta\nabla F_m(x) - \rb{y - \eta \nabla F_m(y)}}^2\\
			&\qquad\leq \norm{x-y}^2
			+ \eta^2\norm{\nabla F_m(x)-\nabla F_m(y)}^2
			- \frac{2\eta}{H}\norm{\nabla F_m(x)-\nabla F_m(y)}^2\\
			&\qquad=
			\norm{x-y}^2
			+ \eta\rb{\eta-\frac{2}{H}}
			\norm{\nabla F_m(x)-\nabla F_m(y)}^2\\
			&\qquad\leq \norm{x-y}^2 ,
		\end{align*}
		where the last step uses $\eta\leq \frac{2}{H}$.
	\end{proof}

	\subsection{Local SGD is Never Worse than Single Machine SGD in the Homogeneous Setting}
	
	A classic result from \citet{woodworth2020local} is to show that in the homogeneous setting, that is, for problems in the problem class $\ppp_{hom}^{H, B,\sigma}$, Local SGD is no worse than single machine SGD. Specifically, they presented an alternative convergence analysis of Local SGD that is required to obtain the rate in \Cref{tab:rates}. 
	
	\begin{lemma}
		For problems in the problem class $\ppp_{hom}^{H, B,\sigma}$ choosing the step-size as $\eta~=~\min\cb{\frac{1}{2H}, \frac{B}{\sigma\sqrt{KR}}}$, the Local SGD iterate $\hat x^{\textrm{Local\text{-}SGD}} = \frac{1}{TM}\sum_{t\in[0,T-1], m\in[M]}x_t^m$ satisfies,
		\begin{align*}
			\ee\sb{F(\hat x^{\textrm{Local\text{-}SGD}}) - F(x^\star)} \leq 2\cdot\rb{\frac{HB^2}{KR} + \frac{\sigma B}{\sqrt{KR}}} .
		\end{align*}
	\end{lemma}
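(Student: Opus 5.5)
The plan is to analyze the per-machine distances to $x^\star$ rather than the ghost iterate. In the homogeneous class every client has the same objective $F_m=F$, so each machine's local trajectory is, between communications, just serial SGD on $F$. The only thing averaging does is replace $x_t^m$ by $\bar x_t$ at communication times. We will show that this replacement never increases the potential $B(t)=\frac1M\sum_m\ee\|x_t^m-x^\star\|^2$. After that, the standard single-machine SGD telescoping applies to $B(t)$, and the output bound follows from Jensen's inequality.

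\textbf{Step 1: the one-step inequality for a non-communication step.} Fix $m$ and $t$ with $(t+1)\bmod K\neq 0$. Expand $\|x_t^m-\eta\nabla f(x_t^m;z_t^m)-x^\star\|^2$ and take the conditional expectation given $\hhh_t$. The variance bound \eqref{ass:stoch_first_order} gives
\[
\ee\|\nabla f(x_t^m;z)\|^2\le \|\nabla F(x_t^m)\|^2+\sigma^2 .
\]
Convexity gives
\[
\inner{\nabla F(x_t^m)}{x_t^m-x^\star}\ge F(x_t^m)-F^\star ,
\]
and smoothness of $F$ together with $\nabla F(x^\star)=0$ gives $\|\nabla F(x_t^m)\|^2\le 2H(F(x_t^m)-F^\star)$. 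With $\eta\le\frac1{2H}$ these combine to
\[
\ee\|x_{t+1}^m-x^\star\|^2\le \ee\|x_t^m-x^\star\|^2-\eta\,\ee[F(x_t^m)-F^\star]+\eta^2\sigma^2 .
\]

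\textbf{Step 2: the communication step.} At a communication step, the pre-averaging points $y^m=x_t^m-\eta\nabla f(x_t^m;z_t^m)$ satisfy the same inequality as in Step 1. The new iterate of every machine is $\frac1M\sum_n y^n$, and by convexity of $\|\cdot-x^\star\|^2$ (Jensen's inequality),
\[
\Big\|\tfrac1M\textstyle\sum_n y^n-x^\star\Big\|^2\le \tfrac1M\textstyle\sum_n\|y^n-x^\star\|^2 .
\]
Hence, after averaging over $m$, for every $t$ we get
\[
B(t+1)\le B(t)-\eta E(t)+\eta^2\sigma^2 .
\]

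\textbf{Step 3: telescoping and step-size choice.} Summing over $t\in[0,T-1]$ with $T=KR$ and $B(0)=\|x^\star\|^2\le B^2$, then dividing by $\eta T$, gives
\[
\frac1T\sum_t E(t)\le \frac{B^2}{\eta T}+\eta\sigma^2 .
\]
Jensen's inequality applied to $F$ at $\hat x^{\textrm{Local\text{-}SGD}}$ bounds its suboptimality by this average. We then plug in $\eta=\min\{\frac1{2H},\frac{B}{\sigma\sqrt{KR}}\}$. In either case of the minimum, $\frac{B^2}{\eta T}\le \frac{2HB^2}{KR}+\frac{\sigma B}{\sqrt{KR}}$ and $\eta\sigma^2\le\frac{\sigma B}{\sqrt{KR}}$, which gives the stated constant $2$.

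The main point requiring care is Step 2. Averaging introduces no error term precisely because the potential is the average of the per-machine distances rather than $A(t)$, and this step silently uses homogeneity: each $y^m$ is an SGD step on the same objective $F$. Everything else is the standard serial SGD computation.
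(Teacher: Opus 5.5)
Your proof is correct, and it is essentially the argument the paper relies on: the paper states this lemma without proof (citing \citet{woodworth2020local}), but its heterogeneous extension in \Cref{lem:descent_Bt_Dt} uses the same ingredients---the per-machine potential $B(t)$, the SGD expansion around $x^\star$, and Jensen's inequality to show that synchronization cannot increase $B(t)$. Your version is the specialization to $F_m=F$, where homogeneity enters only in Step~1 (through $\nabla F_m(x^\star)=0$) rather than in the Jensen step, so you can bound $E(t)$ directly without the paper's Bregman term $R(t)$ and cross term $Q(t)$; this is also why $\eta\le\frac{1}{2H}$ suffices (instead of the paper's $\frac{1}{4H}$) and why the constant $2$ comes out exactly.
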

	
	In the regime when $\sigma$ is small, in fact \citet{woodworth2021min} showed that accelerated single machine SGD is mini-max optimal, and in that regime, using a similar argument as the above result, one can show that accelerated local SGD is optimal as well. In \Cref{thm:lsgd_singlemachine_new}, we extend the above guarantee to the heterogeneous setting, specifically to the problem class $\ppp_{\zeta_\star, \tau}^{H, B, \sigma}$.

	\subsection{Other Useful Analytical Identities}
	We will also use the following inequality several times; it is essentially a variant of the A.M.-G.M. inequality.
	\begin{lemma}\label{lem:mod_am_gm}
		For any $a,b\in\rr$ and $\gamma >0$ we have,
		\begin{align*}
			(a+b)^2 &\leq \rb{1 + \frac{1}{\gamma}}a^2 + \rb{1+\gamma}b^2 .
		\end{align*}
	\end{lemma}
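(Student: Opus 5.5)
The statement to prove is \Cref{lem:mod_am_gm}: for real $a,b$ and $\gamma>0$, $(a+b)^2\le(1+\tfrac1\gamma)a^2+(1+\gamma)b^2$. I will expand the square and control only the cross term. Since $(a+b)^2=a^2+b^2+2ab$, the claim is equivalent to
\[
2ab\le \frac{1}{\gamma}a^2+\gamma b^2 .
\]

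To obtain this, I will use the fact that a square is nonnegative:
\[
0\le\Bigl(\frac{a}{\sqrt\gamma}-\sqrt\gamma\, b\Bigr)^2=\frac{a^2}{\gamma}-2ab+\gamma b^2 .
\]
Rearranging gives exactly the cross-term bound. Adding $a^2+b^2$ to both sides then yields the stated inequality.

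There is no real obstacle. The only point to check is that $\gamma>0$, which makes $\sqrt\gamma$ well defined and nonzero, so that the weighted square makes sense.

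The same argument carries over verbatim to vectors in $\rr^d$, which is how the lemma is used later. One replaces $ab$ by $\inner{a}{b}$ and applies it to $\norm{\frac{a}{\sqrt\gamma}-\sqrt\gamma\, b}^2\ge0$. This gives $\norm{a+b}^2\le(1+\tfrac1\gamma)\norm{a}^2+(1+\gamma)\norm{b}^2$.
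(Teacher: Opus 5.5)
Your proof is correct and is essentially the paper's argument. The paper writes $2ab = 2(a/\sqrt{\gamma})(\sqrt{\gamma}\,b)$ and applies the A.M.-G.M. inequality $2xy\le x^2+y^2$, which is the same as your nonnegative-square step; your remark that the identical computation with $\langle a,b\rangle$ gives the vector version is worth keeping, since the paper applies the lemma to norms of vectors.
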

	\begin{proof}
		Note the following,
		\begin{align*}
			(a+b)^2 &= a^2 + b^2 + 2ab ,\\
			&= a^2 + b^2 + 2\rb{\frac{a}{\sqrt{\gamma}}}\rb{\sqrt{\gamma} b} ,\\
			&\overset{(i)}{\leq} a^2 + b^2 + \frac{a^2}{\gamma} + \gamma b^2 ,\\
			&\leq \rb{1 + \frac{1}{\gamma}}a^2 + \rb{1 + \gamma}b^2 ,
		\end{align*}
		where $(i)$ uses A.M.-G.M. Inequality. This proves the first statement of the lemma.
	\end{proof}
	
	\section{Useful New Technical Lemmas for Our Upper Bounds}\label{app:new_technical_lemmas}
	We will prove several important technical lemmas in this section. 
	\subsection{Technical Lemmas for the Consensus Error-based Proof}
	In this sub-section, we develop lemmas that would be used in the proof of \Cref{thm:lsgd_consensus} in the next section.
	\subsubsection{A Refined Upper Bound for $C(t)$}
	We begin by proving the following refined consensus error upper bound by refining the proof of \citet{woodworth2020minibatch} based on some tricks in \citet{luo2025revisiting} and classical convex-smooth identities.
	
	\begin{lemma}\label{lem:consensus_bound_V_max}
		For problems in the class $\ppp^{H, B, \sigma}$ and for $\eta \le \frac{2}{H}$, the iterates of Local SGD satisfy for all $t\in [0, T]$, 
		\begin{align*}
			C(t) = \ee\left[\frac{1}{M}\sum_{m=1}^M\|\bar{x}_t - x_t^m\|^2_2\right] 
			\le 3(t-\delta(t)) \cdot \left[\eta^2KV_ {\max} + \eta^2\sigma^2\right] .
		\end{align*}
		In particular, this implies $C_ {\max} \leq 3\eta^2K^2V_ {\max} + 3\eta^2K\sigma^2$.
	\end{lemma}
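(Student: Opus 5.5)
We argue by induction on $t$ within a round, writing $k\coloneqq t-\delta(t)$. At a communication time $k=0$, all local iterates coincide, so $\Xi_t=0$ and the bound is trivial. For $k\ge 1$, the step $t-1\to t$ is a local step. The plan is to derive a one-step recursion of the form
\[
C(t)\le \rb{1+\tfrac1{k}}C(t-1)+\rb{1+k}\eta^2 V(t-1)+\eta^2\sigma^2,
\]
and then unroll it across the $k$ local steps.

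For the recursion, apply the empirical-variance trick \eqref{eq:luo_trick} with reference point $y=\bar x_{t-1}-\eta\nabla F(\bar x_{t-1})$. This gives
\[
\Xi_t\le \frac1M\sum_m\norm{x_{t-1}^m-\eta\nabla f(x_{t-1}^m;z_{t-1}^m)-\bar x_{t-1}+\eta\nabla F(\bar x_{t-1})}^2.
\]
Conditioning on $\hhh_{t-1}$, the noise term $\eta(\nabla f-\nabla F_m)(x_{t-1}^m)$ has mean zero and variance at most $\eta^2\sigma^2$, so it separates and contributes $\eta^2\sigma^2$. The remaining deterministic part splits as
\[
\bigl[x_{t-1}^m-\eta\nabla F_m(x_{t-1}^m)-\bigl(\bar x_{t-1}-\eta\nabla F_m(\bar x_{t-1})\bigr)\bigr]+\eta\bigl[\nabla F(\bar x_{t-1})-\nabla F_m(\bar x_{t-1})\bigr].
\]
Apply Lemma~\ref{lem:mod_am_gm} with $\gamma=k$. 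The first bracket is controlled by Lemma~\ref{lem:cocoercive_machine}, which requires $\eta\le 2/H$; it gives at most $\norm{x_{t-1}^m-\bar x_{t-1}}^2$. The second bracket, averaged over $m$ and taken in expectation, is at most $\eta^2 V(t-1)\le \eta^2 V_{\max}$.

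The main obstacle is choosing the weights so that the $(1+1/\gamma)$ factors do not blow up over $k$ steps. One option is $\gamma=k$ at step $k$, so that the product $\prod_{j\le k}(1+1/j)=k+1$ multiplies a rank-one term. A cleaner route is a uniform $\gamma=K$ (or $\gamma=k$ with $k$ fixed to be the target endpoint), which makes $(1+1/K)^{K}\le e$. With the uniform choice, each of the at most $k$ steps adds at most $(1+K)\eta^2V_{\max}+\eta^2\sigma^2$. After amplification by at most $e\le 3$, summing gives
\[
C(t)\le e\,k\bigl[(1+K)\eta^2V_{\max}+\eta^2\sigma^2\bigr].
\]
This has the right shape but slightly off constants, since $1+K$ appears instead of $K$. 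To get exactly $3k[\eta^2KV_{\max}+\eta^2\sigma^2]$, I would tune $\gamma$ a bit more carefully, for instance $\gamma=2K$ combined with $k\le K-1$, so that $(1+1/\gamma)^k\le\sqrt e$ and $1+\gamma\le 3K$ after bookkeeping. Alternatively, I would avoid AM-GM in the telescoped form by unrolling the linear recursion directly: write $x_t^m-\bar x_t$ as a sum of contractions plus a heterogeneity and noise term, then use Jensen over the $k$ summands, which gives the factor $k\cdot k\le kK$ directly.

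Finally, since $k=t-\delta(t)\le K$, we obtain $C_{\max}\le 3\eta^2K^2V_{\max}+3\eta^2K\sigma^2$.
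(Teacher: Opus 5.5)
Your architecture is the paper's: the empirical-variance trick \eqref{eq:luo_trick} with reference point $\bar x_{t-1}-\eta\nabla F(\bar x_{t-1})$, conditioning to peel off $\eta^2\sigma^2$ before any splitting, Lemma~\ref{lem:mod_am_gm} to separate the nonexpansive part (Lemma~\ref{lem:cocoercive_machine}) from the heterogeneity part, and unrolling from $C(\delta(t))=0$. The gap is in the last step: the bound you actually carry out does not give the stated constant, and the repairs you propose do not fix it.

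\begin{itemize}
\item With $\gamma=K$ and the crude estimate ``amplification $\le e$'' times increment $(1+K)\eta^2V_{\max}$, you get $e\,k(K+1)\eta^2V_{\max}$. This exceeds $3kK\eta^2V_{\max}$ whenever $K\le 9$.
\item The $\gamma=2K$ repair is no better. The two bounds you list, $(1+\frac{1}{2K})^{k}\le\sqrt e$ and $1+2K\le 3K$, only combine to $3\sqrt e\,kK\eta^2V_{\max}$.
\item The step-dependent choice $\gamma_j=j$ amplifies the step-$j$ increment by $\frac{k+1}{j+1}$. The noise therefore accumulates to order $k\log k\,\eta^2\sigma^2$.
\item The ``unroll linearly and use Jensen'' alternative is not available as described. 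The local maps $x\mapsto x-\eta\nabla F_m(x)$ are nonlinear. Moreover, Jensen over $k$ noise summands costs $k^2\eta^2\sigma^2$ rather than $k\eta^2\sigma^2$.
\end{itemize}

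The missing observation is that $k=t\bmod K\le K-1$. Note that $k\ge 1$ forces $K\ge 2$, so you may take $\gamma=K-1$, as the paper does. The heterogeneity weight is then exactly $1+\gamma=K$, and the recursion reads $C(t)\le(1+\frac{1}{K-1})C(t-1)+\eta^2KV_{\max}+\eta^2\sigma^2$. After unrolling, each of the $k$ increments is amplified by $(1+\frac{1}{K-1})^i$ with $i\le k-1\le K-2$, hence by at most $e\le 3$.

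Two of your own choices also work once the bookkeeping is tightened:
\begin{itemize}
\item Your parenthetical ``$\gamma=k$ fixed to the target endpoint'' works, since $1+k\le K$ and $(1+\frac1k)^{k-1}\le e$.
\item The uniform choice $\gamma=K$ works if you merge the factors instead of bounding them separately: $(1+K)(1+\frac1K)^i=K(1+\frac1K)^{i+1}\le K(1+\frac1K)^{K-1}\le eK$, because $i+1\le k\le K-1$.
\end{itemize}

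With either adjustment your argument becomes the paper's proof.
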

	
	\begin{proof}
		For $t=\delta(t)$, i.e., at the time steps where communication happens, the upper bound in the lemma is trivially true as both the left and right-hand sides are zero. For the rest of the proof, assume $t\neq \delta(t)$ and note the following,
		\begin{align*}
			C(t) &= \ee\left[\Xi_{t}\right] ,\\
			&= \frac{1}{M}\sum_{m\in[M]}\ee\sb{\norm{x_t^m - \bar x_t}^2}  ,\\
			&\overset{(i)}{\le} \frac{1}{M}\sum_{m=1}^M\ee\sb{\|x_{t}^m - (\bar{x}_{t-1} - \eta \nabla F(\bar{x}_{t-1}))\|^2_2} ,\\
			&= \frac{1}{M}\sum_{m=1}^M\ee\sb{\|x_{t-1}^m - \eta \nabla f_m(x_{t-1}^m; z_{t-1}^m) - (\bar{x}_{t-1} - \eta \nabla F(\bar{x}_{t-1}))\|^2_2} ,\\
			&= \frac{1}{M}\sum_{m=1}^M\ee\sb{\|x_{t-1}^m \pm \nabla F_m(x_{t-1}^m) - \eta \nabla f_m(x_{t-1}^m; z_{t-1}^m) - (\bar{x}_{t-1} - \eta \nabla F(\bar{x}_{t-1}))\|^2_2} ,\\
			&\overset{(ii)}{\le}
			\frac{1}{M}\sum_{m=1}^M\ee\sb{\|x_{t-1}^m - \eta \nabla F_m(x_{t-1}^m) - (\bar{x}_{t-1} - \eta \nabla F(\bar{x}_{t-1}))\|^2_2} + \eta^2\sigma^2 ,
			\\
			&= \frac{1}{M}\sum_{m=1}^M\ee\sb{\|x_{t-1}^m - \eta \nabla F_m(x_{t-1}^m) - (\bar{x}_{t-1} \pm \eta \nabla F_m(\bar x_{t-1})- \eta \nabla F(\bar{x}_{t-1}))\|^2_2} + \eta^2\sigma^2 ,\\
			&\overset{(iii)}{\le}
			\frac{1}{M}\sum_{m=1}^M(1+\frac{1}{K-1})\ee\sb{\|x_{t-1}^m - \eta \nabla F_m(x_{t-1}^m) - (\bar{x}_{t-1} - \eta \nabla F_m(\bar{x}_{t-1}))\|^2_2}
			\\
			&\qquad \qquad + \frac{\eta^2}{M}\sum_{m=1}^M\rb{1 + (K-1)}\ee\sb{\|\nabla F(\bar{x}_{t-1}) - \nabla F_m(\bar{x}_{t-1})\|^2_2} + \eta^2\sigma^2 ,
			\\
			&\overset{(iv)}{\le} 
			\rb{1 + \frac{1}{K-1}}\cdot\frac{1}{M}\sum_{m=1}^M\ee\sb{\|x_{t-1}^m - \bar{x}_{t-1}\|^2_2}
			+ \eta^2KV(t-1) 
			+ \eta^2\sigma^2 ,
			\\
			&\le \rb{1 + \frac{1}{K-1}}\cdot C(t-1)
			+ \eta^2KV_ {\max} 
			+ \eta^2\sigma^2 ,
		\end{align*}
		where $(i)$ uses the variance trick of \citep{luo2025revisiting} mentioned in \Cref{eq:luo_trick}; $(ii)$ uses conditioning on $\hhh_{t-1}$, applying tower rule, noting the independence of noise across machines and then using the variance bound in \Cref{ass:stoch_first_order}; $(iii)$ uses modified AM-GM inequality, i.e., \Cref{lem:mod_am_gm} with $\gamma = \frac{1}{K-1}$; $(iv)$ uses the definition of $V(t-1)$ and \Cref{lem:cocoercive_machine}. Now we unroll the above recursion up to the last communication round,
		\begin{align*}
			C(t) &\leq \rb{1 + \frac{1}{K-1}}^{t-\delta(t)}\cdot C(\delta(t))
			+ \rb{\eta^2KV_ {\max} + \eta^2\sigma^2}\cdot\sum_{k=0}^{t-\delta(t)-1}\rb{1 + \frac{1}{K-1}}^k ,\\
			&\leq \rb{\eta^2KV_ {\max} + \eta^2\sigma^2}\cdot\sum_{k=0}^{t-\delta(t)-1}\rb{1 + \frac{1}{K-1}}^{K-1} ,\\
			&\leq \rb{\eta^2KV_ {\max} + \eta^2\sigma^2}\cdot\sum_{k=0}^{t-\delta(t)-1}e ,\\
			&\leq 3(t-\delta(t))\rb{\eta^2KV_ {\max} + \eta^2\sigma^2} .
		\end{align*}
		This proves the first statement, and for the second statement, we use $t-\delta(t)\leq K$.
	\end{proof}
	
	\subsubsection{A Refined Upper Bound for $V_ {\max}$}
	We refine \Cref{lem:lemma7_woodworth} in the following lemma and allow it to hold over $\ppp_{\zeta_\star, \tau}^{H, B, \sigma}$.
	\begin{lemma}\label{lem:vmax_ub}
		For all problems in the class $\ppp_{\zeta_\star, \tau}^{H, B, \sigma}$ we have,
		\begin{align*}
			V_ {\max}  &\le 2\tau^2A_ {\max} + 2\zeta_\star^2 . 
		\end{align*}
	\end{lemma}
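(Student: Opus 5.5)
The plan is a pointwise argument at each time $t$, followed by taking expectations and a maximum. Fix $t\in[0,KR]$ and a realization of $\bar x_t$.

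For each machine $m$, insert and subtract the heterogeneity at the optimum:
\begin{align*}
\nabla F_m(\bar x_t)-\nabla F(\bar x_t)
=\Bigl[\nabla F_m(\bar x_t)-\nabla F_m(x^\star)-\bigl(\nabla F(\bar x_t)-\nabla F(x^\star)\bigr)\Bigr]
+\bigl[\nabla F_m(x^\star)-\nabla F(x^\star)\bigr].
\end{align*}
Here $x^\star$ is the fixed minimizer from \Cref{ass:zeta_optimal}, so $\nabla F(x^\star)=0$. The second bracket is therefore just $\nabla F_m(x^\star)$.

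Next apply $\norm{a+b}^2\le 2\norm{a}^2+2\norm{b}^2$, which is \Cref{lem:mod_am_gm} with $\gamma=1$ applied to norms, and average over $m\in[M]$. For the first bracket, invoke \Cref{ass:tau} with $x=\bar x_t$ and $y=x^\star$; its average squared norm is at most $\tau^2\norm{\bar x_t-x^\star}^2$. For the second bracket, \Cref{ass:zeta_optimal} bounds the average of $\norm{\nabla F_m(x^\star)}^2$ by $\zeta_\star^2$. Together these give the deterministic inequality
\begin{align*}
\frac1M\sum_{m\in[M]}\norm{\nabla F_m(\bar x_t)-\nabla F(\bar x_t)}^2\le 2\tau^2\norm{\bar x_t-x^\star}^2+2\zeta_\star^2 .
\end{align*}

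Taking expectations gives $V(t)\le 2\tau^2A(t)+2\zeta_\star^2\le 2\tau^2A_{\max}+2\zeta_\star^2$. Maximizing over $t$ then yields the claim.

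No step here is a genuine obstacle. The one point to check is that the same $x^\star$ appears in both assumptions and in the definition of $A(t)$; this holds by the convention fixed at the start of the appendix. Unlike the prior lemma of \citet{patel2024limits}, no twice-differentiability is needed, since \Cref{ass:tau} is used directly in its gradient-difference form.
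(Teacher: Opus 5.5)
Your proposal is correct and matches the paper's proof essentially step for step. Like the paper, you add and subtract $\nabla F_m(x^\star)-\nabla F(x^\star)$, apply \Cref{lem:mod_am_gm} with $\gamma=1$, bound the two pieces by \Cref{ass:tau} (at $x=\bar x_t$, $y=x^\star$) and \Cref{ass:zeta_optimal}, and then take expectations and the maximum over $t$.
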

	\begin{proof}
		Using the definition of $V(t)$, we have for all $t\in[0, T]$,
		\begin{align}
			V(t) &= \frac{1}{M}\sum_{m=1}^M\ee\sb{\|\nabla F_m(\bar x_t) - \nabla F(\bar x_t)\|^2_2} ,\nonumber\\
			&= \frac{1}{M}\sum_{m=1}^M\ee\sb{\|\nabla F_m(\bar x_t) - \nabla F(\bar x_t)\pm (\nabla F_m(x^\star) - \nabla F(x^\star))\|^2_2} ,\nonumber\\
			&\overset{(i)}{\le} \frac{2}{M}\sum_{m=1}^M\ee\sb{\|\nabla F_m(\bar x_t) - \nabla F(\bar x_t) - (\nabla F_m(x^\star) - \nabla F(x^\star))\|^2_2}\nonumber\\
			& \qquad \qquad \qquad + \frac{2}{M}\sum_{m=1}^M\ee\sb{\|\nabla F_m(x^\star) - \nabla F(x^\star)\|^2_2} ,\nonumber\\
			&\overset{(ii)}{\le} 2\tau^2\ee[\|\bar x_t-x^\star\|^2_2]
			+ 2\zeta_\star^2 ,\label{eq:vmax_ub_intermediate}\\
			&\leq 2\tau^2 A_ {\max} + 2\zeta_\star^2 ,\nonumber
		\end{align}
		where $(i)$ uses \Cref{lem:mod_am_gm} with $\gamma=1$; $(ii)$ uses \Cref{ass:zeta_optimal} and \Cref{ass:tau}.
		Since the bound holds for any $t,$ then we have
		\begin{align*}
			V_ {\max} = \max_{0\le s \le T}V(s) 
			&\le 2\tau^2A_ {\max} + 2\zeta_\star^2 ,
		\end{align*}
		which proves the lemma.
	\end{proof}

	\subsubsection{A New Upper Bound for $A_ {\max}$}
	\begin{lemma}\label{lem:amax_ub}
		For all problems in the problem class $\ppp_{\zeta_\star, \tau}^{H, B, \sigma}$ and with step-size $\eta~\leq~\min \cb{\frac{1}{2H}, \sqrt[3]{\frac{1}{24H\tau^2K^3R}}}$ Local SGD's iterates are such that,
		\begin{align*}
			A_ {\max} &\leq 2B^2+ \frac{6\eta^2\sigma^2 T}{M} + 24\eta^3HK^3R\zeta_\star^2 + 12\eta^3 HK^2R\sigma^2 .
		\end{align*}
	\end{lemma}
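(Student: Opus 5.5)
The plan is to close the self-bounding loop $A_{\max}\to C_{\max}\to V_{\max}\to A_{\max}$ described in Step 4, working directly from \Cref{lem:lemma7_woodworth}.

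\textbf{Unrolling the descent lemma.} Since $\eta\le \frac{1}{2H}$, \Cref{lem:lemma7_woodworth} applies. Because $D(s)=\ee[F(\bar x_s)-F^\star]\ge 0$, each step gives
\[
A(s+1)\le A(s)+\frac{3\eta^2\sigma^2}{M}+2H\eta\, C(s).
\]
Summing from $s=0$ to $t-1$ and using $A(0)=\norm{x^\star}^2\le B^2$ (recall $x_0=0$), we get for every $t\le T$
\[
A(t)\le B^2+\frac{3\eta^2\sigma^2 T}{M}+2H\eta T\, C_{\max}.
\]
Taking the maximum over $t$ yields \eqref{eq:A_max_bound}.

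\textbf{Closing the loop.} Next, \Cref{lem:consensus_bound_V_max} (valid since $\eta\le 2/H$) gives $C_{\max}\le 3\eta^2K^2V_{\max}+3\eta^2K\sigma^2$, and \Cref{lem:vmax_ub} gives $V_{\max}\le 2\tau^2A_{\max}+2\zeta_\star^2$. Chaining these with $T=KR$:
\[
2H\eta T C_{\max}\le 2H\eta KR\bigl(6\eta^2K^2\tau^2A_{\max}+6\eta^2K^2\zeta_\star^2+3\eta^2K\sigma^2\bigr)
=12\eta^3HK^3R\tau^2A_{\max}+12\eta^3HK^3R\zeta_\star^2+6\eta^3HK^2R\sigma^2.
\]
The step-size condition $\eta^3\le \frac{1}{24H\tau^2K^3R}$ makes the coefficient of $A_{\max}$ at most $1/2$. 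Substituting into the unrolled bound gives
\[
A_{\max}\le B^2+\frac{3\eta^2\sigma^2T}{M}+\tfrac12 A_{\max}+12\eta^3HK^3R\zeta_\star^2+6\eta^3HK^2R\sigma^2.
\]
Moving $\tfrac12 A_{\max}$ to the left and multiplying by 2 gives exactly the claimed constants: $2B^2$, $6\eta^2\sigma^2T/M$, $24\eta^3HK^3R\zeta_\star^2$, and $12\eta^3HK^2R\sigma^2$.

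\textbf{Main obstacle: the absorption step.} Moving $\tfrac12A_{\max}$ to the left is only legitimate if $A_{\max}<\infty$. This holds because we work on the finite horizon $[0,T]$, and each iterate has finite second moment: gradients are Lipschitz and the stochastic noise has bounded variance, so induction on $t$ gives $\ee\norm{x_t^m}^2<\infty$ for every $t$ and $m$.

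A second point to check is that \Cref{lem:consensus_bound_V_max} and \Cref{lem:vmax_ub} are stated over all of $[0,T]$. The $\max$ quantities are therefore consistent when inserted into the sum, which only involves $C(s)$ for $s\le T-1$.
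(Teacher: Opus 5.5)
Your proposal is correct and follows essentially the same route as the paper. Both arguments drop $D(t)\ge 0$ in \Cref{lem:lemma7_woodworth}, unroll from $A(0)\le B^2$, chain \Cref{lem:consensus_bound_V_max} with \Cref{lem:vmax_ub}, and absorb the $A_{\max}$ term using $12\eta^3H\tau^2K^3R\le \tfrac12$, giving identical constants. Your remark that absorption needs $A_{\max}<\infty$ (true by induction on the finite horizon) is a point of rigor the paper leaves implicit.
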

	\begin{proof}
		Starting from a re-arrangement of \Cref{lem:lemma7_woodworth} we get that for all $t\in [0, T-1]$, 
		\begin{align*}
			A(t+1) &\leq A(t) - \eta D(t) + \frac{3\eta^2\sigma^2}{M} + 2\eta HC(t) ,\\
			&\overset{(i)}{\leq} A(t) + \frac{3\eta^2\sigma^2}{M} + 2\eta HC(t) ,\\
			&\leq A(0) + (t+1)\cdot\frac{3\eta^2\sigma^2}{M} + 2\eta H\sum_{k=0}^{t}C(k) ,\\
			&\overset{(ii)}{\leq} B^2 + (t+1)\cdot\rb{\frac{3\eta^2\sigma^2}{M} + 2\eta H C_ {\max}} ,\\
			&\overset{(iii)}{\leq} B^2 + (t+1)\cdot\rb{\frac{3\eta^2\sigma^2}{M} + 2\eta H \rb{3\eta^2K^2V_ {\max} + 3\eta^2K\sigma^2}} ,\\
			&= B^2+ (t+1)\cdot\rb{\frac{3\eta^2\sigma^2}{M} +  6\eta^3HK^2V_ {\max} + 6\eta^3 HK\sigma^2} ,\\
			&\overset{(iv)}{\leq} B^2+ T\cdot\rb{\frac{3\eta^2\sigma^2}{M} +  12\eta^3H\tau^2K^2A_ {\max} + 12\eta^3HK^2\zeta_\star^2 + 6\eta^3 HK\sigma^2} ,
		\end{align*}
		where in (i) we use that the function-suboptimality is always non-negative, i.e., $D(t)\geq 0$; in (ii) we use that $\bar x_0 = 0$, $\norm{x^\star}\leq B$ and the definition of $C_ {\max}$; in (iii) we use \Cref{lem:consensus_bound_V_max}; in (iv) we use \Cref{lem:vmax_ub} and the fact that $t+1 \leq T$. Since the above bound holds for all $t\in [0, T-1]$ as well as trivially for $t=-1$ we can take the maximum over time to get,
		\begin{align*}
			A_ {\max} &= \max_{t\in [-1, T-1]} A(t+1) ,\\
			&\leq  B^2+ KR\cdot\rb{\frac{3\eta^2\sigma^2}{M} +  12\eta^3H\tau^2K^2A_ {\max} + 12\eta^3HK^2\zeta_\star^2 + 6\eta^3 HK\sigma^2} .
		\end{align*}
		Re-arranging the above we get,
		\begin{align*}
			\rb{1- 12\eta^3H\tau^2K^3R}A_ {\max} &\leq   B^2+ KR\cdot\rb{\frac{3\eta^2\sigma^2}{M} + 12\eta^3HK^2\zeta_\star^2 + 6\eta^3 HK\sigma^2}  ,\\
			\Rightarrow \frac{1}{2}A_ {\max} &\overset{(i)}{\leq}   B^2+ KR\cdot\rb{\frac{3\eta^2\sigma^2}{M} + 12\eta^3HK^2\zeta_\star^2 + 6\eta^3 HK\sigma^2} ,\\
			\Rightarrow A_ {\max} &\leq 2B^2+ \frac{6\eta^2\sigma^2 T}{M} + 24\eta^3HK^3R\zeta_\star^2 + 12\eta^3 HK^2R\sigma^2 ,
		\end{align*}
		where in (i) we used that $12\eta^3H\tau^2K^3R \leq \frac{1}{2}$ which follows from our assumption in the lemma $\eta \leq \sqrt[3]{\frac{1}{24H\tau^2K^3R}}$. 
	\end{proof}
	
	\subsubsection{An Even More Refined Upper Bound for $C_ {\max}$!}
	The observations so far allow us to prove the key technical upper-bound result of our paper, which complements \citet{patel2025revisiting} by providing a refined upper bound on the consensus error of Local SGD.
	\begin{lemma}\label{lem:cmax_ub}
		For all problems in the problem class $\ppp_{\zeta_\star, \tau}^{H, B, \sigma}$ and with step-size $\eta~\leq~\min \cb{\frac{1}{2H}, \sqrt[3]{\frac{1}{24H\tau^2K^3R}}}$ Local SGD's iterates are such that,
		\begin{align*}
			C_ {\max}\leq \frac{B^2}{\eta H KR}+ \frac{3\eta\sigma^2 }{HM} + 18\eta^2K^2\zeta_\star^2 + 9\eta^2 K\sigma^2 .
		\end{align*}
	\end{lemma}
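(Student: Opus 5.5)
The plan is to close the self-bounding loop $A_{\max}\to C_{\max}\to V_{\max}\to A_{\max}$ once more, this time solving for $C_{\max}$ rather than $A_{\max}$. Everything needed is already available: \Cref{lem:consensus_bound_V_max} bounds $C_{\max}$ by $V_{\max}$, \Cref{lem:vmax_ub} bounds $V_{\max}$ by $A_{\max}$, and \Cref{lem:amax_ub} bounds $A_{\max}$ under exactly the step-size condition assumed here. The proof should therefore be a substitution followed by one use of the step-size constraint to cancel the $\tau^2$ factor.

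\textbf{Step 1 (reduce to $A_{\max}$).} Since $\eta\le \frac{1}{2H}\le\frac{2}{H}$, \Cref{lem:consensus_bound_V_max} applies. Combining it with \Cref{lem:vmax_ub} gives
\begin{align*}
C_{\max}\le 3\eta^2K^2V_{\max}+3\eta^2K\sigma^2\le 6\eta^2K^2\tau^2A_{\max}+6\eta^2K^2\zeta_\star^2+3\eta^2K\sigma^2 .
\end{align*}

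\textbf{Step 2 (kill the $\tau$ dependence).} This is the key step. The condition $\eta\le\sqrt[3]{1/(24H\tau^2K^3R)}$ is equivalent to $\tau^2\le 1/(24H\eta^3K^3R)$. Hence
\begin{align*}
6\eta^2K^2\tau^2\le \frac{1}{4\eta HKR}.
\end{align*}
So the coefficient of $A_{\max}$ is at most $\frac{1}{4\eta HKR}$, and the result no longer depends on $\tau$.

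\textbf{Step 3 (substitute \Cref{lem:amax_ub}).} I will plug in $A_{\max}\le 2B^2+\frac{6\eta^2\sigma^2 KR}{M}+24\eta^3HK^3R\zeta_\star^2+12\eta^3HK^2R\sigma^2$, using $T=KR$, and multiply term by term by $\frac{1}{4\eta HKR}$. This gives
\begin{align*}
\frac{B^2}{2\eta HKR},\qquad \frac{3\eta\sigma^2}{2HM},\qquad 6\eta^2K^2\zeta_\star^2,\qquad 3\eta^2K\sigma^2 .
\end{align*}
Adding the terms left over from Step 1, the $\zeta_\star^2$ coefficient totals $12\eta^2K^2\le 18\eta^2K^2$. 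The $\sigma^2$ coefficient totals $6\eta^2K\le 9\eta^2K$. The remaining two terms are each at most the stated $\frac{B^2}{\eta HKR}$ and $\frac{3\eta\sigma^2}{HM}$. This yields the claimed bound, with some slack in the constants.

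The only real obstacle is Step 2, namely recognizing that the cubic step-size constraint of \Cref{lem:amax_ub} converts the $\tau^2A_{\max}$ contribution into a $B^2/(\eta HKR)$-type term. The remaining work is bookkeeping of constants.
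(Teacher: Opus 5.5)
Your proposal is correct and is essentially the paper's argument: chain \Cref{lem:consensus_bound_V_max} with \Cref{lem:vmax_ub}, use the cubic step-size condition to control the coefficient of $A_{\max}$, and substitute \Cref{lem:amax_ub}. Your Step 2 bound $6\eta^2K^2\tau^2\le \frac{1}{4\eta HKR}$ is twice as sharp as the $\frac{1}{2\eta HKR}$ the paper uses, which is why your constants come out with slack while the paper's land exactly on the stated bound.
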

	\begin{proof}
		This lemma follows trivially from combining \Cref{lem:consensus_bound_V_max,lem:vmax_ub,lem:amax_ub},
		\begin{align*}
			C_ {\max} &\overset{(i)}{\leq} 3\eta^2K^2V_ {\max} + 3\eta^2K\sigma^2 ,\\
			&\overset{(ii)}{\leq} 6\eta^2K^2\tau^2A_ {\max} + 6\eta^2K^2\zeta_\star^2 + 3\eta^2K\sigma^2 ,\\
			&\overset{(iii)}{\leq} \frac{A_ {\max}}{2\eta H KR}+ 6\eta^2K^2\zeta_\star^2 + 3\eta^2K\sigma^2 ,\\
			&\overset{(iv)}{\leq} \frac{B^2}{\eta H KR}+ \frac{3\eta\sigma^2 }{HM} + 12\eta^2K^2\zeta_\star^2 + 6\eta^2 K\sigma^2+ 6\eta^2K^2\zeta_\star^2 + 3\eta^2K\sigma^2 ,\\
			&= \frac{B^2}{\eta H KR}+ \frac{3\eta\sigma^2 }{HM} + 18\eta^2K^2\zeta_\star^2 + 9\eta^2 K\sigma^2 ,
		\end{align*}
		where in (i) we use \Cref{lem:consensus_bound_V_max}; in (ii) we use \Cref{lem:vmax_ub}; in (iii) we use the assumption on the step-size that $\eta \leq \sqrt[3]{\frac{1}{24H\tau^2K^3R}}$ which implies, $6\eta^2K^2\tau^2 \leq \frac{1}{2\eta H KR}$; in (iv) we use \Cref{lem:amax_ub}. 
		
		This proves the lemma. 
	\end{proof}

	\subsection{Technical Lemmas for the Single-Machine Mimicking Proof}
	
	\begin{lemma}\label{lem:bound_V_max_through_B_max} For all problems in the class 
		$\ppp_{\zeta_\star, \tau}^{H, B, \sigma}$ we have,
		\begin{align*}
			V_ {\max}  &\le 2\tau^2B_ {\max} + 2\zeta_\star^2. 
		\end{align*}
	\end{lemma}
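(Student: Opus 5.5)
The plan is to reuse the argument of \Cref{lem:vmax_ub} verbatim and add one inequality comparing the ghost-iterate error with the per-machine error. First, recall from \eqref{eq:vmax_ub_intermediate} in the proof of \Cref{lem:vmax_ub}: for every $t\in[0,T]$,
\begin{align*}
V(t) \le 2\tau^2\,\ee\sb{\norm{\bar x_t - x^\star}^2} + 2\zeta_\star^2 = 2\tau^2 A(t) + 2\zeta_\star^2 .
\end{align*}
This bound was obtained in two steps. One step adds and subtracts $\nabla F_m(x^\star)-\nabla F(x^\star)$ and applies \Cref{lem:mod_am_gm} with $\gamma=1$. The other step invokes \Cref{ass:tau} with $(x,y)=(\bar x_t,x^\star)$ and \Cref{ass:zeta_optimal}, using $\nabla F(x^\star)=0$. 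Both steps hold pointwise, so no new work is needed here.

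The one new ingredient is the bound $A(t)\le B(t)$. Since $\bar x_t - x^\star = \frac1M\sum_{m\in[M]}(x_t^m - x^\star)$, Jensen's inequality gives
\begin{align*}
\norm{\bar x_t - x^\star}^2 \le \frac1M\sum_{m\in[M]}\norm{x_t^m - x^\star}^2 ,
\end{align*}
and taking expectations yields $A(t)\le B(t)$. The same fact follows from the bias--variance identity $B(t) = A(t) + C(t)$. The notation section already records $B(t)\ge A(t)$ for this reason.

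Combining the two displays gives $V(t)\le 2\tau^2 B(t) + 2\zeta_\star^2 \le 2\tau^2 B_{\max} + 2\zeta_\star^2$ for every $t\in[0,T]$. Taking the maximum over $t$ then yields the claim. Equivalently, the claim follows from \Cref{lem:vmax_ub} together with $A_{\max}\le B_{\max}$.

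I do not expect a real obstacle. The only point that needs care is to apply \Cref{ass:tau} at the ghost iterate $\bar x_t$ and not at the local iterates, exactly as $V(t)$ is defined. After that, the comparison to $B$ is a single application of Jensen's inequality.
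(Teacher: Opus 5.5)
Your proposal is correct and follows the paper's own proof: it takes the intermediate bound $V(t)\le 2\tau^2A(t)+2\zeta_\star^2$ from \Cref{lem:vmax_ub}, applies $A(t)\le B(t)$ (which you justify via Jensen or the identity $B(t)=A(t)+C(t)$), and then maximizes over $t$. Your added justifications are accurate: $\nabla F(x^\star)=0$ is what turns \Cref{ass:zeta_optimal} into the $\zeta_\star^2$ term, and \Cref{ass:tau} is applied pointwise at $\bar x_t$ before taking expectations.
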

	\begin{proof}
		We continue the inequality $V(t) \le 2\tau^2 \ee[\|\bar{x}_t - x^\star\|^2_2] + 2\zeta_\star^2$ obtained in \Cref{lem:vmax_ub} as follows
		\begin{align}
			V(t)
			&\overset{(i)}{\le} 2\tau^2A(t)
			+ 2\zeta_\star^2\notag\\
			&\overset{(ii)}{\le} 2\tau^2 B(t) + 2\zeta_\star^2\notag\\
			&\overset{(iii)}{\le} 2\tau^2 B_{\max} + 2\zeta_\star^2,
		\end{align}
		where $(i)$ is obtained in \Cref{lem:vmax_ub}, $(ii)$ follows from $B(t) \ge A(t)$, and $(iii)$ is by the definition of $B_{\max}$. Since the inequality is true for any $t\in[0,KR]$, then we have $V_{\max} \le  2\tau^2 B_{\max} + 2\zeta_\star^2$, which concludes the proof.
	\end{proof}
	
	\begin{lemma}\label{lem:bound_C_max_through_B_max}
		For all problems in the class 
		$\ppp_{\zeta_\star, \tau}^{H, B, \sigma}$ we have,
		\begin{align*}
			C_ {\max}  &\le 6\eta^2K^2\tau^2 B_{\max}
			+ 6\eta^2K^2\zeta_\star^2
			+ 3\eta^2K\sigma^2.
		\end{align*}
	\end{lemma}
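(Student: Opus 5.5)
This bound is obtained by chaining two results already established. We start from the second statement of \Cref{lem:consensus_bound_V_max}, which (for $\eta\le 2/H$, the standing step-size regime) gives
\begin{align*}
C_{\max} \le 3\eta^2K^2 V_{\max} + 3\eta^2 K\sigma^2 .
\end{align*}
This step uses only smoothness, convexity (through \Cref{lem:cocoercive_machine}), and the variance bound. It does not use any heterogeneity assumption, so it applies verbatim to the class $\ppp_{\zeta_\star,\tau}^{H,B,\sigma}$.

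Next we bound $V_{\max}$ through the per-machine iterate error rather than the ghost-iterate error. We use \Cref{lem:bound_V_max_through_B_max}, i.e.\ $V_{\max}\le 2\tau^2 B_{\max} + 2\zeta_\star^2$. That lemma itself comes from the intermediate inequality \eqref{eq:vmax_ub_intermediate} in \Cref{lem:vmax_ub} together with Jensen's inequality in the form $A(t)\le B(t)$: the squared distance of the average iterate to $x^\star$ is at most the average of the squared distances. Substituting into the display above gives
\begin{align*}
C_{\max} \le 3\eta^2K^2\left(2\tau^2 B_{\max} + 2\zeta_\star^2\right) + 3\eta^2K\sigma^2 = 6\eta^2K^2\tau^2 B_{\max} + 6\eta^2K^2\zeta_\star^2 + 3\eta^2K\sigma^2 ,
\end{align*}
which is exactly the claimed bound.

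There is no real obstacle. The only point needing care is that \Cref{lem:consensus_bound_V_max} is stated under $\eta\le 2/H$, and this must be carried as an implicit hypothesis (it holds under every step-size choice used later). One should also note that the maxima $V_{\max}$, $C_{\max}$ and $B_{\max}$ are all taken over the same index range $[0,KR]$, so the substitution is valid term by term.
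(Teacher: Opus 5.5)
Your proof is correct and is exactly the paper's argument: substitute $V_{\max}\le 2\tau^2B_{\max}+2\zeta_\star^2$ into the consensus bound $C_{\max}\le 3\eta^2K^2V_{\max}+3\eta^2K\sigma^2$. You are right to flag that $\eta\le 2/H$ must be carried as a hypothesis. The lemma as stated omits it, but it is needed through the co-coercivity step, and it holds for every step-size the paper later uses.
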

	
	\begin{proof}
		We need to use the statement of \Cref{lem:bound_V_max_through_B_max} in \Cref{lem:consensus_bound_V_max}:
		\begin{align*}
			C(t) &\overset{(i)}{\le} 3\eta^2K^2V_{\max} + 3\eta^2K\sigma^2\\
			&\overset{(ii)}{\le} 6\eta^2K^2\tau^2B_{\max} + 6\eta^2K^2\zeta_\star^2 + 3\eta^2K\sigma^2,
		\end{align*}
		where $(i)$ is obtained in \Cref{lem:consensus_bound_V_max}, $(ii)$ uses \Cref{lem:bound_V_max_through_B_max}.
	\end{proof}
	
	\subsubsection{An Alternative One Step Descent Lemma Using $B(t)$ and $D(t)$}
	
	\begin{lemma}\label{lem:descent_Bt_Dt} Consider problem in $\ppp_{\zeta_\star, \tau}^{H, B, \sigma}$, then for $\eta \le \frac{1}{4H}$, the Local SGD iterates satisfy for all $t\in[0,T-1]$,
		\begin{align*}
			D(t) \le \frac{B(t) - B(t+1)}{\eta} + 2\eta V(t) + \eta\sigma^2 + 2\sqrt{V(t)C(t)}.
		\end{align*}
		Moreover, 
		\begin{align*}
			B(t+1) \le B(t) + 2\eta^2V(t) + \eta^2\sigma^2 + 2\eta\sqrt{V(t)C(t)}.
		\end{align*}
		
	\end{lemma}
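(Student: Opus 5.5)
The plan is to track each local iterate against $x^\star$ rather than only the ghost iterate, so that the per-machine distances $B(t)$ telescope. Consider first a step $t$ with $(t+1)\bmod K\neq 0$. Then $x_{t+1}^m=x_t^m-\eta\nabla f(x_t^m;z_t^m)$ on every machine. At a communication step, the averaging can only decrease $\frac1M\sum_m\norm{\cdot-x^\star}^2$, by Jensen's inequality, so it suffices to bound the pre-averaging quantity. Conditioning on $\hhh_t$ and using unbiasedness together with the variance bound in \eqref{ass:stoch_first_order} gives
\begin{align*}
B(t+1)\le B(t)-\frac{2\eta}{M}\sum_{m}\ee\inner{\nabla F_m(x_t^m)}{x_t^m-x^\star}+\frac{\eta^2}{M}\sum_m\ee\norm{\nabla F_m(x_t^m)}^2+\eta^2\sigma^2 .
\end{align*}

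Next I lower bound the cross term. Convexity gives $\inner{\nabla F_m(x_t^m)}{x_t^m-x^\star}\ge F_m(x_t^m)-F_m(x^\star)$. I then expand
\begin{align*}
F_m(x_t^m)=F_m(\bar x_t)+\inner{\nabla F_m(\bar x_t)}{x_t^m-\bar x_t}+\mathcal D_{F_m}(x_t^m,\bar x_t).
\end{align*}
Averaging over $m$ and using $\frac1M\sum_m(x_t^m-\bar x_t)=0$, the linear term becomes exactly $Q(t)$. Hence the averaged cross term is at least $D(t)+Q(t)+R(t)$. By Cauchy--Schwarz in both the machine index and the expectation, $|Q(t)|\le\sqrt{V(t)C(t)}$.

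For the squared-gradient term, I split
\begin{align*}
\nabla F_m(x_t^m)=\bigl(\nabla F_m(x_t^m)-\nabla F_m(\bar x_t)\bigr)+\bigl(\nabla F_m(\bar x_t)-\nabla F(\bar x_t)\bigr)+\nabla F(\bar x_t).
\end{align*}
I use $(a+b)^2$-type inequalities (\Cref{lem:mod_am_gm}) so that the heterogeneity piece appears with coefficient $2$, producing the $2\eta^2V(t)$ term. The remaining two pieces are controlled by smoothness. The one-sided co-coercivity inequality \eqref{eq:onesided-cocoercive} gives $\norm{\nabla F_m(x_t^m)-\nabla F_m(\bar x_t)}^2\le 2H\,\mathcal D_{F_m}(x_t^m,\bar x_t)$. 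The same inequality for $F$ at $x^\star$ gives $\norm{\nabla F(\bar x_t)}^2\le 2H(F(\bar x_t)-F^\star)$. Together these are $\lesssim \eta^2 H\,(R(t)+D(t))$.

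The main obstacle is the constant bookkeeping. The step-size condition $\eta\le 1/(4H)$ must let these $\eta^2H(R+D)$ contributions be absorbed by the $-2\eta R(t)$ and $-2\eta D(t)$ terms, while keeping the coefficient $2$ on $V$. If the naive three-way split loses too much, I would first group $\nabla F_m(x_t^m)-\nabla F_m(\bar x_t)+\nabla F(\bar x_t)$ together with weight $2$ and then split that group with weight $2$ again. This yields at most $8\eta^2H(R+D)\le 2\eta(R+D)$ for $\eta\le 1/(4H)$.

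Collecting terms gives
\begin{align*}
B(t+1)\le B(t)-\eta D(t)+2\eta^2V(t)+\eta^2\sigma^2+2\eta\sqrt{V(t)C(t)},
\end{align*}
after dropping the nonnegative leftover $R$ term and keeping at least $\eta D(t)$. Dividing by $\eta$ yields the first claim. Dropping $D(t)\ge0$ yields the second.
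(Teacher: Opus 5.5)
Your skeleton matches the paper's proof: the same expansion of $\norm{x_t^m-x^\star}^2$ with Jensen at synchronization, the same decomposition of the averaged local suboptimality into $D(t)+R(t)+Q(t)$ with $|Q(t)|\le\sqrt{V(t)C(t)}$, and the same one-sided co-coercivity. The gap is in the constant bookkeeping you flagged: your proposed resolution does not work.

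With your grouped split you get $\frac1M\sum_m\ee\norm{\nabla F_m(x_t^m)}^2\le 8HR(t)+8HD(t)+2V(t)$. Collecting terms then gives
\[
B(t+1)\le B(t)-2\eta\,(1-4\eta H)\bigl(D(t)+R(t)\bigr)+2\eta^2V(t)+\eta^2\sigma^2+2\eta\sqrt{V(t)C(t)} .
\]
The bound $8\eta^2H(R+D)\le 2\eta(R+D)$ uses up the \emph{entire} $-2\eta D(t)$, not half of it. At $\eta=1/(4H)$ the net coefficient on $D(t)$ is zero, so the step ``keeping at least $\eta D(t)$'' fails. Your argument therefore proves the second inequality for $\eta\le 1/(4H)$, but the first only for $\eta\le 1/(8H)$.

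No reweighting of Young's inequality fixes this. Write $\norm{a+b+c}^2\le \norm{a}^2/p+\norm{b}^2/q+\norm{c}^2/r$ with $p+q+r=1$, where $a$, $b$, $c$ are your co-coercivity, heterogeneity, and $\nabla F(\bar x_t)$ pieces. You need coefficient $2$ on $V(t)$, full absorption of $R(t)$, and half absorption of $D(t)$ at $\eta=1/(4H)$. These force $q\ge 1/2$, $p\ge 1/4$ and $r\ge 1/2$, which is impossible.

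The missing idea is that the cross term between $\nabla F(\bar x_t)$ and the heterogeneity piece is not merely small but vanishes exactly after averaging over machines. This holds because $\nabla F(\bar x_t)$ is common to all $m$ and $\frac1M\sum_m\bigl(\nabla F_m(\bar x_t)-\nabla F(\bar x_t)\bigr)=0$. So apply Young only once, $\norm{\nabla F_m(x_t^m)}^2\le 2\norm{\nabla F_m(x_t^m)-\nabla F_m(\bar x_t)}^2+2\norm{\nabla F_m(\bar x_t)}^2$, and then use the exact identity
\[
\frac1M\sum_{m}\norm{\nabla F_m(\bar x_t)}^2=\norm{\nabla F(\bar x_t)}^2+\frac1M\sum_m\norm{\nabla F_m(\bar x_t)-\nabla F(\bar x_t)}^2 .
\]
This yields $4HR(t)+4HD(t)+2V(t)$. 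The net coefficient on $D(t)+R(t)$ is then $-2\eta(1-2\eta H)\le -\eta$ whenever $\eta\le 1/(4H)$, and both claims follow as you intended. This is exactly the paper's route.
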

	\begin{proof}
		Since each $F_m$ is convex, then $R(t) \ge 0$ for all $t\in[0,T]$. Now we decompose the term $S(t) = \frac{1}{M}\sum_{m=1}^M \ee[F_m(x_t^m) - F_m(x^\star)]$ as follows 
		\begin{align*}
			S(t) &= \frac{1}{M}\sum_{m=1}^M \ee[F_m(x_t^m) - F_m(\bar{x}_t) - \langle \nabla F_m(\bar{x}_t),x_t^m - \bar{x}_t\rangle]\\
			&\qquad + \frac{1}{M}\sum_{m=1}^M \ee[F_m(\bar{x}_t) - F_m(x^\star) + \langle\nabla F_m(\bar{x}_t),x_t^m - \bar{x}_t\rangle]\\
			&= R(t) + D(t) + \frac{1}{M}\sum_{m=1}^M \ee[\langle\nabla F_m(\bar{x}_t),x_t^m - \bar{x}_t\rangle].
		\end{align*}
		Since $\frac{1}{M}\sum_{m=1}^M \ee[\langle\nabla F(\bar{x}_t),x_t^m - \bar{x}_t\rangle = 0$, then we continue as follows
		\begin{align}\label{eq:s_r_d_q}
			S(t) = \frac{1}{M}\sum_{m=1}^M \ee[F_m(x_t^m) - F_m(x^\star)] = R(t) + D(t) + Q(t).
		\end{align}
		By Cauchy-Schwarz inequality, we have 
		\begin{align*}
			|Q(t)| &= \left|\frac{1}{M}\sum_{m=1}^M\ee[\nabla F_m(\bar{x}_t) - \nabla F(\bar{x}_t), x_t^m - \bar{x}_t]\right|\\
			&\le \frac{1}{M}\sum_{m=1}^M\ee[\|\nabla F_m(\bar{x}_t) - \nabla F(\bar{x}_t)\|^2_2 \cdot \|x_t^m - \bar{x}_t\|^2_2]\\
			&\le \left(\frac{1}{M}\sum_{m=1}^M\ee[\|\nabla F_m(\bar{x}_t) - \nabla F(\bar{x}_t)\|^2_2 ]\right)^{1/2} \cdot \left(\frac{1}{M}\sum_{m=1}^M\ee[\|x_t^m - \bar{x}_t\|^2_2 ]\right)^{1/2}\\
			&= \sqrt{V(t)C(t)}.
		\end{align*}

		Fix any $m\in[M]$ and $t\in[0,T-1]$. Define the tentative local update
		\[
		\tilde x_{t+1}^m\coloneqq x_t^m-\eta \nabla f(x_t^m;z_t^m) .
		\]
		As before, if $t+1\neq \delta(t+1)$, then $\tilde x_{t+1}^m=x_{t+1}^m$, whereas if $t+1=\delta(t+1)$, then $\tilde x_{t+1}^m$ is the pre-communication iterate.
		
		We begin with the identity
		\begin{align}
			\norm{\tilde x_{t+1}^m-x^\star}^2
			&=
			\norm{x_t^m-x^\star}^2
			-2\eta\inner{x_t^m-x^\star}{\nabla f(x_t^m;z_t^m)}
			+\eta^2\norm{\nabla f(x_t^m;z_t^m)}^2 .
			\label{eq:sm_step_expand}
		\end{align}
		
		Conditioning on $\hhh_t$ and using $x_t^m\in \hhh_t$, we have
		\[
		\ee[\nabla f(x_t^m;z_t^m)\mid \hhh_t]=\nabla F_m(x_t^m) .
		\]
		Thus
		\begin{align}
			\ee\sb{\norm{\tilde x_{t+1}^m-x^\star}^2\mid \hhh_t}
			&=
			\norm{x_t^m-x^\star}^2
			-2\eta\inner{x_t^m-x^\star}{\nabla F_m(x_t^m)}\nonumber\\
			&\qquad+\eta^2 \ee\sb{\norm{\nabla f(x_t^m;z_t^m)}^2\mid \hhh_t} \nonumber\\
			&\le \norm{x_t^m-x^\star}^2
			-2\eta(F_m(x_t^m) - F_m(x^\star))\nonumber\\
			&\qquad+\eta^2 \ee\sb{\norm{\nabla f(x_t^m;z_t^m)}^2\mid \hhh_t}
			\label{eq:sm_step_conditional_new},
		\end{align}
		where the inequality follows from convexity. Rearranging gives
		\begin{align}
			&F_m(x_t^m) - F_m(x^\star) \nonumber\\
			&\qquad=
			\frac{1}{2\eta}
			\rb{
				\norm{x_t^m-x^\star}^2
				-
				\ee\sb{\norm{\tilde x_{t+1}^m-x^\star}^2\mid \hhh_t}
			}
			+
			\frac{\eta}{2}\ee\sb{\norm{\nabla f(x_t^m;z_t^m)}^2\mid \hhh_t} .
			\label{eq:sm_step_rearrange_new}
		\end{align}
		
		We next upper bound the stochastic second moment. Since
		\[
		\nabla f(x_t^m;z_t^m)
		=
		\nabla F_m(x_t^m)
		+
		\rb{\nabla f(x_t^m;z_t^m)-\nabla F_m(x_t^m)},
		\]
		expanding the square and conditioning on $\hhh_t$ gives
		\begin{align*}
			\ee\sb{\norm{\nabla f(x_t^m;z_t^m)}^2\mid \hhh_t}
			&=
			\norm{\nabla F_m(x_t^m)}^2
			+
			\ee\sb{\norm{\nabla f(x_t^m;z_t^m)-\nabla F_m(x_t^m)}^2\mid \hhh_t}\\
			&\le
			\norm{\nabla F_m(x_t^m)}^2+\sigma^2 ,
		\end{align*}
		where the cross term vanishes because the stochastic gradient is unbiased.
		
		Averaging \eqref{eq:sm_step_rearrange_new} over $m\in[M]$ and taking full expectation, we obtain
		\begin{align}
			S(t)
			&\le
			\frac{1}{2\eta}\rb{
				B(t)-\frac{1}{M}\sum_{m\in[M]}\ee\sb{\norm{\tilde x_{t+1}^m-x^\star}^2}
			}
			+
			\frac{\eta}{2M}\sum_{m\in[M]}\ee\sb{\norm{\nabla F_m(x_t^m)}^2}\nonumber\\
			&\qquad +
			\frac{\eta\sigma^2}{2} .
			\label{eq:sm_step_after_avg_new}
		\end{align}
		
		First, note that 
		\[
		\frac{1}{M}\sum_{m\in[M]}\ee\sb{\norm{\tilde x_{t+1}^m-x^\star}^2} \ge \ee\sb{\norm{\bar{x}_{t+1} - x^\star}}^2 ,
		\]
		therefore from \eqref{eq:sm_step_after_avg_new} we have
			\begin{align}
			S(t)
			\le
			\frac{1}{2\eta}\rb{
				B(t)-B(t+1)
			}
			+
			\frac{\eta}{2M}\sum_{m\in[M]}\ee\sb{\norm{\nabla F_m(x_t^m)}^2}
			+ 
			\frac{\eta\sigma^2}{2}
			\label{eq:sm_step_after_avg_new_second}
		\end{align}
		Second, we have by Young's inequality in \Cref{lem:mod_am_gm} that
		\begin{align}
			\frac{1}{M}\sum_{m\in[M]}\ee\sb{\norm{\nabla F_m(x_t^m)}^2}
			&\overset{(i)}{\le} \frac{2}{M}\sum_{m\in[M]}\ee\sb{\norm{\nabla F_m(x_t^m) - \nabla F_m(\bar{x}_t) }^2}\nonumber\\
			&\qquad + \frac{2}{M}\sum_{m\in[M]}\ee\sb{\norm{\nabla F_m(\bar{x}_t )}^2}\nonumber\\
			&\overset{(ii)}{\le} \frac{4H}{M}\sum_{m\in[M]}\ee[F_m(x_t^m) - F_m(\bar{x}_t) - \langle \nabla F_m(\bar{x}_t), x_t^m - \bar{x}_t\rangle]\nonumber\\
			&\qquad 
			+ \frac{2}{M}\sum_{m\in[M]}\ee\sb{\norm{\nabla F(\bar{x}_t )}^2}\nonumber\\
			&\qquad 
			+ \frac{2}{M}\sum_{m\in[M]}\ee\sb{\norm{\nabla F_m(\bar{x}_t ) - \nabla F(\bar{x}_t) }^2}\nonumber\\
			&\overset{(iii)}{\le} 4HR(t) + 4HD(t) + 2V(t),\label{eq:bound_grad_F_m_at_x_t_m}
		\end{align}
		where $(i)$ follows from \Cref{lem:mod_am_gm}, $(ii)$ follows from \eqref{eq:onesided-cocoercive} and equality $\frac{1}{M}\sum_{m\in[M]} \|a_m\|^2_2 =\norm{\frac{1}{M} \sum_{m\in[M]} a_m}^2 + \frac{1}{M} \sum_{m\in[M]} \norm{a_m - \frac{1}{M}\sum_{m\in[M]}a_m  }^2$ for any $\{a_m\}_{m\in[M]}$, $(iii)$ from $H$-smoothness inequality $\|\nabla F(\bar{x}_t) \|^2_2 \le 2H(F(\bar{x}_t) - F(x^\star))$ and definitions of $R(t), D(t), V(t)$. We use \eqref{eq:bound_grad_F_m_at_x_t_m} and \eqref{eq:s_r_d_q} in \eqref{eq:sm_step_after_avg_new_second} and obtain
		\begin{equation}
			S(t) = R(t) + D(t) + Q(t) \le \frac{1}{2\eta}(B(t) - B(t+1)) + \frac{\eta\sigma^2}{2} + 2\eta H(R(t) + D(t)) + \eta V(t).
		\end{equation}
		This implies that 
		\begin{equation}\label{eq:bound_D_and_R}
			(1-2\eta H)(D(t) + R(t)) \le \frac{B(t) - B(t+1)}{2\eta} + \eta V(t) + \frac{\eta\sigma^2}{2} - Q(t).
		\end{equation}
		Using $\eta \le \frac{1}{4H}$ and $|Q(t)| \le \sqrt{V(t)C(t)}$, we obtain
		\begin{equation*}
			D(t) + R(t) \le \frac{B(t) - B(t+1)}{\eta} +2 \eta V(t) + \eta\sigma^2 +2\sqrt{V(t)C(t)},
		\end{equation*}
		which proves the first statement of the lemma. At the same time, from \eqref{eq:bound_D_and_R} we have 
		\begin{align*}
			B(t+1) &\le B(t) + 2\eta^2 V(t) + \eta^2\sigma^2 - 2\eta(1-2\eta H)(D(t) + R(t)) - 2\eta Q(t)\\
			&\le B(t) + 2\eta^2 V(t) + \eta^2\sigma^2  + 2\eta\sqrt{V(t)C(t)},
		\end{align*}
		which proves the second statement of the lemma.
	\end{proof}
		
		\begin{lemma}\label{lem:bound_B_max_V_max_C_max_through_B2}
			Let 
			\[
			\eta = \left\{\frac{1}{4H}, \frac{1}{10\tau K\sqrt{R}}, \frac{B}{10\zeta_\star K\sqrt{R}}, \frac{B}{10\sigma \sqrt{KR}}\right\}.
			\]
			Then,
			\begin{align*}
			&B_{\max} \le 2B^2, \quad V_{\max} \le 4\tau^2B^2 + 2\zeta_\star^2 \le 4(\tau^2B^2 + \zeta_\star^2), \\
			&C_{\max} \le 12\eta^2K^2\tau^2B^2 + 6\eta^2K^2\zeta_\star^2 + 3\eta^2K\sigma^2.
			\end{align*}
		\end{lemma}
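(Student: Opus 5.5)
Read the step-size as the minimum of the four listed quantities; that is how the statement is meant. The plan is to close a self-bounding loop for $B_{\max}$, parallel to \Cref{lem:amax_ub}, but driven by the second recursion of \Cref{lem:descent_Bt_Dt} instead of \Cref{lem:lemma7_woodworth}. Once $B_{\max}\le 2B^2$ is in hand, the other two claims follow by substitution. Plugging it into \Cref{lem:bound_V_max_through_B_max} gives $V_{\max}\le 4\tau^2B^2+2\zeta_\star^2$. Plugging that bound on $V_{\max}$ into \Cref{lem:consensus_bound_V_max} gives the stated bound on $C_{\max}$.

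\textbf{Unrolling.} Since $\eta\le 1/(4H)$, \Cref{lem:descent_Bt_Dt} applies. Unrolling its second inequality from $B(0)=\norm{x^\star}^2\le B^2$, and using $T=KR$, gives for every $t\le T$
\begin{align*}
B(t)\le B^2 + T\rb{2\eta^2V_{\max}+\eta^2\sigma^2+2\eta\sqrt{V_{\max}C_{\max}}}.
\end{align*}
I then substitute $V_{\max}\le 2\tau^2B_{\max}+2\zeta_\star^2$ from \Cref{lem:bound_V_max_through_B_max} and $C_{\max}\le 6\eta^2K^2\tau^2B_{\max}+6\eta^2K^2\zeta_\star^2+3\eta^2K\sigma^2$ from \Cref{lem:bound_C_max_through_B_max}. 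This makes the right-hand side a function of $B_{\max}$ alone, plus constants.

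\textbf{The cross term (main obstacle).} This is where I expect the difficulty. I would bound it as
\begin{align*}
\sqrt{V_{\max}C_{\max}}\le \sqrt{V_{\max}}\rb{\sqrt6\,\eta K\sqrt{V'}+\sqrt3\,\eta\sqrt K\sigma},\qquad V'\coloneqq\tau^2B_{\max}+\zeta_\star^2.
\end{align*}
This gives a term of order $\eta^2K V'$ and a term of order $\eta^2\sqrt K\sigma\sqrt{V'}$. After multiplying by $T=KR$, the bound contains the following pieces:
\begin{itemize}
\item $\eta^2K^2R\tau^2B_{\max}$, which is at most $B_{\max}/100$ because $\eta\le 1/(10\tau K\sqrt R)$;
\item $\eta^2K^2R\zeta_\star^2$, which is at most $B^2/100$ because $\eta\le B/(10\zeta_\star K\sqrt R)$;
\item $\eta^2KR\sigma^2$, which is at most $B^2/100$ because $\eta\le B/(10\sigma\sqrt{KR})$.
\end{itemize}
The mixed term $\eta^2K^{3/2}R\,\sigma\sqrt{V'}$ is handled with AM-GM, splitting it as $\eta^2K^2R\,V'+\eta^2KR\,\sigma^2$. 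Both pieces are already controlled by the items above.

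\textbf{Closing the loop.} Taking the maximum over $t$ then yields $B_{\max}\le B^2+c\,B_{\max}+c'B^2$ with numerical constants $c\le 1/3$ and $c'$ small. Rearranging gives $B_{\max}\le 2B^2$. If the constants are a little too loose at this point, I would sharpen the AM-GM weights or the factor $10$ in the step-size. The remaining care is pure bookkeeping: checking that $(1-c)^{-1}(1+c')\le 2$ under the factor-$10$ margins.
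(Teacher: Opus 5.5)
Your proposal is correct and is essentially the paper's proof: unroll the second inequality of \Cref{lem:descent_Bt_Dt}, bound the cross term $2\eta\sqrt{V_{\max}C_{\max}}$ via \Cref{lem:bound_V_max_through_B_max,lem:bound_C_max_through_B_max} with one AM-GM split, close the self-bounding loop in $B_{\max}$, and substitute back to obtain the $V_{\max}$ and $C_{\max}$ bounds. No sharpening of constants is needed: the per-step increment is at most $14\eta^2K(\tau^2B_{\max}+\zeta_\star^2)+4\eta^2\sigma^2$, so $B_{\max}\le B^2+0.14\,B_{\max}+0.18\,B^2$ and hence $B_{\max}\le \frac{118}{86}B^2<2B^2$.
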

		
		\begin{proof}
		Let $U^2\coloneqq \tau^2B_{\max} + \zeta_\star^2$. From \Cref{lem:bound_V_max_through_B_max}, we have $V(t) \le 2U^2$, and from \Cref{lem:bound_C_max_through_B_max} we have $C_{\max} \le 6\eta^2K^2U^2 + 3\eta^2K\sigma^2$. This implies that 
		\[
		\sqrt{C_{\max}} \le \sqrt{6}\eta KU + \sqrt{3}\eta\sqrt{K}\sigma.
		\]
		From \Cref{lem:descent_Bt_Dt} we obtain
		\begin{align}
			B(t+1) &\le B(t) + 2\eta^2V(t) + \eta^2\sigma^2 + 2\eta\sqrt{V(t)C(t)} \nonumber\\
			&\le B(t) + 4\eta^2U^2 + \eta^2\sigma^2 + 2\eta\sqrt{2}U\cdot \sqrt{C_{\max}} \nonumber\\
			&\le B(t) + 4\eta^2U^2  + \eta^2\sigma^2 + 2\sqrt{2}\eta U(\sqrt{6}\eta KU + \sqrt{3}\eta\sqrt{K}\sigma)\nonumber\\
			&= B(t) + 4\eta^2U^2 + \eta^2\sigma^2 + 4\sqrt{3}\eta^2KU^2 + 2\sqrt{6}\eta^2U\sqrt{K}\sigma\nonumber\\
			&\le B(t) + 4\eta^2U^2 + \eta^2\sigma^2 + 4\sqrt{3}\eta^2KU^2 + \sqrt{6}\eta^2U^2K + \sqrt{6}\eta^2\sigma^2\nonumber\\
			&= B(t) + (4 + 4\sqrt{3}K + \sqrt{6}K) U^2\eta^2 + \eta^2\sigma^2(1+\sqrt{6})\nonumber\\
			&\le B(t) + 14K\eta^2U^2 + 4\eta^2\sigma^2.
		\end{align}
		Using the definition of $U^2$, we then have
		\[
		B(t+1) \le B(t) + 14\eta^2K\tau^2B_{\max} + 14\eta^2K\zeta_\star^2 + 4\eta^2\sigma^2.
		\]
		We sum the inequality above for all iterations between $0$ and $t-1$ and obatin
		\begin{align}\label{eq:bound_Bt}
		B(t) &\le B(0) + 14\eta^2 Kt\tau^2 B_{\max} + 14\eta^2Kt\zeta_\star^2 + 4\eta^2\sigma^2t\nonumber\\
		&\le  B(0) + 14\eta^2 KT\tau^2 B_{\max} + 14\eta^2KT\zeta_\star^2 + 4\eta^2\sigma^2T\nonumber\\
		& \le B^2 + 14\eta^2 K^2R\tau^2 B_{\max} + 14\eta^2K^2R\zeta_\star^2 + 4\eta^2\sigma^2KR,
		\end{align}
		where we use $B(0) \le B^2$. Using the choice of the step size $\eta$, we get
		\begin{align*}
			14\eta^2K^2R\tau^2 &\le 14 K^2R\tau^2 \cdot \frac{1}{100\tau^2K^2R} = \frac{14}{100},\\
			14\eta^2K^2R\zeta_\star^2 &\le 14K^2R\zeta_\star^2 \cdot \frac{B^2}{100\zeta_\star^2K^2R} = \frac{14B^2}{100},\\
			4\eta^2KR\sigma^2 &\le 4KR\sigma^2 \cdot \frac{B^2}{100\sigma^2KR} = \frac{4B^2}{100}.
		\end{align*}
		From \eqref{eq:bound_Bt} we then obtain
		\[
		B_{\max} \le B^2 + \frac{14B_{\max}}{100} + \frac{18B^2}{100} \Rightarrow B_{\max} \le \frac{118}{86}B^2 < 2B^2.
		\]
		\end{proof}

	\section{Proofs of Our Upper Bounds for $\ppp_{\zeta_\star, \tau}^{H,B,\sigma}$ 
    }\label{app:upper_bounds}
	In this section, we will prove our main upper bounds. 
	
	The first upper bound we prove combines the canonical one-step result in \Cref{lem:lemma7_woodworth}, with our refined consensus error upper bound in \Cref{lem:cmax_ub} followed by carefully tuning the step-size. 
	
	\begin{theorem}[An Upper Bound Based on Consensus Error; Full Statement of \Cref{thm:lsgd_consensus_informal}]\label{thm:lsgd_consensus}
		For problems in the class $\ppp_{\zeta_\star, \tau}^{H, B, \sigma}$, assuming the step-size is chosen as follows,
		\begin{align*}
			\eta = \min\cb{\frac{1}{2H}, \sqrt[3]{\frac{1}{24H\tau^2K^3R}}, \sqrt{\frac{B^2 M}{3\sigma^2 KR}}, \sqrt[3]{\frac{B^2}{12H\zeta_\star^2 K^3R}}, \sqrt[3]{\frac{B^2}{6H\sigma^2K^2R}}} ,
		\end{align*}
		Local SGD generates the output $\hat x^{\textrm{Local\text{-}SGD}} = \frac{1}{TM}\sum_{t\in[0, T-1], m\in[M]}x_t^m$ which satisfies,
		\begin{align*}
			&\ee\sb{F(\hat x^{\textrm{Local\text{-}SGD}}) - F(x^\star)}\\ 
			&\qquad\qquad\qquad\leq 18\cdot \rb{\frac{HB^2}{KR} + \frac{(H\tau^2)^{1/3}B^2}{R^{2/3}} + \frac{\sigma B}{\sqrt{MKR}} + \frac{(H\zeta_\star^2 B^4)^{1/3}}{R^{2/3}} + \frac{(H\sigma^2 B^4)^{1/3}}{K^{1/3}R^{2/3}}} .
		\end{align*}
	\end{theorem}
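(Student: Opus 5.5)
We will telescope \Cref{lem:lemma7_woodworth}, plug in the refined consensus bound of \Cref{lem:cmax_ub}, and then tune the step-size one term at a time.

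\textbf{Telescoping.} Since $\eta\le \frac{1}{2H}$, summing \Cref{lem:lemma7_woodworth} over $t\in[0,T-1]$ and dividing by $T=KR$ gives
\begin{align*}
\frac{1}{T}\sum_{t=0}^{T-1} D(t) \le \frac{A(0)}{\eta KR} + \frac{3\eta\sigma^2}{M} + 2H C_{\max}\le \frac{B^2}{\eta KR} + \frac{3\eta\sigma^2}{M} + 2H C_{\max},
\end{align*}
where we drop $-A(T)\le 0$ and use $\bar x_0=0$, $\norm{x^\star}\le B$. The left-hand side upper bounds the error of the averaged iterate $\frac1T\sum_t \bar x_t$ by Jensen's inequality. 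The theorem's output averages over all $x_t^m$. To cover it, we use convexity of $F$ again and reduce to the ghost average, since $\frac1{TM}\sum_{t,m}x_t^m=\frac1T\sum_t\bar x_t$ exactly. So no extra term appears.

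\textbf{Consensus bound.} The step-size satisfies the hypotheses of \Cref{lem:cmax_ub}, which gives
\begin{align*}
2HC_{\max}\le \frac{2B^2}{\eta KR} + \frac{6\eta\sigma^2}{M} + 36\eta^2HK^2\zeta_\star^2 + 18\eta^2HK\sigma^2 .
\end{align*}
Adding this to the telescoped bound, the total is at most
\begin{align*}
\frac{3B^2}{\eta KR}+\frac{9\eta\sigma^2}{M}+36\eta^2HK^2\zeta_\star^2+18\eta^2HK\sigma^2 .
\end{align*}
The self-bounding loop has already been closed inside \Cref{lem:amax_ub,lem:cmax_ub}, so only tuning remains.

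\textbf{Tuning.} We split on which element attains the minimum in $\eta$. The term $\frac{3B^2}{\eta KR}$ is at most $3$ times the sum of $B^2/(KR)$ divided by each candidate. This produces, up to constants, the five target terms:
\begin{itemize}
\item $1/(2H)$ gives $HB^2/(KR)$;
\item the $\tau$ candidate gives $(H\tau^2K^3R)^{1/3}B^2/(KR)\cong (H\tau^2)^{1/3}B^2/R^{2/3}$;
\item the $\sigma/\sqrt M$ candidate gives $\sigma B/\sqrt{MKR}$;
\item the $\zeta_\star$ candidate gives $(H\zeta_\star^2B^4)^{1/3}/R^{2/3}$;
\item the last candidate gives $(H\sigma^2B^4)^{1/3}/(K^{1/3}R^{2/3})$.
\end{itemize}
The increasing terms are bounded by substituting the matching candidate as an upper bound on $\eta$:
\begin{itemize}
\item $\frac{9\eta\sigma^2}{M}\le 9\sigma^2\sqrt{B^2M/(3\sigma^2KR)}/M\lesssim \sigma B/\sqrt{MKR}$;
\item $36\eta^2HK^2\zeta_\star^2$ with $\eta^3\le B^2/(12H\zeta_\star^2K^3R)$ gives $\lesssim (H\zeta_\star^2B^4)^{1/3}/R^{2/3}$;
\item $18\eta^2HK\sigma^2$ with $\eta^3\le B^2/(6H\sigma^2K^2R)$ gives $\lesssim (H\sigma^2B^4)^{1/3}/(K^{1/3}R^{2/3})$.
\end{itemize}
Collecting numerical constants yields a factor such as $18$.

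The main obstacle is bookkeeping rather than ideas. One check is that the $\tau$ constraint, required by \Cref{lem:amax_ub}, only shows up through the $1/\eta$ term, so it costs $(H\tau^2)^{1/3}B^2/R^{2/3}$. The other is verifying that the explicit constants combine to at most $18$; if they do not, the constant in the statement would need adjusting, not the rate.
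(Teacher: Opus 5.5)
Your proposal is correct and follows the paper's proof step for step: telescope \Cref{lem:lemma7_woodworth}, substitute \Cref{lem:cmax_ub} to reach $\frac{3B^2}{\eta KR}+\frac{9\eta\sigma^2}{M}+36\eta^2HK^2\zeta_\star^2+18\eta^2HK\sigma^2$, and then tune $\eta$ term by term. The constant check you deferred does go through: the five terms get coefficients $6$, $3\sqrt[3]{24}$, $6\sqrt{3}$, $6\sqrt[3]{12}\approx 13.7$, and $6\sqrt[3]{6}$, all below $18$ (the $\sigma B/\sqrt{MKR}$ coefficient is $6\sqrt{3}$ rather than the $6$ written in the paper, which does not affect the final constant).
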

	
	The second upper bound we prove follows the observation of \citet{woodworth2020local} that Local SGD is never worse than single-machine SGD in the homogeneous setting, extending it over the problem class $\ppp_{\zeta_\star, \tau}^{H, B, \sigma}$ using the tools we developed in \Cref{app:new_technical_lemmas}.
	
	\begin{theorem}[An Upper Bound Based on Comparison to Single Machine SGD; Full Statement of \Cref{thm:lsgd_singlemachine_new}]\label{thm:lsgd_singlemachine_new}
		For problems in the class $\ppp_{\zeta_\star, \tau}^{H, B, \sigma}$, assuming the step-size is chosen as follows, 
		$$\eta = \min\cb{\frac{1}{4H}, \frac{1}{10 \tau K\sqrt{R}}, \frac{B}{10\zeta_\star K\sqrt{R}}, \frac{B}{10\sigma\sqrt{KR}}} ,$$
		Local SGD generates the output $\hat x^{\textrm{Local\text{-}SGD}} = \frac{1}{TM}\sum_{t\in[0, T-1], m\in[M]}x_t^m$ which satisfies,
		\[
		\ee[F(\hat x)-F(x^\star)]
		\le
		13\cdot\left(
		\frac{HB^2}{KR}
		+
		\frac{\sigma B}{\sqrt{KR}}
		+
		\frac{\zeta_\star B}{\sqrt R}
		+
		\frac{\tau B^2}{\sqrt R}
		\right).
		\]
	\end{theorem}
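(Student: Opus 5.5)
We average the first inequality of \Cref{lem:descent_Bt_Dt} over $t\in[0,T-1]$ and plug in the trajectory-wise bounds of \Cref{lem:bound_B_max_V_max_C_max_through_B2}. Summing
\[
D(t)\le \frac{B(t)-B(t+1)}{\eta}+2\eta V(t)+\eta\sigma^2+2\sqrt{V(t)C(t)}
\]
telescopes the first term to $B(0)/\eta\le B^2/\eta$, since $\bar x_0=0$ and $\norm{x^\star}\le B$. Dividing by $T=KR$ and using Jensen (convexity of $F$) for the averaged output then gives
\[
\ee[F(\hat x)-F(x^\star)]\le \frac{B^2}{\eta KR}+2\eta V_{\max}+\eta\sigma^2+2\sqrt{V_{\max}C_{\max}}.
\]
The output in the statement averages the local iterates $x_t^m$, whereas \Cref{lem:descent_Bt_Dt} controls $D(t)$ at the ghost iterate. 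I would therefore either work with $\hat x=\frac1T\sum_t\bar x_t$, which is the same point since $\frac1M\sum_m x_t^m=\bar x_t$, or note that the lemma in fact bounds $D(t)+R(t)$. Both routes give the same conclusion.

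Next I insert the bounds of \Cref{lem:bound_B_max_V_max_C_max_through_B2}: $V_{\max}\le 4(\tau^2B^2+\zeta_\star^2)$ and $C_{\max}\le 12\eta^2K^2\tau^2B^2+6\eta^2K^2\zeta_\star^2+3\eta^2K\sigma^2$. By subadditivity of the square root, $\sqrt{V_{\max}C_{\max}}\lesssim \eta K(\tau B+\zeta_\star)^2+\eta\sqrt K(\tau B+\zeta_\star)\sigma$. AM--GM splits the cross term into $\eta K(\tau B+\zeta_\star)^2+\eta\sigma^2$. Since $K\ge1$, every error term is absorbed into
\[
\frac{B^2}{\eta KR}+c_1\,\eta K(\tau^2B^2+\zeta_\star^2)+c_2\,\eta\sigma^2 .
\]
This absorption is the step I expect to be the main obstacle. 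Its danger is that the $\sqrt{VC}$ term could force an extra $\sqrt K$ factor on the heterogeneity or noise terms. I will verify that it scales exactly like $\eta K U^2$ with $U^2=\tau^2B^2+\zeta_\star^2$, so that the step-size caps $\eta\le 1/(10\tau K\sqrt R)$ and $\eta\le B/(10\zeta_\star K\sqrt R)$ are the right ones.

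Finally, I tune $\eta$ as in the statement, $\eta=\min\{\frac1{4H},\frac1{10\tau K\sqrt R},\frac{B}{10\zeta_\star K\sqrt R},\frac{B}{10\sigma\sqrt{KR}}\}$, which is exactly the step size for which \Cref{lem:bound_B_max_V_max_C_max_through_B2} applies. The inverse-$\eta$ term satisfies
\[
\frac{B^2}{\eta KR}\le \frac{4HB^2}{KR}+\frac{10\tau B^2}{\sqrt R}+\frac{10\zeta_\star B}{\sqrt R}+\frac{10\sigma B}{\sqrt{KR}},
\]
because $1/\min\le\sum 1/(\cdot)$. For the positive-$\eta$ terms I use the matching cap each time. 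With $\eta\le 1/(10\tau K\sqrt R)$, $\eta K\tau^2B^2\le \tau B^2/(10\sqrt R)$. With $\eta\le B/(10\zeta_\star K\sqrt R)$, $\eta K\zeta_\star^2\le \zeta_\star B/(10\sqrt R)$. With $\eta\le B/(10\sigma\sqrt{KR})$, $\eta\sigma^2\le \sigma B/(10\sqrt{KR})$. Collecting the numerical constants should give a factor at most $13$, which yields the claimed rate $\frac{HB^2}{KR}+\frac{\sigma B}{\sqrt{KR}}+\frac{\zeta_\star B}{\sqrt R}+\frac{\tau B^2}{\sqrt R}$.
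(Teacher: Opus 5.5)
Your proposal is correct and follows the paper's proof essentially step for step. Like the paper, you average the first inequality of \Cref{lem:descent_Bt_Dt} and telescope $B(t)$ from $B(0)\le B^2$, taking the first route $\hat x=\frac1T\sum_t\bar x_t$. You then insert $V_{\max}\le 4U_0^2$ and $C_{\max}\le 12\eta^2K^2U_0^2+3\eta^2K\sigma^2$ with $U_0^2=\tau^2B^2+\zeta_\star^2$ from \Cref{lem:bound_B_max_V_max_C_max_through_B2}, collect all error terms via AM--GM into $28\eta KU_0^2+5\eta\sigma^2$, and tune $\eta$ using $1/\min\le\sum$. The only care needed is bookkeeping: keep the heterogeneity in the form $U_0^2$ rather than $(\tau B+\zeta_\star)^2$. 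The extra factor of up to $2$ from the latter would push the coefficients of $\tau B^2/\sqrt R$ and $\zeta_\star B/\sqrt R$ slightly above $13$, instead of the paper's $12.8$.
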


	\subsection{Proof of \Cref{thm:lsgd_consensus}}
	
	\begin{proof}
		We begin by re-stating the one-step recursion from \Cref{lem:lemma7_woodworth} for all $t\in [0, T-1]$
		\begin{align*}
			D(t) &\le 
			\frac{A(t) - A(t+1)}{\eta} + \frac{3\eta\sigma^2}{M} + 2HC(t) ,\\
			&\leq \frac{A(t) - A(t+1)}{\eta} + \frac{3\eta\sigma^2}{M} + 2HC_ {\max}
		\end{align*}
		Averaging this over time steps $t\in [0, T-1]$ and applying Jensen's inequality we get,
		\begin{align*}
			\ee\sb{F(\hat x^{\textrm{Local\text{-}SGD}}) - F(x^\star)} &\leq \frac{1}{T}\sum_{t=0}^{T-1}D(t) ,\\
			&\leq \frac{A(0) - A(T)}{\eta T} + \frac{3\eta\sigma^2}{M} + 2HC_ {\max} ,\\
			&\overset{(i)}{\leq} \frac{B^2}{\eta T} + \frac{3\eta\sigma^2}{M} + \frac{2B^2}{\eta T}+ \frac{6\eta\sigma^2 }{M} + 36\eta^2HK^2\zeta_\star^2 + 18\eta^2H K\sigma^2 ,\\
			&\leq \frac{3B^2}{\eta T} + \frac{9\eta\sigma^2}{M}+ 36\eta^2HK^2\zeta_\star^2 + 18\eta^2H K\sigma^2 ,
		\end{align*}
		where in (i) we use \Cref{lem:cmax_ub}. Now to get the final bound we will balance the second, third, and fourth term above with the first-term by setting the step-size $\eta$. In particular, combining this with the constraints on the step-size from \Cref{lem:cmax_ub} we get, 
		\begin{align*}
			\eta = \min\cb{\frac{1}{2H}, \sqrt[3]{\frac{1}{24H\tau^2K^3R}}, \sqrt{\frac{B^2 M}{3\sigma^2 KR}}, \sqrt[3]{\frac{B^2}{12H\zeta_\star^2 K^3R}}, \sqrt[3]{\frac{B^2}{6H\sigma^2K^2R}}} .
		\end{align*}
		With this choice of step-size we get, 
		\begin{align*}
			&\ee\sb{F(\hat x^{\textrm{Local\text{-}SGD}}) - F(x^\star)}\\
			&\qquad\leq\frac{6 HB^2}{KR} + \frac{6\sqrt[3]{24}(H\tau^2)^{1/3}B^2}{R^{2/3}} + \frac{6\sigma B}{\sqrt{MKR}} + \frac{6\sqrt[3]{12} (H\zeta_\star^2 B^4)^{1/3}}{R^{2/3}} + \frac{6\sqrt[3]{6} (H\sigma^2 B^4)^{1/3}}{K^{1/3}R^{2/3}} ,\\
			&\qquad\leq \frac{6 HB^2}{KR} + \frac{18(H\tau^2)^{1/3}B^2}{R^{2/3}} + \frac{6\sigma B}{\sqrt{MKR}} + \frac{18 (H\zeta_\star^2 B^4)^{1/3}}{R^{2/3}} + \frac{12 (H\sigma^2 B^4)^{1/3}}{K^{1/3}R^{2/3}} ,\\
			&\qquad\leq 18\cdot \rb{\frac{HB^2}{KR} + \frac{(H\tau^2)^{1/3}B^2}{R^{2/3}} + \frac{\sigma B}{\sqrt{MKR}} + \frac{(H\zeta_\star^2 B^4)^{1/3}}{R^{2/3}} + \frac{(H\sigma^2 B^4)^{1/3}}{K^{1/3}R^{2/3}}} ,
		\end{align*}
		which proves the theorem. 
	\end{proof}
	
	\subsection{Proof of \Cref{thm:lsgd_singlemachine_new}}
	
	\begin{proof}
		Let $U_0^2 = \tau^2B^2 + \zeta_\star^2$. Then from  \Cref{lem:bound_B_max_V_max_C_max_through_B2} we have 
		\[
		V_{\max} \le 4U_0^2, \quad C_{\max} \le 12\eta^2K^2\tau^2B^2 + 6\eta^2K^2\zeta_\star^2 + 3\eta^2K\sigma^2 \le 12\eta^2K^2U_0^2 + 3\eta^2K\sigma^2.
		\]
		This implies that $\sqrt{C_{\max}} \le \sqrt{12}\eta KU_0 + \sqrt{3}\eta\sqrt{K}\sigma \le 4\eta KU_0 + 2\eta\sqrt{K}\sigma$. From \Cref{lem:descent_Bt_Dt} we then obtain
		\begin{align*}
			D(t) &\le \frac{B(t) - B(t+1)}{\eta} + \eta\sigma^2 + 2\eta V(t) + 2\sqrt{V(t)C(t)}\\
			&\le \frac{B(t) - B(t+1)}{\eta} + \eta\sigma^2 + 8\eta U_0^2 + 2\cdot 2U_0(4\eta KU_0 + 2\eta\sqrt{K}\sigma)\\
			&\le \frac{B(t) - B(t+1)}{\eta} + 5\eta\sigma^2 + 28\eta KU_0^2\\
			&= \frac{B(t) - B(t+1)}{\eta} + 5\eta\sigma^2 + 28\eta(\tau^2B^2 + \zeta_\star^2).
		\end{align*}
		Averaging this inequality over all iterations $t\in[0,T-1]$, we obtain
		\begin{align}\label{eq:bound_Dt}
		\frac{1}{T}\sum_{t=0}^{T-1} D(t) &\le \frac{B^2}{\eta T} + 5\eta\sigma^2 + 28\eta K(\tau^2 B^2 + \zeta_\star^2)\nonumber\\
		&=  \frac{B^2}{\eta KR} + 5\eta\sigma^2 + 28\eta K(\tau^2 B^2 + \zeta_\star^2).
		\end{align}
		From the choice of the step size, we have
		\begin{align*}
			\frac{1}{\eta} &\le 4H + 10\tau K\sqrt{R} + \frac{10\zeta_\star K\sqrt{R}}{B} + \frac{10\sigma\sqrt{KR}}{B}.
		\end{align*}
		Hence,
		\begin{align}\label{eq:stepsize_1}
				\frac{B^2}{\eta KR} &\le \frac{4HB^2}{KR} + \frac{10\tau B^2}{\sqrt{R}} + \frac{10\zeta_\star B}{\sqrt{R}} + \frac{10\sigma B}{\sqrt{KR}}.
		\end{align}
		Moreover, we have
		\begin{align}\label{eq:stepsize_2}
			28\eta K\tau^2B^2 \le 28 K\tau^2B^2 \cdot \frac{1}{10\tau K\sqrt{R}} = 2.8\frac{\tau B^2}{\sqrt{R}},
		\end{align}
		and
		\begin{align}\label{eq:stepsize_3}
			28\eta K\zeta_\star^2 \le 28K\zeta_\star^2\cdot \frac{B}{10\zeta_\star K\sqrt{R}} = 2.8\frac{\zeta_\star B}{\sqrt{R}}.
		\end{align}
		Finally, we also have 
		\begin{align}\label{eq:stepsize_4}
			5\eta\sigma^2 \le 5\sigma^2 \frac{B}{10\sigma\sqrt{KR}} \le 0.5\frac{\sigma B}{\sqrt{KR}}.
		\end{align}
		Combining \eqref{eq:stepsize_1}, \eqref{eq:stepsize_2}, \eqref{eq:stepsize_3}, and \eqref{eq:stepsize_4} in \eqref{eq:bound_Dt}, we obtain 
		\begin{align*}
				\frac{1}{T}\sum_{t=0}^{T-1} D(t) &\le\frac{4HB^2}{KR} + \frac{12.8\tau B^2}{\sqrt{R}} + \frac{12.8\zeta_\star B}{\sqrt{R}} + \frac{10.5\sigma B}{\sqrt{KR}}\\
				&\le 13\cdot\left(
					\frac{HB^2}{KR} + \frac{\tau B^2}{\sqrt{R}} + \frac{\zeta_\star B}{\sqrt{R}} + \frac{\sigma B}{\sqrt{KR}}
					\right).
		\end{align*}
		It remains to use Jensen's inequality to obtain the bound for $\hat x^{\textrm{Local\text{-}SGD}}$.
	\end{proof}

	\section{A Single-Machine-Style Upper Bound for $\ppp_{\zeta}^{H,B,\sigma}$}
	\label{app:upper_bound_zeta}
	
	In this section, we give a second upper bound for Local SGD under the uniformly bounded first-order heterogeneity condition in \Cref{ass:zeta_everywhere}. The guarantee is complementary to the bound of \citet{woodworth2020minibatch} reported in \Cref{tab:rates}. Rather than comparing the ghost iterate to dense mini-batch SGD, we use the average client distance $B(t)$ as the potential, as in a single-machine SGD analysis, while directly controlling the global function sub-optimality $D(t)$. A local Bregman-divergence term then absorbs the part of the stochastic gradient second moment caused by evaluating the client gradients at different local iterates.
	
	We begin by stating the following resulting upper bound, which is the formal version of \Cref{thm:lsgd_zeta_informal}.
	
	\begin{theorem}[A Single-Machine-Style Upper Bound under Uniform First-Order Heterogeneity; Full Statement of \Cref{thm:lsgd_zeta_informal}]
		\label{thm:lsgd_zeta}
		For problems in the class $\ppp_{\zeta}^{H,B,\sigma}$, assume the step-size is chosen as follows,
		\begin{align*}
			\eta
			=
			\min\cb{
				\frac{1}{4H},
				\frac{B}{\sigma\sqrt{KR}},
				\frac{B}{\zeta K\sqrt{R}}
			}
			 ,
		\end{align*}
		Local SGD generates the output
		\begin{align*}
			\hat x^{\textrm{Local\text{-}SGD}}
			=
			\frac{1}{TM}\sum_{t\in[0,T-1],\,m\in[M]}x_t^m
			=
			\frac{1}{T}\sum_{t=0}^{T-1}\bar x_t
		\end{align*}
		which satisfies
		\begin{align*}
			\ee\sb{
				F\rb{\hat x^{\textrm{Local\text{-}SGD}}}
				-F(x^\star)
			}
			\leq
			9\cdot\rb{
				\frac{HB^2}{KR}
				+
				\frac{\sigma B}{\sqrt{KR}}
				+
				\frac{\zeta B}{\sqrt R}
			}
			 .
		\end{align*}
	\end{theorem}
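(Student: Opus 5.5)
We follow the single-machine-mimicking route of \Cref{thm:lsgd_singlemachine_new}, with the trajectory-wise bound on the gradient heterogeneity replaced by the uniform one. Under \Cref{ass:zeta_everywhere} we have $V(t)\le\zeta^2$ for every $t$, so $V_{\max}\le\zeta^2$ holds directly. No self-bounding loop through $A_{\max}$ or $B_{\max}$ is needed, and the step-size constraint involving $\tau$ disappears.

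\textbf{Step 1 (consensus error).} We apply \Cref{lem:consensus_bound_V_max}, which only needs $\eta\le 2/H$ and the class $\ppp^{H,B,\sigma}$. Substituting $V_{\max}\le\zeta^2$ gives
\begin{align*}
C_{\max}\le 3\eta^2K^2\zeta^2+3\eta^2K\sigma^2, \qquad\text{so}\qquad \sqrt{C_{\max}}\le 2\eta K\zeta+2\eta\sqrt K\sigma .
\end{align*}

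\textbf{Step 2 (one-step recursion).} We use \Cref{lem:descent_Bt_Dt}. Its proof only uses convexity, $H$-smoothness of each $F_m$, $\eta\le 1/(4H)$, and the definitions of $V,C,Q,R$, so it holds verbatim on $\ppp_\zeta^{H,B,\sigma}$. It gives
\begin{align*}
D(t)\le \frac{B(t)-B(t+1)}{\eta}+2\eta V(t)+\eta\sigma^2+2\sqrt{V(t)C(t)} .
\end{align*}
Now insert $V(t)\le\zeta^2$ and the bound on $\sqrt{C(t)}$ from Step 1. The cross term is
\begin{align*}
2\zeta\bigl(2\eta K\zeta+2\eta\sqrt K\sigma\bigr)\le 4\eta K\zeta^2+2\eta K\zeta^2+2\eta\sigma^2 ,
\end{align*}
where we used AM-GM on $4\eta\sqrt K\zeta\sigma$. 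Altogether,
\begin{align*}
D(t)\le \frac{B(t)-B(t+1)}{\eta}+3\eta\sigma^2+8\eta K\zeta^2 .
\end{align*}

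\textbf{Step 3 (telescoping and tuning).} Average over $t\in[0,T-1]$ and use $B(0)\le B^2$ and $B(T)\ge 0$. Jensen's inequality then gives
\begin{align*}
\ee\bigl[F(\hat x)-F(x^\star)\bigr]\le \frac{B^2}{\eta KR}+3\eta\sigma^2+8\eta K\zeta^2 .
\end{align*}
With $\eta=\min\{\frac1{4H},\frac{B}{\sigma\sqrt{KR}},\frac{B}{\zeta K\sqrt R}\}$ we have $\frac1\eta\le 4H+\frac{\sigma\sqrt{KR}}{B}+\frac{\zeta K\sqrt R}{B}$. The three terms are then bounded as follows:
\begin{itemize}
\item the first term is at most $\frac{4HB^2}{KR}+\frac{\sigma B}{\sqrt{KR}}+\frac{\zeta B}{\sqrt R}$;
\item $3\eta\sigma^2\le \frac{3\sigma B}{\sqrt{KR}}$;
\item $8\eta K\zeta^2\le \frac{8\zeta B}{\sqrt R}$.
\end{itemize}
Summing, every coefficient is at most $9$, which yields the stated bound.

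\textbf{Main obstacle.} The only delicate point is confirming that \Cref{lem:descent_Bt_Dt} and \Cref{lem:consensus_bound_V_max} do not secretly rely on \Cref{ass:zeta_optimal} or \Cref{ass:tau}. Rereading their proofs, neither does: the heterogeneity enters only through $V(t)$. A second check is the constant bookkeeping in the cross term $\sqrt{V(t)C(t)}$, which must be handled with AM-GM so that it contributes $O(\eta K\zeta^2+\eta\sigma^2)$ and no worse $K$-dependence on the $\sigma$ term. Otherwise the argument is strictly simpler than the proof of \Cref{thm:lsgd_singlemachine_new}.
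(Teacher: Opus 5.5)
Your proposal is correct and follows essentially the same route as the paper: it bounds $V(t)\le\zeta^2$ uniformly, feeds this into \Cref{lem:consensus_bound_V_max}, plugs the result into the $B(t)$/$D(t)$ one-step recursion, handles the cross term with AM-GM, telescopes, and tunes $\eta$. The paper restates \Cref{lem:descent_Bt_Dt} as \Cref{lem:lsgd_zeta_single_machine_step} for the base class $\ppp^{H,B,\sigma}$, which confirms your check that no $\zeta_\star$ or $\tau$ assumption is used; your bookkeeping ($3\eta\sigma^2+8\eta K\zeta^2$ versus the paper's $8\eta(K\zeta^2+\sigma^2)$) is slightly tighter, and both give the stated factor of $9$.
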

	
	\subsection{A One-Step Recursion using $B(t)$ and $D(t)$}
	
	The following lemma is the main new ingredient. It is stated for the base class $\ppp^{H,B,\sigma}$ in terms of the trajectory-dependent gradient heterogeneity $V(t)$. The uniform bound from \Cref{ass:zeta_everywhere} will only be used afterward.
	
	\begin{lemma}
		\label{lem:lsgd_zeta_single_machine_step}
		For problems in the class $\ppp^{H,B,\sigma}$ and with $\eta\leq \frac{1}{4H}$, the iterates of Local SGD satisfy, for all $t\in[0,T-1]$,
		\begin{align*}
			D(t)
			\leq
			\frac{B(t)-B(t+1)}{\eta}
			+
			\eta\sigma^2
			+
			2\eta V(t)
			+
			2\sqrt{V(t)C(t)}
			 .
		\end{align*}
	\end{lemma}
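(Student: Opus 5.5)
This lemma is the first statement of \Cref{lem:descent_Bt_Dt}, and the proof there used only convexity, $H$-smoothness of each $F_m$, and the stochastic-gradient assumption \eqref{ass:stoch_first_order}. It never invoked \Cref{ass:zeta_optimal} or \Cref{ass:tau}. So I would re-run that argument verbatim over the base class $\ppp^{H,B,\sigma}$ and check that every step stays inside this class.

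\textbf{Decomposition of the local suboptimality.} Write $S(t)=R(t)+D(t)+Q(t)$ as in \eqref{eq:s_r_d_q}. This uses that $\frac1M\sum_m\langle\nabla F(\bar x_t),x_t^m-\bar x_t\rangle=0$. By convexity of each $F_m$ we have $R(t)\ge0$, and by Cauchy--Schwarz, first per machine and then across machines and expectation, $|Q(t)|\le\sqrt{V(t)C(t)}$.

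\textbf{One-step distance bound.} Next I would upper bound $S(t)$ by a single-machine SGD step on each client. Define the tentative update $\tilde x_{t+1}^m=x_t^m-\eta\nabla f(x_t^m;z_t^m)$ and expand $\|\tilde x_{t+1}^m-x^\star\|^2$. Conditioning on $\hhh_t$ and using convexity of $F_m$ gives \eqref{eq:sm_step_rearrange_new}. Averaging over $m$, I would use that averaging is a contraction,
\[
\tfrac1M\sum_m\|\tilde x_{t+1}^m-x^\star\|^2\ge \tfrac1M\sum_m\|x_{t+1}^m-x^\star\|^2 .
\]
This holds with equality at non-communication steps, and by Jensen at communication steps, where every $x_{t+1}^m$ equals the average of the $\tilde x_{t+1}^n$. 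This yields \eqref{eq:sm_step_after_avg_new_second}, with the potential $B(t)-B(t+1)$ plus $\frac{\eta}{2}\cdot\frac1M\sum_m\ee\|\nabla F_m(x_t^m)\|^2+\frac{\eta\sigma^2}{2}$.

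\textbf{Gradient second moment.} I would bound $\frac1M\sum_m\ee\|\nabla F_m(x_t^m)\|^2\le 4HR(t)+4HD(t)+2V(t)$ as in \eqref{eq:bound_grad_F_m_at_x_t_m}. This uses Young's inequality, the one-sided co-coercivity \eqref{eq:onesided-cocoercive} applied on each machine, the bias--variance identity for the average of the $\nabla F_m(\bar x_t)$, and $\|\nabla F(\bar x_t)\|^2\le2H(F(\bar x_t)-F^\star)$.

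\textbf{Closing.} Substituting gives $(1-2\eta H)(D+R)\le\frac{B(t)-B(t+1)}{2\eta}+\eta V+\frac{\eta\sigma^2}{2}-Q$. With $\eta\le\frac1{4H}$ we have $1-2\eta H\ge\frac12$, so multiplying by $2$, dropping $R(t)\ge0$, and using $-Q\le\sqrt{VC}$ gives the claim.

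The only delicate point is the communication step, namely checking that $B(t+1)$ is dominated by the pre-averaging distances. Averaging is nonexpansive in the mean-squared sense, so this step is where I would be most careful with indices.
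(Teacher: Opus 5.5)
Your proposal is correct and follows the paper's proof essentially step for step. The shared steps are: the decomposition $S(t)=D(t)+R(t)+Q(t)$ with the Cauchy--Schwarz bound $|Q(t)|\le\sqrt{V(t)C(t)}$; the per-client distance expansion using convexity; the bound $4HR(t)+4HD(t)+2V(t)$ on the gradient second moment via one-sided co-coercivity; and absorption of $2\eta H(D(t)+R(t))$ using $1-2\eta H\ge\frac12$. Your handling of the communication step compares the pre-averaging distances directly against $B(t+1)$, with equality away from communication and Jensen at communication. That is exactly how the paper argues here, and it is the right form of the corresponding step in \Cref{lem:descent_Bt_Dt}, where the comparison is written against $\|\bar x_{t+1}-x^\star\|^2$ instead.
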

	
	\begin{proof}
		Fix any $t\in[0,T-1]$. For brevity, define
		\begin{align*}
			\Delta_m(x)
			\coloneqq
			\nabla F_m(x)-\nabla F(x)
			 ,
		\end{align*}
		and recall that we denote the average local Bregman divergence between $x_t^m$ and $\bar x_t$ by
		\begin{align*}
			R(t)
			=
			\frac{1}{M}\sum_{m\in[M]}
			\ee\sb{
				F_m(x_t^m)-F_m(\bar x_t)
				-
				\inner{\nabla F_m(\bar x_t)}{x_t^m-\bar x_t}
			}
			\geq 0
			 ,
		\end{align*}
		where the inequality follows from the convexity of each $F_m$. Expanding every $F_m(x_t^m)$ around the common point $\bar x_t$ gives
		\begin{align}
			&\frac{1}{M}\sum_{m\in[M]}
			\ee\sb{F_m(x_t^m)-F_m(x^\star)}
			\nonumber\\
			&\qquad=
			D(t)
			+
			\mathcal R(t)
			+
			\frac{1}{M}\sum_{m\in[M]}
			\ee\sb{
				\inner{\Delta_m(\bar x_t)}{x_t^m-\bar x_t}
			}
			 .
			\label{eq:zeta_local_bregman_identity}
		\end{align}
		Indeed, the contribution of the global gradient vanishes because
		\begin{align*}
			\frac{1}{M}\sum_{m\in[M]}
			\inner{\nabla F(\bar x_t)}{x_t^m-\bar x_t}
			=0
			 .
		\end{align*}
		Moreover, Cauchy--Schwarz on the product space consisting of the random trajectory and a uniformly sampled machine gives
		\begin{align}
			\left|
			\frac{1}{M}\sum_{m\in[M]}
			\ee\sb{
				\inner{\Delta_m(\bar x_t)}{x_t^m-\bar x_t}
			}
			\right|
			\leq
			\sqrt{V(t)C(t)}
			 .
			\label{eq:zeta_bregman_cross_term}
		\end{align}
		
		We next relate the left-hand side of \eqref{eq:zeta_local_bregman_identity} to the potential $B(t)$. For every $m\in[M]$, define the tentative pre-synchronization update
		\begin{align*}
			\widetilde x_{t+1}^m
			\coloneqq
			x_t^m-\eta\nabla f(x_t^m;z_t^m)
			 .
		\end{align*}
		Conditioning on $\hhh_t$, expanding the squared distance to $x^\star$, and using the unbiasedness condition in \Cref{ass:stoch_first_order}, we obtain
		\begin{align*}
			&\ee\sb{
				\norm{\widetilde x_{t+1}^m-x^\star}^2
				\mid \hhh_t
			}\\
			&\qquad=
			\norm{x_t^m-x^\star}^2
			-
			2\eta\inner{x_t^m-x^\star}{\nabla F_m(x_t^m)}
			+
			\eta^2
			\ee\sb{
				\norm{\nabla f(x_t^m;z_t^m)}^2
				\mid \hhh_t
			}
			 .
		\end{align*}
		By convexity of $F_m$,
		\begin{align*}
			F_m(x_t^m)-F_m(x^\star)
			\leq
			\inner{x_t^m-x^\star}{\nabla F_m(x_t^m)}
			 .
		\end{align*}
		Therefore, averaging over $m\in[M]$ and taking full expectation yields
		\begin{align}
			&\frac{1}{M}\sum_{m\in[M]}
			\ee\sb{F_m(x_t^m)-F_m(x^\star)}
			\nonumber\\
			&\qquad\leq
			\frac{1}{2\eta}
			\left(
			B(t)
			-
			\frac{1}{M}\sum_{m\in[M]}
			\ee\sb{\norm{\widetilde x_{t+1}^m-x^\star}^2}
			\right)
			+
			\frac{\eta}{2M}\sum_{m\in[M]}
			\ee\sb{\norm{\nabla f(x_t^m;z_t^m)}^2}
			 .
			\label{eq:zeta_local_objective_distance}
		\end{align}
		The stochastic-gradient variance bound gives
		\begin{align}
			\frac{1}{M}\sum_{m\in[M]}
			\ee\sb{\norm{\nabla f(x_t^m;z_t^m)}^2}
			\leq
			\frac{1}{M}\sum_{m\in[M]}
			\ee\sb{\norm{\nabla F_m(x_t^m)}^2}
			+
			\sigma^2
			 .
			\label{eq:zeta_stochastic_second_moment}
		\end{align}
		
		It remains to control the deterministic-gradient second moment while retaining the non-negative term $R(t)$. We have
		\begin{align}
			&\frac{1}{M}\sum_{m\in[M]}
			\ee\sb{\norm{\nabla F_m(x_t^m)}^2}
			\nonumber\\
			&\qquad\leq
			\frac{2}{M}\sum_{m\in[M]}
			\ee\sb{
				\norm{\nabla F_m(x_t^m)-\nabla F_m(\bar x_t)}^2
			}
			+
			\frac{2}{M}\sum_{m\in[M]}
			\ee\sb{\norm{\nabla F_m(\bar x_t)}^2}
			\nonumber\\
			&\qquad\overset{(i)}{\leq}
			4H R(t)
			+
			2\ee\sb{\norm{\nabla F(\bar x_t)}^2}
			+
			2V(t)
			\nonumber\\
			&\qquad\overset{(ii)}{\leq}
			4H R(t)
			+
			4H D(t)
			+
			2V(t)
			 .
			\label{eq:zeta_deterministic_second_moment}
		\end{align}
		Here, $(i)$ uses the one-sided co-coercivity inequality \eqref{eq:onesided-cocoercive} and the variance identity
		\begin{align*}
			\frac{1}{M}\sum_{m\in[M]}
			\norm{\nabla F_m(\bar x_t)}^2
			=
			\norm{\nabla F(\bar x_t)}^2
			+
			\frac{1}{M}\sum_{m\in[M]}
			\norm{\nabla F_m(\bar x_t)-\nabla F(\bar x_t)}^2
			 .
		\end{align*}
		For $(ii)$, note that $F$ is also convex and $H$-smooth, and $\nabla F(x^\star)=0$. Applying the same one-sided co-coercivity inequality as in \eqref{eq:onesided-cocoercive} to $F$, with $x=\bar x_t$ and $y=x^\star$, gives
		\begin{align*}
			\ee\sb{\norm{\nabla F(\bar x_t)}^2}
			\leq
			2H D(t)
			 .
		\end{align*}
		
		Substituting \eqref{eq:zeta_stochastic_second_moment} and \eqref{eq:zeta_deterministic_second_moment} into \eqref{eq:zeta_local_objective_distance}, and using the fact that synchronization can only decrease the average squared distance to $x^\star$, gives
		\begin{align}
			\frac{1}{M}\sum_{m\in[M]}
			\ee\sb{F_m(x_t^m)-F_m(x^\star)}
			&\leq
			\frac{B(t)-B(t+1)}{2\eta}
			+
			2H\eta\rb{D(t)+R(t)}
			\\
			&\qquad+
			\eta V(t)
			+
			\frac{\eta\sigma^2}{2}
			 .
			\label{eq:zeta_local_objective_recursion}
		\end{align}
		For completeness, at a communication time Jensen's inequality gives
		\begin{align*}
			B(t+1)
			\leq
			\frac{1}{M}\sum_{m\in[M]}
			\ee\sb{\norm{\widetilde x_{t+1}^m-x^\star}^2}
			 ,
		\end{align*}
		and away from communication times the two sides are equal.
		
		Combining \eqref{eq:zeta_local_bregman_identity} and \eqref{eq:zeta_local_objective_recursion}, and then using \eqref{eq:zeta_bregman_cross_term}, yields
		\begin{align*}
			\rb{1-2H\eta}\rb{D(t)+ R(t)}
			&\leq
			\frac{B(t)-B(t+1)}{2\eta}
			+
			\eta V(t)
			+
			\frac{\eta\sigma^2}{2}
			+
			\sqrt{V(t)C(t)}
			 .
		\end{align*}
		Since $R(t)\geq 0$ and $\eta\leq \frac{1}{4H}$, we have $1-2H\eta\geq \frac{1}{2}$. Consequently,
		\begin{align*}
			D(t)
			\leq
			\frac{B(t)-B(t+1)}{\eta}
			+
			2\eta V(t)
			+
			\eta\sigma^2
			+
			2\sqrt{V(t)C(t)}
			 ,
		\end{align*}
		which proves the lemma.
	\end{proof}
	
	\subsection{Proof of \Cref{thm:lsgd_zeta}}
	
	\begin{proof}
		Under \Cref{ass:zeta_everywhere}, the definition of $V(t)$ immediately gives, for every $t\in[0,T]$,
		\begin{align*}
			V(t)
			=
			\frac{1}{M}\sum_{m\in[M]}
			\ee\sb{
				\norm{\nabla F_m(\bar x_t)-\nabla F(\bar x_t)}^2
			}
			\leq
			\zeta^2
			 .
		\end{align*}
		In particular, $V_ {\max}\leq \zeta^2$. Since the step-size in the theorem satisfies $\eta\leq \frac{1}{4H}\leq \frac{2}{H}$, \Cref{lem:consensus_bound_V_max} implies
		\begin{align}
			C_ {\max}
			\leq
			3\eta^2K^2\zeta^2
			+
			3\eta^2K\sigma^2
			 .
			\label{eq:zeta_consensus_from_existing_lemma}
		\end{align}
		Consequently,
		\begin{align*}
			\sqrt{C_ {\max}}
			\leq
			\sqrt{3}\eta K\zeta
			+
			\sqrt{3}\eta\sqrt K\sigma
			 .
		\end{align*}
		Applying \Cref{lem:lsgd_zeta_single_machine_step}, and using $V(t)\leq\zeta^2$ and $C(t)\leq C_ {\max}$, gives
		\begin{align*}
			D(t)
			&\leq
			\frac{B(t)-B(t+1)}{\eta}
			+
			\eta\sigma^2
			+
			2\eta\zeta^2
			+
			2\zeta\sqrt{C_ {\max}}
			\\
			&\leq
			\frac{B(t)-B(t+1)}{\eta}
			+
			\eta\sigma^2
			+
			2\eta\zeta^2
			+
			2\sqrt{3}\eta K\zeta^2
			+
			2\sqrt{3}\eta\sqrt K\zeta\sigma
			 .
		\end{align*}
		Using the A.M.-G.M. inequality,
		\begin{align*}
			2\sqrt{3}\eta\sqrt K\zeta\sigma
			\leq
			\sqrt{3}\eta\rb{K\zeta^2+\sigma^2}
			 .
		\end{align*}
		Since $K\geq 1$, the preceding bounds imply
		\begin{align}
			D(t)
			\leq
			\frac{B(t)-B(t+1)}{\eta}
			+
			8\eta\rb{K\zeta^2+\sigma^2}
			 .
			\label{eq:zeta_single_machine_master_recursion}
		\end{align}
		Summing \eqref{eq:zeta_single_machine_master_recursion} over $t\in[0,T-1]$, telescoping the potential, and using $B(0)=\norm{x^\star}^2\leq B^2$ and $B(T)\geq 0$, we obtain
		\begin{align}
			\frac{1}{T}\sum_{t=0}^{T-1}D(t)
			\leq
			\frac{B^2}{\eta T}
			+
			8\eta\rb{K\zeta^2+\sigma^2}
			 .
			\label{eq:zeta_single_machine_averaged_recursion}
		\end{align}
		By convexity of $F$,
		\begin{align*}
			\ee\sb{
				F\rb{\hat x^{\textrm{Local\text{-}SGD}}}
				-F(x^\star)
			}
			\leq
			\frac{1}{T}\sum_{t=0}^{T-1}D(t)
			 .
		\end{align*}
		It remains to substitute the step-size. Since $T=KR$,
		\begin{align*}
			\frac{B^2}{\eta T}
			&\leq
			4\frac{HB^2}{KR}
			+
			\frac{\sigma B}{\sqrt{KR}}
			+
			\frac{\zeta B}{\sqrt R}
			 ,\\
			8\eta\sigma^2
			&\leq
			8\frac{\sigma B}{\sqrt{KR}}
			 ,\\
			8\eta K\zeta^2
			&\leq
			8\frac{\zeta B}{\sqrt R}
			 .
		\end{align*}
		Combining these inequalities with \eqref{eq:zeta_single_machine_averaged_recursion} proves
		\begin{align*}
			\ee\sb{
				F\rb{\hat x^{\textrm{Local\text{-}SGD}}}
				-F(x^\star)
			}
			\leq
			4\frac{HB^2}{KR}
			+
			9\frac{\sigma B}{\sqrt{KR}}
			+
			9\frac{\zeta B}{\sqrt R}
			 ,
		\end{align*}
		and hence the claimed result.
	\end{proof}
	
	The bound in \Cref{thm:lsgd_zeta} is complementary to the result of \citet{woodworth2020minibatch} in \Cref{tab:rates}. In particular, the heterogeneity contribution $\zeta B/\sqrt R$ is no larger than $(H\zeta^2B^4)^{1/3}/R^{2/3}$ whenever $\zeta\leq HB/\sqrt R$. Similarly, $\sigma B/\sqrt{KR}$ is no larger than $(H\sigma^2B^4)^{1/3}/(K^{1/3}R^{2/3})$ whenever $\sigma\leq HB\sqrt{K/R}$. Moreover, in the limit $\zeta=0$, the theorem recovers the single-machine SGD term $\sigma B/\sqrt{KR}$ without requiring any second-order heterogeneity assumption.

	\section{Proofs of Our Lower Bounds}\label{app:lb}
	\paragraph{Key Simplifications.} In this section, we will prove the lower bound mentioned in \Cref{thm:lsgd_lb_informal}. We will make some standard simplifying assumptions in the proof, which do not affect the lower bound beyond numerical constants:
	\begin{itemize}
		\item Our constructions are all noiseless, i.e., $\sigma=0$ as we do not need to rely on the stochastic gradient noise to prove any of the lower bounds. 
		\item We will consider the $\tau = 0$ and $\zeta_\star = 0$ settings separately, i.e., we will provide two distinct constructions for these settings. This is without loss of generality, because both of our constructions lie in the class $\ppp^{H,B,0}_{\zeta_\star, \tau}$ and can be combined by putting the hard functions on disjoint co-ordinates. Since both constructions isolate one form of data heterogeneity, the second-order construction does not constrain the choice of $\zeta_\star$ and vice-versa. The reason for this is that the second-order construction ensures a shared optimizer across clients, so it does not restrict the choice of $\zeta_\star$ in the first-order construction for the other co-ordinates. Similarly, the first-order construction ensures that the clients have the same second-order derivative wherever it exists, which means it does not restrict $\tau$ in any way for the second-order construction. Combining the constructions leads to the desired lower bound. This idea of using disjoint co-ordinates is also how we combine the lower bound in \Cref{thm:lsgd_lb_informal} with the homogeneous lower bound of \citet{glasgow2022sharp} to get the final lower bound in \Cref{tab:rates}. For more context on using disjoint co-ordinates to prove distributed optimization lower bounds, see \citet{woodworth2021minimax} and \citet{patel2025makes}.
		\item As the reader would notice, both the constructions only require $M=2$, i.e., two machines. This is also without loss of generality. If there are an even number of machines, we can duplicate these two machines and generate the same lower bound. If there is an odd number of machines, we add a dummy machine with the average objective of the other two machines. This only makes the lower bound worse by a factor of $\frac{M-1}{M}$, which we do not care about, as we anyway look at the setting $M\geq 2$, and ignore numerical constants. 
	\end{itemize}
	All of the above simplifications are usual in this literature~\citep{woodworth2020local,woodworth2020minibatch,glasgow2022sharp,patel2022towards,patel2023still,patel2024limits,patel2025revisiting}. With these caveats in mind, we are ready to state our lower bounds. 
	
	\subsection{A Sharper Lower Bound When $\zeta_\star=0$}
	In this section, we prove a lower bound that refines the basic
	\(\Omega(\tau B^2/R)\) construction due to \citet{patel2025revisiting}. The key point is to separate the hard block's curvature scale from the instance's actual second-order heterogeneity.
	
	\begin{theorem}\label{thm:tau_sqrtR_lower_bound}
		There exist two deterministic convex quadratic client objectives
		\(
		F_1,F_2:\mathbb{R}^2\to\mathbb{R}
		\)
		in the problem class $\ppp^{H,B, 0}_{0, \tau}$, such that for Local SGD (see update~\eqref{eq:local_updates}) initialized at \(x_0=0\), with any number of local steps \(K\ge 1\), any step-size \(\eta>0\), the final iterate \(\bar x_R\) after \(R\) communication rounds satisfies
		\[
		F(\bar x_R)-F(x^\star)
		\ge
		c\cdot\,
		\min\left\{
		\frac{H B^2}{R},
		\frac{\tau B^2}{\sqrt R}
		\right\},
		\]
		where \(c>0\) is a universal numerical constant.
	\end{theorem}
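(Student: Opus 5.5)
We will build a two-client quadratic instance on $\mathbb{R}^2$ following the template of \citet[Theorem 1]{patel2025revisiting}, but with a free curvature scale $\lambda\le H$ decoupled from the heterogeneity. Let $u_1,u_2$ be unit vectors at angle $\theta$ to each other, symmetric about the $e_1$ axis, and set $F_m(x)=\frac{\lambda}{2}\inner{u_m}{x-x^\star}^2$ with $x^\star=Be_1$ (adding a tiny multiple of $\norm{x-x^\star}^2$ is unnecessary since $F$ is positive definite once $\theta\neq 0$). Both clients share the minimizer $x^\star$, so $\zeta_\star=0$. The Hessians $P_m=\lambda u_mu_m^\top$ differ from their average by a rank-two matrix of norm $\cong\lambda\sin\theta$, so by \Cref{rem:hessian_implies_tau} the instance has second-order heterogeneity $\cong\lambda\sin\theta$; we set $\lambda\sin\theta\cong\tau$. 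The average Hessian $\bar P$ has eigenvalues $\lambda\cos^2(\theta/2)$ along $e_1$ and $\lambda\sin^2(\theta/2)\cong\lambda\theta^2$ along $e_2$. We rotate so that $x^\star$ has a component of size $\cong B$ along the flat eigendirection $e_2$ (placing $x^\star$ along $e_2$ instead), so that the suboptimality is governed by the small eigenvalue $\mu\coloneqq\lambda\sin^2(\theta/2)$.

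Local SGD is then a linear map per round. On client $m$, $K$ steps give $x\mapsto x^\star+(I-\eta P_m)^K(x-x^\star)=x^\star+(I-c_m u_mu_m^\top)(x-x^\star)$ with $c\coloneqq 1-(1-\eta\lambda)^K$, since $P_m$ is rank one. Averaging yields exactly $\bar x_{r}-x^\star=(I-\frac{c}{\lambda}\bar P)(\bar x_{r-1}-x^\star)$, i.e. $R$ steps of gradient descent on $F$ with effective step $c/\lambda$ and $c\in(0,2)$ for stable $\eta\le 2/\lambda$ (for $\eta$ with $|1-\eta\lambda|>1$ and $K$ even/odd, $c$ can be large, and the iterate diverges along $e_1$ — we handle that case by noting $|1-c\cos^2(\theta/2)|\ge1$ forces no progress or blow-up, giving the $\lambda B^2$ bound directly; \emph{this case analysis on $c$ is a step to do carefully}). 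Along the flat direction the error is multiplied by $(1-c\sin^2(\theta/2))^R\ge(1-2\sin^2(\theta/2))^R$, so $F(\bar x_R)-F^\star\gtrsim \mu B^2(1-2\mu/\lambda)^{2R}$. Crucially this does not improve with $K$ or $\eta$, since $c\le2$ always.

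Finally we tune. Choose $\mu/\lambda\cong \theta^2\cong 1/R$ so the contraction factor stays $\Omega(1)$, giving a lower bound $\cong\mu B^2\cong\lambda B^2/R$. With $\tau\cong\lambda\theta\cong\lambda/\sqrt R$, i.e. $\lambda\cong\tau\sqrt R$, this is $\cong\tau B^2/\sqrt R$, which is admissible as long as $\lambda\le H$, i.e. $\tau\lesssim H/\sqrt R$. Otherwise take $\lambda=H$, $\theta\cong1/\sqrt R$ (so the actual heterogeneity $H/\sqrt R\le\tau$ still respects the class) giving $HB^2/R$. Together this yields $\min\{HB^2/R,\tau B^2/\sqrt R\}$. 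The main obstacle I expect is the rigorous handling of all step sizes (the regime where $c\notin[0,2]$ or is negative when $\eta\lambda>2$ with odd $K$), ensuring the flat-direction factor never falls below a constant; I would first try to show $|1-c\,\sin^2(\theta/2)|\ge 1-2\sin^2(\theta/2)$ whenever $|1-c\cos^2(\theta/2)|<1$, and otherwise lower bound via the steep direction.
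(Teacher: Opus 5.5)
Your construction is essentially the paper's: two rank-one quadratics sharing a minimizer, a curvature scale $\lambda=\min\{H,\Theta(\sqrt R\,\tau)\}$ decoupled from the angle, the identity $(I-\eta P_m)^K=I-\frac{c}{\lambda}P_m$ reducing Local SGD to $R$ gradient-descent steps on $F$ with effective step $c/\lambda$, and an average Hessian with condition number $\Theta(R)$. As written, however, it has a genuine gap: you place $x^\star$ \emph{purely} along the flat eigendirection $e_2$.

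The claim that $c\le 2$ always is false. For odd $K$, $c=1-(1-\eta\lambda)^K$ takes every value in $(0,\infty)$. Your error starts at $-Be_2$, and $\bar P$ is diagonal in $(e_1,e_2)$, so the error stays on $e_2$ and is multiplied by $1-c\sin^2(\theta/2)$ each round. Choosing $c=1/\sin^2(\theta/2)$ makes this multiplier zero. For $K=1$ this means $\eta=1/(\lambda\sin^2(\theta/2))$; for any odd $K$, take $\eta=\bigl(1+(1/\sin^2(\theta/2)-1)^{1/K}\bigr)/\lambda$. Then $\bar x_1=x^\star$ exactly. Since $x^\star$ is a stationary point of both clients, $F(\bar x_R)=F(x^\star)$ for all $R$. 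The theorem quantifies over every $\eta>0$, so your instance is not hard. Your fallback, lower bounding via the steep direction, cannot rescue this, because the $e_1$-component of the error is identically zero.

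The fix is what the paper does. Take $x^\star=-\frac{B}{\sqrt2}(u_{\max}+u_{\min})$, so the initial error has size $B/\sqrt2$ along both eigenvectors, and split on the effective step.

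\emph{Steep case.} If $|1-c\cos^2(\theta/2)|\ge 1$ (the paper uses the threshold $\tilde\eta\ge 3/\Lambda_{\max}$), the steep component never contracts. This gives $F(\bar x_R)-F^\star\gtrsim\lambda B^2$, and it also covers negative $c$ from even $K$.

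\emph{Flat case.} Otherwise $0<c\cos^2(\theta/2)<2$, so the flat multiplier lies in $(1-2\tan^2(\theta/2),1)$. Note it is $\tan^2$, not $\sin^2$ as in your proposed sub-claim; the difference only affects constants. With, say, $\tan^2(\theta/2)\le 1/(4R)$ you get $(1-2\tan^2(\theta/2))^{2R}\ge 1/4$. Hence $F(\bar x_R)-F^\star\gtrsim \lambda\sin^2(\theta/2)B^2\cong\lambda B^2/R$.

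With this change your tuning of $\lambda$ and $\theta$ recovers $\min\{HB^2/R,\tau B^2/\sqrt R\}$ exactly as in the paper.
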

	
	\begin{proof}
		Let
		\[
		\kappa \coloneqq  12R ,
		\qquad\text{and}\qquad
		\alpha \coloneqq  \frac{\kappa-1}{\kappa+1} .
		\]
		Then
		\[
		\frac{1+\alpha}{1-\alpha}=\kappa .
		\]
		Define for some $\alpha\in[0,1]$ to be determined later,
		\[
		e_1 \coloneqq  (1,0)^\top ,
		\qquad\text{and}\qquad
		v \coloneqq  \bigl(\alpha,\sqrt{1-\alpha^2}\bigr)^\top.
		\]
		These vectors give the following rank-one matrices,
		\[
		\hat A_1 \coloneqq  e_1e_1^\top ,
		\qquad\text{and}\qquad
		\hat A_2 \coloneqq  vv^\top .
		\]
		We define the hard-block curvature scale as follows
		\[
		\lambda \coloneqq  \min\{H,\sqrt{3R}\,\tau\} 
		\]
		and use it to define the actual Hessians on both our clients as
		\[
		A_1 \coloneqq  \lambda \hat A_1,
		\qquad\text{and}\qquad
		A_2 \coloneqq  \lambda \hat A_2.
		\]
		Specifically, our two client objectives are
		\[
		F_m(x)\coloneqq \frac{1}{2}(x-x^\star)^\top A_m(x-x^\star) ,
		\qquad \text{for }m\in\{1,2\} ,
		\]
		where \(x^\star\) will be chosen below.
		
		Since \(\|\hat A_m\|_2=1\), we have
		\[
		\|A_m\|_2=\lambda\le H .
		\]
		Thus, both the clients are convex and \(H\)-smooth.
		
		We next compute the second-order heterogeneity. Since
		\[
		\hat A_1-\hat A_2
		=
		\begin{bmatrix}
			1-\alpha^2 & -\alpha\sqrt{1-\alpha^2}\\
			-\alpha\sqrt{1-\alpha^2} & -(1-\alpha^2)
		\end{bmatrix} ,
		\]
		the eigenvalues of \(\hat A_1-\hat A_2\) are
		\[
		\pm \sqrt{1-\alpha^2} .
		\]
		Therefore
		\[
		\|A_1-A_2\|_2
		=
		\lambda\sqrt{1-\alpha^2} .
		\]
		Now
		\[
		1-\alpha^2
		=
		(1-\alpha)(1+\alpha)
		\le
		\frac{4}{\kappa} ,
		\]
		so
		\[
		\sqrt{1-\alpha^2}\le \frac{2}{\sqrt\kappa}
		=
		\frac{1}{\sqrt{3R}}  .
		\]
		By the choice of \(\lambda\),
		\[
		\|A_1-A_2\|
		\le
		\frac{\lambda}{\sqrt{3R}}
		\le
		\tau .
		\]
		Thus, the second-order heterogeneity (the stronger \Cref{ass:tau}) is at most \(\tau\).
		
		Moreover, the average Hessian is
		\[
		A\coloneqq \frac{A_1+A_2}{2}
		=
		\frac{\lambda}{2}(\hat A_1+\hat A_2) .
		\]
		The two nonzero eigenvalues of \(A\) are
		\[
		\Lambda_{\max}=\frac{\lambda}{2}(1+\alpha) ,
		\qquad\text{and}\qquad
		\Lambda_{\min}=\frac{\lambda}{2}(1-\alpha) .
		\]
		Hence
		\[
		\frac{\Lambda_{\max}}{\Lambda_{\min}}
		=
		\frac{1+\alpha}{1-\alpha}
		=
		\kappa
		=
		12R .
		\]
		Let \(u_{\max}\) and \(u_{\min}\) be unit eigenvectors of \(A\) associated
		with \(\Lambda_{\max}\) and \(\Lambda_{\min}\), respectively. Set
		\[
		x^\star\coloneqq -\frac{B}{\sqrt 2}(u_{\max}+u_{\min}) .
		\]
		Then
		\[
		\|x^\star\|_2=B .
		\]
		
		We now analyze the Local SGD dynamics in error coordinates, where $r\in[0,T]$
		\[
		\tilde x_t \coloneqq  \bar x_t-x^\star .
		\]
		Thus \(\tilde x_{qK}\) denotes the synchronized iterate after \(q\) communication rounds.
		
		For client \(m\), starting from the synchronized iterate \(\tilde x_{qK}\), the next \(K\) local gradient steps give
		\[
		\tilde x_{qK+K}^m
		=
		(I-\eta A_m)^K \tilde x_{qK}.
		\]
		Since
		\[
		A_m=\lambda p_mp_m^\top,
		\qquad
		p_1=e_1,\qquad p_2=v,
		\]
		is rank one and satisfies \(p_m^\top p_m=1\), we have
		\[
		A_m^2=\lambda A_m.
		\]
		Therefore,
		\[
		(I-\eta A_m)^K
		=
		I-\frac{1-(1-\eta\lambda)^K}{\lambda}A_m.
		\]
		Define the effective local step-size
		\[
		\tilde\eta
		\coloneqq 
		\frac{1-(1-\eta\lambda)^K}{\lambda}.
		\]
		Then
		\[
		\tilde x_{qK+K}^m
		=
		(I-\tilde\eta A_m)\tilde x_{qK}.
		\]
		
		Since we take the standard outer step-size \(\beta=1\), the server simply averages the two client endpoints:
		\[
		\tilde x_{qK+K}
		=
		\frac12\left(\tilde x_{qK+K}^1+\tilde x_{qK+K}^2\right).
		\]
		Substituting the local-update formula gives
		\[
		\begin{aligned}
			\tilde x_{qK+K}
			&=
			\frac12
			\left[
			(I-\tilde\eta A_1)\tilde x_{qK}
			+
			(I-\tilde\eta A_2)\tilde x_{qK}
			\right]  \\
			&=
			\left(I-\tilde\eta\frac{A_1+A_2}{2}\right)\tilde x_{qK}.
		\end{aligned}
		\]
		Let
		\[
		A\coloneqq \frac{A_1+A_2}{2}.
		\]
		Then the communication-round dynamics is exactly
		\[
		\tilde x_{qK+K}
		=
		(I-\tilde\eta A)\tilde x_{qK}.
		\]
		Iterating over \(R=T/K\) communication rounds,
		\[
		\tilde x_T
		=
		(I-\tilde\eta A)^R \tilde x_0.
		\]
		Thus Local SGD on this hard instance is exactly gradient descent for \(R\) iterations on the average quadratic with Hessian \(A\), using the effective step-size \(\tilde\eta\).
		
		We now lower bound this gradient descent trajectory. Recall that \(A\) has eigenvalues
		\[
		\Lambda_{\max}=\frac{\lambda}{2}(1+\alpha),
		\qquad
		\Lambda_{\min}=\frac{\lambda}{2}(1-\alpha),
		\]
		and condition number
		\[
		\kappa
		\coloneqq 
		\frac{\Lambda_{\max}}{\Lambda_{\min}}
		=
		\frac{1+\alpha}{1-\alpha}
		=
		12R .
		\]
		Thus, the eigenvalues are
		\[
		\boxed{
			\Lambda_{\max} = \lambda \cdot \frac{12R}{12R+1} ,
			\qquad\text{and}\qquad
			\Lambda_{\min} = \frac{\lambda}{12R+1} .
		}
		\]
		
		Let \(u_{\max}\) and \(u_{\min}\) be corresponding unit eigenvectors. We choose
		\[
		x^\star
		\coloneqq 
		-\frac{B}{\sqrt2}(u_{\max}+u_{\min}),
		\]
		so that \(\|x^\star\|=B\). Since the algorithm starts from \(\bar x_0=0\), we have
		\[
		\tilde x_0=-x^\star
		=
		\frac{B}{\sqrt2}(u_{\max}+u_{\min}).
		\]
		Hence
		\[
		\tilde x_T
		=
		\frac{B}{\sqrt2}
		(1-\tilde\eta\Lambda_{\max})^R u_{\max}
		+
		\frac{B}{\sqrt2}
		(1-\tilde\eta\Lambda_{\min})^R u_{\min}.
		\]
		
		If
		\[
		\tilde\eta\ge \frac{3}{\Lambda_{\max}},
		\]
		then
		\[
		|1-\tilde\eta\Lambda_{\max}|\ge 2,
		\]
		and so the high-curvature component gives
		\[
		\begin{aligned}
			F(\bar x_T)-F(x^\star)
			&=
			\frac12\tilde x_T^\top A\tilde x_T ,  \\
			&\ge
			\frac12\Lambda_{\max}
			\left(
			\frac{B}{\sqrt2}
			|1-\tilde\eta\Lambda_{\max}|^R
			\right)^2 ,  \\
			&\ge
			\frac{\Lambda_{\max}B^22^R}{4} ,\\
			&= \frac{3R^22^R}{12R+1}\cdot\frac{\lambda B^2}{R} ,\\
			&\overset{(R\geq 1)}{\geq} 
			\frac{1}{4}\cdot\frac{\lambda B^2}{R} .
		\end{aligned}
		\]

		It remains to now consider the more interesting case
		\[
		\tilde\eta<\frac{3}{\Lambda_{\max}}.
		\]
		Then
		\[
		\tilde\eta\Lambda_{\min}
		\le
		\frac{3\Lambda_{\min}}{\Lambda_{\max}}
		=
		\frac{3}{\kappa}.
		\]
		Therefore
		\[
		|1-\tilde\eta\Lambda_{\min}|
		\ge
		1-\frac{3}{\kappa}.
		\]
		The low-curvature component gives
		\[
		\begin{aligned}
			F(\bar x_T)-F(x^\star)
			&\ge
			\frac12\Lambda_{\min}
			\left(
			\frac{B}{\sqrt2}
			\left(1-\frac{3}{\kappa}\right)^R
			\right)^2 ,  \\
			&=
			\frac{\Lambda_{\min}B^2}{4}
			\left(1-\frac{3}{\kappa}\right)^{2R} .
		\end{aligned}
		\]
		Since \(\kappa=12R\),
		\[
		\left(1-\frac{3}{\kappa}\right)^{2R}
		=
		\left(1-\frac{1}{4R}\right)^{2R}
		\overset{(R\geq 1)}{\ge} \frac{9}{16} .
		\]
		Hence
		\begin{align*}
			F(\bar x_T)-F(x^\star)
			&\ge
			\frac{9 R}{64(12R+1)}\cdot\frac{\lambda B^2}{R} ,\\
			&\overset{(R\geq 1)}{\geq} \frac{3}{256} \cdot\frac{\lambda B^2}{R} .
		\end{align*}
		
		Combining the two cases,
		\[
		F(\bar x_T)-F(x^\star)
		\ge
		\frac{3}{256}\cdot\frac{\lambda B^2}{R} ,
		\]
		
		Finally, since
		\[
		\lambda\coloneqq \min\{H,\sqrt{3R}\tau\},
		\]
		we have
		\[
		\frac{\lambda B^2}{R}
		=
		\Theta\left(
		\min\left\{
		\frac{HB^2}{R},
		\frac{\tau B^2}{\sqrt R}
		\right\}
		\right).
		\]
		This proves the claim.
	\end{proof}
	
	\subsection{A Sharper Lower Bound When $\tau=0$}
	
	We first isolate three elementary lemmas that will be used in the construction.
	
	\begin{lemma}[Active-coordinate lower bound]
		\label{lem:tau-zero-active}
		Fix $L>0$, $\zeta_\star>0$, $R\ge 1$, an integer $K\ge 2$, and recall that $T=KR$. Define
		\[
		g(s)=
		\begin{cases}
			\frac{L}{4}s^2, & s<0,\\[3pt]
			\frac{L}{2}s^2, & s\ge 0,
		\end{cases}
		\qquad
		G_1(s)=g(s)-\zeta_\star s,
		\qquad
		G_2(s)=g(s)+\zeta_\star s.
		\]
		Run vanilla Local SGD on $G_1,G_2$ with $M=2$, step-size
		$\eta\le 1/L$, and initialization
		\[
		s_0^1=s_0^2=0.
		\]
		Let
		\[
		\bar s_t\coloneqq \frac{s_t^1+s_t^2}{2}.
		\]
		Then
		\[
		g(\bar s_T)-g(0)
		\gtrsim
		\min\left\{
		\frac{\zeta_\star^2}{L},\;
		L K^2\eta^2\zeta_\star^2
		\min\{(RK\eta L)^2,1\}
		\right\}.
		\]
	\end{lemma}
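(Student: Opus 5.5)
The plan is to get an explicit upper bound on the synchronized iterate $\bar s_{qK}$ by comparing with a one‑dimensional \emph{affine} recursion. Then $\bar s_T$ is shown to be negative and bounded away from $0$, and $g(\bar s_T)-g(0)=\tfrac{L}{4}\bar s_T^2$.

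\textbf{Step 1 (monotonicity and comparison).} For $\eta\le 1/L$, each local step map $s\mapsto s-\eta G_m'(s)$ is nondecreasing, because $G_m'$ is nondecreasing and $L$-Lipschitz. Hence the one‑round map $\Phi(s)=\tfrac12\bigl(\Phi_1(s)+\Phi_2(s)\bigr)$ is nondecreasing, where $\Phi_m$ is $K$ local steps on $G_m$. Next I use the pointwise gradient bounds
\[
g'(s)\ge Ls \quad\text{and}\quad g'(s)\ge \tfrac{L}{2}s \qquad\text{for all } s .
\]
(Check: for $s\ge0$, $g'=Ls$; for $s<0$, $g'=Ls/2\ge Ls$.) Combining these with monotonicity of the step maps, an induction over the $K$ local steps gives two comparisons:
\begin{itemize}
\item $\Phi_1(s)\le \Psi_1(s)$, where $\Psi_1$ is $K$ GD steps on the quadratic $\tfrac L2 s^2-\zeta_\star s$;
\item $\Phi_2(s)\le\Psi_2(s)$, where $\Psi_2$ is $K$ GD steps on $\tfrac L4 s^2+\zeta_\star s$.
\end{itemize}
Explicitly, with $a\coloneqq(1-\eta L)^K$ and $b\coloneqq(1-\eta L/2)^K$,
\[
\Psi(s)=\tfrac12(\Psi_1+\Psi_2)(s)=\rho s-\beta,\qquad \rho\coloneqq\tfrac{a+b}{2},\qquad \beta\coloneqq\tfrac{\zeta_\star}{2L}\bigl(2(1-b)-(1-a)\bigr).
\]
Since $\Phi\le\Psi$, $\Psi$ is nondecreasing ($\rho\ge0$) and $\Phi$ is monotone, induction gives $\bar s_{qK}\le\Psi^{q}(0)$. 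Therefore
\[
\bar s_T\le-\beta\,\frac{1-\rho^R}{1-\rho}.
\]

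\textbf{Step 2 (estimating $\beta$ and $\rho$).} This is the main computational obstacle: I need $\beta\gtrsim\zeta_\star\min\{L\eta^2K^2,1/L\}$ with the correct sign, i.e.\ $2(1-b)\ge (1-a)$ with a quantitative gap. Write $x\coloneqq\eta L\le1$. The function
\[
\phi(x)=2\bigl(1-(1-x/2)^K\bigr)-\bigl(1-(1-x)^K\bigr)
\]
vanishes to first order at $x=0$ (both terms equal $Kx$). Its second‑order expansion gives $\phi\approx K(K-1)x^2/4$ when $Kx\lesssim1$. I would prove $\phi(x)\gtrsim\min\{K^2x^2,1\}$ using $K\ge2$, via the derivative $\phi'(x)=K\bigl[(1-x/2)^{K-1}-(1-x)^{K-1}\bigr]\ge0$ together with elementary bounds such as $(1-x/2)^{K-1}-(1-x)^{K-1}\gtrsim \min\{Kx,1\}\,(1-x/2)^{K-1}$. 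The large‑$Kx$ regime is handled separately: there $a,b\le e^{-Kx/2}$ are small, so $\phi\ge 2(1-b)-1\gtrsim1$. For the contraction factor, $1-\rho\asymp\min\{K\eta L,1\}$, by the same elementary exponential bounds.

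\textbf{Step 3 (assembling).} Using $1-\rho^R\asymp\min\{R(1-\rho),1\}$, the bound from Step 1 becomes
\[
|\bar s_T|\gtrsim \beta\min\Bigl\{R,\frac{1}{1-\rho}\Bigr\},
\]
and in particular $\bar s_T<0$, so $g(\bar s_T)=\tfrac L4\bar s_T^2$. Now split into two regimes.
\begin{itemize}
\item If $K\eta L\le1$: $\beta\asymp\zeta_\star L\eta^2K^2$ and $1/(1-\rho)\asymp 1/(K\eta L)$, so $|\bar s_T|\gtrsim \eta K\zeta_\star\min\{RK\eta L,1\}$. This yields the term $LK^2\eta^2\zeta_\star^2\min\{(RK\eta L)^2,1\}$.
\item If $K\eta L>1$: $\beta\gtrsim\zeta_\star/L$ and $1-\rho\asymp1$, so $|\bar s_T|\gtrsim\zeta_\star/L$, giving the $\zeta_\star^2/L$ term.
\end{itemize}
Taking the minimum of the two regime bounds gives exactly the claimed estimate.
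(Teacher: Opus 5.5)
Your proposal is correct and follows essentially the paper's proof: the same pointwise comparison with Local SGD on $\tfrac L2 s^2-\zeta_\star s$ and $\tfrac L4 s^2+\zeta_\star s$, the same affine round recursion with drift $\tfrac{\zeta_\star}{2L}\phi(\eta L)$, the same use of $\phi'\ge 0$, and the same split on $K\eta L$ and $RK\eta L$. You bound the geometric sum via $1-\rho^R\asymp\min\{R(1-\rho),1\}$ rather than the paper's Bernoulli argument, which is equally valid. One small repair is needed in Step 2: $\phi\ge 1-2b$ gives a positive constant only once $K\eta L$ exceeds a larger constant such as $4\ln 2$, since for $K\eta L$ just above $1$ and large $K$ one has $b\approx e^{-K\eta L/2}>1/2$. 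For $K\eta L\in(1,4\ln 2)$, use monotonicity instead, $\phi(\eta L)\ge\phi(1/K)\gtrsim 1$, exactly as the paper does.
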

	
	\begin{proof}
		We compare the above dynamics to a simpler quadratic system. Define
		\[
		\widetilde G_1(s)\coloneqq \frac{L}{2}s^2-\zeta_\star s,
		\qquad
		\widetilde G_2(s)\coloneqq \frac{L}{4}s^2+\zeta_\star s.
		\]
		Let $\widetilde s_t^1,\widetilde s_t^2$ denote vanilla Local SGD on
		$\widetilde G_1,\widetilde G_2$, with the same $K,\eta$ and initialization
		\[
		\widetilde s_0^1=\widetilde s_0^2=0.
		\]
		Let
		\[
		\widetilde{\bar s}_t\coloneqq \frac{\widetilde s_t^1+\widetilde s_t^2}{2}.
		\]
		
		We first show that, at all communication times,
		\begin{align}
			\bar s_{rK}\le \widetilde{\bar s}_{rK}\le 0,
			\qquad r=0,1,\dots,R.\label{eq:comm_inequality}    
		\end{align}
		
		For the true dynamics, the one-step maps are
		\[
		T_1(s)\coloneqq s-\eta(G_1'(s))
		=s-\eta(g'(s)-\zeta_\star),
		\]
		and
		\[
		T_2(s)\coloneqq s-\eta(G_2'(s))
		=s-\eta(g'(s)+\zeta_\star).
		\]
		For the comparison dynamics, the one-step maps are
		\[
		\widetilde T_1(s)\coloneqq s-\eta(\widetilde G_1'(s))
		=(1-\eta L)s+\eta\zeta_\star,
		\]
		and
		\[
		\widetilde T_2(s)\coloneqq s-\eta(\widetilde G_2'(s))
		=\left(1-\frac{\eta L}{2}\right)s-\eta\zeta_\star.
		\]
		Since $\eta\le 1/L$, all four maps are nondecreasing.
		
		We now compare them pointwise. For client $1$, if $s<0$, then
		\[
		T_1(s)
		=
		\left(1-\frac{\eta L}{2}\right)s+\eta\zeta_\star
		\le
		(1-\eta L)s+\eta\zeta_\star
		=
		\widetilde T_1(s),
		\]
		because $s<0$. If $s\ge 0$, then $T_1(s)=\widetilde T_1(s)$. Hence
		\[
		T_1(s)\le \widetilde T_1(s)
		\qquad \forall s\in\mathbb R.
		\]
		For client $2$, if $s<0$, then $T_2(s)=\widetilde T_2(s)$. If $s\ge 0$, then
		\[
		T_2(s)
		=
		(1-\eta L)s-\eta\zeta_\star
		\le
		\left(1-\frac{\eta L}{2}\right)s-\eta\zeta_\star
		=
		\widetilde T_2(s).
		\]
		Therefore
		\[
		T_2(s)\le \widetilde T_2(s)
		\qquad \forall s\in\mathbb R.
		\]
		
		We next prove the comparison at communication times by induction. At $t=0$,
		the two systems agree. Suppose
		\[
		\bar s_{rK}\le \widetilde{\bar s}_{rK}.
		\]
		At time $rK$, all clients are synchronized, so
		\[
		s_{rK}^1=s_{rK}^2=\bar s_{rK},
		\qquad
		\widetilde s_{rK}^1=\widetilde s_{rK}^2=\widetilde{\bar s}_{rK}.
		\]
		Because $T_m$ and $\widetilde T_m$ are nondecreasing and
		$T_m\le \widetilde T_m$ pointwise, induction over the $K$ local steps gives
		\[
		s_{rK+k}^m\le \widetilde s_{rK+k}^m,
		\qquad
		m\in\{1,2\},\quad k=0,1,\dots,K.
		\]
		Averaging at the communication step gives
		\[
		\bar s_{(r+1)K}
		=
		\frac{s_{(r+1)K}^1+s_{(r+1)K}^2}{2}
		\le
		\frac{\widetilde s_{(r+1)K}^1+\widetilde s_{(r+1)K}^2}{2}
		=
		\widetilde{\bar s}_{(r+1)K}.
		\]
		This proves the first inequality \eqref{eq:comm_inequality} by induction.
		
		We now compute the comparison recursion. Let
		\[
		a\coloneqq 1-\frac{\eta L}{2},
		\qquad
		b\coloneqq 1-\eta L.
		\]
		Then $a,b\in[0,1]$. Starting from a synchronized value $s$, comparison
		client $1$ satisfies
		\[
		\widetilde s_{k+1}^1=b\,\widetilde s_k^1+\eta\zeta_\star.
		\]
		Thus after $K$ local steps,
		\[
		\widetilde s_K^1
		=
		b^K s+\eta\zeta_\star\sum_{j=0}^{K-1}b^j
		=
		b^K s+\frac{\zeta_\star}{L}(1-b^K).
		\]
		Similarly, comparison client $2$ satisfies
		\[
		\widetilde s_{k+1}^2=a\,\widetilde s_k^2-\eta\zeta_\star,
		\]
		and hence
		\[
		\widetilde s_K^2
		=
		a^K s-\eta\zeta_\star\sum_{j=0}^{K-1}a^j
		=
		a^K s-\frac{2\zeta_\star}{L}(1-a^K).
		\]
		Therefore the synchronized comparison iterate obeys
		\[
		\widetilde{\bar s}_{(r+1)K}
		=
		A_K\widetilde{\bar s}_{rK}-\Delta_K,
		\]
		where
		\[
		A_K\coloneqq \frac{a^K+b^K}{2},
		\qquad
		\Delta_K\coloneqq 
		\frac{\zeta_\star}{2L}\left(1+b^K-2a^K\right).
		\]
		Since $\widetilde{\bar s}_0=0$, iterating the affine recursion gives
		\[
		\widetilde{\bar s}_{rK}
		=
		-\Delta_K\sum_{j=0}^{r-1}A_K^j
		\qquad \forall r\in[0,R].
		\]

		We also have $A_K\in[0,1]$ because $a,b\in[0,1]$. Moreover,
		\[
		\Delta_K
		=
		\frac{\zeta_\star}{2L}\left(1+b^K-2a^K\right)\ge 0.
		\]
		To see this, define
		\[
		\phi(x)\coloneqq 1+(1-x)^K-2(1-x/2)^K,\qquad x\in[0,1].
		\]
		Then
		\[
		1+b^K-2a^K=\phi(\eta L).
		\]
		Since
		\[
		\phi(0)=0,
		\qquad
		\phi'(x)
		=
		K\left[(1-x/2)^{K-1}-(1-x)^{K-1}\right]\ge 0,
		\]
		we have $\phi(x)\ge 0$ for all $x\in[0,1]$. Hence $\Delta_K\ge 0$.
		
		Therefore
		\[
		\widetilde{\bar s}_{T}
		=
		-\Delta_K\sum_{r=0}^{R-1}A_K^r
		\le 0.
		\]
		By the comparison inequality at communication times, already proved above,
		\[
		\bar s_T\le \widetilde{\bar s}_T\le 0.
		\]
		
		It remains to lower bound the right-hand side. Define
		\[
		\theta\coloneqq K\eta L.
		\]
		We first show
		\[
		1+b^K-2a^K\gtrsim \min\{\theta^2,1\}.
		\]
		Let
		\[
		f(x)\coloneqq (1-x)^K.
		\]
		Then
		\[
		1+b^K-2a^K
		=
		f(0)+f(\eta L)-2f(\eta L/2).
		\]
		\textbf{Case 1.} If $\boldsymbol{\theta\le 1}$, then $\eta L\le 1/K$. Since
		\[
		f''(x)=K(K-1)(1-x)^{K-2},
		\]
		for all $x\in[0,\eta L]$ we have
		\[
		f''(x)
		\ge
		K(K-1)\left(1-\frac1K\right)^{K-2}
		\gtrsim K^2,
		\]
		where the last inequality holds for all $K\ge 2$. Now, let $h:=\eta L$ and $m:=\inf_{x\in[0,h]}f''(x)$. Since $f$ is
		$m$-strongly convex on $[0,h]$,
		\[
		f(0)+f(h)-2f(h/2)\ge \frac{m h^2}{4}.
		\]
		
		Using the above lower bound on $m$ we get,
		\[
		f(0)+f(\eta L)-2f(\eta L/2)
		\gtrsim
		K^2\eta^2L^2
		=
		\theta^2.
		\]
		\textbf{Case 2.} If $\boldsymbol{\theta\ge 1}$, define
		\[
		\phi(x)\coloneqq 1+(1-x)^K-2(1-x/2)^K.
		\]
		Then
		\[
		\phi'(x)
		=
		K\left[
		(1-x/2)^{K-1}-(1-x)^{K-1}
		\right]\ge 0
		\qquad x\in[0,1].
		\]
		Since $\eta L\ge 1/K$,
		\[
		1+b^K-2a^K
		=
		\phi(\eta L)
		\ge
		\phi(1/K)
		\gtrsim 1,
		\]
		where the last inequality follows from the previous case applied at
		$\theta=1$. 
		
		Therefore, combining the two cases above, we get,
		\[
		\boxed{
			\Delta_K
			\gtrsim
			\frac{\zeta_\star}{L}\min\{\theta^2,1\} .
		}
		\]

		Next, we lower bound the geometric sum. Since $A_K\in[0,1]$,
		\[
		\sum_{r=0}^{R-1}A_K^r
		=
		1+A_K+\cdots+A_K^{R-1}.
		\]
		We next lower bound the geometric sum. Recall that
		\[
		A_K=\frac{a^K+b^K}{2},
		\qquad
		a=1-\frac{\eta L}{2},
		\qquad
		b=1-\eta L.
		\]
		Since $\eta\le 1/L$, we have $a,b\in[0,1]$, and therefore $A_K\in[0,1]$.
		
		We claim that
		\[
		\sum_{r=0}^{R-1}A_K^r
		\gtrsim
		\begin{cases}
			\min\{R,1/\theta\}, & \theta\le 1,\\
			1, & \theta\ge 1.
		\end{cases}
		\]
		
		First suppose $\theta\ge 1$. Since the first term of the sum is $A_K^0=1$,
		we immediately have
		\[
		\sum_{r=0}^{R-1}A_K^r\ge 1.
		\]
		
		Now suppose $\theta\le 1$. We first upper bound $1-A_K$. Since
		\[
		1-A_K
		=
		1-\frac{a^K+b^K}{2}
		=
		\frac{1-a^K}{2}+\frac{1-b^K}{2},
		\]
		and since $1-x^K\le K(1-x)$ for every $x\in[0,1]$,
		which follows from
		\[
		1-x^K=(1-x)(1+x+\cdots+x^{K-1})\le K(1-x),
		\]
		we get
		\[
		1-A_K
		\le
		\frac{K(1-a)}{2}+\frac{K(1-b)}{2}.
		\]
		Substituting $1-a=\eta L/2$ and $1-b=\eta L$ gives
		\[
		1-A_K
		\le
		\frac{K\eta L}{4}+\frac{K\eta L}{2}
		=
		\frac{3}{4}K\eta L
		=
		\frac{3}{4}\theta
		\le
		\theta.
		\]
		
		We now use this to show that the first $\asymp \min\{R,1/\theta\}$ terms of
		the geometric sum are bounded below by a constant. If $\theta=0$, then
		$A_K=1$ and the sum is exactly $R$, so the claim is trivial. Hence assume
		$\theta>0$.
		
		Let
		\[
		N:=
		\min\left\{R,\left\lfloor \frac{1}{2\theta}\right\rfloor\right\}.
		\]
		If $N=0$, then $\theta>1/2$, and since $\theta\le 1$, the desired lower bound
		is only a universal constant:
		\[
		\min\{R,1/\theta\}\le 2.
		\]
		In that case,
		\[
		\sum_{r=0}^{R-1}A_K^r\ge 1\gtrsim \min\{R,1/\theta\}.
		\]
		So assume $N\ge 1$.
		
		For every $0\le r\le N-1$, Bernoulli's inequality gives
		\[
		A_K^r
		=
		(1-(1-A_K))^r
		\ge
		1-r(1-A_K).
		\]
		Using $1-A_K\le\theta$ and $r\le N-1\le 1/(2\theta)$, we get
		\[
		A_K^r
		\ge
		1-r\theta
		\ge
		\frac12.
		\]
		Therefore
		\[
		\sum_{r=0}^{R-1}A_K^r
		\ge
		\sum_{r=0}^{N-1}A_K^r
		\ge
		\frac{N}{2}.
		\]
		
		Finally, $N$ is within a universal constant of $\min\{R,1/\theta\}$, except
		in the already handled case $N=0$. More explicitly, when
		$\min\{R,1/(2\theta)\}\ge 1$, we have
		\[
		\left\lfloor \frac{1}{2\theta}\right\rfloor
		\ge
		\frac{1}{4\theta},
		\]
		and hence
		\[
		N
		=
		\min\left\{R,\left\lfloor \frac{1}{2\theta}\right\rfloor\right\}
		\gtrsim
		\min\{R,1/\theta\}.
		\]
		Thus
		\[
		\sum_{r=0}^{R-1}A_K^r
		\gtrsim
		\min\{R,1/\theta\}.
		\]
		
		Combining the drift and geometric-sum bounds gives
		\[
		-\widetilde{\bar s}_T
		=
		\Delta_K\sum_{r=0}^{R-1}A_K^r.
		\]
		We analyze this expression by splitting into four regimes depending on
		$\theta=K\eta L$ and $R\theta$.
		
		\paragraph{Case A: $\theta\le 1$ and $R\theta\le 1$.}
		In this regime,
		\[
		\Delta_K \gtrsim \frac{\zeta_\star}{L}\theta^2,
		\qquad
		\sum_{r=0}^{R-1}A_K^r \gtrsim R.
		\]
		Therefore
		\[
		-\widetilde{\bar s}_T
		\gtrsim
		\frac{\zeta_\star}{L}\,R\theta^2.
		\]
		
		\paragraph{Case B: $\theta\le 1$ and $R\theta\ge 1$.}
		In this regime,
		\[
		\Delta_K \gtrsim \frac{\zeta_\star}{L}\theta^2,
		\qquad
		\sum_{r=0}^{R-1}A_K^r \gtrsim \frac{1}{\theta}.
		\]
		Thus
		\[
		-\widetilde{\bar s}_T
		\gtrsim
		\frac{\zeta_\star}{L}\,\theta.
		\]
		
		\paragraph{Case C: $\theta\ge 1$.}
		Here we use the crude bounds
		\[
		\Delta_K \gtrsim \frac{\zeta_\star}{L},
		\qquad
		\sum_{r=0}^{R-1}A_K^r \ge 1,
		\]
		which yield
		\[
		-\widetilde{\bar s}_T
		\gtrsim
		\frac{\zeta_\star}{L}.
		\]
		
		\paragraph{Conclusion.}
		Combining the three regimes, we obtain
		\[
		-\widetilde{\bar s}_T
		\gtrsim
		\frac{\zeta_\star}{L}
		\min\{1,\theta,R\theta^2\}.
		\]
		Since $\bar s_T\le \widetilde{\bar s}_T\le 0$,
		\[
		|\bar s_T|\ge |\widetilde{\bar s}_T|
		\gtrsim
		\frac{\zeta_\star}{L}
		\min\{1,\theta,R\theta^2\}.
		\]
		Finally, because $\bar s_T\le 0$,
		\[
		g(\bar s_T)-g(0)
		=
		\frac{L}{4}\bar s_T^2.
		\]
		Therefore
		\[
		g(\bar s_T)-g(0)
		\gtrsim
		\frac{\zeta_\star^2}{L}
		\min\{1,\theta^2,R^2\theta^4\}.
		\]
		Using $\theta=K\eta L$, this is exactly
		\[
		g(\bar s_T)-g(0)
		\gtrsim
		\min\left\{
		\frac{\zeta_\star^2}{L},\;
		L K^2\eta^2\zeta_\star^2
		\min\{(RK\eta L)^2,1\}
		\right\}.
		\]
	\end{proof}
	
	\begin{lemma}[Auxiliary-coordinate lower bound]
		\label{lem:tau-zero-aux}
		Fix $\mu>0$ and define
		\[
		q(u)=\frac{\mu}{2}\left(u+\frac{B}{3}\right)^2.
		\]
		Suppose Local SGD is run on this coordinate with identical client objectives,
		initialized at $u_0^1=u_0^2=0$. If
		\[
		\eta\le \frac{1}{2\mu KR},
		\]
		then at time $T=KR$,
		\[
		q(\bar u_T)-\min_u q(u)\gtrsim \mu B^2.
		\]
	\end{lemma}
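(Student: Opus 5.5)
Since both clients share the same objective $q$, there is no client drift. Along this coordinate, Local SGD coincides with plain gradient descent on $q$ for $T=KR$ steps, and the averaging step is trivial. I will therefore work with the single recursion
\[
u_{t+1}=u_t-\eta\mu\left(u_t+\frac{B}{3}\right),
\qquad\text{equivalently}\qquad
u_{t+1}+\frac{B}{3}=(1-\eta\mu)\left(u_t+\frac{B}{3}\right).
\]
Unrolling from $u_0=0$ gives the closed form
\[
\bar u_T+\frac{B}{3}=(1-\eta\mu)^{T}\,\frac{B}{3}.
\]
Because $\min_u q(u)=0$, it follows that
\[
q(\bar u_T)-\min_u q=\frac{\mu}{2}\cdot\frac{B^2}{9}\,(1-\eta\mu)^{2T}.
\]

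It remains to lower bound the contraction factor $(1-\eta\mu)^{2T}$ by a constant. The hypothesis $\eta\le 1/(2\mu KR)$ gives $\eta\mu T\le 1/2$, and in particular $\eta\mu\le 1/2$, so $1-\eta\mu\in[1/2,1)$ is positive. Bernoulli's inequality then gives
\[
(1-\eta\mu)^{T}\ge 1-T\eta\mu\ge \frac12,
\]
and squaring yields $(1-\eta\mu)^{2T}\ge 1/4$. Combining this with the closed form gives
\[
q(\bar u_T)-\min_u q\ge \frac{\mu B^2}{72}\gtrsim \mu B^2.
\]

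There is no real obstacle in this argument. The only points requiring care are, first, noting that the averaging step leaves the iterate unchanged because the clients are identical and start from the same point, and second, checking that $1-\eta\mu\ge 0$, so that Bernoulli's inequality applies without any sign issue. Both follow directly from the step-size condition, since $K,R\ge 1$.
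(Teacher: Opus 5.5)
Your proposal is correct and follows essentially the same route as the paper: identical clients reduce the coordinate to $KR$ steps of gradient descent on $q$, giving the contraction $(1-\eta\mu)^{KR}\frac{B}{3}$, and the step-size condition bounds this factor below by a constant. The only difference is that you use Bernoulli's inequality to get the explicit constant $\mu B^2/72$, whereas the paper uses $\bigl(1-\frac{1}{2KR}\bigr)^{KR}\gtrsim 1$.
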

	
	\begin{proof}
		Since both clients have the same deterministic quadratic objective and start
		from the same point, their iterates agree at all times. Let
		\[
		e_t\coloneqq u_t+\frac{B}{3}.
		\]
		Then
		\[
		e_{t+1}=(1-\eta\mu)e_t,
		\qquad
		e_0=\frac{B}{3}.
		\]
		The assumption $\eta\le 1/(2\mu KR)$ gives
		\[
		0\le \eta\mu\le \frac{1}{2KR}.
		\]
		Hence
		\[
		|e_T|
		=
		(1-\eta\mu)^{KR}\frac{B}{3}
		\ge
		\left(1-\frac{1}{2KR}\right)^{KR}\frac{B}{3}
		\gtrsim B.
		\]
		Therefore
		\[
		q(\bar u_T)-\min_u q(u)
		=
		\frac{\mu}{2}e_T^2
		\gtrsim
		\mu B^2.
		\]
	\end{proof}
	
	\begin{lemma}[Guard-coordinate lower bound]
		\label{lem:tau-zero-guard}
		Define
		\[
		h(v)=\frac{H}{2}\left(v+\frac{B}{3}\right)^2.
		\]
		Suppose Local SGD is run on this coordinate with identical client objectives,
		initialized at $v_0^1=v_0^2=0$. If
		\[
		\eta>\frac{2}{H},
		\]
		then at time $T=KR$,
		\[
		h(\bar v_T)-\min_v h(v)\gtrsim HB^2.
		\]
	\end{lemma}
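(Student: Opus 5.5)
Both clients share the same deterministic quadratic and start from the same point, so, exactly as in \Cref{lem:tau-zero-aux}, their local iterates coincide at every time step. The averaging step therefore never changes the iterate, and the trajectory is just $T=KR$ steps of gradient descent on $h$. I will write the error as $e_t\coloneqq v_t+\frac{B}{3}$. Then $e_{t+1}=(1-\eta H)e_t$ with $e_0=\frac{B}{3}$, which gives $e_T=(1-\eta H)^{T}\frac{B}{3}$ in closed form.

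The key step uses the assumption $\eta>\frac{2}{H}$. It gives $\eta H>2$, hence $1-\eta H<-1$ and $|1-\eta H|>1$. Consequently $|e_T|=|1-\eta H|^{KR}\frac{B}{3}\ge \frac{B}{3}$: the iterate oscillates with non-decreasing amplitude instead of contracting.

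Finally, I plug this in:
\[
h(\bar v_T)-\min_v h(v)=\frac{H}{2}e_T^2\ge \frac{H}{2}\cdot\frac{B^2}{9}=\frac{HB^2}{18}\gtrsim HB^2 .
\]
There is no real obstacle here. The only points to check are that the strict inequality $\eta>2/H$ yields the amplification factor, and that the identical-client observation makes the synchronization step trivial, so the gradient-descent closed form applies to $\bar v_T$ directly. In the combined construction this lemma will be used on a separate coordinate to rule out step sizes that are too large.
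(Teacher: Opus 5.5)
Your proposal is correct and follows essentially the same argument as the paper: the identical clients reduce the dynamics to gradient descent, the error recursion is $e_{t+1}=(1-\eta H)e_t$, and $|1-\eta H|>1$ gives $|e_T|\ge B/3$, so $h(\bar v_T)-\min_v h(v)\ge HB^2/18$. Your explicit constant $1/18$ is a harmless refinement of the paper's $\gtrsim$.
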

	
	\begin{proof}
		Since both clients have the same deterministic quadratic objective and start
		from the same point, their iterates agree at all times. Let
		\[
		w_t\coloneqq v_t+\frac{B}{3}.
		\]
		Then
		\[
		w_{t+1}=(1-\eta H)w_t,
		\qquad
		w_0=\frac{B}{3}.
		\]
		If $\eta>2/H$, then $|1-\eta H|>1$. Hence
		\[
		|w_T|\ge |w_0|=\frac{B}{3}.
		\]
		Therefore
		\[
		h(\bar v_T)-\min_v h(v)
		=
		\frac{H}{2}w_T^2
		\gtrsim
		HB^2.
		\]
	\end{proof}
	
	\begin{theorem}[$\tau=0$ lower bound for Local SGD]
		\label{thm:tau-zero-lower}
		There exist two deterministic convex client objectives
		\(
		F_1,F_2:\mathbb{R}^3\to\mathbb{R}
		\)
		in the class $\ppp^{H,B,0}_{\zeta_\star,0,0}$, such that vanilla Local SGD with $M=2$ clients, $K\geq 2$ local steps per round, arbitrary step-size $\eta>0$ and initialized at $x_0^1=x_0^2=0$, must have its final averaged iterate
		\[
		\bar x_T\coloneqq \frac12(x_T^1+x_T^2)
		\]
		such that it satisfies
		\[
		F(\bar x_T)-F(x^\star)
		\gtrsim
		\min\left\{
		HB^2,\;
		\frac{\zeta_\star^2}{H},\;
		\frac{(H\zeta_\star^2B^4)^{1/3}}{R^{2/3}}
		\right\}.
		\]
	\end{theorem}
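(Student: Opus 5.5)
The plan is to assemble the three one-dimensional lemmas on orthogonal coordinates. Writing $x=(s,u,v)$, we set
\[
F_m(x)\coloneqq G_m(s)+q(u)+h(v),
\]
where $G_1,G_2$ are the objectives of \Cref{lem:tau-zero-active} with $L\coloneqq H$, $q$ is the auxiliary quadratic of \Cref{lem:tau-zero-aux} with a curvature $\mu\le H$ to be tuned, and $h$ is the guard quadratic of \Cref{lem:tau-zero-guard}. Because the objective is separable, vanilla Local SGD acts independently on each coordinate, so each lemma applies verbatim to its own coordinate. The suboptimality then decomposes as the sum of the three coordinate suboptimalities, and each summand is nonnegative.

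First we verify membership in the class.
\begin{itemize}
\item Each $F_m$ is convex and $H$-smooth, since $g$ has a Lipschitz derivative with constant at most $H$ and $\mu\le H$.
\item The average $F$ has $s$-part equal to $g$, which is minimized at $s=0$. So $x^\star=(0,-B/3,-B/3)$ and $\norm{x^\star}\le B$.
\item At $x^\star$, $\nabla F_m(x^\star)=(\mp\zeta_\star,0,0)$, giving exactly $\zeta_\star$-heterogeneity at the optimum.
\item The gradient gaps $\nabla F_m-\nabla F=(\mp\zeta_\star,0,0)$ are constant in $x$, so \Cref{ass:tau} holds with $\tau=0$.
\end{itemize}
Also $g(0)=0=\min g$, so the $s$-part of $F(\bar x_T)-F(x^\star)$ is exactly $g(\bar s_T)-g(0)$.

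Next we case-split on $\eta$.
\begin{itemize}
\item If $\eta>2/H$, \Cref{lem:tau-zero-guard} gives $\gtrsim HB^2$.
\item If $1/H<\eta\le 2/H$, \Cref{lem:tau-zero-active} is not directly applicable, since it needs $\eta\le 1/L$. The cleanest fix is to take $L\coloneqq H/2$ in the active block, so the lemma covers all $\eta\le 2/H$ at the cost of constants. This is the choice I would commit to; then $\theta=K\eta L\gtrsim 1$ and the active bound is $\gtrsim\zeta_\star^2/H$.
\item If $\eta\le 1/(2\mu KR)$, \Cref{lem:tau-zero-aux} gives $\gtrsim\mu B^2$.
\item If $1/(2\mu KR)<\eta\le 2/H$, then $\theta=K\eta L\gtrsim KH/(\mu KR)=H/(\mu R)$. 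The active bound is $\frac{\zeta_\star^2}{H}\min\{1,\theta^2,R^2\theta^4\}$, which is increasing in $\theta$, so it is at least $\frac{\zeta_\star^2}{H}\min\{1,(H/\mu R)^2,H^4/(\mu^4R^2)\}$.
\end{itemize}

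The main obstacle is tuning $\mu$ so that these case bounds balance to $\min\{HB^2,\zeta_\star^2/H,(H\zeta_\star^2B^4)^{1/3}R^{-2/3}\}$. We equate $\mu B^2$ with $\frac{\zeta_\star^2}{H}\cdot\frac{H^2}{\mu^2R^2}$, which gives $\mu^3=\zeta_\star^2H/(B^2R^2)$, i.e. $\mu=(H\zeta_\star^2)^{1/3}B^{-2/3}R^{-2/3}$ and $\mu B^2=(H\zeta_\star^2B^4)^{1/3}R^{-2/3}$. We must also respect $\mu\le H$, so set $\mu\coloneqq\min\{H,(H\zeta_\star^2)^{1/3}B^{-2/3}R^{-2/3}\}$.

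With this $\mu$, the ratio $H/(\mu R)$ can fall below $1$, so we check which term of $\min\{1,\theta^2,R^2\theta^4\}$ is active. For $\theta\ge H/(\mu R)$, the term $R^2\theta^4$ is at least $\theta^2$ exactly when $R\theta^2\ge1$, so some terms may instead be controlled through $\theta^2=(H/\mu R)^2$. Case by case on whether $\mu=H$, whether $H/(\mu R)\ge1$, and whether $R\theta^2\ge 1$, the active bound is at least a constant times one of $\zeta_\star^2/H$, $\mu B^2$, or $HB^2$. The latter two require the comparison between $\zeta_\star^2/H$ and $HB^2$. In every case the lower bound dominates the stated three-way minimum, which would complete the proof.
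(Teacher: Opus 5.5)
Your proof is correct and is essentially the paper's argument: the same separable instance $G_m(s)+q(u)+h(v)$ with $L=H/2$, the same three step-size regimes, and the same monotonicity reduction to $\eta_0=1/(2\mu KR)$ in the intermediate regime. The only difference is that the paper takes $\mu=\mathcal T/(4B^2)$, with $\mathcal T$ the target three-way minimum, whereas your $\mu=\min\{H,(H\zeta_\star^2)^{1/3}B^{-2/3}R^{-2/3}\}$ works equally well. Your closing case analysis is unnecessary (and the condition for $R^2\theta^4\ge\theta^2$ is $R\theta\ge 1$, not $R\theta^2\ge 1$): since $\mu\le H$, $\theta_0=K\eta_0L=H/(4\mu R)$ satisfies $R\theta_0\ge 1/4$, so $\min\{1,\theta_0^2,R^2\theta_0^4\}\ge\frac{1}{16}\min\{1,\theta_0^2\}$, and $\frac{\zeta_\star^2}{H}\theta_0^2=\frac{H\zeta_\star^2}{16\mu^2R^2}\ge\frac{1}{16}\frac{(H\zeta_\star^2B^4)^{1/3}}{R^{2/3}}$ because $\mu\le(H\zeta_\star^2)^{1/3}B^{-2/3}R^{-2/3}$.
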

	
	\begin{proof}
		Set
		\[
		L\coloneqq \frac{H}{2}.
		\]
		We construct a deterministic instance on $\mathbb R^3$, writing
		\[
		x=(s,u,v).
		\]
		Use the active objectives $G_1,G_2$ from Lemma~\ref{lem:tau-zero-active} with
		this value of $L$. Let $\mu>0$ be chosen later, and define
		\[
		q(u)=\frac{\mu}{2}\left(u+\frac{B}{3}\right)^2,
		\qquad
		h(v)=\frac{H}{2}\left(v+\frac{B}{3}\right)^2.
		\]
		Define the two client objectives
		\[
		F_1(s,u,v)=G_1(s)+q(u)+h(v),
		\]
		and
		\[
		F_2(s,u,v)=G_2(s)+q(u)+h(v).
		\]
		Then
		\[
		F(s,u,v)\coloneqq \frac{F_1(s,u,v)+F_2(s,u,v)}2
		=
		g(s)+q(u)+h(v).
		\]
		The minimizer is
		\[
		x^\star=
		\left(0,-\frac{B}{3},-\frac{B}{3}\right),
		\]
		so
		\[
		\norm{x^\star}
		=
		\frac{\sqrt 2 B}{3}
		\le B.
		\]
		The initialization is
		\[
		x_0^1=x_0^2=(0,0,0).
		\]
		
		The instance is deterministic, so $\sigma=0$. The smoothness of the active
		coordinate is $L=H/2$, the smoothness of the auxiliary coordinate is $\mu$,
		and the smoothness of the guard coordinate is $H$. We will choose $\mu\le H$,
		so each client objective is $H$-smooth, convex, and satisfies the bounded optima assumption in \eqref{ass:bounded_optima}.
		
		At $x^\star$,
		\[
		\nabla F_1(x^\star)=(-\zeta_\star,0,0),
		\qquad
		\nabla F_2(x^\star)=(\zeta_\star,0,0).
		\]
		Therefore
		\[
		\frac12\sum_{m=1}^2 \norm{\nabla F_m(x^\star)}^2
		=
		\zeta_\star^2.
		\]
		Moreover, the two clients differ only by the linear terms
		$\mp \zeta_\star s$. Hence, for all $x,y$,
		\[
		\nabla F_1(x)-\nabla F_1(y)
		=
		\nabla F_2(x)-\nabla F_2(y)
		=
		\nabla F(x)-\nabla F(y).
		\]
		Thus Assumptions~\ref{ass:tau} holds with $\tau=0$.
		
		Now coming back to the choice of $\mu$, define
		\[
		\mathcal T\coloneqq 
		\min\left\{
		HB^2,\;
		\frac{\zeta_\star^2}{H},\;
		\frac{(H\zeta_\star^2B^4)^{1/3}}{R^{2/3}}
		\right\},
		\qquad
		\mu\coloneqq \frac{\mathcal T}{4B^2}.
		\]
		Since $\mathcal T\le HB^2$, we have
		\[
		\mu\le \frac{H}{4}\le H.
		\]
		Thus, all of this shows that the construction lies in $\ppp^{H,B,0}_{\zeta_\star,0,0}$.
		
		We now split into three step-size regimes.
		
		\medskip
		\noindent
		\textbf{Small step-sizes.}
		If
		\[
		\eta\le \frac{1}{2\mu KR},
		\]
		then Lemma~\ref{lem:tau-zero-aux} gives
		\[
		F(\bar x_T)-F(x^\star)
		\ge
		q(\bar u_T)-\min_u q(u)
		\gtrsim
		\mu B^2
		=
		\frac{\mathcal T}{4}
		\gtrsim
		\mathcal T.
		\]
		
		\medskip
		\noindent
		\textbf{Large step-sizes.}
		If
		\[
		\eta>\frac{2}{H},
		\]
		then Lemma~\ref{lem:tau-zero-guard} gives
		\[
		F(\bar x_T)-F(x^\star)
		\ge
		h(\bar v_T)-\min_v h(v)
		\gtrsim
		HB^2
		\ge
		\mathcal T.
		\]
		
		\medskip
		\noindent
		\textbf{Intermediate step-sizes.}
		It remains to consider
		\[
		\frac{1}{2\mu KR}<\eta\le \frac{2}{H}.
		\]
		Since $L=H/2$, this is
		\[
		\frac{1}{2\mu KR}<\eta\le \frac1L.
		\]
		Let
		\[
		\eta_0\coloneqq \frac{1}{2\mu KR}.
		\]
		The active-coordinate lower bound in Lemma~\ref{lem:tau-zero-active} is
		nondecreasing in $\eta$ on $[0,1/L]$. Since $\eta>\eta_0$, it suffices to
		evaluate the bound at $\eta_0$.
		
		We have
		\[
		RK\eta_0 L=\frac{L}{2\mu}.
		\]
		Because $\mu\le H/4=L/2$, we have
		\[
		RK\eta_0 L\ge 1.
		\]
		Therefore Lemma~\ref{lem:tau-zero-active} gives
		\[
		g(\bar s_T)-g(0)
		\gtrsim
		\min\left\{
		\frac{\zeta_\star^2}{L},\;
		L K^2\eta_0^2\zeta_\star^2
		\right\}.
		\]
		Substituting $\eta_0=1/(2\mu KR)$,
		\[
		L K^2\eta_0^2\zeta_\star^2
		=
		\frac{L\zeta_\star^2}{4\mu^2R^2}.
		\]
		Thus
		\[
		g(\bar s_T)-g(0)
		\gtrsim
		\min\left\{
		\frac{\zeta_\star^2}{L},\;
		\frac{L\zeta_\star^2}{\mu^2R^2}
		\right\}.
		\]
		The first term satisfies
		\[
		\frac{\zeta_\star^2}{L}
		=
		\frac{2\zeta_\star^2}{H}
		\ge 2\mathcal T.
		\]
		For the second term, using $\mu=\mathcal T/(4B^2)$ and $L=H/2$,
		\[
		\frac{L\zeta_\star^2}{\mu^2R^2}
		=
		\frac{16L\zeta_\star^2B^4}{\mathcal T^2R^2}
		=
		\frac{8H\zeta_\star^2B^4}{\mathcal T^2R^2}.
		\]
		Since
		\[
		\mathcal T
		\le
		\frac{(H\zeta_\star^2B^4)^{1/3}}{R^{2/3}},
		\]
		we have
		\[
		\mathcal T^3
		\le
		\frac{H\zeta_\star^2B^4}{R^2}.
		\]
		Therefore
		\[
		\frac{8H\zeta_\star^2B^4}{\mathcal T^2R^2}
		\ge
		8\mathcal T.
		\]
		Thus the intermediate regime also gives
		\[
		F(\bar x_T)-F(x^\star)
		\ge
		g(\bar s_T)-g(0)
		\gtrsim
		\mathcal T.
		\]
		
		Combining the three regimes gives
		\[
		F(\bar x_T)-F(x^\star)\gtrsim \mathcal T,
		\]
		which proves the theorem.
	\end{proof}
	
	\section{Phase Diagram: Comparison of New Lower and Upper Bounds}\label{app:phase}
	
	In this section, we calculate in which regimes in the $\zeta_\star$-$\tau$ diagram our new upper and lower bounds match, and when a gap remains.
	
	To facilitate comparison, we divide the $\zeta_\star$-$\tau$ diagram into multiple segments and identify the dominant terms in the upper and lower bounds. In this section, we consider only a practical setting $\zeta_\star\le HB$ in the noiseless regime $\sigma=0$.
	
	In the aforementioned setting, our upper and lower bounds can be written as follows
	\begin{align*}
		\ee[F(\hat x_t) - F^\star] &\lesssim \min\left\{R_1(\tau,\zeta_\star), R_2(\tau,\zeta_\star), R_3(\tau,\zeta_\star)\right\}, \quad \text{where}\\
		R_1(\tau,\zeta_\star) &= \frac{HB^2}{KR} + \frac{\tau B^2}{\sqrt{R}} + \frac{\zeta_\star B}{\sqrt{R}},
		\qquad
		R_2(\tau,\zeta_\star) = \frac{HB^2}{KR} + \frac{(H\tau^2)^{1/3}B^2}{R^{2/3}} + \frac{(H\zeta_\star^2B^4)^{1/3}}{R^{2/3}}, 
		\\
		R_3(\tau,\zeta_\star) &= \frac{HB^2}{R} + \frac{(H\zeta_\star^2B^4)^{1/3}}{R^{2/3}},\quad \text{and}\\
		\ee[F(\hat x_t) - F^\star] &\gtrsim \frac{HB^2}{KR} + \min\left\{\frac{HB^2}{R}, \frac{\tau B^2}{\sqrt{R}}\right\} + \min\left\{HB^2,\frac{\zeta_\star^2}{H}, \frac{(H\zeta_\star^2B^4)^{1/3}}{R^{2/3}}\right\},
	\end{align*}
	for some point $\hat{x}_t$ (different for lower and upper bounds).
	
	We use the following notation $\alpha=\frac{HB^2}{R}, x_0=\frac{\tau\sqrt{R}}{H},$ and $y_0 = \frac{\zeta_\star\sqrt{R}}{HB}$, then the bounds can be written as follows
	\begin{align*}
		U(x_0,y_0) &= \min\left\{
		\frac{\alpha}{K} + \alpha x_0 + \alpha y_0, 
		\frac{\alpha}{K}+\alpha x_0^{2/3} + \alpha y_0^{2/3},   \alpha + \alpha y_0^{2/3}\right\},\\
		L(x_0,y_0) &= \frac{\alpha}{K} + \min\left\{\alpha, \alpha x_0\right\} + \min\left\{\alpha R, \alpha y_0^2, \alpha y_0^{2/3}\right\},
	\end{align*}
	where $U(x_0,y_0)$ is the upper bound and $L(x_0,y_0)$ is the lower bound.
	
	\subsection{Analysis of Upper Bound's Dominant Term}\label{sec:upper_bound_diagram}
	
	\paragraph{Case 1: $0\le x_0 \le \frac{1}{K}, 0 \le y_0 \le \frac{1}{K}$.}
	In this case, we have $R_1, R_2 \ge \frac{\alpha}{K}$ and $R_3 \ge \alpha$. So $U(x_0,y_0) \ge \frac{\alpha}{K}$. Moreover, $R_1 = \frac{\alpha}{K} + \alpha x_0 + \alpha y_0 \le \frac{3\alpha}{K},$ so in this regime $U(x_0,y_0) \sim \frac{\alpha}{K}$.
	
	\paragraph{Case 2: $\frac{1}{K} \le x_0 \le 1, y_0 \le x_0$.} 
	In this case, we have $$R_1 = \frac{\alpha}{K} + \alpha x_0 + \alpha y_0 \ge \alpha x_0, \quad R_2 = \frac{\alpha}{K} + \alpha x_0^{2/3} + \alpha y_0^{2/3} \ge \alpha x_0^{2/3} \ge \alpha x_0,$$
	and
	$$R_3 = \alpha + \alpha y_0^{2/3} \ge \alpha \ge \alpha x_0.$$
	In other words, each term is at least $\alpha x_0,$ so $Y(x_0,y_0) \ge \alpha x_0$. At the same time, $R_1 \le 3\alpha x_0$. Therefore, in this regime $U(x_0,y_0) \sim \alpha x_0$.
	
	\paragraph{Case 3: $\frac{1}{K} \le y_0 \le 1, x_0 \le y_0$.}
	
	This case is symmetric to Case 2, when swapping $x_0$ and $y_0$. Therefore, $U(x_0,y_0) \sim \alpha y_0$.

	\paragraph{Case 4: $x_0\ge 1, y_0 \le 1$.}
	In this case, each of the rates $R_1,R_2,R_3$ is at least $\alpha$. Indeed, we have
	$$
	R_1 = \frac{\alpha}{K} + \alpha x_0 + \alpha y_0 \ge \alpha x_0 \ge \alpha, 
	\qquad
	R_2 = \frac{\alpha}{K} + \alpha x_0^{2/3} + \alpha y_0^{2/3} \ge \alpha x_0^{2/3} \ge \alpha,
	$$
	and 
	$$
	R_3 = \alpha + \alpha y_0^{2/3} \ge \alpha.
	$$
	Therefore, $U(x_0,y_0) \ge \alpha$ as well. At the same time, $R_3 \le \alpha + \alpha = 2\alpha$. Thus, we obtain that in this regime $U(x_0,y_0) \sim \alpha$.
	
	\paragraph{Case 5: $1 \le y_0 \le \sqrt{R}$.}
	
	Finally, in this regime, we have
	$$
	R_1 = \frac{\alpha}{K} + \alpha x_0 + \alpha y_0 \ge \alpha y_0 \ge \alpha y_0^{2/3}, \qquad
	R_2 = \frac{\alpha}{K} + \alpha x_0^{2/3} + \alpha y_0^{2/3} \ge \alpha y_0^{2/3},
	$$
	and
	$$R_3 = \alpha + \alpha y_0^{2/3} \ge \alpha y_0^{2/3}.$$
	At the same time, $R_3 \le 2\alpha y_0^{2/3}$. Thus, $U(x_0,y_0) \sim \alpha y_0^{2/3}.$
	
	\paragraph{Final split.}
	
	Summarizing the above 5 cases together, the upper bound has the following phase diagram
	$$
	U(x_0,y_0) = \begin{cases}
		\frac{\alpha}{K}, &\text{if } 0\le x_0\le \frac{1}{K}, 0\le y_0 \le \frac{1}{K},\\
		\alpha x_0, & \text{if } \frac{1}{K}\le x_0 \le 1, y_0 \le x_0,\\
		\alpha y_0, & \text{if } \frac{1}{K} \le y_0 \le 1, x_0\le y_0,\\
		\alpha, &\text{if } x_0\ge 1, y_0 \le 1,\\
		\alpha y_0^{2/3}, &\text{if } 1\le y_0 \le \sqrt{R}.
	\end{cases}
	$$
	In the original notation, the phase diagram of the upper bound has the following form
	$$
	U(\tau,\zeta_\star) = \begin{cases}
		\frac{HB^2}{KR}, &\text{if } 0\le \tau\le \frac{H}{K\sqrt{R}}, 0\le \zeta_\star \le \frac{HB}{K\sqrt{R}},\\
		\frac{\tau B^2}{\sqrt{R}}, & \text{if } \frac{H}{K\sqrt{R}}\le\tau \le \frac{H}{\sqrt{R}}, \zeta_\star \le \tau B,\\
		\frac{\zeta_\star B}{\sqrt{R}}, & \text{if } \frac{HB}{K\sqrt{R}} \le\zeta_\star \le \frac{HB}{\sqrt{R}}, \tau B\le\zeta_\star,\\
		\frac{HB^2}{R}, &\text{if } \tau \ge \frac{H}{\sqrt{R}}, \zeta_\star \le \frac{HB}{\sqrt{R}},\\
		\frac{(H\zeta_\star B^4)^{1/3}}{R^{2/3}}, &\text{if } \frac{HB}{\sqrt{R}}\le\zeta_\star \le HB.
	\end{cases}
	$$
	
	\subsection{Analysis of Lower Bound's Dominant Term}\label{sec:lower_bound_diagram}

	Note that the lower bound has the following form:
	$$
	L(x_0,y_0) = \frac{\alpha}{K} + \min\left\{\alpha, \alpha x_0\right\} + \min\left\{\alpha R, \alpha y_0^2, \alpha y_0^{2/3}\right\}.
	$$
	
	\paragraph{Case 1: $0 \le x_0 \le \frac{1}{K}, 0 \le y_0 \le \frac{1}{\sqrt{K}}$.}
	
	In this case, we have $x_0 \le \frac{1}{K}$ and $y_0^2 \le \frac{1}{K}$. Therefore, $y_0^2 \le y_0^{2/3}$, and the lower bound is 
	$$
	L(x_0,y_0) = \frac{\alpha}{K} + \alpha x_0 + \alpha y_0^2.
	$$
	This implies that 
	$$
	\frac{\alpha}{K} \le L(x_0,y_0) \le \frac{3\alpha}{K}
	$$
	in this regime, i.e., $L(x_0,y_0) \sim \frac{\alpha}{K}$.
	
	\paragraph{Case 2: $\frac{1}{K} \le x_0 \le 1, y_0\le \sqrt{x_0}$.}
	Note that in this regime we have $y_0^2 \le x_0\le 1$ and $y_0^2 \le y_0^{2/3}$. Thus, the lower bound is 
	$$
	L(x_0,y_0) = \frac{\alpha}{K} + \alpha x_0 + \alpha y_0^2.
	$$
	This implies that 
	$$
	\alpha x_0 \le L(x_0,y_0) \le 3\alpha x_0
	$$
	in this regime, i.e., $L(x_0,y_0) \sim \alpha x_0$.
	
	\paragraph{Case 3: $\frac{1}{\sqrt{K}} \le y_0 \le 1, x_0 \le y_0^2$.}
	
	In this regime, we have $\frac{1}{K} \le y_0^2 \le y_0^{2/3} \ge 1$ and $x_0 \le y_0^2\ge1$. This implies that the lower bound is 
	$$
	L(x_0,y_0) = \frac{\alpha}{K} + \alpha x_0 + \alpha y_0^2.
	$$
	Therefore, we have 
	$$
	\alpha y_0^2 \le L(x_0,y_0) \le 3\alpha y_0^2,
	$$
	i.e., $L(x_0,y_0) \sim \alpha y_0^2$.
	
	\paragraph{Case 4: $1 \le x_0, y_0 \le 1$.}
	
	Since $y_0^2\le y_0^{2/3} \le 1$ in this regime, then the lower bound is 
	$$
	L(x_0,y_0) = \frac{\alpha}{K} + \alpha + \alpha y_0^2.
	$$
	This implies that 
	$$
	\alpha \le L(x_0,y_0) \le 3\alpha,
	$$
	i.e., $L(x_0,y_0) \sim \alpha$.
	
	\paragraph{Case 5: $1 \le y_0 \le \sqrt{R}$.}
	Note that now $y_0^{2/3} \le y_0^2$. Therefore, in this regime, we have that the lower bound is 
	$$
	L(x_0,y_0) = \frac{\alpha}{K} + \min\{\alpha,\alpha x_0\} + \alpha y_0^{2/3}.
	$$
	Since $\min \{\alpha,\alpha x_0\} \le \alpha \le \alpha y_0^{2/3}$ and $\frac{\alpha}{K}\le \alpha \le \alpha y_0^{2/3}$, then we have 
	$$
	\alpha y_0^{2/3} \le L(x_0,y_0) \le 3\alpha y_0^{2/3},
	$$
	i.e., $L(x_0,y_0) \sim 3\alpha y_0^{2/3}$ in this regime.
	
	\paragraph{Final split.}
	Summarizing the above 5 cases together, the lower bound has the following phase diagram
	$$
	L(x_0,y_0) = \begin{cases}
		\frac{\alpha}{K}, &\text{if } 0\le x_0 \le \frac{1}{K}, 0 \le y_0 \le \frac{1}{\sqrt{K}},\\
		\alpha x_0, &\text{if } \frac{1}{K} \le x_0 \le 1, y_0 \le \sqrt{x_0},\\
		\alpha y_0^2, &\text{if } \frac{1}{\sqrt{K}} \le y_0 \le 1, x_0 \le y_0^2,\\
		\alpha, &\text{if } 1\le x_0, y_0 \le 1,\\
		\alpha y_0^{2/3}, &\text{if } 1 \le y_0 \le \sqrt{R}.
	\end{cases}
	$$
	In the original notation, the phase diagram for the lower bound has the following form
	$$
	L(\tau,\zeta_\star) = \begin{cases}
		\frac{HB^2}{KR}, &\text{if } 0\le \tau \le \frac{H}{K\sqrt{R}}, 0 \le \zeta_\star \le \frac{HB}{\sqrt{KR}},\\
		\frac{\tau B^2}{\sqrt{R}}, &\text{if } \frac{H}{K\sqrt{R}} \le \tau \le \frac{H}{\sqrt{R}}, \zeta_\star^2\le \frac{HB^2\tau}{\sqrt{R}},\\
		\frac{\zeta_\star^2}{H}, &\text{if } \frac{HB}{\sqrt{RK}} \le \zeta_\star \le \frac{HB}{\sqrt{R}}, \frac{HB^2\tau}{\sqrt{R}} \le \zeta_\star^2,\\
		\frac{HB^2}{R}, &\text{if } \frac{H}{\sqrt{R}}\le\tau, \zeta_\star \le \frac{HB}{\sqrt{R}},\\
		\frac{(H\zeta_\star^2B^4)^{1/3}}{R^{2/3}}, &\text{if } \frac{HB}{\sqrt{R}} \le \zeta_\star \le HB.
	\end{cases}
	$$

	\subsection{Combined Phase Diagram}
	
	Knowing the phase diagram for both the lower and upper bounds, we can merge them into one to show the areas where the lower and upper bounds match (up to numerical constants) or do not. We have
	$$
	(U(x_0,y_0), L(x_0,y_0)) = \begin{cases}
		(\frac{\alpha}{K},\frac{\alpha}{K}), &\text{if } 0\le x_0 \le \frac{1}{K}, 0 \le y_0 \le \frac{1}{K},\\
		(\alpha x_0,\alpha x_0) &\text{if } \frac{1}{K} \le x_0 \le 1, y_0 \le x_0,\\
		(\alpha,\alpha), &\text{if } 1 \le x_0 \le \sqrt{R}, 0 \le y_0 \le 1,\\
		(\alpha y_0^{2/3},\alpha y_0^{2/3}), &\text{if } 1 \le y_0 \le \sqrt{R},\\
		(\alpha y_0, \max\{\frac{\alpha}{K},\alpha y_0^2,\alpha x_0^2\}), &\text{if } x_0 \le y_0, \frac{1}{K}\le y_0 \le 1.
	\end{cases}
	$$
	In the notation $\tau,\zeta_\star$ the bounds become 
	$$
	(U(\tau,\zeta_\star), L(\tau,\zeta_\star)) = \begin{cases}
		(\frac{HB^2}{KR},\frac{HB^2}{KR}), &\text{if } 0\le \tau \le \frac{H}{K\sqrt{R}}, 0 \le y_0 \le \frac{HB}{K\sqrt{R}},\\
		(\frac{\tau B^2}{\sqrt{R}}, \frac{\tau B^2}{\sqrt{R}}) &\text{if } \frac{H}{K\sqrt{R}} \le \tau \le \frac{H}{\sqrt{R}}, \zeta_\star \le \tau B,\\
		(\frac{HB^2}{R},\frac{HB^2}{R}), &\text{if } \frac{H}{\sqrt{R}} \le \tau \le H, 0 \le \zeta_\star \le \frac{HB}{\sqrt{R}},\\
		(\frac{(H\zeta_\star^2B^4)^{1/3}}{R^{2/3}},\frac{(H\zeta_\star^2B^4)^{1/3}}{R^{2/3}}), &\text{if } \frac{HB}{\sqrt{R}} \le \zeta_\star \le HB,\\
		(\frac{\zeta_\star B}{\sqrt{R}}, \max\{\frac{HB^2}{KR},\frac{(H\zeta_\star^2B^4)^{1/3}}{R^{2/3}},\frac{(H\tau^2)^{1/3}B^2}{R^{2/3}}\}), &\text{if } \tau B \le \zeta_\star, \frac{HB}{K\sqrt{R}}\le\zeta_\star \le \frac{HB}{\sqrt{R}}.
	\end{cases}
	$$
	As we can see, the only regime when our lower and upper bounds do not match is $\tau B \le \zeta_\star, \frac{HB}{K\sqrt{R}}\le\zeta_\star \le \frac{HB}{\sqrt{R}}$.

	\begin{figure}[!ht]
		\centering
		\begin{tabular}{cc}
			\includegraphics[width=0.45\linewidth]{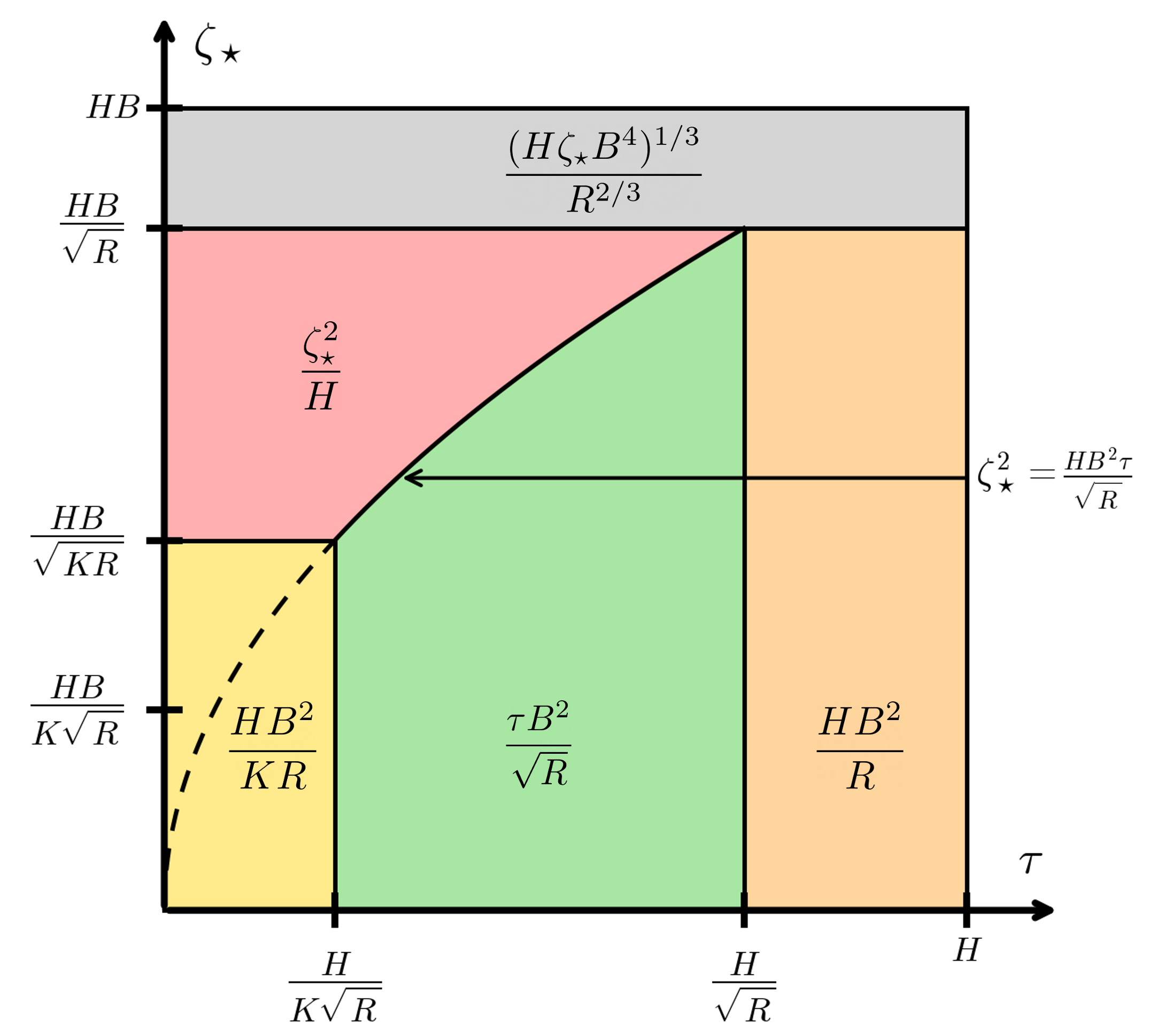}
			
			&
			
			\includegraphics[width=0.44\linewidth]{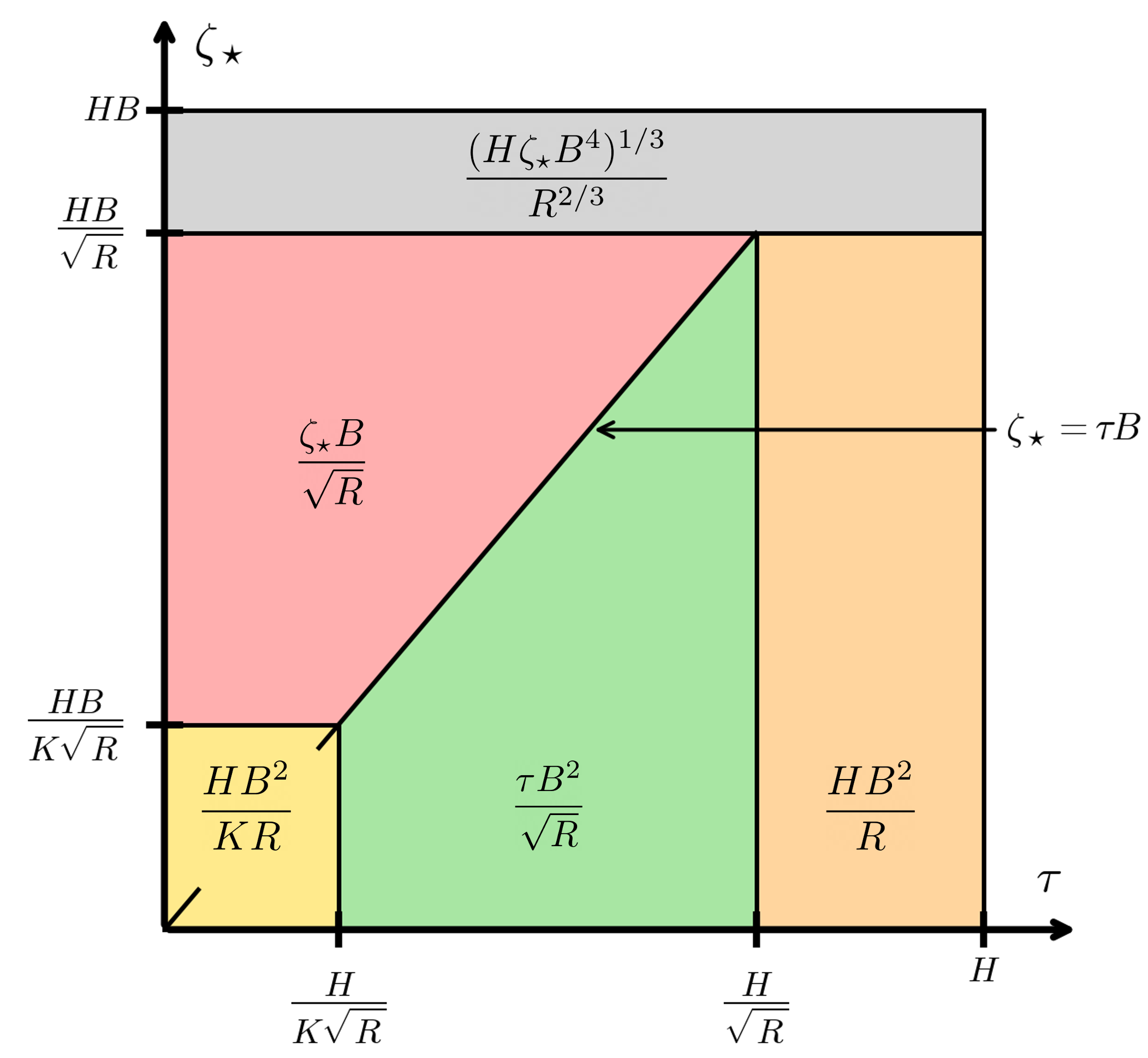}
			
		\end{tabular}
		\caption{Phase diagrams of the lower (left) and upper (right) bounds provided in this work: see \Cref{tab:rates} for the exact rates and \Cref{sec:upper_bound_diagram} and \ref{sec:lower_bound_diagram} for derivations of the diagrams. The merged phase diagram is reported in \Cref{fig:lower_vs_upper}.
		}
		\label{fig:lower_bound_upper_bound}
	\end{figure}

	\section{Phase Diagram: Comparison of Prior Lower and Upper Bounds}\label{app:phase_diagrams_prior_works}
	
	In this section, we provide phase diagrams in $\zeta_\star$-$\tau$ diagram for the previous upper bound of \citet{koloskova2020unified} and lower bound of \citet{patel2025revisiting}. Again, we consider the noiseless regime $\sigma=0$, and make a restriction $\zeta_\star\le HB$.
	
	In the aforementioned setting, the upper bound of \citet{koloskova2020unified} is 
	\[
	\ee[F(\hat{x}_t) -F^\star] \lesssim \frac{HB^2}{R} + \frac{(H\zeta_\star^2B^4)^{1/3}}{R^{2/3}},
	\]
	and the lower bound of \citet{patel2025revisiting} is 
	\[
	\ee[F(\hat{x}_t) - F^\star] \gtrsim \frac{HB^2}{KR}
	+ \frac{\tau B^2}{R} + \frac{(\tau \zeta_\star^2 B^4)^{1/3}}{R^{2/3}},
	\]
	for some $\hat{x}_t$.
	
	We use the following notation $x_0=\frac{\tau}{H},y_0 = \frac{\zeta_\star\sqrt{R}}{HB}$, and $\alpha=\frac{HB^2}{R}$, then the bounds become 
	\[
	U_{K} = \alpha(1+y_0^{2/3}), \quad \text{and} 
	\quad 
	L_{P} = \alpha(1/K + x_0 + x_0^{1/3}y_0^{2/3}).
	\]
	
	\subsection{Analysis of Dominant Term of the Upper Bound in \citet{koloskova2020unified}}\label{sec:upper_bound_koloskova_diagram}
	
	The phase diagram for the upper bound is simple since it consists of only two terms, and it does not depend on $\tau$ at all. The switching point between the two terms is when they are equal: 
	\[
	\alpha =\alpha y_0^{2/3}\Longleftrightarrow y_0 = 1.
	\]
	Therefore, we have the following phase diagram for the upper bound
	\[
	U_K(x_0,y_0) = \begin{cases}
		\alpha, &\text{if } y_0\le 1,\\
		\alpha y_0^{2/3}, &\text{if } y_0 \ge 1.
	\end{cases}
	\]
	In the standard notation $\zeta_\star$-$\tau$, we have 
	\[
	U_K(\tau,\zeta_\star) = \begin{cases}
		\frac{HB^2}{R}, &\text{if } \zeta_\star \le \frac{HB}{\sqrt{R}},\\
		\frac{(H\zeta_\star^2B^4)^{1/3}}{R^{2/3}}, &\text{if } \zeta_\star \ge \frac{HB}{\sqrt{R}}.
	\end{cases}
	\]
	
	\subsection{Analysis of Dominant Term of the Lower Bound in \citet{patel2025revisiting}}\label{sec:lower_bound_patel_diagram}
	
	\paragraph{Case 1: $0\le x_0 \le \frac{1}{K}, x_0y_0^2\le \frac{1}{K^3}$.}
	
	In this case, we have $x_0 \le \frac{1}{K}$ and $x_0^{1/3}y_0^{2/3} = (x_0y_0^2)^{1/3} \le \frac{1}{K}$. Therefore, we have that the lower bound satisfies
	\[
	\frac{\alpha}{K} \le L_P(x_0,y_0) = \frac{\alpha}{K} + \alpha x_0 + \alpha x_0^{1/3}y_0^{2/3} \le \frac{3\alpha}{K},
	\]
	i.e., $L_P \sim \frac{\alpha}{K}$ in this regime.
	
	\paragraph{Case 2: $\frac{1}{K}\le x_0, y_0\le x_0$.}
	
	In this regime, we have $x_0 \ge \frac{1}{K}$ and $x_0^{1/3}y_0^{2/3} \le x_0^{1/3} x_0^{2/3} = x_0$. Therefore, we have that the lower bound satisfies
	\[
	\alpha x_0 \le L_P(x_0,y_0) = \frac{\alpha}{K} + \alpha x_0 + \alpha x_0^{1/3}y_0^{2/3} \le 3\alpha x_0,
	\]
	i.e., $L_P \sim \alpha x_0$ in this regime.
	
	\paragraph{Case 3: $x_0 \le y_0, \frac{1}{K^3} \le x_0y_0^2$.}
	
	In this regime, we have that $x_0 \le x_0^{1/3}y_0^{2/3}$ and $\frac{1}{K} \le (x_0y_0^2)^{1/3} =x_0^{1/3}y_0^{2/3}$. Therefore, we have that the lower bound satisfies
	\[
	\alpha x_0y_0^{2/3} \le L_P(x_0,y_0) = \frac{\alpha}{K} + \alpha x_0 + \alpha x_0^{1/3}y_0^{2/3} \le 3\alpha x_0^{1/3}y_0^{2/3},
	\]
	i.e., $L_P \sim x_0^{1/3}y_0^{2/3}$. 
	
	\paragraph{Final split.}
	
	Summarizing the above 3 cases together, the lower bound has the following phase diagram
	\[
	L_P(x_0,y_0) = \begin{cases}
		\frac{\alpha}{K}, &\text{if } 0\le x_0\le \frac{1}{K}, x_0y_0^2 \le \frac{1}{K^3},\\
		\alpha x_0, &\text{if } \frac{1}{K} \le x_0, y_0 \le x_0,\\
		\alpha x_0^{1/3}y_0^{2/3}, &\text{if } x_0 \le y_0, \frac{1}{K^3} \le x_0y_0^2.
	\end{cases}
	\]
	In the original notation $\zeta_\star$-$\tau$, the diagram has the following form
	\[
	L_P(\tau,\zeta_\star) = \begin{cases}
		\frac{HB^2}{KR}, &\text{if } 0\le \tau \le \frac{H}{K}, \tau\zeta_\star^2 \le \frac{H^3B^2}{K^3R},\\
		\frac{\tau B^2}{R}, &\text{if } \frac{H}{K} \le \tau, \zeta_\star \le \frac{B\tau}{\sqrt{R}},\\
		\frac{(\tau\zeta_\star^2B^4)^{1/3}}{R^{2/3}}, &\text{if } \frac{B\tau}{\sqrt{R}} \le \zeta_\star, \frac{H^3B^2}{K^3R} \le \tau\zeta_\star^2.
	\end{cases}
	\]

\section{Fresh-sample SGD Lower Bound in the Heterogeneous Smoothness Regime}
\label{app:fresh_sample_sgd_shell_lower_bound}

In this appendix, we prove the lower bound stated informally in
\Cref{thm:fresh_sgd_heterogeneous_lb_informal}.  The proof has two independent
ingredients.  First, we construct a deterministic rare-curvature quadratic block
that gives the optimization lower bound \(\hat H B^2/(TM)\) while satisfying
\[
    H \cong \tau \cong \frac{\hat H}{\sqrt M}.
\]
Second, we add the standard stochastic and first-order heterogeneity lower-bound
block on orthogonal coordinates to obtain the noise term.

\subsection{A rare-curvature block}

We begin with the deterministic block.  Let \(M\ge2\), \(T\ge1\), and let
\(\lambda>0\).  Later we will set \(\hat H=M\lambda\), and the lower bound will
scale as \(\lambda B^2/T=\hat H B^2/(TM)\).

Let \(u,v\) be two orthonormal vectors.  Define the positive semidefinite
matrix \(A\) by
\[
    Au=\lambda u,
    \qquad
    Av=\mu v,
    \qquad
    \mu:=\frac{\lambda}{12T}.
\]
On this block, define
\[
    R_1(y):=\frac{M}{2}y^\top Ay,
    \qquad
    R_m(y):=0
    \quad\text{for }m=2,\ldots,M.
\]
Then the average objective is
\[
    R(y):=\frac1M\sum_{m=1}^M R_m(y)=\frac12 y^\top Ay.
\]
Thus \(R\) is \(\lambda\)-smooth and minimized at \(0\), while the largest
component smoothness on this block is \(M\lambda\).

The averaged second-order heterogeneity of this block is
\[
\begin{aligned}
\frac1M\sum_{m=1}^M
\norm{
    \bigl(\nabla R_m(y)-\nabla R_m(z)\bigr)
    -
    \bigl(\nabla R(y)-\nabla R(z)\bigr)
}^2
&=
(M-1)\norm{A(y-z)}^2 \\
&\le
(M-1)\lambda^2\norm{y-z}^2.
\end{aligned}
\]
Hence the block has second-order heterogeneity parameter
\[
    \tau_R=\sqrt{M-1}\,\lambda.
\]

We initialize the rare-curvature block at\footnote{Note that the assumption about bounded optima in \eqref{ass:bounded_optima} can without loss of generality be thought of as an assumption bounding $\norm{x_0-x^\star}$. This is standard in the literature.}
\[
    y_0:=\frac{B_R}{\sqrt2}(u+v),
\]
where \(B_R\) is the radius allocated to this block.  We now lower bound the
error of fresh-sample SGD with replacement on this block.

For an eigendirection of \(A\) with eigenvalue \(a\in\{\lambda,\mu\}\), define
the scalar coordinate
\[
    \theta_t:=\inner{y_t}{w},
\]
where \(w\in\{u,v\}\) is the corresponding eigenvector.  At each step, the
algorithm samples the unique nonzero component with probability \(1/M\).  If
that component is sampled, then the coordinate is multiplied by
\[
    q(a):=1-\eta M a;
\]
otherwise the coordinate is unchanged.

Let \(B_s\in\{0,1\}\) be the indicator that the nonzero component is sampled at
step \(s\), and define
\[
    S_t:=\sum_{s=0}^{t-1}B_s.
\]
Thus \(S_t\) is the number of times the nonzero component has been sampled
before iterate \(t\).  Since the samples are drawn with replacement, the
variables \(B_s\) are i.i.d. Bernoulli with mean \(1/M\).  Along an eigenvalue
\(a\), we have
\[
    \theta_t
    =
    \theta_0 q(a)^{S_t}.
\]
Therefore the coordinate of the averaged iterate
\[
    \bar y_N:=\frac1N\sum_{t=0}^{N-1}y_t,
    \qquad N:=TM,
\]
equals
\[
    \inner{\bar y_N}{w}
    =
    \theta_0 Z_N(q(a)),
    \qquad
    Z_N(q):=\frac1N\sum_{t=0}^{N-1}q^{S_t}.
\]

We need the following elementary estimate.

\begin{lemma}[Averaged multiplier lower bound]
\label{lem:fresh_replacement_multiplier_lb}
Let \(M\ge2\), \(T\ge1\), and \(N=MT\).  Let
\(B_0,\ldots,B_{N-1}\) be independent Bernoulli random variables with
\[
    \mathbb P(B_t=1)=\frac1M.
\]
Define
\[
    S_t:=\sum_{r=0}^{t-1}B_r,
    \qquad
    Z_N(q):=\frac1N\sum_{t=0}^{N-1}q^{S_t}.
\]
Then the following two claims hold.
\begin{enumerate}
    \item If $q\in[1-\nicefrac{1}{4T},1]$, then 
    \[
        \mathbb E[Z_N(q)^2]\ge c_1.
    \]
    \item If \(q\le -2\), then
    \[
        \mathbb E[Z_N(q)^2]\ge \frac{c_2}{T}.
    \]
\end{enumerate}
Here \(c_1,c_2>0\) are universal numerical constants.
\end{lemma}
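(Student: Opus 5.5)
Both claims come from keeping a single well-behaved term inside $Z_N(q)$. Because $Z_N(q)^2$ is nonnegative, one good event with constant probability is enough.

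\textbf{Claim 1} ($q\in[1-\nicefrac{1}{4T},1]$). Here $q\ge 0$, so every summand $q^{S_t}$ is nonnegative and, for every $t\le N-1$,
\[
q^{S_t}\ge q^{S_N}.
\]
This gives the pathwise bound $Z_N(q)\ge q^{S_N}$. By Jensen's inequality (convexity of $x\mapsto x^2$),
\[
\mathbb E[Z_N^2]\ge \mathbb E[q^{S_N}]^2.
\]
The expectation is the generating function of a binomial variable:
\[
\mathbb E[q^{S_N}]=\Bigl(1-\tfrac{1-q}{M}\Bigr)^{N}\ge \Bigl(1-\tfrac{1}{4TM}\Bigr)^{TM}\ge \tfrac34.
\]
The last step is Bernoulli's inequality. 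We can therefore take $c_1=9/16$.

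\textbf{Claim 2} ($q\le-2$). The summands now alternate in sign and grow, so averaging is useless. The plan is to isolate the last index $t=N-1$ (more precisely, the block of indices after the last sampling event) and show it dominates.

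Let $s=S_{N-1}$. Every $t$ with $S_t=s$ contributes $q^{s}$. Every earlier $t$ has $S_t\le s-1$, and there are at most $N$ such $t$. Using $|q|\ge2$, their total satisfies
\[
\Bigl|\sum_{t:S_t<s}q^{S_t}\Bigr|\le \sum_{j=0}^{s-1}n_j|q|^{j},
\]
where $n_j$ is the number of $t$ with $S_t=j$.

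The crude bound $n_j\le N$ would make the earlier terms comparable to $N|q|^{s-1}$, which is far too large. Instead, condition on the event that the last block is long: let $L$ be the number of $t\le N-1$ with $S_t=s$. On the event $\{L\ge M\}$, the last block alone contributes $L|q|^s$.

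To control the earlier blocks, also condition on the event $E$ that the previous block has length $n_{s-1}\le L/4$, and more generally that the block lengths do not grow too fast. Together with $\sum_{j<s}|q|^j\le |q|^{s-1}\cdot\frac{|q|}{|q|-1}$, this should give
\[
|Z_N|\gtrsim \frac{L|q|^s}{N}\gtrsim \frac{1}{T}.
\]

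This is the main obstacle: the block lengths are geometric with mean $M$, and I need a constant-probability event on which the last block dominates a weighted sum of the earlier ones. My first attempt is the simpler event that the last block has length $\ge M$ and each earlier block $j$ has length at most $M\cdot(3/2)^{s-1-j}$. A union bound over geometric tails gives this a constant probability. It also yields
\[
\sum_{j<s}n_j|q|^j\le M|q|^{s-1}\sum_{i\ge0}(3/4)^i\le 4M|q|^{s-1}\le 2M|q|^s.
\]
That is still too big, so I would refine the event to require the last block to have length $\ge 4M$ (probability about $e^{-4}$, still constant).

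Since $Z_N^2\ge 0$ everywhere, on this event $|Z_N|\ge \frac{2M|q|^s}{N}\ge \frac{2}{T}$. Multiplying $Z_N^2$ on the event by its constant probability gives $c_2$.

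Two technicalities remain. First, the last block is truncated at $N-1$; I would handle this by lower-bounding the probability that no sample occurs in the final $4M$ steps, which is about $(1-1/M)^{4M}\ge c$. Second, I need independence between that window and the earlier block lengths, which holds because the events depend on disjoint sets of Bernoulli variables.
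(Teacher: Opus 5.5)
Your Claim 1 is correct and is essentially the paper's argument: you bound $Z_N\ge q^{S_N}$ pathwise, the paper bounds each $\mathbb{E}[q^{S_t}]\ge(1-\frac{1}{4N})^N$, and both finish with Jensen. For Claim 2 you take a different route from the paper. The paper computes $\mathrm{Var}(Z_N(q))$ exactly through the Doob martingale $\mathbb{E}[Z_N\mid\mathcal F_s]$, drops the factor $\alpha^s\ge 1$, and lower bounds the remaining geometric sums. Your pathwise ``last block dominates'' idea is viable, but the last step as written does not prove the claim. You conclude $|Z_N|\ge 2/T$ on an event $E$ of constant probability. That only yields $\mathbb{E}[Z_N^2]\ge 4\,\mathbb{P}(E)/T^2$, which is smaller than the target $c_2/T$ by a factor of $T$. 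To reach $c_2/T$ you need $|Z_N|\gtrsim 1/\sqrt{T}$ on $E$, and you discarded exactly the quantities that supply it when you bounded $|q|^s\ge 1$ and $L-2M\ge 2M$. There is also an edge case you did not treat: for $T\le 3$ the event $\{L\ge 4M\}$ is empty, since $L\le N=MT<4M$.

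Both problems can be repaired inside your framework.
\begin{itemize}
\item On $E$ the earlier blocks are short but must fill $N-L=\sum_{j<s}n_j\le M\sum_{i<s}(3/2)^i<2M(3/2)^s$ steps.
\item If $L\ge N/2$, then $|NZ_N|\ge (L-2M)|q|^s\ge N/2-2M\ge N/4$ for $T\ge 8$.
\item If $L<N/2$, then $(3/2)^s>T/4$, so $|NZ_N|\ge 2M|q|^s\ge 2M(3/2)^s>N/2$.
\item Either way $|Z_N|\ge 1/4$ on $E$, so $\mathbb{E}[Z_N^2]\ge \mathbb{P}(E)/16$ for $T\ge 8$.
\item For $T\le 7$, use instead the event $\{B_0=\cdots=B_{N-2}=0\}$. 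On it $Z_N=1$, and its probability is $(1-1/M)^{N-1}\ge 4^{-T}\ge 4^{-7}$.
\end{itemize}
This gives a $T$-independent lower bound, which is stronger than $c_2/T$. That is consistent with the factor $\alpha^s$ the paper discards, which makes the second moment grow exponentially.

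When you write this up, make $\mathbb{P}(E)$ explicit. First, $\mathbb{P}(L\ge 4M)\ge(1-1/M)^{4M}\ge 4^{-4}$. Second, the union-bound sum $\sum_{i\ge0}(1-1/M)^{\lfloor M(3/2)^i\rfloor}$ stays below $3/4$ for every $M\ge 2$. For independence, condition on no sample in the final $4M$ steps, then read the inter-sample gaps of $B_0,\dots,B_{N-4M-1}$ backward as i.i.d.\ geometric variables, with the earliest one truncated.
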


\begin{proof}
We prove the two claims separately.

\paragraph{Case 1: \(q\in[1-\frac1{4T},1]\).}
Since \(q\in[0,1]\), we have
\[
    \mathbb E[q^{S_t}]
    =
    \left(1-\frac{1-q}{M}\right)^t.
\]
Moreover,
\[
    1-q\le \frac1{4T},
\]
and hence
\[
    1-\frac{1-q}{M}
    \ge
    1-\frac1{4TM}
    =
    1-\frac1{4N}.
\]
Therefore, for every \(t\in\{0,\ldots,N-1\}\),
\[
    \mathbb E[q^{S_t}]
    \ge
    \left(1-\frac1{4N}\right)^N
    \ge c
\]
for a universal constant \(c>0\).  Thus
\[
    \mathbb E[Z_N(q)]
    =
    \frac1N\sum_{t=0}^{N-1}\mathbb E[q^{S_t}]
    \ge c.
\]
By Jensen's inequality,
\[
    \mathbb E[Z_N(q)^2]
    \ge
    \bigl(\mathbb E[Z_N(q)]\bigr)^2
    \ge c^2.
\]
This proves the first claim.

\paragraph{Case 2: \(q\le -2\).}
Let
\[
    Y_t:=q^{B_t}.
\]
Then
\[
    q^{S_t}=\prod_{r=0}^{t-1}Y_r.
\]
Define
\[
    P_t:=q^{S_t}
    =
    \prod_{r=0}^{t-1}Y_r,
    \qquad
    A_N:=\sum_{t=0}^{N-1}P_t.
\]
Thus
\[
    Z_N(q)=\frac{A_N}{N}.
\]

Let
\[
    \beta:=\mathbb E[Y_t]
    =
    1+\frac{q-1}{M},
    \qquad
    \alpha:=\mathbb E[Y_t^2]
    =
    1+\frac{q^2-1}{M}.
\]
Also let
\[
    v:=\operatorname{Var}(Y_t).
\]
Since \(Y_t=1\) with probability \(1-\frac1M\) and \(Y_t=q\) with probability
\(\frac1M\), we have
\[
    v
    =
    \frac1M\left(1-\frac1M\right)(q-1)^2.
\]

Let \(\mathcal F_s:=\sigma(Y_0,\ldots,Y_{s-1})\), where
\(Y_r:=q^{B_r}\), and recall that
\[
    P_t=q^{S_t}=\prod_{r=0}^{t-1}Y_r,
    \qquad
    Z_N(q)=\frac1N\sum_{t=0}^{N-1}P_t.
\]

For \(s=0,\ldots,N-2\), set
\[
    D_s
    :=
    \mathbb E[Z_N(q)\mid\mathcal F_{s+1}]
    -
    \mathbb E[Z_N(q)\mid\mathcal F_s].
\]
We first compute \(D_s\).  The variable \(Y_s\) can affect only those products
\(P_t\) with \(t\ge s+1\).  For such \(t\),
\[
    P_t
    =
    P_sY_s\prod_{r=s+1}^{t-1}Y_r.
\]
Conditioning on \(\mathcal F_{s+1}\), the future variables
\(Y_{s+1},\ldots,Y_{t-1}\) remain independent and have mean \(\beta\), so
\[
    \mathbb E[P_t\mid\mathcal F_{s+1}]
    =
    P_sY_s\beta^{t-s-1}.
\]
Conditioning only on \(\mathcal F_s\), the variable \(Y_s\) has not yet been
revealed and has mean \(\beta\), hence
\[
    \mathbb E[P_t\mid\mathcal F_s]
    =
    P_s\beta^{t-s}.
\]
Therefore,
\[
    \mathbb E[P_t\mid\mathcal F_{s+1}]
    -
    \mathbb E[P_t\mid\mathcal F_s]
    =
    P_s(Y_s-\beta)\beta^{t-s-1}.
\]
The terms \(P_t\) with \(t\le s\) do not depend on \(Y_s\), so they cancel in
the difference of conditional expectations.  Thus
\[
\begin{aligned}
    D_s
    &=
    \frac1N\sum_{t=s+1}^{N-1}
    P_s(Y_s-\beta)\beta^{t-s-1}  \\
    &=
    \frac{P_s(Y_s-\beta)}{N}
    \sum_{r=0}^{N-s-2}\beta^r .
\end{aligned}
\]
Since \(P_s\) is \(\mathcal F_s\)-measurable and \(Y_s\) is independent of
\(\mathcal F_s\),
\[
\begin{aligned}
    \mathbb E[D_s^2]
    &=
    \frac{\mathbb E[P_s^2]\operatorname{Var}(Y_s)}{N^2}
    \left(\sum_{r=0}^{N-s-2}\beta^r\right)^2  \\
    &=
    \frac{\alpha^s v}{N^2}
    \left(\sum_{r=0}^{N-s-2}\beta^r\right)^2 .
\end{aligned}
\]

To relate these increments to the variance of \(Z_N(q)\), define
\[
    M_s:=\mathbb E[Z_N(q)\mid\mathcal F_s],
    \qquad s=0,\ldots,N.
\]
This is the Doob martingale obtained by revealing the variables
\(Y_0,Y_1,\ldots,Y_{N-1}\) one at a time.  Indeed, \(M_s\) is
\(\mathcal F_s\)-measurable, and by the tower property,
\[
    \mathbb E[M_{s+1}\mid\mathcal F_s]
    =
    \mathbb E\!\left[
        \mathbb E[Z_N(q)\mid\mathcal F_{s+1}]
        \mid \mathcal F_s
    \right]
    =
    \mathbb E[Z_N(q)\mid\mathcal F_s]
    =
    M_s.
\]
Thus \((M_s)_{s=0}^N\) is a martingale.  By the definition of \(D_s\),
\[
    D_s
    =
    \mathbb E[Z_N(q)\mid\mathcal F_{s+1}]
    -
    \mathbb E[Z_N(q)\mid\mathcal F_s]
    =
    M_{s+1}-M_s.
\]
Hence
\(\mathbb E[D_s\mid\mathcal F_s]=0\).  If \(r<s\), then \(D_r\) is
\(\mathcal F_s\)-measurable, so
\[
    \mathbb E[D_rD_s]
    =
    \mathbb E\!\left[D_r\,\mathbb E[D_s\mid\mathcal F_s]\right]
    =
    0.
\]
Thus the \(D_s\)'s are pairwise orthogonal.  Also, \(D_{N-1}=0\), since
\(Z_N(q)\) depends only on \(Y_0,\ldots,Y_{N-2}\).  Therefore
\[
    Z_N(q)-\mathbb E[Z_N(q)]
    =
    \sum_{s=0}^{N-2}D_s,
\]
and by orthogonality,
\[
    \operatorname{Var}(Z_N(q))
    =
    \sum_{s=0}^{N-2}\mathbb E[D_s^2].
\]
Combining this identity with the formula for \(\mathbb E[D_s^2]\) gives
\[
    \operatorname{Var}(Z_N(q))
    =
    \frac{v}{N^2}
    \sum_{s=0}^{N-2}
    \alpha^s
    \left(\sum_{r=0}^{N-s-2}\beta^r\right)^2.
\]
Since \(\alpha\ge1\), we can drop the factor \(\alpha^s\) and re-index
\(\ell=N-s-2\) to obtain
\[
    \operatorname{Var}(Z_N(q))
    \ge
    \frac{v}{N^2}
    \sum_{\ell=0}^{N-2}
    \left(\sum_{r=0}^{\ell}\beta^r\right)^2.
\]

We now lower bound the last sum.  Define
\[
    \delta:=1-\beta=\frac{1-q}{M}.
\]
Since \(q\le -2\), we have
\[
    \delta\ge\frac3M.
\]
Also, because \(N=MT\),
\[
    N\delta\ge 3T\ge3.
\]

We claim that
\[
    \delta^2
    \sum_{\ell=0}^{N-2}
    \left(\sum_{r=0}^{\ell}\beta^r\right)^2
    \ge cN
\]
for a universal constant \(c>0\).  To prove the claim, consider two cases.

If \(0\le\beta<1\), then
\[
    \sum_{r=0}^{\ell}\beta^r
    =
    \frac{1-\beta^{\ell+1}}{1-\beta}
    =
    \frac{1-\beta^{\ell+1}}{\delta}.
\]
For all \(\ell+1\ge1/\delta\), we have
\[
    \beta^{\ell+1}
    \le
    e^{-\delta(\ell+1)}
    \le
    e^{-1}.
\]
Thus
\[
    \delta
    \sum_{r=0}^{\ell}\beta^r
    \ge
    1-e^{-1}.
\]
Since \(\delta\ge3/M\), we have
\[
    \frac1\delta\le \frac M3\le \frac N3.
\]
Hence a constant fraction of the indices \(\ell\in\{0,\ldots,N-2\}\) satisfy
\(\ell+1\ge1/\delta\), and the claimed bound follows.

If \(\beta<0\), then for every even \(\ell\), the integer \(\ell+1\) is odd,
so \(\beta^{\ell+1}<0\).  Therefore
\[
    \sum_{r=0}^{\ell}\beta^r
    =
    \frac{1-\beta^{\ell+1}}{1-\beta}
    \ge
    \frac1\delta.
\]
At least a constant fraction of the indices
\(\ell\in\{0,\ldots,N-2\}\) are even, and again
\[
    \delta^2
    \sum_{\ell=0}^{N-2}
    \left(\sum_{r=0}^{\ell}\beta^r\right)^2
    \ge cN.
\]
This proves the claim.

It remains to lower bound \(v\).  Since
\[
    q-1=-M\delta,
\]
we have
\[
\begin{aligned}
    v
    &=
    \frac1M\left(1-\frac1M\right)(q-1)^2  \\
    &=
    (M-1)\delta^2
    \ge
    \frac{M}{2}\delta^2,
\end{aligned}
\]
where we used \(M\ge2\).  Combining this with the claim gives
\[
\begin{aligned}
    \operatorname{Var}(Z_N(q))
    &\ge
    \frac{v}{N^2}
    \sum_{\ell=0}^{N-2}
    \left(\sum_{r=0}^{\ell}\beta^r\right)^2  \\
    &\ge
    \frac{cM\delta^2}{N^2}
    \cdot
    \frac{N}{\delta^2}
    =
    \frac{cM}{N}
    =
    \frac{c}{T}.
\end{aligned}
\]
Finally,
\[
    \mathbb E[Z_N(q)^2]
    \ge
    \operatorname{Var}(Z_N(q))
    \ge
    \frac{c}{T}.
\]
This proves the second claim and completes the proof.
\end{proof}

We now prove the lower bound for the rare-curvature block.

\begin{lemma}[Rare-curvature lower bound for fresh-sample SGD]
\label{lem:fresh_replacement_rare_block_lb}
For the block \(R\) defined above, for every constant stepsize \(\eta>0\),
fresh-sample SGD with replacement satisfies
\[
    \mathbb E\bigl[R(\bar y_N)-R(0)\bigr]
    \ge
    c\frac{\lambda B_R^2}{T},
    \qquad N=TM,
\]
where \(c>0\) is a universal constant.
\end{lemma}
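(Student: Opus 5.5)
The proof splits on the stepsize and, in each regime, reads off the error along whichever eigendirection of \(A\) cannot contract fast enough. Since \(R(\bar y_N)-R(0)=\tfrac12\bar y_N^\top A\bar y_N\) and \(u,v\) are orthonormal eigenvectors, we have
\[
    R(\bar y_N)-R(0)=\tfrac12\lambda\inner{\bar y_N}{u}^2+\tfrac12\mu\inner{\bar y_N}{v}^2 .
\]
With \(\theta_0=B_R/\sqrt2\) along both directions and \(\inner{\bar y_N}{w}=\theta_0 Z_N(q(a))\), this gives
\[
    \mathbb E[R(\bar y_N)-R(0)]\ge \tfrac{B_R^2}{4}\max\bigl\{\lambda\,\mathbb E[Z_N(q(\lambda))^2],\ \mu\,\mathbb E[Z_N(q(\mu))^2]\bigr\}.
\]
It therefore suffices to show that, for every \(\eta>0\), one of the two terms is \(\gtrsim \lambda/T\). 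The two cases of \Cref{lem:fresh_replacement_multiplier_lb} are tailored to exactly this.

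\textbf{Case 1: \(\eta M\lambda\ge 3\).} Then \(q(\lambda)=1-\eta M\lambda\le -2\), so claim 2 of \Cref{lem:fresh_replacement_multiplier_lb} gives \(\mathbb E[Z_N(q(\lambda))^2]\ge c_2/T\). The high-curvature term alone is then at least \(\tfrac{c_2}{4}\lambda B_R^2/T\). The mechanism is that the rare client's step overshoots, and the averaged iterate inherits variance of order \(1/T\).

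\textbf{Case 2: \(\eta M\lambda<3\).} Then \(\eta M\mu<3\mu/\lambda=\tfrac{1}{4T}\), so \(q(\mu)\in[1-\tfrac1{4T},1]\). Claim 1 gives \(\mathbb E[Z_N(q(\mu))^2]\ge c_1\), and the low-curvature term is at least \(\tfrac{c_1}{4}\mu B_R^2=\tfrac{c_1}{48}\lambda B_R^2/T\). The choice \(\mu=\lambda/(12T)\) is precisely what makes the threshold \(3\) in Case 1 match the threshold \(1/(4T)\) in Case 2, so the two cases cover every \(\eta>0\) with no gap.

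Taking \(c=\min\{c_2/4,\,c_1/48\}\) proves the lemma. Essentially all the difficulty sits in \Cref{lem:fresh_replacement_multiplier_lb}, which is assumed here. The remaining care points are routine: confirm the threshold arithmetic (for instance \(3\mu/\lambda=1/(4T)\)), and note that the coordinate formula \(\theta_t=\theta_0 q(a)^{S_t}\) holds because the unique nonzero client \(R_1=\tfrac M2 y^\top Ay\) has gradient \(MAy\), so each of its steps multiplies the \(w\)-coordinate by \(1-\eta Ma\).
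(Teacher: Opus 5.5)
Your proposal is correct and follows the paper's proof essentially verbatim. You use the same split at $\eta M\lambda=3$: claim 2 of \Cref{lem:fresh_replacement_multiplier_lb} on the $u$-direction when $q(\lambda)\le -2$, and claim 1 on the $v$-direction when $q(\mu)\in[1-\tfrac{1}{4T},1]$, with $\mu=\lambda/(12T)$ making the two regimes exhaustive. Your explicit constants ($c_2/4$ and $c_1/48$) and your check that the rare client's gradient $MAy$ gives the multiplier $1-\eta M a$ only make explicit what the paper leaves implicit.
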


\begin{proof}
We split into two stepsize regimes.

First suppose that \(\eta M\lambda<3\).  Consider the low-curvature direction
\(v\), whose eigenvalue is \(\mu=\lambda/(12T)\).  Then
\[
    1-\eta M\mu
    >
    1-\frac1{4T}.
\]
Thus \(q(\mu)\in[1-\frac1{4T},1]\), and
\Cref{lem:fresh_replacement_multiplier_lb} gives
\[
    \mathbb E[Z_N(q(\mu))^2]\ge c.
\]
The initial coordinate in the \(v\)-direction is \(B_R/\sqrt2\).  Hence
\[
\begin{aligned}
    \mathbb E[R(\bar y_N)-R(0)]
    &\ge
    \frac{\mu}{2}\mathbb E[\inner{\bar y_N}{v}^2] \\
    &=
    \frac{\mu}{2}\cdot \frac{B_R^2}{2}
    \mathbb E[Z_N(q(\mu))^2] \\
    &\ge
    c\mu B_R^2
    =
    c\frac{\lambda B_R^2}{T}.
\end{aligned}
\]

Now suppose that \(\eta M\lambda\ge3\).  Consider the high-curvature direction
\(u\).  Then
\[
    q(\lambda)=1-\eta M\lambda\le -2.
\]
By \Cref{lem:fresh_replacement_multiplier_lb},
\[
    \mathbb E[Z_N(q(\lambda))^2]\ge \frac{c}{T}.
\]
Since the initial \(u\)-coordinate is \(B_R/\sqrt2\), we get
\[
\begin{aligned}
    \mathbb E[R(\bar y_N)-R(0)]
    &\ge
    \frac{\lambda}{2}\mathbb E[\inner{\bar y_N}{u}^2] \\
    &=
    \frac{\lambda}{2}\cdot \frac{B_R^2}{2}
    \mathbb E[Z_N(q(\lambda))^2] \\
    &\ge
    c\frac{\lambda B_R^2}{T}.
\end{aligned}
\]
The two regimes cover all \(\eta>0\), proving the claim.
\end{proof}

\subsection{Calibration of the parameters}

We now calibrate the block to the heterogeneous smoothness regime.  Let
\[
    \lambda:=\frac{\hat H}{M}.
\]
Then the rare block has maximum component smoothness \(\hat H\), average-objective
smoothness \(\lambda=\hat H/M\), and second-order heterogeneity
\[
    \tau_R=\sqrt{M-1}\lambda
    \cong
    \frac{\hat H}{\sqrt M}.
\]
In the regime of \Cref{thm:fresh_sgd_heterogeneous_lb_informal}, we assume
\[
    H\cong \tau\cong \frac{\hat H}{\sqrt M}.
\]
Since \(\lambda=\hat H/M\le H\), the rare block is smoother than the allowed
average-objective smoothness \(H\).

If one wants the full instance to have average-objective smoothness exactly of
order \(H\), we add an orthogonal calibration block
\[
    C_m(s):=\frac{H}{2}s^2,
    \qquad m=1,\ldots,M,
\]
and initialize this coordinate at its minimizer \(s=0\).  This block makes the
average objective \(H\)-smooth, but contributes no error because its gradient
is identically zero along the trajectory.  It also does not change
\(\tau\), since it is identical across clients.  The maximum client smoothness
of the full instance remains of order \(\hat H\).

\subsection{Adding the noise block}

We also add an orthogonal standard stochastic/first-order-heterogeneity block.
We use the following classical lower bound.

\begin{lemma}[Standard stochastic and first-order heterogeneity lower bound]
\label{lem:standard_noise_block}
For every \(N\ge1\), \(B>0\), and \(\sigma,\zeta_\star\ge0\), there exists a
convex stochastic optimization instance with optimum norm at most \(B\), within-client
noise level at most \(\sigma\), and first-order heterogeneity at the optimum
at most \(\zeta_\star\), such that any fresh-sample SGD method using \(N\)
samples and outputting an averaged iterate satisfies
\[
    \mathbb E[F_{\mathrm{noise}}(\bar x_N)-F_{\mathrm{noise}}(x^\star)]
    \ge
    c\frac{(\sigma+\zeta_\star)B}{\sqrt N}.
\]
\end{lemma}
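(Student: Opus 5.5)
The plan is to prove the bound by exhibiting an explicit one-dimensional instance and then splitting into step-size regimes, much as the paper does for the rare-curvature block in \Cref{lem:fresh_replacement_rare_block_lb}. We handle the two sources of randomness separately and then merge them. For the noise part we take every client objective to be the linear-plus-quadratic function
\[
F_m(x)=\frac{\mu}{2}x^2 + \xi_m x .
\]
Here the $\xi_m=\pm\zeta_\star$ are signed shifts that average to zero, which gives first-order heterogeneity $\zeta_\star$ at the optimum $x^\star=0$ while keeping $\tau=0$, since all clients share the Hessian $\mu$. The within-client noise is additive, $\nabla f(x;z)=\nabla F_m(x)+\sigma z$ with $z$ Rademacher. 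The combined effective noise at the optimum then has second moment $\sigma^2+\zeta_\star^2\cong(\sigma+\zeta_\star)^2$. We write $s:=\sigma+\zeta_\star$ for this effective noise scale. The curvature is chosen as
\[
\mu\cong \frac{s}{B\sqrt N}\wedge H ,
\]
and we initialize at distance $B$ from the optimum. Because $F_m$ is quadratic with a common Hessian, the only effect of heterogeneity is an additive zero-mean perturbation $\eta(\xi_{m_t}+\sigma z_t)$ at each step, and these perturbations are i.i.d. across steps.

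Next we write the averaged iterate in closed form. With $e_t:=x_t-x^\star$ the recursion is $e_{t+1}=(1-\eta\mu)e_t-\eta w_t$, where the $w_t$ are i.i.d. with variance $\gtrsim s^2$. Unrolling it gives
\[
\bar e_N=\frac{1}{N}\sum_t (1-\eta\mu)^t e_0-\frac{\eta}{N}\sum_{s'} w_{s'}\sum_{t>s'}(1-\eta\mu)^{t-s'-1}.
\]
The deterministic bias term and the zero-mean noise term are orthogonal in expectation. We then split on the size of $\eta\mu N$.

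\textbf{Small step size, $\eta\mu N\le 1$.} Every factor $(1-\eta\mu)^t$ stays at least $1/e$, so the bias term is $\gtrsim B$. This gives suboptimality $\gtrsim \mu B^2\cong sB/\sqrt N$.

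\textbf{Large step size, $\eta\mu N\ge 1$ (and $\eta\mu\le 1$).} Here the inner geometric sums are $\gtrsim \min\{N-s',1/(\eta\mu)\}$ for a constant fraction of the indices $s'$. The variance of $\bar e_N$ is then $\gtrsim \frac{\eta^2}{N^2}\cdot N s^2/(\eta\mu)^2=s^2/(\mu^2N)$, so $\frac{\mu}{2}\mathbb E\bar e_N^2\gtrsim s^2/(\mu N)\ge sB/\sqrt N$ by the choice of $\mu$.

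\textbf{Divergent step size, $\eta\mu\ge 2$.} The bias grows geometrically, which already gives a lower bound of at least $\mu B^2$.

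In the corner case $\mu=H$, the term $sB/\sqrt N$ exceeds $HB^2$. There we use the fact that the noise term alone forces $\mathbb E\bar e^2\gtrsim \eta^2 s^2$ or the bias is large, and the resulting bound is the one consistent with the classical Nemirovski--Yudin statement. If needed, this corner is instead handled by the standard minimax lower bound for stochastic convex optimization with $N$ samples, quoted in its usual form. That quotation is legitimate because the lemma is labelled classical.

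The main obstacle is the large-step-size regime, where the geometric weights in the noise sum can oscillate when $1-\eta\mu<0$. I would handle it exactly as in Case 2 of \Cref{lem:fresh_replacement_multiplier_lb}, by lower bounding the squared partial geometric sums over even indices. The argument applies because, with $\eta\mu\le 2$, the weights remain bounded below in magnitude on a constant fraction of indices.
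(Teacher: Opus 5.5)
Your instance (a common quadratic, client shifts $\xi_m=\pm\zeta_\star$, independent additive noise) is exactly what the paper's one-line justification has in mind. The paper gives no further proof and simply appeals to the classical mean-estimation bound, so an explicit SGD computation is a reasonable route, and your cases $\eta\mu N\le1$ and $1/N\le\eta\mu\le1$ are correct. The divergent case $\eta\mu\ge2$, however, fails as written. The output is the averaged iterate, whose bias is $\frac{e_0}{N}\sum_{t=0}^{N-1}q^t$ with $q=1-\eta\mu$. This vanishes at $q=-1$ for even $N$, and it is only of order $\epsilon B$ for $q=-1-\epsilon$ with $N\epsilon\ll1$. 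So ``the bias grows geometrically'' gives no bound near $\eta\mu=2$. Use the noise term on the whole range $\eta\mu\ge1$ instead. For every $q\le0$ and every even $\ell$, $\sum_{j=0}^{\ell}q^j=\frac{1+|q|^{\ell+1}}{1+|q|}\ge\frac12$, and no restriction $\eta\mu\le2$ is needed. Hence for $N\ge2$ you get $\mathrm{Var}(\bar e_N)\gtrsim\eta^2s^2/N\ge s^2/(\mu^2N)$, and therefore $\frac{\mu}{2}\mathbb{E}[\bar e_N^2]\gtrsim s^2/(\mu N)$. The case $N=1$ is trivial, since $\bar x_1=x_0$.

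The corner case $\mu=H$ cannot be patched by quoting a minimax theorem, because there the claimed bound is false. For any convex $H$-smooth instance with $\|x_0-x^\star\|\le B$, one has $F(x_0)-F(x^\star)\le\frac H2B^2$. Constant-step SGD with $\eta\to0$ has $\bar x_N\to x_0$, so its error tends to at most $\frac H2B^2$. This is below $c\,sB/\sqrt N$ once $sB/\sqrt N\gg HB^2$. Your dichotomy (either the noise gives $\eta^2s^2$ or the bias is large) only yields $\min\{HB^2,\cdot\}$, and the same holds for the classical smooth lower bound. The cap $\wedge H$ is self-inflicted: the lemma as stated imposes no smoothness, so set $\mu=s/(B\sqrt N)$ unconditionally and the corner disappears. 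If you want the block to be $H$-smooth, as the surrounding theorem requires, you must either add the hypothesis $sB/\sqrt N\lesssim HB^2$ or weaken the conclusion to $\min\{HB^2,\,sB/\sqrt N\}$.
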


This is the standard mean-estimation lower bound; the \(\zeta_\star\) term is
obtained by taking client-dependent linear perturbations at the optimum, and
the \(\sigma\) term is obtained by adding independent zero-mean stochastic
gradient noise.

\subsection{Proof of the theorem}

\begin{theorem}[Fresh-sample SGD lower bound in the heterogeneous smoothness regime]
\label{thm:fresh_sgd_heterogeneous_lb_formal}
Fix \(M,T\ge2\), \(B>0\), and parameters satisfying
\[
    \tau\cong H
    \cong
    \frac{\hat H}{\sqrt M}.
\]
There exists an instance in
\(\ppp_{\zeta_\star,\tau}^{H,\hat H,B,\sigma}\) such that, for every constant
stepsize \(\eta>0\), fresh-sample SGD with replacement satisfies
\[
    \mathbb E\bigl[F(\bar x_{TM})-F(x^\star)\bigr]
    \ge
    c\left(
        \frac{\hat H B^2}{TM}
        +
        \frac{(\sigma+\zeta_\star)B}{\sqrt{TM}}
    \right).
\]
\end{theorem}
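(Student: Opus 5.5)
We would build the hard instance as an orthogonal direct sum of three blocks: the rare-curvature block \(R\) from \Cref{lem:fresh_replacement_rare_block_lb}, the calibration block \(C_m(s)=\frac H2 s^2\), and the noise block of \Cref{lem:standard_noise_block}. Each block receives its own radius, say \(B_R=B_N=B/2\), so the optimum has norm at most \(B\). Because the blocks live on orthogonal coordinates, the stochastic gradient factors across blocks. Fresh-sample SGD with replacement therefore runs independently on each block, sharing only the sampled client index \(m_t\) and the step-size \(\eta\). The suboptimality of the averaged iterate is the sum of the blockwise suboptimalities, each nonnegative. Hence it suffices to lower bound each block separately, uniformly over all \(\eta>0\), and add the two bounds. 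Lower bounding a sum by each of its terms then gives \(\gtrsim\max\{\cdot,\cdot\}\cong\) the sum.

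\textbf{Membership in \(\ppp_{\zeta_\star,\tau}^{H,\hat H,B,\sigma}\).} We verify the parameters block by block and combine them.

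\emph{Smoothness.} Set \(\lambda=\hat H/M\). Client \(1\) has smoothness \(M\lambda=\hat H\) on the rare block, and \(H\le\hat H\) on the calibration block. The noise block can be taken linear (or \(H\)-smooth), so the maximal client smoothness is of order \(\hat H\). The average objective has smoothness \(\max\{\lambda,H\}=H\), since \(\lambda=\hat H/M\lesssim \hat H/\sqrt M\cong H\).

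\emph{Second-order heterogeneity.} Squared deviations add across orthogonal coordinates. The calibration block is identical across clients and contributes zero. The noise block's client-dependent linear perturbations also contribute zero. The rare block contributes \(\tau_R=\sqrt{M-1}\lambda\le \hat H/\sqrt M\cong\tau\), so after adjusting numerical constants in the "\(\cong\)" relations the instance satisfies \Cref{ass:tau} with \(\tau\).

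\emph{Heterogeneity at the optimum and noise.} At the optimum the rare block has zero gradient on every client, because its minimizer is \(0\). The calibration block is at its minimizer. So only the noise block contributes to \Cref{ass:zeta_optimal}, and it is built to give at most \(\zeta_\star\). The noise variance is likewise at most \(\sigma\).

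\emph{Bounded optimum.} The radii sum to at most \(B\).

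\textbf{Lower bounds.} \Cref{lem:fresh_replacement_rare_block_lb} gives \(\gtrsim \lambda B_R^2/T=\hat H B^2/(4TM)\). \Cref{lem:standard_noise_block}, with \(N=TM\), gives \(\gtrsim(\sigma+\zeta_\star)B/\sqrt{TM}\). The calibration block is initialized at its minimizer and stays there, contributing \(0\).

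\textbf{Main obstacle.} We expect the difficulty to be making the decoupling rigorous: \Cref{lem:standard_noise_block} must apply to fresh-sample SGD run jointly with the other blocks. The samples \(m_t\) are shared, but on the noise block the dynamics are exactly single-block SGD with the same \(\eta\) and the same sample sequence. So we would state and apply that lemma for this specific SGD recursion rather than for general methods.

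Concretely, we would take a one-dimensional quadratic \(\frac{\mu}{2}(w-w^\star)^2\) with client shifts \(\pm\zeta_\star\) and additive Gaussian noise of variance \(\sigma^2\). We would then split on \(\eta\), as in the proof of \Cref{lem:fresh_replacement_rare_block_lb}:
\begin{itemize}
\item For small \(\eta\), the bias term keeps the iterate far from \(w^\star\).
\item For larger \(\eta\), the accumulated variance in the averaged iterate, of order \(\eta(\sigma^2+\zeta_\star^2)\) relative to the mean-estimation scale, dominates.
\end{itemize}
Choosing \(\mu\cong(\sigma+\zeta_\star)/(B\sqrt{TM})\) yields the \((\sigma+\zeta_\star)B/\sqrt{TM}\) rate. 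We must check that this \(\mu\le H\), which holds in the nontrivial regime; otherwise the \(\hat HB^2/(TM)\) term already dominates.
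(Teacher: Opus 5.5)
Your construction and decoupling argument match the paper's own proof. Both put the rare-curvature, calibration and noise blocks on orthogonal coordinates, add the blockwise suboptimalities by linearity of expectation, and take the rates from \Cref{lem:fresh_replacement_rare_block_lb} and \Cref{lem:standard_noise_block}. Sharing $m_t$ across blocks is harmless, since each block's expected error depends only on the marginal law of its own driving sequence. Writing down an explicit quadratic noise block for this specific recursion is a worthwhile sharpening: the paper only cites \Cref{lem:standard_noise_block} as standard and never checks that block's smoothness.

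The step that fails is your dismissal of the case $\mu \cong (\sigma+\zeta_\star)/(B\sqrt{TM}) > H$. Your claim that $\hat HB^2/(TM)$ then dominates is backwards. If $\mu>H$, then $(\sigma+\zeta_\star)B/\sqrt{TM}\cong\mu B^2> HB^2$. But $\hat H\cong\sqrt M H$ gives $\hat HB^2/(TM)\cong HB^2/(T\sqrt M)\le HB^2$, so it is the noise term that dominates.

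Worse, no instance in the class can deliver that term. Here $F$ is $H$-smooth with $\nabla F(x^\star)=0$. For fixed $N=TM$, letting $\eta\to0$ drives every iterate, hence $\bar x_{TM}$, to the initialization $x_0$ in mean square. Therefore $\mathbb E[F(\bar x_{TM})]-F(x^\star)\to F(x_0)-F(x^\star)\le \frac H2\|x_0-x^\star\|^2\le HB^2/2$. The theorem can therefore only hold under $(\sigma+\zeta_\star)/\sqrt{TM}\lesssim HB$. You have two options:
\begin{itemize}
\item impose this restriction explicitly (the paper's proof tacitly needs it too, since it asserts average smoothness of order $H$ for the full instance without constraining the noise block); or
\item cap $\mu$ at $H$ and conclude only $c\bigl(\hat HB^2/(TM)+\min\{HB^2,(\sigma+\zeta_\star)B/\sqrt{TM}\}\bigr)$.
\end{itemize}

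A smaller fix to your sketch concerns the variance term. For $\eta\mu$ between order $1/(TM)$ and $1$, the noise contributes $\cong(\sigma^2+\zeta_\star^2)/(\mu TM)$ to the suboptimality of the averaged iterate, independently of $\eta$. The quantity $\eta(\sigma^2+\zeta_\star^2)$ is the last-iterate scale and overstates this. With your choice of $\mu$, this variance bound and the bias bound $\mu B^2$ both equal $(\sigma+\zeta_\star)B/\sqrt{TM}$. So your argument closes once the regime restriction is stated.
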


\begin{proof}
Set \(N=TM\).  Place the rare-curvature block, the calibration block, and the
noise block on orthogonal coordinates.  Allocate a constant fraction of the
radius budget \(B\) to the rare-curvature block and another constant fraction
to the noise block.  The calibration block is initialized at its optimum and
uses no radius.

By \Cref{lem:fresh_replacement_rare_block_lb} with
\(\lambda=\hat H/M\), the rare-curvature block contributes
\[
    c\frac{\lambda B^2}{T}
    =
    c\frac{\hat H B^2}{TM}.
\]
By \Cref{lem:standard_noise_block}, the noise block contributes
\[
    c\frac{(\sigma+\zeta_\star)B}{\sqrt{TM}}.
\]
The calibration block ensures that the average objective has smoothness of
order \(H\), and it does not affect the error or the heterogeneity.

The full objective is the direct sum of the three blocks, and the algorithm
evolves independently on the orthogonal coordinates.  Therefore the
suboptimalities add.  The full instance has average-objective smoothness of
order \(H\), maximum client smoothness of order \(\hat H\), and averaged
second-order heterogeneity of order
\[
    \tau\cong\frac{\hat H}{\sqrt M}\cong H.
\]
This proves the theorem, after absorbing the constant factors from the radius
split into the universal constant \(c\).
\end{proof}

    \newpage
	\bibliographystyle{unsrtnat}
	\bibliography{references}	
	
\end{document}

%% file: IC.tex

\begin{tikzpicture}[
    node distance=2cm and 3.5cm,
    line/.style={-Stealth, thick},
    node/.style={circle, draw, thick, fill=gray, minimum size=2cm},
    dashedline/.style={dashed, thick},
    timeline/.style={dashed, thick, -Stealth},
    continueline/.style={dashed, thick, -Stealth, shorten <=2pt},
    brace/.style={decorate, decoration={brace, amplitude=10pt, raise=4pt}, thick}
]

\foreach \x in {1,...,5} {
    \foreach \y in {1,...,4} {
        \node[node] (N-\x-\y) at (\x*7, -\y*3) {};
    }
}

\foreach \x [remember=\x as \lastx (initially 1)] in {2,...,5} {
    \foreach \y in {1,...,4} {
        \draw[line] (N-\lastx-\y) -- (N-\x-\y);
    }
}


\draw[brace] ([yshift=-10pt, xshift=-10pt]N-1-4.south west) -- ([yshift=10pt, xshift=-10pt]N-1-1.north west);
\node[left=0.8cm of N-1-2, anchor=south, rotate=90, align=center] {\Huge $\bm{M}$ \textbf{machines}}; 



\node[below=of N-1-4] (t0) {\Huge $\bm{t=0}$};
\draw[timeline] (t0) -- (N-1-1);

\draw[brace, decoration={mirror}] ([yshift=-10pt]N-1-4.south east) -- ([yshift=-10pt]N-3-4.south west) node[midway, below=15pt, align=center] {\Huge$\bm{K}$ \textbf{units of computation}};

\node[below=of N-3-4] (t2) {\Huge $\bm{t=K}$};
\draw[timeline] (t2) -- (N-3-1);

\draw[brace, decoration={mirror}] ([yshift=-10pt]N-3-4.south east) -- ([yshift=-10pt]N-5-4.south west) node[midway, below=15pt, align=center] {\Huge$\bm{K}$ \textbf{units of computation}};

\node[below=of N-5-4] (t4) {\Huge $\bm{t=2K}$};
\draw[timeline] (t4) -- (N-5-1);

\foreach \y in {1,...,4} {
    \draw[continueline] (N-5-\y) -- ++(1.5, 0);
}


\end{tikzpicture}